\DeclareSymbolFont{rsfs}{U}{rsfs}{m}{n}
\DeclareSymbolFontAlphabet{\mathscrsfs}{rsfs}
\def\Risk{\mathscrsfs{R}}
\def\tRisk{\overline{\mathscrsfs{R}}}
\def\rP{{\rm P}}
\def\<{\langle}
\def\>{\rangle}
\def\normal{{\mathcal{N}}}
\def\mf{\mbox{\tiny\rm mf}}
\def\amf{a^{\mbox{\tiny\rm mf}}}
\def\smf{s^{\mbox{\tiny\rm mf}}}
\def\red{\mbox{\tiny\rm red}}
\def\init{\rm{init}}
\def\hrho{\hat{\rho}}
\def\de{\mathrm{d}}
\newcommand{\op}{\mathrm{op}}
\newcommand{\norm}[1]{\left\|{#1}\right\|}
\newcommand{\iid}{\mathrm{i.i.d.}}
\DeclareMathOperator*{\argmin}{argmin}
\DeclareMathOperator{\sign}{sign}
\definecolor{NavyBlue}{rgb}{0.1,0.1,0.6}
\newcommand{\modif}[1]{#1}
\def\id{I}
\def\He{\mathrm{He}}
\def\bfone{\mathds{1}}
\def\eps{\varepsilon}
\newcommand{\diff}{\mathrm{d}}
\newcommand{\E}{\mathbb{E}}
\newcommand{\R}{\mathbb{R}}
\renewcommand{\S}{\mathbb{S}}
\def\reals{{\mathbb R}}
\renewcommand{\P}{\mathds{P}}
\newcommand{\cF}{\mathcal{F}}
\newcommand{\Unif}{\mathrm{Unif}}
\newcommand{\sN}{\mathsf{N}}
\def\os{\overline{s}}
\def\oa{\overline{a}}
\def\oa{\overline{a}}
\def\ou{\overline{u}}
\def\orho{\overline{\rho}}
\def\otheta{\overline{\theta}}
\def\ta{\widetilde{a}}
\def\ts{\widetilde{s}}
\newcommand{\veps}{\varepsilon}
\newcommand{\floor}[1]{\lfloor #1 \rfloor}
\renewcommand{\leq}{\leqslant}
\renewcommand{\geq}{\geqslant}
\renewcommand{\hat}{\widehat}
\newtheorem{definition}{Definition}
\newtheorem{proposition}{Proposition}
\newtheorem{thm}{Theorem}
\newtheorem{lem}{Lemma}
\newtheorem{coro}{Corollary}
\newtheoremstyle{myremark} 
    {\topsep}                    
    {\topsep}                    
    {\rm}                        
    {}                           
    {\bf}                        
    {.}                          
    {.5em}                       
    {}  
\theoremstyle{myremark}
\newtheorem{remark}{Remark}[section]
\begin{document}
\title{Learning time-scales in  two-layers neural networks}
\author{Raphaël Berthier\thanks{EPFL, email address: \href{mailto:raphael.berthier1@gmail.com}{raphael.berthier1@gmail.com}}, \;\; Andrea Montanari\thanks{Department of Electrical Engineering and Department of Statistics, 
		Stanford University, email address: \href{mailto:montanar@stanford.edu}{montanar@stanford.edu}}, \;\; Kangjie Zhou\thanks{Department of Statistics, 
		Stanford University, email address: \href{mailto:kangjie@stanford.edu}{kangjie@stanford.edu}}}
\date{\today}
\maketitle

\begin{abstract}
Gradient-based learning in multi-layer neural networks displays a number of striking 
features. In particular, the decrease rate of empirical risk is non-monotone even after averaging
over large batches. Long plateaus in which one observes barely any progress 
alternate with intervals of rapid decrease. These successive phases of learning often
take place on very different time scales. Finally, models learnt in an early phase 
are typically `simpler' or `easier to learn' although in a way that is difficult to formalize.

Although theoretical explanations of these phenomena have been put forward,
each of them captures at best certain specific regimes. In this
paper, we study the gradient flow dynamics of a 
wide two-layer neural network in high-dimension, when data are distributed
according to a single-index model (i.e., the target function depends on a one-dimensional 
projection of the covariates).
Based on a mixture of new rigorous results, non-rigorous mathematical derivations, and
numerical simulations,
we  propose a scenario for the learning dynamics in this setting.
In particular, the proposed evolution exhibits separation of timescales and intermittency.
These behaviors arise naturally because the population gradient flow can
be recast as a singularly perturbed dynamical system.
\end{abstract}

\noindent {\bf Keywords:} Deep learning, Neural network, Gradient flow, Dynamical system, Non-convex optimization, Incremental learning

\vspace{1em}

\noindent{\bf Mathematics Subject Classification:} 34E15, 37N40, 68T07

\vspace{1em}


\newpage

\tableofcontents

\section{Introduction}

It is a recurring empirical observation that the training dynamics of 
 neural networks exhibits a whole range of surprising behaviors:
\begin{enumerate}
\item \emph{Plateaus.} Plotting the training and test error as a function of SGD
steps, using either small stepsize or large batches to average out stochasticity, 
reveals striking patterns. These error curves display long plateaus where barely anything 
seems to be happening, which are followed by rapid drops
\citep{saad1995line,yoshida2019data,power2022grokking}.
\item \emph{Time-scales separation.} The time window for this rapid descent is much shorter than
the time spent in the plateaus. Additionally, subsequent phases of learning take 
increasingly longer times \citep{ghorbani2020discussion,barak2022hidden}. 
\item \emph{Incremental learning.}
Models learnt in the first phases of learning appear to be simpler than in later phases.
Among others,  \cite{arpit2017closer}  demonstrated that easier examples in a dataset are 
learned earlier; \cite{kalimeris2019sgd} showed that models learnt in the first phase of
training correlate well with linear models; \cite{gissin2019implicit} showed that,
 in many simplified models, the dynamics of gradient descent explores
 the solution space in an incremental order of complexity; \cite{power2022grokking} demonstrated that,
 in certain settings, a function that approximates well the target 
  is only learnt past the point of overfitting.
\end{enumerate}
Understanding these phenomena is not a matter of intellectual curiosity.
 In particular, incremental learning plays a key role in our understanding
of generalization in deep learning. Indeed, in this scenario, stopping the learning at 
a certain time $t$ amounts to controlling the complexity of the model learnt. 
The notion of complexity corresponds to the order in which the space of models is explored.

While a number of groups have developed models to explain these phenomena,
it is fair to say that a complete picture is still lacking. An exhaustive overview of
these works is out of place here.  We will outline three possible explanations that
have been developed in the past, and 
provide more pointers in Section \ref{sec:Related}.

\paragraph{Theory $\#1$: Dynamics near singular points.} Several early works
\citep{saad1995line,fukumizu2000local,wei2008dynamics} pointed out that the parametrization of
multi-layer neural networks presents symmetries and degeneracies. 
For instance, the function represented 
by a multi-layer perceptron is invariant under permutations of 
the neurons in the same layer. As a consequence, the population risk
has multiple local minima connected through saddles or other singular sub-manifolds.
Dynamics near these sub-manifolds 
naturally exhibits plateaus. Further, random or agnostic initializations typically
place the network close to such submanifolds.

\paragraph{Theory $\#2$: Linear networks.} Following the pioneering work of \cite{baldi1989neural},
a number of authors, most notably \cite{saxe2013exact, li2020towards}, studied the behavior of deep
neural networks with  linear activations. While such networks can only represent 
linear functions, the training dynamics is highly non-linear. As demonstrated in 
\cite{saxe2013exact}, learning happens through stages that correspond to the singular value
decomposition of the input-output covariance. Time scales are determined by the singular values.

\paragraph{Theory $\#3$: Kernel regime.}  Following an initial insight of \cite{jacot2018neural},
a number of groups proved that, for certain initializations, the training dynamics and model learnt 
by overparametrized  neural networks is well approximated by certain linearly parametrized models.  
In the limit of very wide networks, the training dynamics of these models
 converges in turn to the training dynamics of kernel ridge(less)
regression (KRR) with respect to a deterministic kernel (independent of the random initialization.)
We  refer to \cite{bartlett2021deep} for an overview and pointers to this literature. 
Recently \cite{ghosh2021three} show that, in high dimension, the learning dynamics of KRR
also exhibits plateaus and waterfalls, and learns functions of increasing complexity over a 
diverging sequence of timescales.

\vspace{0.3cm}

While each of these theories offers useful insights, it is important to realize 
that they do not agree on the basic  mechanism  that explains  plateaus, time-scales
separation, and incremental learning.
In theory $\# 1$, plateaus are associated to singular manifolds and high-dimensional saddles,
while  in theories $\# 2$  and   $\# 3$ they are related to a hierarchy of singular values of a
certain matrix.
In $\#2$, the relevant singular values are the ones of the input-output covariance,
and the fact that these singular values are well separated is postulated to be a property
of the data distribution.
In contrast, in $\#3$ the relevant singular values are the eigenvalues of the kernel
operator, and hence completely independent of the output (the target function).
In this case, eigenvalues which are very different are proved 
to exist under natural high-dimensional distributions.

Not only these theories propose different explanations, but they are also motivated by very different
simplified models. 
Theory $\#1$ has been developed only for networks with a small number of hidden units. 
 Theory $\#2$ only applies to networks with multiple output units, because otherwise
 the input-output covariance is a $d\times 1$ matrix and hence has only one 
 non-trivial singular value.
 Finally, theory $\#3$ applies under the conditions of 
 the linear (a.k.a.~lazy) regime, namely large overparametrization and suitable initialization
 (see, e.g., \cite{bartlett2021deep}).

In order to better understand the origin of plateaus, time-scales separation, and 
incremental learning, we attempt a detailed analysis of gradient flow
for two-layer neural networks. We consider a simple data-generation model,
and propose a precise scenario for the behavior of learning dynamics. 
We do not assume any of the simplifying features of the theories described above:
activations are non-linear; the number of hidden neurons is large; we place ourselves outside
the linear (lazy) regime.

Our analysis is based on methods from dynamical systems theory: singular perturbation theory
and matched asymptotic expansions. Unfortunately, we fall short of providing a general rigorous
proof of the proposed scenario, but we can nevertheless 
prove it in several special cases and provide a heuristic argument supporting its generality.

The rest of the paper is organized as follows. Section \ref{sec:Scenario}
describes our data distribution, learning model, and the proposed  scenario for the learning 
dynamics. We review further related work in Section \ref{sec:Related}.
Section~\ref{sec:LargeNet} describes the reduction of the gradient flow to 
a `mean field' dynamics that will be the starting point of our analysis.
 Section \ref{sec:Empirical} presents numerical evidence of the proposed learning scenario.
Finally,  Sections~\ref{sec:matched}
to \ref{sec:gf_sgd} present our analysis of the learning dynamics.

\paragraph{Notations.} In this paper, we use the classical asymptotic notations. The notations $f(\varepsilon) = o(g(\varepsilon))$ or $g(\varepsilon) = \omega(f(\varepsilon))$ as $\varepsilon \to 0$ both denote that $|f(\varepsilon)| / |g(\varepsilon)\vert \to 0$ in the limit $\varepsilon \to 0$. The notations $f(\varepsilon) = O(g(\varepsilon))$ or $g(\varepsilon) = \Omega(f(\varepsilon))$ both denote that the ratio $|f(\varepsilon)| / |g(\varepsilon)|$ remains upper bounded in the limit. The notation $f(\varepsilon) = \Theta(g(\varepsilon))$ or $f(\varepsilon) \asymp g(\varepsilon)$ denote that $f(\varepsilon) = O(g(\varepsilon))$ and $g(\varepsilon) = O(f(\varepsilon))$ both hold. Finally, $f(\varepsilon) \sim g(\varepsilon)$ denotes that $f(\varepsilon) / g(\varepsilon) \to 1$ in the limit. 

\section{Setting and \modif{canonical learning order}}
\label{sec:Scenario}

We are given pairs $\{(x_i,y_i)\}_{i\le n}$, where $x_i\in\R^d$ is a feature vector
and $y_i\in\R$ is a response variable. We are interested in cases in which the 
feature vector is high-dimensional but does not contain strong structure, but
the response depends on a low-dimensional projection of the data.
We assume the simplest model of this type, the so-called single-index model:
\begin{equation}\label{eq:single_index}
y_i = \varphi(\langle u_*, x_i \rangle) \,, \qquad \ x_i \sim \sN(0, I_d), \; u_* \in \S^{d-1},
\end{equation}
where $\varphi: \R \to \R$ is a link function, $\sN(0, I_d)$ denotes the standard multivariate Gaussian distribution in dimension $d$, and $\S^{d-1}:=\{v\in\R^d:\, \|v\|_2=1\}$.
We study the ability to learn model~\eqref{eq:single_index} using a two-layers 
neural network with $m$ hidden neurons:
\begin{equation}
f(x;a,u) = \frac{1}{m} \sum_{i=1}^{m} a_i \sigma(\langle u_i, x \rangle), \qquad 
\ a_1, \cdots, a_m \in \R, \ u_1, \cdots, u_m \in \S^{d - 1},\label{eq:First-NNET}
\end{equation}
where $(a,u):=(a_1, \cdots, a_m,u_1, \cdots, u_m )$ collectively denotes all the model's parameter \modif{and $\sigma : \R \to \R$ is the activation function of the neural network}.
The factor $1/m$ in the definition is relevant for the initialization and learning rate.
We anticipate that we will initialize the $a_i$'s to be of order one, which results in second layer
coefficients $a_i/m= \Theta(1/m)$.  
\modif{\begin{remark}
Standard initializations in deep learning frameworks yield second-layer
coefficients $a_i/m= \Theta(1/\sqrt{m})$ 
\citep{lecun2002efficient,glorot2010understanding,he2015delving}. However, it is
increasingly clear that this initialization presents fundamental limitations
for large $m$. Notably, two-layers networks with this initialization converges
to kernel methods \citep{oymak2020toward}, and the latter cannot learn ridge functions
from polynomially many samples \citep{ghorbani2021linearized,yehudai2019power}. 

It is well understood that, in order to drive the learning process outside the 
kernel regime (for $m\to\infty$), it is necessary to set $a_i/m= \Theta(1/m)$.
This is often referred to as the `mean-field initialization'
\citep{mei2018mean,chizat2018global,ghorbani2020neural,abbe2022merged}. 
We notice that suitable generalizations of the mean-field initialization
are currently used in state-of-the-art implementations \citep{yang2020feature,yang2021tensor}.
\end{remark}}

The bulk of our work will be devoted to the analysis of  projected gradient flow in
 $(a_i,u_i)_{1 \leq i \leq m}$ on the population risk 
\begin{align}
\Risk(a,u) &= \frac{1}{2} \E \big\{\big(y - f(x;a,u)\big)^2\big\}\\
&  = \frac{1}{2} \E \Big\{
\Big(\varphi(\langle u_*, x \rangle) - \frac{1}{m} \sum_{i=1}^{m} a_i \sigma(\langle u_i, x \rangle)\Big)^2 
\Big\}\, .
\end{align}
In Section \ref{sec:gf_sgd}, we will bound the distance between stochastic
gradient descent (SGD) and gradient flow in
population risk. As a consequence, we will establish finite sample generalization guarantees for 
SGD learning.

Projected gradient flow  with respect to the risk $\Risk(a,u)$ is defined by the following
 ordinary differential equations (ODEs):
\begin{align}
\partial_t (\veps a_i) &= - m \partial_{a_i} \Risk(a,u) \, ,  \label{eq:GF-1} \\
\partial_t u_i &= - m (I_d - u_i u_i^\top) \nabla_{u_i} \Risk(a,u) \, . \label{eq:GF-2}
\end{align}
\modif{Here, $\veps$ can be viewed as the relative step size, namely the ratio between the first and second-layer step sizes.} It is useful to make a few remarks about the definition of gradient flow:
\begin{itemize}
\item The 
 projection $I_d - u_i u_i^\top$ ensures that $u_i$ remains on the unit sphere $\S^{d-1}$.
\item The overall scaling of time is arbitrary, and the matching to SGD steps will be carried out in Section \ref{sec:gf_sgd}.
The factors $m$ on the right-hand side are introduced for mathematical convenience, since
the partial derivatives are of order $1/m$.
\item 

\modif{As aforementioned, the factor $\eps$ introduced in the gradient flow of the $a_i$'s plays the role of the relative step size.}
Throughout the paper, we will keep $\eps$ as a free parameter independent of $m$, and study 
the evolution of gradient flow for small $\eps$. \modif{This corresponds to a 
setting in which the second-layer coefficients are learned much faster than the first-layer weights.
We emphasize however that the small $\eps$ limit is taken after the large $m,d$ limits. Thus,
despite the second-layer weights are learnt faster, the evolution of first layer weights will be crucial, 
and lead to true feature learning.}
\end{itemize}
 
 We assume the initialization to be random with i.i.d.~components $(a_{i,\rm{init}},u_{i,\rm{init}})$:
 \begin{align}
 (a_{i,\rm{init}},u_{i,\rm{init}})\sim \rP_A\otimes \Unif(\S^{d-1})\, ,\label{eq:Initialization}
\end{align}
where $\rP_A$ is a probability measure on $\R$. The unique solution of the gradient flow ODEs with this initialization will be denoted by
$(a(t),u(t))$.
We will be interested  in the case of large networks ($m\to\infty$) in high dimension
($d\to\infty$). As shown below, the two limits commute (over fixed time horizons).

Our main finding is that, in a number of cases, $\varphi$ is  learnt incrementally.
Namely, the function $f(x;a(t),u(t))$ evolves over time according to 
a sequence of polynomial approximations of $\varphi(\langle u_*,x \rangle)$. 
These polynomial approximations are given by 
the decomposition of $\varphi$ in
$L^2 (\R, \phi (x) \mathrm{d} x)$, where $\phi (x)$ is the  standard normal density:
$\phi (x) = \exp(- x^2 / 2) / \sqrt{2 \pi}$. (For notational simplicity, we will use the shorthand 
$L^2$ instead of $L^2 (\R, \phi (x) \mathrm{d} x)$ in the sequel.)

In order to describe the polynomial approximations learnt during the training more explicitly, 
we decompose $\varphi$ and $\sigma$ into normalized Hermite polynomials:
\begin{align}
&\varphi (z) = \sum_{k=0}^{\infty} \varphi_k \mathrm{He}_k (z) \, , 
\;\;\;\;\sigma (z) =
 \sum_{k=0}^{\infty} \sigma_k \mathrm{He}_k (z) \, .\label{eq:PolynomialDec}
\end{align}
Here, $\mathrm{He}_k$ denotes the $k$-th Hermite polynomial, normalized so that 
$\norm{\mathrm{He}_k}_{L^2 (\R, \phi (x) \mathrm{d} x)} = 1$.

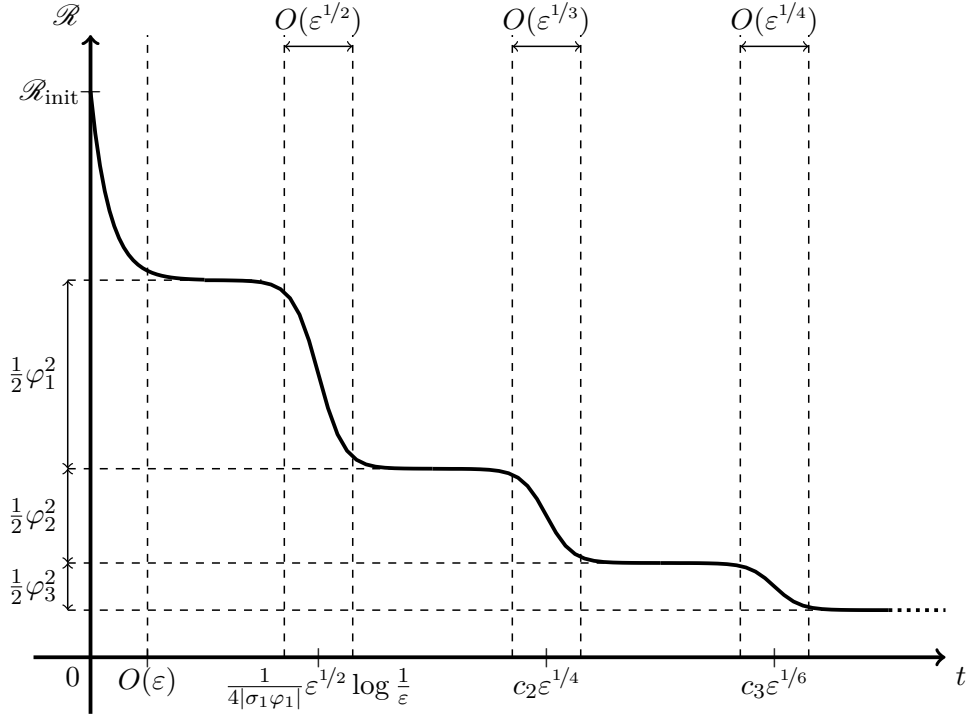
\begin{figure}
\begin{center}
	\begin{tikzpicture}[scale=1.5]
	\draw [domain=0:1, scale=1, variable=\t, line width = 0.5mm] plot (\t,{5*(1/3*exp(-6*\t)+2/3)});
	\draw [domain=1:3, scale=1, variable=\t, line width = 0.5mm] plot (\t,{5*(1/3+1/3*1/(1+exp(9*(\t-2))))});
	\draw [domain=3:5, scale=1, variable=\t, line width = 0.5mm] plot (\t,{5*(1/6+1/6*1/(1+exp(9*(\t-4))))});
	\draw [domain=5:7, scale=1, variable=\t, line width = 0.5mm] plot (\t,{5*(1/12+1/12*1/(1+exp(9*(\t-6))))});
	\draw [domain=7:7.5, scale=1, variable=\t, line width = 0.5mm, dotted] plot (\t,{5*1/12});
	
	\draw[->,line width = 0.5mm] (-0.5,0) -- (7.5,0);
	\draw[->, line width = 0.5mm] (0,-0.5) -- (0,5.5);
	\draw (0,0) node[below left] {$0$};
	\draw (7.5,0) node[below right] {$t$};
	\draw (0,5) node[left] {$\Risk_{\rm{init}}$} node{$+$};
	\draw (0,5.5) node[above left] {$\Risk$};
	
	\draw[-, dashed, line width = 0.2mm] (0.5,0) -- (0.5,5.5);
	\draw[-, dashed, line width = 0.2mm] (1.7,0) -- (1.7,5.5);
	\draw[-, dashed, line width = 0.2mm] (2.3,0) -- (2.3,5.5);
	\draw[-, dashed, line width = 0.2mm] (3.7,0) -- (3.7,5.5);
	\draw[-, dashed, line width = 0.2mm] (4.3,0) -- (4.3,5.5);
	\draw[-, dashed, line width = 0.2mm] (5.7,0) -- (5.7,5.5);
	\draw[-, dashed, line width = 0.2mm] (6.3,0) -- (6.3,5.5);
	
	\draw (0.5,0) node[below] {$O(\varepsilon)$} node{$+$};
	\draw (2,0) node[below] {$\frac{1}{4|\sigma_1 \varphi_1|}\varepsilon^{\nicefrac{1}{2}} \log \frac{1}{\varepsilon}$} node{$+$};
	\draw (4,0) node[below] {$c_2 \varepsilon^{\nicefrac{1}{4}}$} node{$+$};
	\draw (6,0) node[below] {$c_3 \varepsilon^{\nicefrac{1}{6}}$} node{$+$};
	
	\draw[<->, line width = 0.2mm] (1.7,5.4) -- (2.3,5.4) node[above, midway]{$O(\varepsilon^{\nicefrac{1}{2}})$};
	\draw[<->, line width = 0.2mm] (3.7,5.4) -- (4.3,5.4) node[above, midway]{$O(\varepsilon^{\nicefrac{1}{3}})$};
	\draw[<->, line width = 0.2mm] (5.7,5.4) -- (6.3,5.4) node[above, midway]{$O(\varepsilon^{\nicefrac{1}{4}})$};
	
	\draw[-, dashed, line width = 0.2mm] (-0.2,5*2/3) -- (1.5,5*2/3);
	\draw[-, dashed, line width = 0.2mm] (-0.2,5*1/3) -- (3.5,5*1/3);
	\draw[-, dashed, line width = 0.2mm] (-0.2,5*1/6) -- (5.5,5*1/6);
	\draw[-, dashed, line width = 0.2mm] (-0.2,5*1/12) -- (6.5,5*1/12);
	
	\draw[<->, line width = 0.2mm] (-0.2,5*2/3) -- (-0.2,5*1/3) node[left, midway]{$\frac{1}{2}\varphi_1^2$};
	\draw[<->, line width = 0.2mm] (-0.2,5*1/3) -- (-0.2,5*1/6) node[left, midway]{$\frac{1}{2}\varphi_2^2$};
	\draw[<->, line width = 0.2mm] (-0.2,5*1/6) -- (-0.2,5*1/12) node[left, midway]{$\frac{1}{2}\varphi_3^2$};
	\end{tikzpicture}
\end{center}
\caption{Cartoon illustration of the evolution of the population risk within the \modif{canonical learning order}
of Definition \ref{def:StandardScenario}.}
\label{fig:conj}
\end{figure}
%
As we will see, the  incremental learning behavior arises for small $\veps$.
By the law of large numbers (see below), the following almost sure limit exists 
(provided $\rP_A$ is square integrable)
\begin{align}
  \Risk_{\rm{init}}:= \lim_{m\to\infty}\lim_{d\to\infty}\Risk(a_{\rm{init}},u_{\rm{init}})\, = \frac{1}{2} \left( \varphi_0 - \sigma_0 \int\! a\,  \rP_A (\diff a) \right)^2 + \frac{1}{2} \sum_{k \geq 1} \varphi_k^2.
\end{align}
We are now in position to describe the scenario that we will study in the rest
of the paper.
\begin{definition}\label{def:StandardScenario}
We say that the \emph{\modif{canonical learning order} holds up to level $L$} for
a certain target function $\varphi$, activation $\sigma$, and distribution $\rP_A$,
if the followings hold:
 \begin{enumerate}
 \item The limit below exists:
 \begin{align}
 \Risk_{\infty}(t,\veps) = \lim_{m\to\infty}\lim_{d\to\infty}
\Risk(a(t), u(t)).
 \end{align}
 %
\item  There exist constants
 $c_2, \dots, c_{L+1} > 0$ such that the following asymptotic holds as
  $\varepsilon \to 0$, $t \to 0$:
	\begin{equation*}
	\Risk_{\infty}(t,\veps)\xrightarrow[\varepsilon \to 0,\, t \to 0]{} \begin{cases}
	\Risk_{\rm{init}} &\text{if }t = o(\varepsilon) \, ,  \\
	 \frac{1}{2} \sum_{k \geq 1} \varphi_k^2 &\text{if }t = \omega(\varepsilon) \text{ and } t =  \frac{1}{4 |\sigma_1 \varphi_1|} \varepsilon^{\nicefrac{1}{2}}  \log \frac{1}{\varepsilon} - \omega(\varepsilon^{\nicefrac{1}{2}}) \, , \\ 
	  \frac{1}{2} \sum_{k \geq 2} \varphi_k^2 &\text{if } t =  \frac{1}{4 |\sigma_1 \varphi_1|} \varepsilon^{\nicefrac{1}{2}}  \log \frac{1}{\varepsilon} + \omega(\varepsilon^{\nicefrac{1}{2}}) \text{ and } t = c_2 \varepsilon^{\nicefrac{1}{4}} - \omega(\varepsilon^{\nicefrac{1}{3}}) \, , \\
	   \frac{1}{2} \sum_{k \geq l} \varphi_k^2 &\text{if } t = c_{l-1} \varepsilon^{\nicefrac{1}{2(l-1)}} + \omega(\varepsilon^{\nicefrac{1}{l}})  \text{ and } t = c_l \varepsilon^{\nicefrac{1}{2l}}  - \omega(\varepsilon^{\nicefrac{1}{l+1}}) \, , \\
	   &\qquad\text{ for all } 3 \leq l \leq L+1.
	\end{cases}
	\end{equation*}
\end{enumerate}
\end{definition}
Figure \ref{fig:conj} provides a cartoon illustration of the \modif{canonical learning order}.

\modif{At first sight, the setting of Eq.~\eqref{eq:First-NNET} is overly restrictive because we 
require $\|u_i\|_2=1$ and we do not have offsets in the activations. Therefore, it might seem 
that $s=1$ and $\varphi=\sigma$ is required in order to approximate arbitrarily well the target function.
In contrast, the next proposition shows that the network \eqref{eq:First-NNET} enjoys universal 
approximation properties.
\begin{proposition}\label{propo:Approx}
Assume that $\sigma$ is Lipschitz continuous and generic in the following sense:
the decomposition of $\sigma$ into Hermite polynomials does not
have any coefficient equal to $0$.
For any Lipschitz function $\varphi:\reals\to\reals$, $\|u_*\|_2=1$, and $x \sim \normal(0, I_d)$ such that $\E\{\varphi(\<u_*,x\>)^2\}<\infty$, there exists a sequence $m\to\infty$
and $a^{(m)}, u^{(m)}$ with $\Vert u_i^{(m)} \Vert_2 = 1$ such that
\begin{align*}
\lim_{d\to\infty}\lim_{m\to\infty} \E\big\{\big(\varphi(\<u_*,x\>)-f(x;a^{(m)},u^{(m)})\big)^2\big\} = 0\, .
\end{align*}
\end{proposition}
This result is not surprising in view of the arguments in the next sections,
which suggest that indeed gradient flow constructs such an approximation for a broad class
of functions of the form $f_*(x) = \varphi(\<u_*,x\>)$. We nevertheless give an independent 
proof in Appendix \ref{app:Approx}.
}

A specific realization of our general setup is determined by the triple $(\sigma,\varphi,\rP_A)$,
In the rest of the paper, we will provide evidence showing that the \modif{canonical learning order} holds in a number of cases. Nevertheless, we can also construct 
examples in which it does not hold:
\begin{itemize}
\item If one or more of the Hermite coefficients of the activation vanish, then 
the \modif{canonical learning order} does not hold for general $\varphi$. Specifically, if $\sigma_k=0$, then
for any $t$ 
the function $f(x;a(t),u(t))$ remains
orthogonal to 
$\He_k(\langle u_*,x\>)$. In particular, if $\varphi_k\neq 0$ then the risk remains bounded
away from zero for every $t$. We refer to Appendix~\ref{sec:counter_ex1} for a formal statement.
\item If the first \modif{$k+1$ Hermite} coefficients of $\varphi$ vanish,
$\varphi_0= \dots = \varphi_k=0$, $k\ge 1$, then the \modif{canonical learning order} does not hold. (See Appendix~\ref{sec:counter_ex2} for the proof.)
\item In fact, we expect the \modif{canonical learning order} might fail every time one or more of
the coefficients $\varphi_k$ vanish, for $k\ge 1$. 
Appendix~\ref{sec:counter_ex3} provides some heuristic justification for this failure.
\end{itemize}

\begin{remark}
We can compare the \modif{canonical learning order} described here to the ones 
in earlier literature and described as theory $\#1$, $\#2$, $\#3$ in the introduction.
There appears points of contact, but also important differences with
both theory $\#1$ and $\#3$:
\begin{itemize}
\item As in theory $\#1$, the plateaus and separation of time scales arise because 
the trajectory of gradient flow is approximated by a sequence of motions along submanifolds
in the space of parameters $(a,u)$.
Along the $l$-th such  submanifold $f(x;a,u)$ is well-approximated by a degree-$l$ polynomial.
Escaping each submanifold takes an increasingly longer time. 

This is \modif{reminiscent} of the motion between saddles investigated in earlier work
\citep{saad1995line,fukumizu2000local,wei2008dynamics}.
However, unlike in earlier work, we will see that this applies to networks with a
large (possibly diverging) number of hidden neurons. Also, we identify the subsequent phases
of learning with the polynomial decomposition of Eq.~\eqref{eq:PolynomialDec}.
\item As in theory $\#3$, subsequent phases of learning correspond to
increasingly accurate polynomial approximations of the target function $\varphi(\< u_*,x\>)$. 
However, the underlying mechanism and time scales are completely different.
In the linear regime, the different time scales emerge because of increasingly small 
eigenvalues of the neural tangent kernel. In that case, the time required to learn 
degree-$l$ polynomials is of order $d^{l}$ \citep{ghosh2021three}.

In contrast, in the \modif{canonical learning order}, polynomials of degree $l$ are learnt
on a time scale of order one in $d$
(and only depending on the learning rate $\veps$). This of course has important implications 
when approximating gradient flow by SGD. Within the linear regime, 
the sample size required to learn \modif{a} polynomial of order $l$ scales like $d^{l}$  \citep{ghosh2021three},
while in the \modif{canonical learning order}, it is only of order $d$ (see Section \ref{sec:gf_sgd}).
\end{itemize} 
\end{remark}

\section{Further related work}
\label{sec:Related}

As we mentioned in the introduction, plateaus and time scales in the learning dynamics of 
kernel models were analyzed by \cite{ghosh2021three}. A sharp analysis for
the related random features model was developed by \cite{bodin2021model}.

Our analysis builds upon the mean-field description of learning in two-layer neural networks, 
which was developed in a sequence of works, see, e.g.,  \citep{mei2018mean,rotskoff2018parameters,chizat2018global,mei2019mean}.
In particular, we leverage the fact that, for the data distribution \eqref{eq:single_index},
the population risk function is invariant under rotations around the axis $u_*$, and 
this allows for a dimensionality reduction in the mean field description. Similar
symmetry argument were used by \cite{mei2018mean} and, more recently, by \cite{abbe2022merged}.

The single-index model can be learnt using simpler methods 
than large two-layer networks. Limiting ourselves to the case of gradient descent 
algorithms, \cite{mei2018landscape} proved that gradient descent with respect to the non-convex
empirical risk $\widehat{R}_n(u) := n^{-1}\sum_{i=1}^n(y_i-\varphi(u^{\top}x_i))^2$ converges to
a near global optimum, provided $\varphi$ is strictly increasing.
\cite{arous2021online} considered online SGD under more challenging learning 
scenarios and characterized the time (sample size) for $|\<u,u_*\>|$ to become 
significantly larger than for a random unit vector $u$.

Learning in overparametrized two-layer networks under model \eqref{eq:single_index} (or its variations)
has been studied recently by several groups. In particular,
\cite{ba2022high} considers a training procedure which runs a single step gradient descent
followed by freezing the first layer and performing ridge regression with respect to the 
second layer. This scheme is amenable to a precise characterization of the generalization error.
\cite{bietti2022learning} consider a similar scheme in which a first phase of gradient descent is run 
to achieve positive correlation with the unknown direction $u_{*}$.
\cite{damian2022neural} also consider a two-phases scheme, and prove consistency and excess risk bounds 
for a more general class of target functions whereby the first equation
in \eqref{eq:single_index} is replaced by
\begin{align}
y_i = \varphi(U_*^{\top} x_i )+\eps_i\, , \;\;\; U_{*}\in \reals^{d \times k}\, ,
\varphi:\reals^k\to\reals\, ,\label{eq:LatentSpaceModel}
\end{align}
with $k\ll d$. In particular, near optimal error bounds are obtained under a non-degeneracy
condition on $\nabla^2\varphi$. 

\cite{abbe2022merged} consider a similar model whereby $x\sim \Unif(\{+1,-1\}^d)$,
and $y =\varphi(x_S)$ where $S\subseteq [d]$, and $x_S=(x_i)_{i\in S}$ (i.e., $x_S$
contains the coordinates of $x$ indexed by entries of $S$). Under a structural assumption 
on $\varphi$ (the `merged staircase property'), and for $|S|$  fixed,
they prove the two stages algorithm learns the target function with sample complexity
of order $d$. This paper is technically related to ours in that it 
uses mean-field theory to obtain a characterization of learning in terms of a PDE
in a reduced $(k+2)$-dimensional space.

A similar model was studied by \cite{barak2022hidden} that bounds the sample complexity by $d^{O(k)}$ for learning parities on $k$ bits
using gradient descent with large batches (if $k=O(1)$,   \cite{barak2022hidden} require
$O(1)$ steps with batch size $d^{O(k)}$).

Let us emphasize that our objective is quite different from these works. 
We \modif{implement a simple online SGD algorithm with additional projection steps,} and try to derive a precise picture
of the successive phases of learning (in particular, we do not consider 
two-stage schemes or layer-by-layer learning). On the other hand, we focus on a relatively simple
model.

To clarify the difference, it is perhaps useful to rephrase our claims in terms of sample complexity.
While previous works show that the target function can be learnt with $O(d)$
samples, we claim that it is learnt by online SGD with test error $r$ from 
about $C(r,\eps)d$ samples and characterize the dependence of $C(r,\eps)$ on $r$
for small $\eps$. (Falling short of a proof in 
the general case.)

After posting an initial version of this paper, we became aware that \cite{arnaboldi2023high} independently derived equations
	similar to \eqref{eq:risk}-\eqref{eq:dynamics-4}, \eqref{eq:FinalApproxRisk}, \eqref{eq:dynamics_perp}. There are technical differences, and hence we cannot apply their results directly. However, Section~\ref{sec:connect_mf} and Appendix~\ref{sec:derive_MF} are analogous to their work.

\section{The large-network, high-dimensional limit}
\label{sec:LargeNet}

The first step of our analysis is a reduction of the system of ODEs \eqref{eq:GF-1},
 \eqref{eq:GF-2}, with dimension $m(d+1)$ to a system of ODEs in $2m$ dimensions.
 We will achieve this reduction in two steps:
 \begin{itemize}
 \item[$(i)$]~First we reduce to a system in $m(m+3)/2$ dimensions for the 
 variables  $a_i$, $\langle u_i,u_j\>$, $\langle u_i,u_*\>$. This reduction is exact and is quite standard. It is done in Section \ref{sec:reduction-1}.
 \item[$(ii)$]~We then show that the products $\< u_i,u_j\>$ can be eliminated, with an 
 error $O(1/m)$. This is done in Section \ref{sec:reduction-2}. As further discussed below, the resulting dynamics could also be derived from the
  mean field  theory of \cite{mei2018mean,rotskoff2018parameters,chizat2018global,mei2019mean}
 (with the required modifications for the constraints $\|u_i\|=1$).
 \end{itemize}
In order to define formally the reduced system, we define the functions
$U,V:[-1,1]\to \R$ via:
\begin{align}
V(s) &:=  \E\{\varphi(G)\,\sigma(G_s)\}= \sum_{k\geq 0} \varphi_k  \sigma_k s^k \, , 
\;\;\;\;\; (G,G_s)\sim \normal\left(0, \left[\begin{matrix}1 & s\\ s& 1\end{matrix}\right]\right)\, ,\\
U(s) &:=\E\{\sigma(G)\,\sigma(G_s)\} = \sum_{k\geq 0}  \sigma_k^2 s^k \, .
\end{align}
Note that the above identities follow from \citep[Proposition 11.31]{o2014analysis}.
Throughout this section, we will make the following assumptions.
\begin{description}\label{ass:MF_limit}
	\item [A1.] The distribution of weights at initialization, $\rP_A$ is supported on $[-M_1, M_1]$. 
	\item [A2.] The activation function is bounded: $\norm{\sigma}_{\infty} \le M_2$. 
	Additionally, the functions $V$ and $U$ are bounded and of class $C^2$, with uniformly bounded first and second derivatives over $s \in [-1, 1]$. A sufficient condition for this is
	\begin{equation*}
		\sup \left\{ \norm{\sigma'}_{L^2}, \, \norm{\sigma''}_{L^2} \right\} \le M_2, \quad \ \sup \left\{ \norm{\varphi}_{L^2}, \, \norm{\varphi'}_{L^2}, \, \norm{\varphi''}_{L^2} \right\} \le M_2.
	\end{equation*}
	\item [A3.]  Responses  are bounded, i.e., $\|\varphi\|_{\infty}\le M_3$. 
\end{description}
\begin{remark}\label{rem:ass_A2_suff}
	We hereby briefly explain the sufficiency of $L^2$-boundedness of derivatives of $\sigma$ and $\varphi$ as claimed in Assumption A2. Suppose for example that $\norm{\sigma'}_{L^2} \le M_2$ and $\norm{\varphi'}_{L^2} \le M_2$, then we have
	\begin{equation}
		\sup_{s \in [-1, 1]} \left\vert V'(s) \right\vert \stackrel{(a)}{=} \sup_{s \in [-1, 1]} \left\vert \E\{\varphi'(G)\,\sigma'(G_s)\} \right\vert \stackrel{(b)}{\le} \norm{\varphi'}_{L^2} \norm{\sigma'}_{L^2} \le M_2^2,
	\end{equation}
    where $(a)$ follows from Gaussian integration by parts and $(b)$ follows from Cauchy-Schwarz inequality.
\end{remark}

\modif{\subsection{Reduction to $d$-independent flow}\label{sec:reduction-1}}

\noindent Our first statement establishes reduction $(i)$ mentioned above.
The proof of this fact is presented in Appendix \ref{sec:proof-prop-dynamics}.
\begin{proposition}[Reduction to $d$-independent flow]\label{prop:dynamics}
	Define $s_i = \langle u_i, u_* \rangle$, $r_{ij} = \langle u_i, u_j \rangle$ for 
	$i,j= 1, \dots, m$. Then, letting  $R=(r_{ij})_{i,j\le m}$, we have
	\begin{equation}\label{eq:risk}
	\Risk(a,u)= \Risk_{\red} (a, s, R) := \frac{1}{2} \Vert \varphi \Vert^2_{L^2} -
	 \frac{1}{m} \sum_{i=1}^m a_i V(s_i) +  \frac{1}{2m^2} \sum_{i,j=1}^m a_i a_j U(r_{ij}) \, .
	\end{equation}
	If $(a(t), u(t))$ solve the gradient flow ODEs 
	\eqref{eq:GF-1}-\eqref{eq:GF-2} then   $(a(t), s(t), R(t))$ are the unique solution of
	 the following set of ODEs (note that $r_{ii} =1$ identically)
	\begin{align}
	\veps\partial_t a_i = \, &  V(s_i) - \frac{1}{m} \sum_{j=1}^{m} a_j U(r_{ij}) \,, \label{eq:dynamics-1}\\
	\partial_t s_i = \, &  a_i \left( V'(s_i) (1-s_i^2) - \frac{1}{m} \sum_{j = 1}^{m} a_j U'(r_{ij}) (s_j - r_{ij} s_i) \right)\, , \label{eq:dynamics-2} \\
	\partial_t r_{ij} = \, & a_i \left( V'(s_i) (s_j - s_i r_{ij}) - \frac{1}{m} \sum_{p=1}^{m} a_p U'(r_{ip}) (r_{jp} - r_{ip} r_{ij}) \right)\, , \label{eq:dynamics-3}\\
	& +  a_j \left( V'(s_j) (s_i - s_j r_{ij}) - \frac{1}{m} \sum_{p=1}^{m} a_p U'(r_{jp}) (r_{ip} - r_{jp} r_{ij}) \right) \, . \label{eq:dynamics-4}
	\end{align}
\end{proposition}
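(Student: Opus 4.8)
The plan is to prove the two claims of the proposition — the formula \eqref{eq:risk} for the reduced risk and the reduced ODE system \eqref{eq:dynamics-1}--\eqref{eq:dynamics-4} — by direct computation, exploiting the fact that $f(x;a,u)$ depends on $x$ only through the inner products $\langle u_i, x\rangle$, and that these form a jointly Gaussian vector with a covariance matrix determined entirely by the $r_{ij}$ and (when $u_*$ is included) the $s_i$. First I would expand the square in $\Risk(a,u)=\frac12\E\{(y-f(x;a,u))^2\}$ into three terms: $\frac12\E\{\varphi(\langle u_*,x\rangle)^2\}=\frac12\|\varphi\|_{L^2}^2$; the cross term $-\frac1m\sum_i a_i\,\E\{\varphi(\langle u_*,x\rangle)\sigma(\langle u_i,x\rangle)\}=-\frac1m\sum_i a_i V(s_i)$, using that $(\langle u_*,x\rangle,\langle u_i,x\rangle)$ is bivariate Gaussian with correlation $s_i$ since $\|u_*\|=\|u_i\|=1$; and the quadratic term $\frac{1}{2m^2}\sum_{i,j}a_i a_j\,\E\{\sigma(\langle u_i,x\rangle)\sigma(\langle u_j,x\rangle)\}=\frac{1}{2m^2}\sum_{i,j}a_ia_j U(r_{ij})$, using that $(\langle u_i,x\rangle,\langle u_j,x\rangle)$ has correlation $r_{ij}$. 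The Hermite-series identities for $V$ and $U$ follow from the cited \cite{o2014analysis} and are already recorded in the excerpt, so this step is essentially bookkeeping.

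Next I would derive the ODEs by the chain rule. For \eqref{eq:dynamics-1}: $\veps\partial_t a_i = -m\,\partial_{a_i}\Risk = -m\,\partial_{a_i}\Risk_{\red}(a,s,R)$, and differentiating \eqref{eq:risk} in $a_i$ (noting $r_{ij}$, $s_j$ do not depend on $a_i$, and handling the $j=i$ term of the double sum and the symmetry $U(r_{ij})=U(r_{ji})$ carefully) gives exactly $V(s_i)-\frac1m\sum_j a_j U(r_{ij})$. For the $u_i$-flow I would first write $\nabla_{u_i}\Risk$ via the chain rule through $s_i=\langle u_i,u_*\rangle$ and $r_{ij}=\langle u_i,u_j\rangle$: $\nabla_{u_i}\Risk = \partial_{s_i}\Risk_{\red}\cdot u_* + \sum_{j\neq i}\partial_{r_{ij}}\Risk_{\red}\cdot u_j$ (with $\partial_{r_{ij}}$ meaning the partial treating $R$ as symmetric, i.e.\ the coefficient collecting both $(i,j)$ and $(j,i)$ appearances). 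Then apply the projector $P_i := I_d - u_iu_i^\top$, using $P_iu_* = u_* - s_i u_i$ and $P_iu_j = u_j - r_{ij}u_i$. Finally take inner products of $\partial_t u_i = -mP_i\nabla_{u_i}\Risk$ with $u_*$ to get $\partial_t s_i$ and with $u_j$ to get $\partial_t r_{ij}$; the identities $\langle P_iu_*, u_*\rangle = 1-s_i^2$, $\langle P_iu_*,u_j\rangle = s_j - r_{ij}s_i$, $\langle P_iu_j,u_*\rangle = s_j - r_{ij}s_i$, $\langle P_iu_j,u_p\rangle = r_{jp}-r_{ij}r_{ip}$ produce precisely \eqref{eq:dynamics-2}, and for $r_{ij}$ one gets two groups of terms — one from the $u_i$-flow and one from the $u_j$-flow — matching \eqref{eq:dynamics-3}--\eqref{eq:dynamics-4}. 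Computing $-m\,\partial_{s_i}\Risk_{\red} = a_i V'(s_i)$ and $-m\,\partial_{r_{ij}}\Risk_{\red} = -\frac1m a_i a_j U'(r_{ij})$ (the $1/m^2$ from the risk against the $m$ prefactor) closes the calculation.

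The main obstacle, and the place where care is genuinely required rather than routine, is the combinatorics/symmetrization in differentiating the quadratic form $\frac{1}{2m^2}\sum_{i,j}a_ia_jU(r_{ij})$ with respect to $r_{ij}$ when $R$ is symmetric and has fixed diagonal $r_{ii}=1$: one must be consistent about whether $r_{ij}$ and $r_{ji}$ are treated as one variable or two, and the factor-of-two bookkeeping has to match the $(1-s_i^2)$-type projector identities so that the final ODEs come out with the stated (non-symmetric-looking) coefficients. A secondary point to address is existence and uniqueness of the solution to \eqref{eq:dynamics-1}--\eqref{eq:dynamics-4}: since $V,U\in C^2$ with bounded derivatives on $[-1,1]$ by Assumption A2, the right-hand sides are locally Lipschitz, and one checks the invariant region $s_i\in[-1,1]$, $R\succeq 0$ with unit diagonal is preserved (it is the image of the sphere constraints under the reduction), giving global existence; together with the fact that any gradient-flow solution $(a(t),u(t))$ maps to a solution of the reduced system, uniqueness transfers. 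I would defer the detailed verification of these calculations to Appendix~\ref{sec:proof-prop-dynamics} as the excerpt indicates.
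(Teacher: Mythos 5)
Your proposal is correct and follows essentially the same route as the paper's proof: expand the squared loss and use the bivariate-Gaussian correlations $s_i$, $r_{ij}$ to get \eqref{eq:risk}, then differentiate the risk written through $V(\langle u_*,u_i\rangle)$ and $U(\langle u_i,u_j\rangle)$, apply the projector $I_d-u_iu_i^\top$, and take inner products with $u_*$ and $u_j$ to obtain \eqref{eq:dynamics-1}--\eqref{eq:dynamics-4}. Your added remarks on the symmetrization bookkeeping and on local Lipschitzness/uniqueness are consistent with (and slightly more explicit than) what the paper records.
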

The input dimension $d$ does not appear in the reduced ODEs, 
 Eqs.~\eqref{eq:dynamics-1} to \eqref{eq:dynamics-4}, and only plays a role in 
 the initialization of the $s_i$'s and the $r_{ij}$'s. 
 Namely, since $u_{i,\init}\sim \Unif(\S^{d-1})$, we can represent $u_{i,\init}= g_i/\|g_i\|_2$
 with $g_i\sim\sN(0,\id_d/d)$. By concentration of $\|g_i\|_2$, this implies
 that, for $1 \le i<j\le m$,  $s_i$, $r_{ij}$ are approximately $\sN(0,1/d)$.
 
 This discussion immediately yields the following consequence.
 \begin{coro}\label{coro:high_dim_lim}
 Let $(a(t), u(t))$ be the solution of the gradient flow ODEs 
	\eqref{eq:GF-1}, \eqref{eq:GF-2} with initialization~\eqref{eq:Initialization}, and
	let $(a^0(t),s^0(t), R^0(t))$ be the unique solution of  Eqs.~\eqref{eq:dynamics-1} to \eqref{eq:dynamics-4},
	with initialization $a^0_i(0) = a_i(0)$, $s^0_i(0) = 0$, $r^0_{ij}(0) = 0$ for $i\neq j$.
	Then, for any fixed $T$ (possibly dependent on $m$ but not on $d$), the followings holds with probability at least $1 - \exp(-C' m)$ over the i.i.d. initialization $(a_i (0), u_i (0))_{i \in [m]}$:
	\begin{align}
	&\sup_{t \in [0,T]}\big|\Risk(a(t),u(t))- \Risk_{\red} (a^0(t), s^0(t), R^0(t))\big| \le \frac{C M}{\sqrt{d}} \exp \left( M T (1 + T)^2 / \veps^2 \right) \, ,\label{eq:Risk-LargeD}\\
	& \max \left( \sup_{t \in [0,T]}\frac{1}{\sqrt{m}}\|a(t)-a^0(t)\|_2, \frac{1}{\sqrt{m}} \sup_{t \in [0,T]}\|s(t)-s^0(t)\|_2 \right) \le \frac{1}{\sqrt{d}} \cdot C \exp \left( M T (1 + T)^2 / \veps^2 \right) \, , \\
	&\sup_{t \in [0, T]}  \frac{1}{m}\|R(t)-R^0(t)\|_{\rm F} \le \frac{1}{\sqrt{d}} \cdot C \exp \left( M T (1 + T)^2 / \veps^2 \right) \,.\label{eq:R-LargeD}
	 \end{align}
     Here $C, C'$ are absolute constants and $M$ only depends on the $M_i$'s in Assumptions A1-A3.
 \end{coro}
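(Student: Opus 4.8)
The plan is to prove Corollary~\ref{coro:high_dim_lim} by combining Proposition~\ref{prop:dynamics} (which makes the $d$-dependence enter \emph{only} through the random initialization of $s_i$ and $r_{ij}$) with a Gr\"onwall-type stability estimate for the reduced ODE system \eqref{eq:dynamics-1}--\eqref{eq:dynamics-4} with respect to perturbations of the initial condition. First I would invoke Proposition~\ref{prop:dynamics}: the solution $(a(t),u(t))$ of the original flow can be tracked through $(a(t),s(t),R(t))$ solving the reduced ODEs with \emph{random} initialization $a_i(0)$, $s_i(0)=\langle u_{i,\init},u_*\rangle$, $r_{ij}(0)=\langle u_{i,\init},u_{j,\init}\rangle$. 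Writing $u_{i,\init}=g_i/\|g_i\|_2$ with $g_i\sim\sN(0,\id_d/d)$ i.i.d., a standard Gaussian concentration / union bound argument over the $m$ norms and the $\binom{m}{2}$ inner products shows that with probability at least $1-e^{-C'm}$ one has $|s_i(0)|\le C\sqrt{(\log m)/d}$ and $|r_{ij}(0)|\le C\sqrt{(\log m)/d}$ simultaneously, and more crudely $\max_i|s_i(0)|^2+\frac1m\sum_{i\ne j}r_{ij}(0)^2 \le C/d$ in the relevant averaged norms; hence $\frac{1}{\sqrt m}\|s(0)-s^0(0)\|_2$ and $\frac1m\|R(0)-R^0(0)\|_{\rm F}$ are both $O(1/\sqrt d)$ with the stated probability, while $a(0)=a^0(0)$ exactly.

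The core analytic step is a quantitative dependence-on-initial-data bound for the reduced system. I would introduce the natural ``mean-field'' norm $\mathcal{D}(t) := \frac1{\sqrt m}\|a(t)-a^0(t)\|_2 + \frac1{\sqrt m}\|s(t)-s^0(t)\|_2 + \frac1m\|R(t)-R^0(t)\|_{\rm F}$ and bound $\frac{\diff}{\diff t}\mathcal{D}(t)$ by differentiating \eqref{eq:dynamics-1}--\eqref{eq:dynamics-4} along the two trajectories and subtracting. The key structural observations are: (i) under Assumptions A1--A3 all quantities stay uniformly bounded — $|a_i(t)|$ grows at most like $e^{Mt/\veps}$ from \eqref{eq:dynamics-1} since $V$ and $U$ are bounded, $|s_i|\le1$, $|r_{ij}|\le1$ by construction; (ii) $V,U,V',U'$ are globally Lipschitz on $[-1,1]$ by A2; and (iii) the empirical averages $\frac1m\sum_j a_j U(r_{ij})$ etc.\ are, by Cauchy--Schwarz, controlled by $\frac1{\sqrt m}\|a\|_2$ times the relevant Frobenius/$\ell^2$ differences, so no loss of a factor $m$ occurs. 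Putting these together yields a differential inequality of the form $\frac{\diff}{\diff t}\mathcal{D}(t)\le \frac{K(t)}{\veps^2}\mathcal{D}(t)$ where $K(t)$ is a polynomial in $t$ with $M$-dependent coefficients (the $1/\veps^2$ coming from the $1/\veps$ in \eqref{eq:dynamics-1} together with the feedback of $a$ into \eqref{eq:dynamics-2}--\eqref{eq:dynamics-4}); integrating Gr\"onwall over $[0,T]$ gives $\sup_{t\le T}\mathcal{D}(t)\le \mathcal{D}(0)\exp(MT(1+T)^2/\veps^2)$, which is exactly the bound claimed in the second and third displayed inequalities. The risk bound \eqref{eq:Risk-LargeD} then follows by plugging the trajectory difference into the explicit formula \eqref{eq:risk} for $\Risk_{\red}$ and using once more the Lipschitz/boundedness of $V,U$ together with Cauchy--Schwarz on the sums, picking up one extra factor $M$.

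A few technical points need care. One must verify that the reduced trajectory $(a(t),s(t),R(t))$ actually exists and stays in the admissible region (in particular that $R(t)$ remains a Gram matrix, i.e.\ the constraint $R\succeq0$, $r_{ii}=1$ is preserved) on all of $[0,T]$ — this is automatic from Proposition~\ref{prop:dynamics} since it is the image of the genuine flow $(a(t),u(t))$ — and similarly that $(a^0,s^0,R^0)$, started from the deterministic point $s^0=0,R^0=\Id$, exists for all time; a priori existence follows from the a priori bound $|a_i(t)|\le M e^{Mt/\veps}$ which prevents blow-up. A second subtlety is bookkeeping the exact powers of $T$ and $1/\veps$ in $K(t)$ so that the final exponent is genuinely $MT(1+T)^2/\veps^2$ and not something larger; this just requires keeping the $a$-equation and the $s,R$-equations on the right footing (the $a$ dynamics contributes $1/\veps$, and it feeds multiplicatively into $s,R$, giving the $1/\veps^2$, while the quadratic-in-$a$ term $\frac1{m^2}\sum a_i a_j U(r_{ij})$ contributes the extra $(1+T)$ factor through the growing magnitude of $a$). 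I expect the main obstacle to be precisely this careful Gr\"onwall bookkeeping — making the empirical-average terms close in the averaged norms without an $m$-dependent loss, and tracking the $\veps$ and $T$ dependence through the coupled $a\leftrightarrow(s,R)$ feedback — rather than any conceptually deep step; the probabilistic part (concentration of $s_i(0),r_{ij}(0)$) and the reduction to \eqref{eq:risk} are routine.
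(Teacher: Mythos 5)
Your overall route coincides with the paper's: reduce to the $d$-independent system via Proposition~\ref{prop:dynamics}, control the random initialization $(s(0),R(0))$ by Gaussian concentration, run a Gr\"onwall stability estimate in an averaged norm for the difference of the two reduced trajectories, and convert to the risk bound \eqref{eq:Risk-LargeD} via the Lipschitz/boundedness properties of $V,U$ from A2 together with Cauchy--Schwarz. (One small slip on the probabilistic side: a per-entry bound $|s_i(0)|,|r_{ij}(0)|\le C\sqrt{(\log m)/d}$ cannot hold with probability $1-e^{-C'm}$ after a union bound over $\asymp m^2$ entries; but the averaged statement $\sum_i s_i(0)^2+\frac{1}{m}\sum_{i\neq j}r_{ij}(0)^2\le Cm/d$, which is what the paper uses and what your norm $\mathcal{D}(0)$ needs, does hold at that level.)

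The genuine gap is your a priori control of the second-layer weights. You propose $|a_i(t)|\le M e^{Mt/\veps}$, obtained from \eqref{eq:dynamics-1} using only boundedness of $V,U$. But the coefficient $K(t)$ in your differential inequality for $\mathcal{D}(t)$ necessarily contains factors of $\sup_i|a_i(t)|$ and $\sup_i|a_i(t)|^2$ (the $a$'s multiply the difference terms coming from \eqref{eq:dynamics-2}--\eqref{eq:dynamics-4}), so with the exponential a priori bound Gr\"onwall yields a factor of order $\exp\bigl(c\,\veps\,e^{2MT/\veps}\bigr)$, which is not of the claimed form $\exp\bigl(MT(1+T)^2/\veps^2\bigr)$ with $M$ depending only on the $M_i$'s. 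The missing ingredient, used in the paper's argument (cf.\ the proof of Lemma~\ref{lem:a_bound_r_perp}), is monotonicity of the risk along the gradient flow: since $\partial_t\Risk\le 0$, Cauchy--Schwarz gives $\veps|\partial_t a_i|=\bigl|\E\bigl[\sigma(\langle u_i,x\rangle)\,(y-f(x;a,u))\bigr]\bigr|\le \|\sigma\|_{L^2}\sqrt{2\Risk(a(0),u(0))}\le M$, hence the linear bound $|a_i(t)|\le M(1+t/\veps)$, and the same holds for the $a_i^0$'s (the reduced system with admissible initialization is the image of a gradient flow, or one repeats the argument with $\Risk_{\red}$). With this replacement the Gr\"onwall coefficient becomes $M(1+t/\veps)^2$, whose integral over $[0,T]$ gives exactly the exponent $MT(1+T)^2/\veps^2$, and the rest of your argument (trajectory bound, then the risk comparison picking up one more factor of $(1+T/\veps)^2$ absorbed into $M e^{MT(1+T)^2/\veps^2}$) goes through as in the paper.
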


The proof of Corollary~\ref{coro:high_dim_lim} is deferred to Appendix~\ref{sec:proof_only_coro}.
 From now on, we will assume the initialization $s^0_i(0) = 0$, $r^0_{ij}(0) = 0$ for $i \neq j$,
 but drop the superscript $0$ for notational simplicity. We notice in passing that
 the right-hand sides of Eqs.~\eqref{eq:Risk-LargeD} to \eqref{eq:R-LargeD} are independent
 of $m$: this approximation step holds uniformly over $m$. (Note that the left hand sides
 are normalized by $m$ as to yield the root mean square error per entry.)
 
\modif{\subsection{Elimination of the products $\langle u_i, u_j \rangle$}\label{sec:reduction-2}}

\noindent In order to state the reduction $(ii)$ outlined above, we  define the mean field
risk as
\begin{align}\label{eq:RmfDef}
\Risk_{\mf}(a,s):= \Risk_{\red} (a, s, R=ss^\top) = \frac{1}{2} \Vert \varphi \Vert^2_{L^2} -
	 \frac{1}{m} \sum_{i=1}^m a_i V(s_i) +  \frac{1}{2m^2} \sum_{i,j=1}^m a_i a_j U(s_{i}s_{j})
	 \, .
\end{align}
Further, we denote by
  $\{ \amf_i (t), \smf_i (t) \}_{i=1}^{m}$ the solution to the following ODEs:
\begin{equation}\label{eq:coupled_MF}
\begin{split}
\veps\partial_t a_i = \, & V(s_i) - \frac{1}{m} \sum_{j=1}^{m} a_j U(s_i s_j) \, ,\\
\partial_t s_i = \, &  a_i \left( 1 - s_i^2 \right) \left( V'(s_i) - \frac{1}{m} \sum_{j = 1}^{m} a_j U'(s_i s_j) s_j \right) \, .
\end{split}
\end{equation}
Note that \eqref{eq:coupled_MF} would be identical to \eqref{eq:dynamics-1}-\eqref{eq:dynamics-2} 
if we had $r_{ij}= s_is_j$. A priori, this is not the case. However, the two systems of 
equations are close to each other for large $m$ as made precise by our next proposition, which
formalizes reduction $(ii)$. 

\modif{The intuitive explanation for the approximation $r_{ij}\approx s_is_j$
is quite interesting. For large $m$, due to `propagation of chaos', the neuron weights 
$\{(u_i,a_i)\}_{i\le m}$ are approximately independent. Further, because
of the symmetry of the problem under rotations that keep $u_*$
fixed,  weights $(u_i)_{i\le n}$ are approximately uniformly distributed
conditional on $s_i=\<u_i,u_*\>$. As a consequence, decomposing $u_i=s_iu_*+u_i^{\perp}$,
we have $r_{ij} = s_is_j+ \<u_i^{\perp},u_j^{\perp}\>$, with $u_i^{\perp}$, $u_j^{\perp}$
approximately uniform on $\operatorname{span} \{ u_* \}^{\perp}$ and independent. Therefore, in high dimensions we have
$r_{ij} \approx  s_is_j$.}
\vspace{0.5em}
\begin{proposition}[Reduction to flow in $\R^{2m}$]\label{prop:coupling_W2_bd}
Let $(a_i(t),s_i(t),r_{ij}(t))_{1 \le i<j\le m}$ be the unique solution of  the ODEs~\eqref{eq:dynamics-1}-\eqref{eq:dynamics-4} 
with initialization $s_i (0) = 0$, $r_{ij} (0) = 0$ for all $1 \le i \neq j\le m$.
Let $(\amf_i (t), \smf_i (t))_{i\le m}$  
be the unique solution of  the ODEs~\eqref{eq:coupled_MF}
with initialization $\smf_i (0) = 0$, $\amf_{i} (0) = a_i(0)$ for all $i\le m$.

If assumptions A1-A3 hold, then for any $T < \infty$ there exists a constant
\begin{equation}
	C(T) = M \exp(M T (1 + T)^2 / \veps^2)
\end{equation}
(with $M$ depending 
on the constants $\{ M_i \}_{1 \le i \le 3}$ appearing in Assumptions A1-A3 only) such that:
	\begin{equation*}
		\sup_{t \in [0, T]}\frac{1}{m} \sum_{i = 1}^{m} \big\|(a_i (t), s_i (t))-
		(\amf_i (t), \smf_i (t))\big\|_2^2 \le \frac{C(T)}{m} \, .
	\end{equation*}
	Consequently, 
	\begin{equation*}
		\sup_{t \in [0, T]} \left\vert \Risk_{\red} \left( a(t), s(t), R(t) \right) - 
		\Risk_{\mf} \left( \amf(t), \smf(t)\right) \right\vert \le \frac{C(T)}{\sqrt{m}} \, .
	\end{equation*}
\end{proposition}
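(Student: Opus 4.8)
The plan is to prove the proposition by a Gr\"onwall-type coupling argument, comparing the solution $(a(t),s(t),R(t))$ of \eqref{eq:dynamics-1}--\eqref{eq:dynamics-4} with the solution $(\amf(t),\smf(t))$ of \eqref{eq:coupled_MF} through the auxiliary quantities $\delta_{ij}(t):=r_{ij}(t)-s_i(t)s_j(t)$ (with the convention $\delta_{ii}=r_{ii}-s_i^2=1-s_i^2$, which is $O(1)$ but whose drift is controlled) together with the per-neuron discrepancies $\Delta a_i:=a_i-\amf_i$ and $\Delta s_i:=s_i-\smf_i$. The key observation driving the whole argument is that if $r_{ij}=s_is_j$ exactly then \eqref{eq:dynamics-1}--\eqref{eq:dynamics-2} coincide with \eqref{eq:coupled_MF}; so the error is entirely sourced by how far $R$ departs from the rank-one ansatz $ss^\top$, and conversely $R-ss^\top$ is driven only by terms that are themselves small on average. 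First I would record a priori bounds: under A1--A3, $|a_i(t)|$ and $|\amf_i(t)|$ stay bounded by $M\exp(MT/\veps)$-type quantities on $[0,T]$ (integrate \eqref{eq:dynamics-1} using $\|V\|_\infty,\|U\|_\infty\le M$), and $|s_i|,|r_{ij}|\le 1$ automatically since they are inner products of unit vectors (for the reduced ODE one checks $|s_i|\le 1$ is preserved by the $(1-s_i^2)$ factor); the Lipschitz constants of $V,V',U,U'$ on $[-1,1]$ are all $\le M$ by A2.

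The core step is to derive a differential inequality for the average squared error
\[
E(t):=\frac{1}{m}\sum_{i=1}^m\Big(\Delta a_i(t)^2+\Delta s_i(t)^2\Big)
\qquad\text{and}\qquad
D(t):=\frac{1}{m^2}\sum_{i,j=1}^m\delta_{ij}(t)^2 .
\]
Differentiating $\Delta a_i$ using \eqref{eq:dynamics-1} and the first line of \eqref{eq:coupled_MF}, the difference of right-hand sides splits into (i) terms Lipschitz in $(\Delta a_j,\Delta s_j)$, contributing $\lesssim (M/\veps)\sqrt{E}$ after Cauchy--Schwarz over $j$, and (ii) the genuinely new term $\frac1m\sum_j \amf_j\big(U(r_{ij})-U(s_is_j)\big)$, which is $\lesssim (M/\veps)\cdot\frac1m\sum_j|\delta_{ij}|\lesssim (M/\veps)\sqrt{D}$. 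A parallel computation for $\partial_t\Delta s_i$ from \eqref{eq:dynamics-2} vs. the second line of \eqref{eq:coupled_MF} gives the same structure; here one must also absorb the extra summand $\frac1m\sum_j a_jU'(r_{ij})(s_j-r_{ij}s_i)$ versus $\frac1m\sum_j a_jU'(s_is_j)(s_j-s_is_j s_i)$, whose difference is again $O\big(\frac1m\sum_j|\delta_{ij}|+\sqrt E\big)$ after using $s_j-r_{ij}s_i=(s_j-s_is_js_i)-\delta_{ij}s_i$. Summing over $i$ and Cauchy--Schwarz yields
\[
\frac{\diff}{\diff t}E(t)\;\le\; \frac{C_1(M)}{\veps^2}\,\big(E(t)+D(t)\big).
\]
Separately, for $D(t)$ one differentiates $\delta_{ij}=r_{ij}-s_is_j$ using \eqref{eq:dynamics-3}--\eqref{eq:dynamics-4} for $\partial_t r_{ij}$ and the product rule with \eqref{eq:dynamics-2} for $\partial_t(s_is_j)$; the miraculous cancellation (the same one behind the rank-one ansatz being exact when $R=ss^\top$) leaves a right-hand side that is a sum of terms each proportional either to some $\delta_{ip}$, $\delta_{jp}$, $\delta_{ij}$, or to $\Delta a$/$\Delta s$ discrepancies, with $O(M)$ coefficients. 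Crucially the "source'' that would otherwise make $D$ grow from $0$ involves the diagonal $\delta_{ii}=1-s_i^2$ entering with a $\frac1m$ prefactor, so it contributes only $O(1/m)$ to $\diff D/\diff t$. This gives
\[
\frac{\diff}{\diff t}D(t)\;\le\; \frac{C_2(M)}{\veps^2}\,\big(E(t)+D(t)\big)+\frac{C_3(M)}{m}.
\]

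Adding the two inequalities, with $\Phi(t):=E(t)+D(t)$ and $\Phi(0)=0$ (initialization $s_i(0)=0,r_{ij}(0)=0$ for $i\ne j$, $\amf_i(0)=a_i(0)$, so $E(0)=0$ and $D(0)=\frac1m$ from the diagonal only, i.e. $\Phi(0)=O(1/m)$), Gr\"onwall's lemma gives $\Phi(t)\le \frac{C(M)}{m}\exp\!\big(C(M)\,t/\veps^2\big)$ on $[0,T]$, hence $\sup_{t\le T}\Phi(t)\le C(T)/m$ with $C(T)=M\exp(MT(1+T)^2/\veps^2)$ after absorbing polynomial-in-$T$ factors into the exponent (the $(1+T)^2$ power matches the a priori growth of $\|a\|_\infty$). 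This is exactly the first displayed bound of the proposition. For the risk bound, write $\Risk_{\red}(a,s,R)-\Risk_{\mf}(\amf,\smf)$ using \eqref{eq:risk} and \eqref{eq:RmfDef}: it is a sum of $\frac1m\sum_i\big(\amf_iV(\smf_i)-a_iV(s_i)\big)$, bounded by $M\cdot\frac1m\sum_i(|\Delta a_i|+|\Delta s_i|)\le M\sqrt{E}\le C(T)/\sqrt m$ by Cauchy--Schwarz, plus the quadratic term $\frac1{2m^2}\sum_{ij}\big(a_ia_jU(r_{ij})-\amf_i\amf_jU(\smf_i\smf_j)\big)$, which one bounds by telescoping into an $a$-difference piece ($\lesssim \|a\|_\infty M\sqrt E$) and a $U(r_{ij})-U(\smf_i\smf_j)$ piece; the latter is $\lesssim \frac{M}{m^2}\sum_{ij}\big(|\delta_{ij}|+|s_is_j-\smf_i\smf_j|\big)\lesssim M(\sqrt D+\sqrt E)\le C(T)/\sqrt m$. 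Collecting terms gives the second bound.

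The main obstacle I anticipate is not the Gr\"onwall mechanics but verifying the structural cancellation in $\partial_t\delta_{ij}$ cleanly enough that no term of order $\Omega(1)$ (as opposed to $O(\sqrt D+\sqrt E+1/m)$) survives on the right-hand side — in particular making sure the diagonal entries $\delta_{ii}=1-s_i^2$ only ever feed into $\partial_t\delta_{ij}$ through $\frac1m\sum_p(\cdots)$-type sums, so that their $O(1)$ size is damped to $O(1/m)$. This is the computation that genuinely uses the specific algebraic form of \eqref{eq:dynamics-2}--\eqref{eq:dynamics-4} (the $(s_j-r_{ij}s_i)$, $(r_{jp}-r_{ip}r_{ij})$ structures), and it is where an error in bookkeeping would break the $1/\sqrt m$ rate; everything else is routine Lipschitz-estimate-plus-Gr\"onwall, with the $\veps$-dependence entering only through the $1/\veps$ factors on the right-hand side of the $a_i$-equations and hence exponentiating as $\exp(C/\veps^2)$.
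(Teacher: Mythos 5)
Your proposal is correct and follows essentially the same route as the paper's proof: introduce $r^{\perp}_{ij}=r_{ij}-s_is_j$, verify via Eqs.~\eqref{eq:dynamics-2}--\eqref{eq:dynamics-4} that its evolution is driven only by $r^{\perp}$-terms with bounded coefficients (the cancellation you flag is exactly the paper's Eq.~\eqref{eq:dynamics_perp}, with the diagonal $1-s_i^2$ entering only through $\tfrac1m\sum_p$ sums or the initialization), then Gr\"onwall plus the a priori bound $|a_i(t)|\le M(1+t/\veps)$, and finally Cauchy--Schwarz for the risk difference. The only difference is bookkeeping: the paper first closes a Gr\"onwall bound on $\sum_{i,j}(r^{\perp}_{ij})^2$ as a standalone lemma and then feeds it as a source into a second Gr\"onwall for $\sum_i(\Delta a_i^2+\Delta s_i^2)$, whereas you run a single coupled Gr\"onwall on $E+D$ -- an equivalent packaging.
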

The proof of this proposition is deferred to Appendix~\ref{sec:proof_MF}. Now, combining the propositions and corollaries in this section, we deduce that with high probability over the i.i.d. initialization,
\begin{equation}
	\sup_{t \in [0, T]} \left\vert \Risk(a(t),u(t)) - 
	\Risk_{\mf} \left( \amf(t), \smf(t)\right) \right\vert \le \left( \frac{1}{\sqrt{d}} + \frac{1}{\sqrt{m}} \right) C M \exp(M T (1+T)^2 / \veps^2).\label{eq:FinalApproxRisk}
\end{equation}

\subsection{Connection with mean field theory}\label{sec:connect_mf}
Consider the empirical distributions of the neurons:
\begin{align}
\hrho_t &:= \frac{1}{m}\sum_{i=1}^{m} \delta_{(a_i (t), s_i (t))}\, ,\\
\rho_t &:= \frac{1}{m}\sum_{i=1}^{m} \delta_{(\amf_i (t), \smf_i (t))}\,  ,
\end{align}
with $(a_i(t),s_i(t))_{i\le m}$, $(\amf_i (t), \smf_i (t))_{i\le m}$  as
in the statement of Proposition \ref{prop:coupling_W2_bd}, i.e., solving
(respectively) Eqs.~\eqref{eq:dynamics-1}-\eqref{eq:dynamics-4} and
Eq.~\eqref{eq:coupled_MF} with initial conditions as given there.

Then, it is immediate  to show that  $\rho_t$  solves (in weak sense) the following
continuity partial differential equation (PDE) 
(we refer to \cite{ambrosio2005gradient,santambrogio2015optimal} for the definition of weak solutions 
and basic properties, and Appendix~\ref{sec:derive_MF} for a short derivation.)
\begin{align}\label{eq:MF_dynamics_rho_bar}
		\partial_t \rho_t (a, s) &= - \nabla \cdot \left( \rho_t \Psi \left( a, s; \rho_t \right) \right) \\
		&:= - \left( \partial_a \left( \rho_t \Psi_a \left( a, s; \rho_t \right) \right) + \partial_s \left( \rho_t \Psi_s \left( a, s; \rho_t \right) \right) \right),
\end{align}
	where  $\Psi = (\Psi_a, \Psi_s)$ is given by
\begin{align}
		\Psi_a (a, s; \rho) = \, & \veps^{-1}\cdot \left( V(s) - \int_{\R^2} a_1 U(s s_1) \rho (\mathrm{d} a_1, \mathrm{d} s_1) \right), \\
		\Psi_s (a, s; \rho) = \, & a(1 - s^2) \cdot \left( V'(s) - \int_{\R^2} a_1s_1 U'(s s_1) \rho(\mathrm{d} a_1, \mathrm{d} s_1) \right). \label{eq:psi-s}
\end{align}
This equation can be extended to a flow in the whole space $(\mathscr{P} (\R^2), W_2)$
(all probability measures on $\R^2$ equipped with the second Wasserstein distance),
and interpreted as gradient flow with respect to this metric in the following risk:
\begin{align}\label{eq:risk_mf_star}
\Risk_{\mf,*}(\rho):=  \frac{1}{2} \Vert \varphi \Vert^2_{L^2} -
	 \int\! a V(s) \, \rho (\de a, \de s) +  \frac{1}{2} 
	  \int\! a_1a_2 U(s_1s_2) \, \rho (\de a_1, \de s_1)\,\rho (\de a_2, \de s_2)\, \,,
\end{align}
which is the obvious extension of $\Risk_{\mf}(a,s)$ of Eq.~\eqref{eq:RmfDef}
to general probability distributions. 
Proposition~\ref{prop:coupling_W2_bd} implies 
	that for any $T < \infty$, and under the above initial conditions,
\begin{align}	
	\sup_{t \in [0, T]} W_2 (\rho_t,\hrho_t) \le \sqrt{\frac{M \exp(M T (1 + T)^2 / \veps^2)}{m}} \, .
\end{align}
If we further denote by $\rho_t^d$ the empirical distribution of $(a_i(t),s_i(t))$, $i\le m$,
when $s_i(0)=\<u_i(0),u_*\>$, $u_i(0)\sim\Unif(\S^{d-1})$, a further application
of Corollary \ref{coro:high_dim_lim} yields
\begin{align}	
	\sup_{t \in [0, T]} W_2 (\rho^d_t, \rho_t) \le \sqrt{\frac{M \exp(M T (1 + T)^2 / \veps^2)}{m\wedge d}} \, .
	\label{eq:FinalApproxW2}
\end{align}

Starting with 	\cite{mei2018mean,chizat2018global,rotskoff2018parameters}, several authors
used continuity PDEs of the form \eqref{eq:MF_dynamics_rho_bar} to study the learning dynamics 
of two-layer neural networks. Following the physics tradition, this is referred to as 
the `mean-field theory' of two-layer neural networks. 
 Appendix \ref{sec:details} sketches an alternative approach to prove bounds of the form
 \eqref{eq:FinalApproxRisk}, \eqref{eq:FinalApproxW2} using the results of 
 \cite{mei2018mean,mei2019mean}. The present derivation has the advantages of yielding
 a sharper bound and of being self-contained.

\subsection{A general formulation}
\label{sec:general}

As mentioned above, the system of ODEs in Eq.~\eqref{eq:coupled_MF} 
is a special case of the Wasserstein gradient flow of Eq.~\eqref{eq:MF_dynamics_rho_bar} 
whereby we set $\rho_0= m^{-1}\sum_{i=1}^m\delta_{(a_i^{\mf}(0),s_i^{\mf}(0))}$.
In order to study the solutions of Eq.~\eqref{eq:MF_dynamics_rho_bar} 
(hence Eq.~\eqref{eq:coupled_MF})  we adopt the following framework. 
Let $(\Omega, \rho)$ denote a probability space. 
Let $a = a(\omega, t)$ and $s = s(\omega,t)$ ($\omega \in \Omega$, $t \geq 0$) be two 
measurable functions satisfying (dropping dependencies in $t$ below)
	\begin{equation}
		\label{eq:general-dynamics}
		\begin{split}
			\veps\partial_t a(\omega) = \, & V(s(\omega)) - \int  a(\nu) U(s(\omega) s(\nu)) \diff \rho(\nu) \, ,\\
			\partial_t s(\omega) = \, &  a(\omega) \left( 1 - s(\omega)^2 \right) \left( V'(s(\omega)) -  \int  a(\nu) U'(s(\omega) s(\nu)) s(\nu) \diff \rho(\nu) \right) \, .
		\end{split}
	\end{equation}
If $\omega = i \in \Omega = \{1, \dots, m\}$ endowed with the uniform measure, we obtain the equations \eqref{eq:coupled_MF}. In general, the push-forward $\rho_t$ of the measure $\rho$ through the map $\omega \in \Omega \mapsto (a(\omega,t), s(\omega,t)) \in \R^2$ satisfies the mean-field equation \eqref{eq:MF_dynamics_rho_bar}. As a consequence, the dynamics \eqref{eq:general-dynamics} can be viewed as a gradient flow on the risk 
\begin{equation}
	\label{eq:risk-general}
	\Risk_{\mf,*}(\rho) = \frac{1}{2} \Vert \varphi \Vert^2 - \int a(\omega) V(s(\omega))  \diff \rho(\omega) + \frac{1}{2} \int a(\omega_1)a(\omega_2) U(s(\omega_1)s(\omega_2))  \diff  \rho(\omega_1)  \diff \rho(\omega_2)  \, .
\end{equation}

\modif{We next characterize the landscape of the risk function $\Risk_{\mf,*} (\rho)$. In particular,
 we establish that under certain conditions, the global infimum of $\Risk_{\mf,*} (\rho)$ is $0$.
\begin{proposition}\label{prop:risk_infimum}
	The risk function $\Risk_{\mf,*} (\rho)$ can be expressed as
	\begin{equation}
		\Risk_{\mf,*} (\rho) = \, \frac{1}{2} \sum_{k=0}^{\infty} \left( \varphi_k - \sigma_k \int a(\omega) s(\omega)^k \diff \rho(\omega) \right)^2.
	\end{equation}
	Assume that $\sigma_k \neq 0$ for all $k \ge 0$, and that
	$$
	\sum_{k=0}^{\infty} \sigma_k^2 < \infty, \quad \sum_{k=0}^{\infty} \varphi_k^2 < \infty.
	$$ 
	Then, for any $\delta > 0$, there exists a triple $(a, s, \rho)$ such that $a \in L^2 (\rho)$, $s \in [-1, 1]$, and $\Risk_{\mf,*} (\rho) \le \delta^2$.
\end{proposition}
This proposition is proved in Appendix \ref{sec:proof_risk_infimum}.
}

\modif{\begin{remark}
Proposition \ref{prop:risk_infimum} complements Proposition \ref{propo:Approx}
which establishes approximability of the target function $f_*(x) = \varphi(\<u_*,x\>)$
 using the networks \eqref{eq:First-NNET} (Proposition \ref{prop:risk_infimum} can be seen as an 
 $m=d=\infty$ version of the latter). 
 We note that the proofs of these  propositions also provides insight into
 the structure of approximators. Namely, we can take the weights $u_i$ to be i.i.d.~with distribution $\rho(u)\de u$ that is symmetric under rotations around $u_*$, and $a_i=\alpha(u_i)$,
 $a(u)=\alpha(u)\rho(u)$ is concentrated close to $\<u_*,u\>=0$ (on a scale 
 that can rely on the desired approximation error). 
 
 Indeed, the analysis of gradient flow in Section \ref{sec:matched} reveals that the solutions 
 found by gradient flow are of this nature. Namely, neurons develop a small but strictly 
 positive alignment with $u_*$. The distribution and size of the alignment evolves over time.
\end{remark}
}

\modif{\begin{remark}\label{rmk:MultiIndex}
The results in this section can be generalized to multi-index models: $y = \varphi(U_*^\top x)$ where $U_* \in O(d, k)$, the space of $d \times k$ orthogonal matrices. Further, the corresponding limiting dynamics become
	\begin{equation*}
	\begin{split}
		\veps\partial_t a(\omega) = \, & V(s(\omega)) - \int  a(\nu) U \left( s(\omega)^\top s(\nu) \right) \diff \rho(\nu) \, ,\\
		\partial_t s(\omega) = \, &  a(\omega) \left( I_k - s(\omega) s(\omega)^\top \right) \left( \nabla V (s(\omega)) -  \int  a(\nu) U' \left( s(\omega)^\top s(\nu) \right) s(\nu) \diff \rho(\nu) \right) \, .
	\end{split}
\end{equation*}
Here, $s(\omega) \in \R^k$ represents $U_*^\top u(\omega)$, and for $s \in \R^k$, $\norm{s}_2 \le 1$:
\begin{equation*}
	V(s) = \E \left[ \varphi(G) \sigma(G_s) \right], \quad (G, G_s)\sim \normal\left(0, \left[\begin{matrix} I_k & s\\ s^\top & 1\end{matrix}\right]\right).
\end{equation*}
The definition of $U$ is the same as before.
\end{remark}}

\section{Numerical solution}
\label{sec:Empirical}

\begin{figure}
	\begin{center}
		\begin{subfigure}{0.48\linewidth}
			\includegraphics[width=\linewidth]{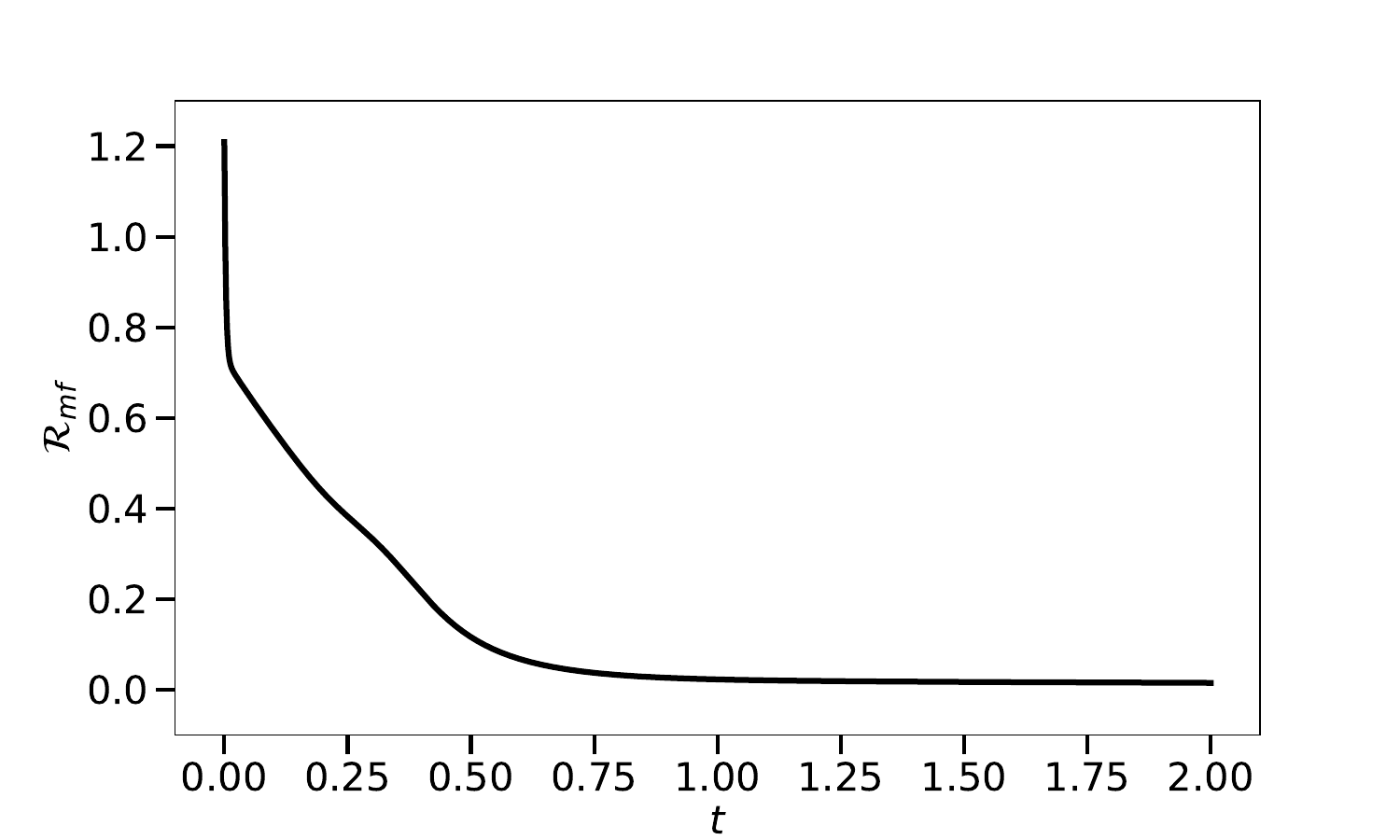}
			\includegraphics[width=\linewidth]{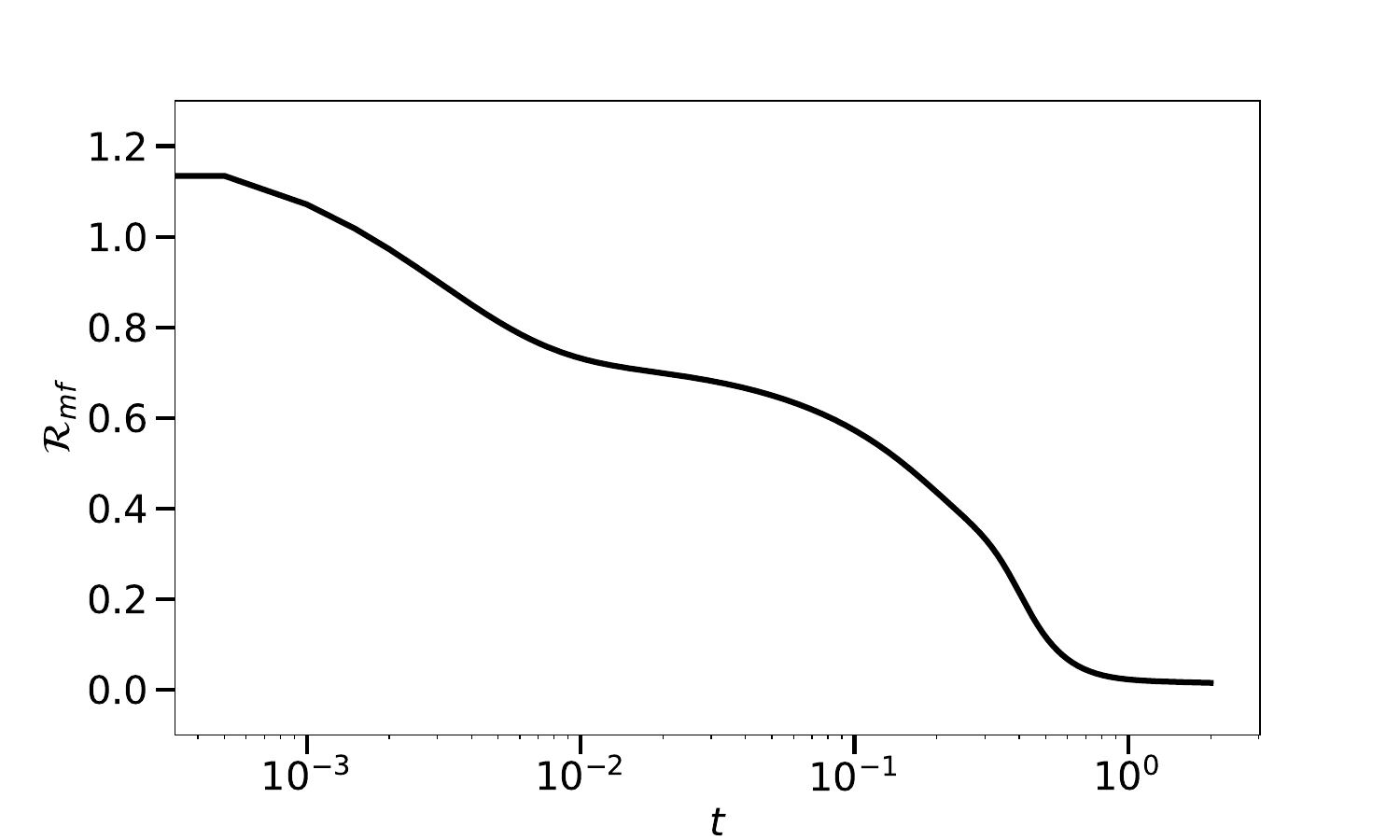}
			\includegraphics[width=\linewidth]{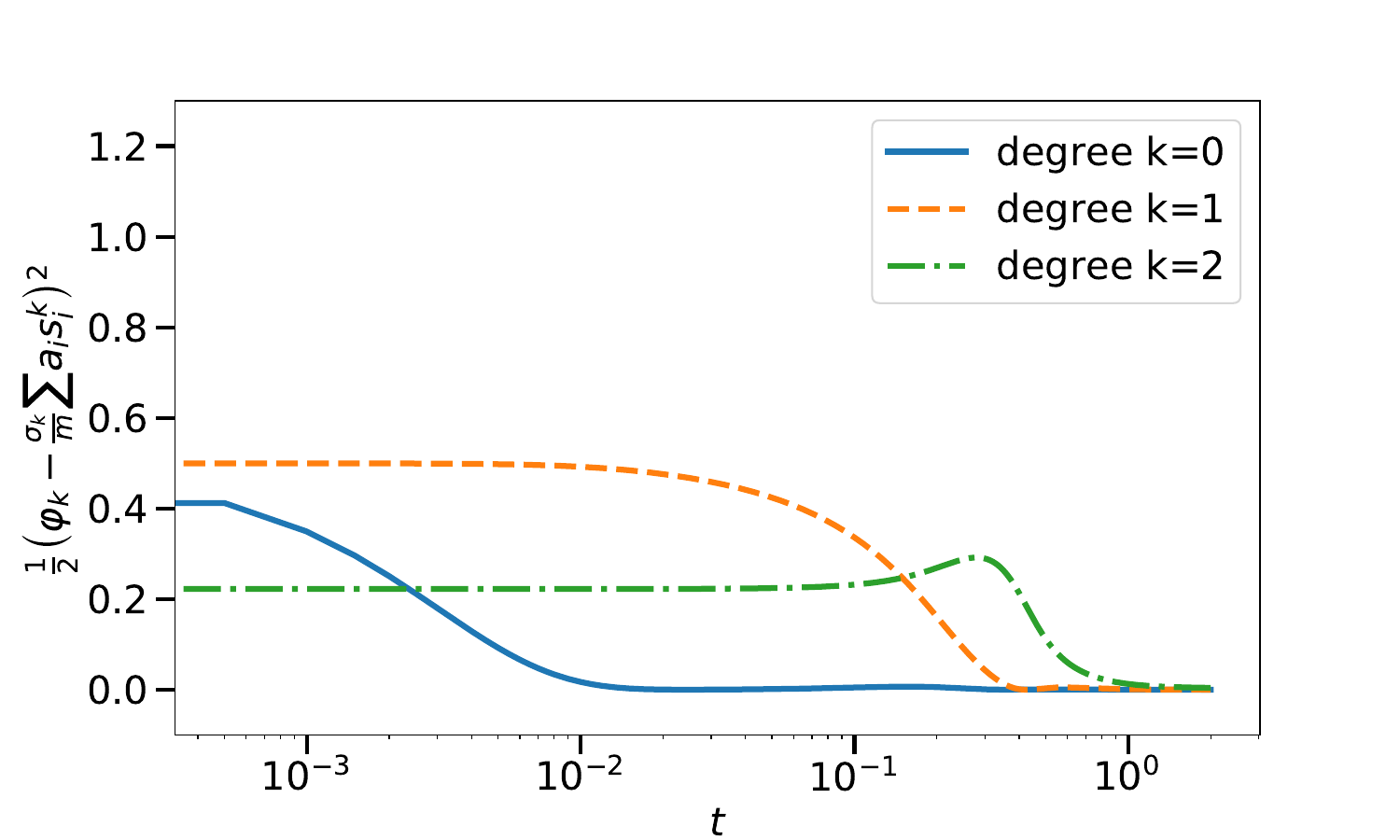}
			\caption{$\varepsilon = 10^{-3}$}
		\end{subfigure}
	\begin{subfigure}{0.48\linewidth}
		\includegraphics[width=\linewidth]{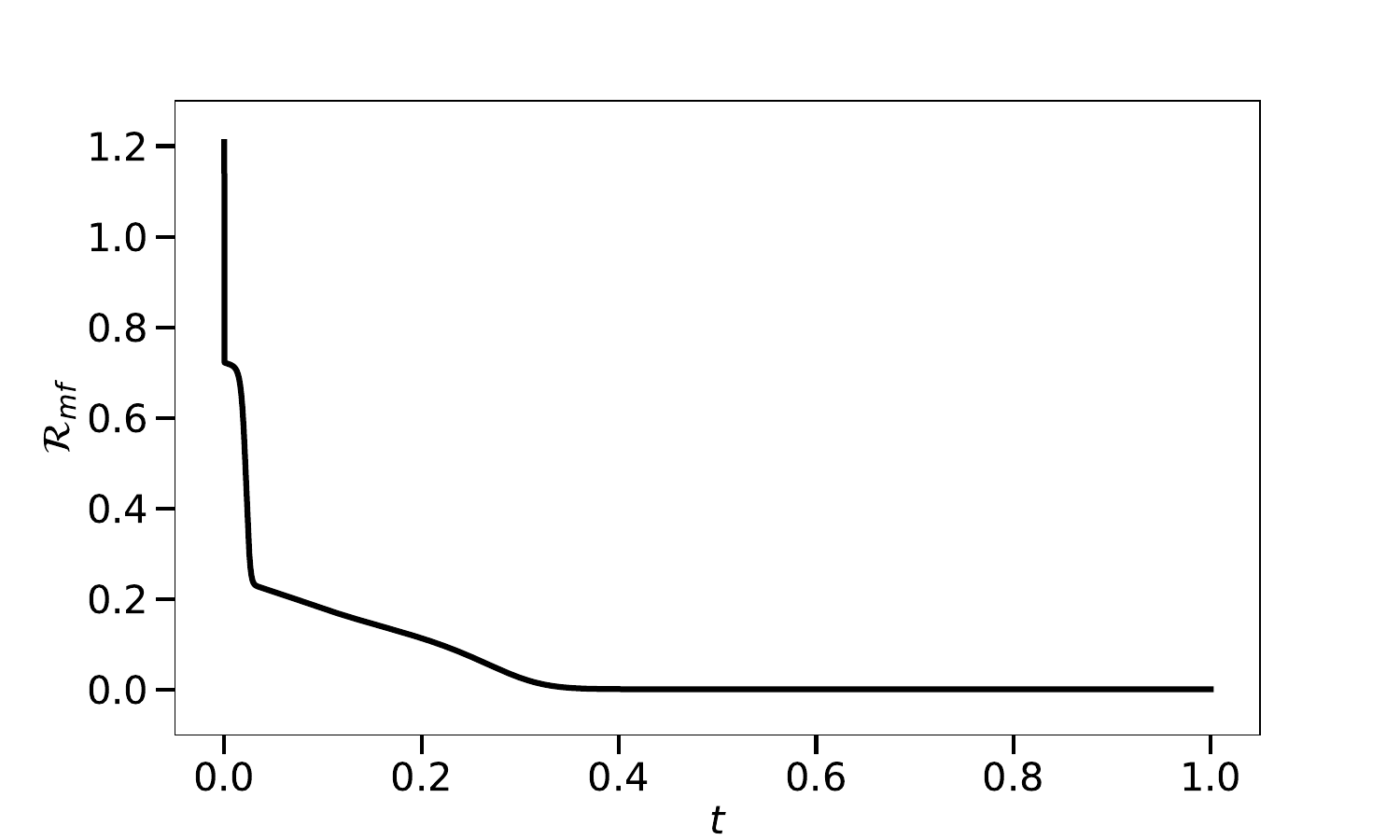}
	\includegraphics[width=\linewidth]{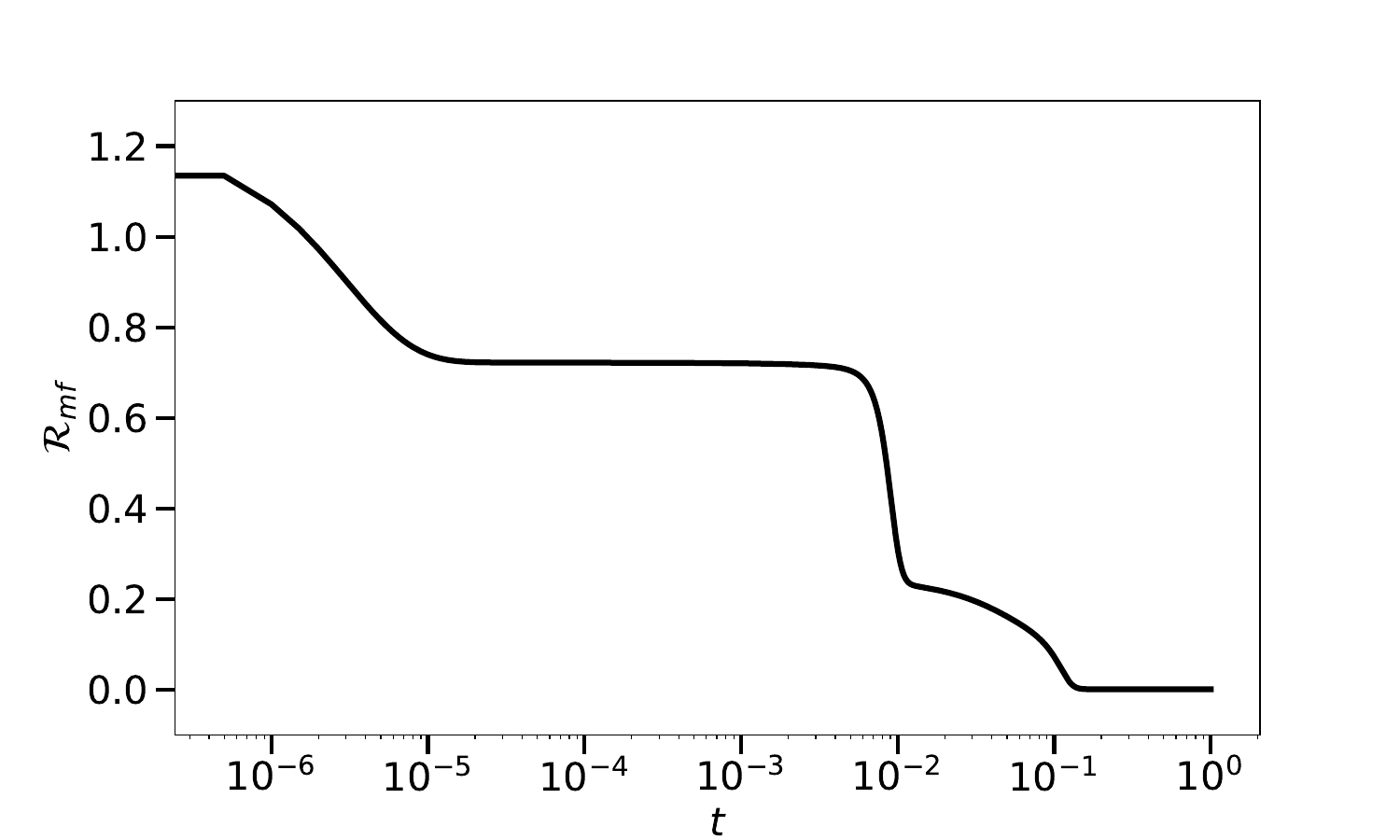}
		\includegraphics[width=\linewidth]{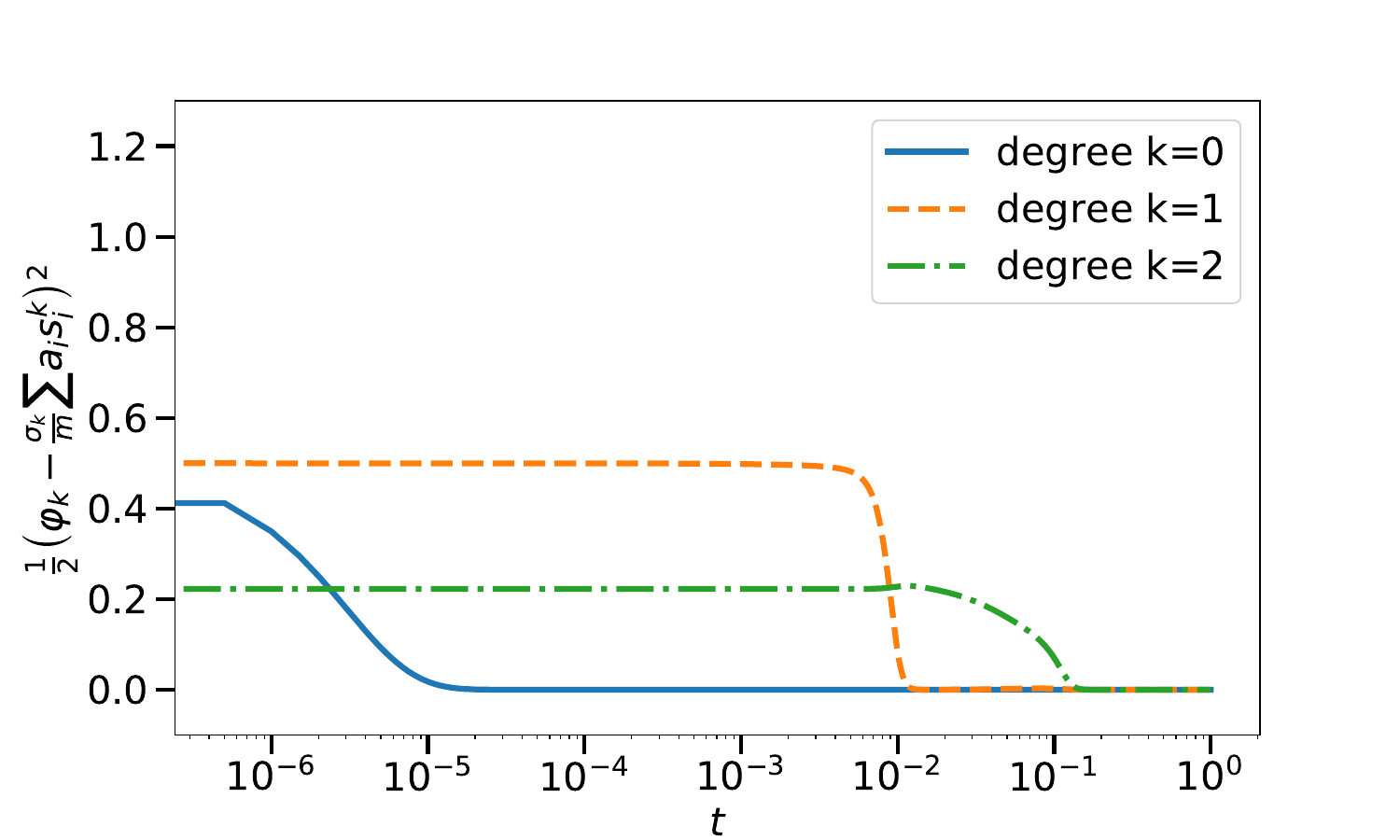}
		\caption{$\varepsilon = 10^{-6}$}
		\end{subfigure}
		\caption{Simulation of the \modif{simplified} neuron dynamics of Eqs.~\eqref{eq:coupled_MF}, with the
		 target function  of
		 Eq.~\eqref{eq:ExampleSimulations} and ReLU activations.
		 We use learning rate ratios $\varepsilon = 10^{-3}$ (left) and 
		 $\varepsilon = 10^{-6}$ (right) and we use $m=10$ neurons. 
		 First two rows: evolution of the risk $\Risk_{\mf}$ of
		 Eq.~\eqref{eq:RmfDef}, in linear and log-scales. Third row:
		 evolution of the first three terms of the sum of \eqref{eq:RiskHermite}.}
		\label{fig:simulations}
	\end{center}
\end{figure}

\begin{figure}
	\begin{center}
			\includegraphics[width=0.49\linewidth]{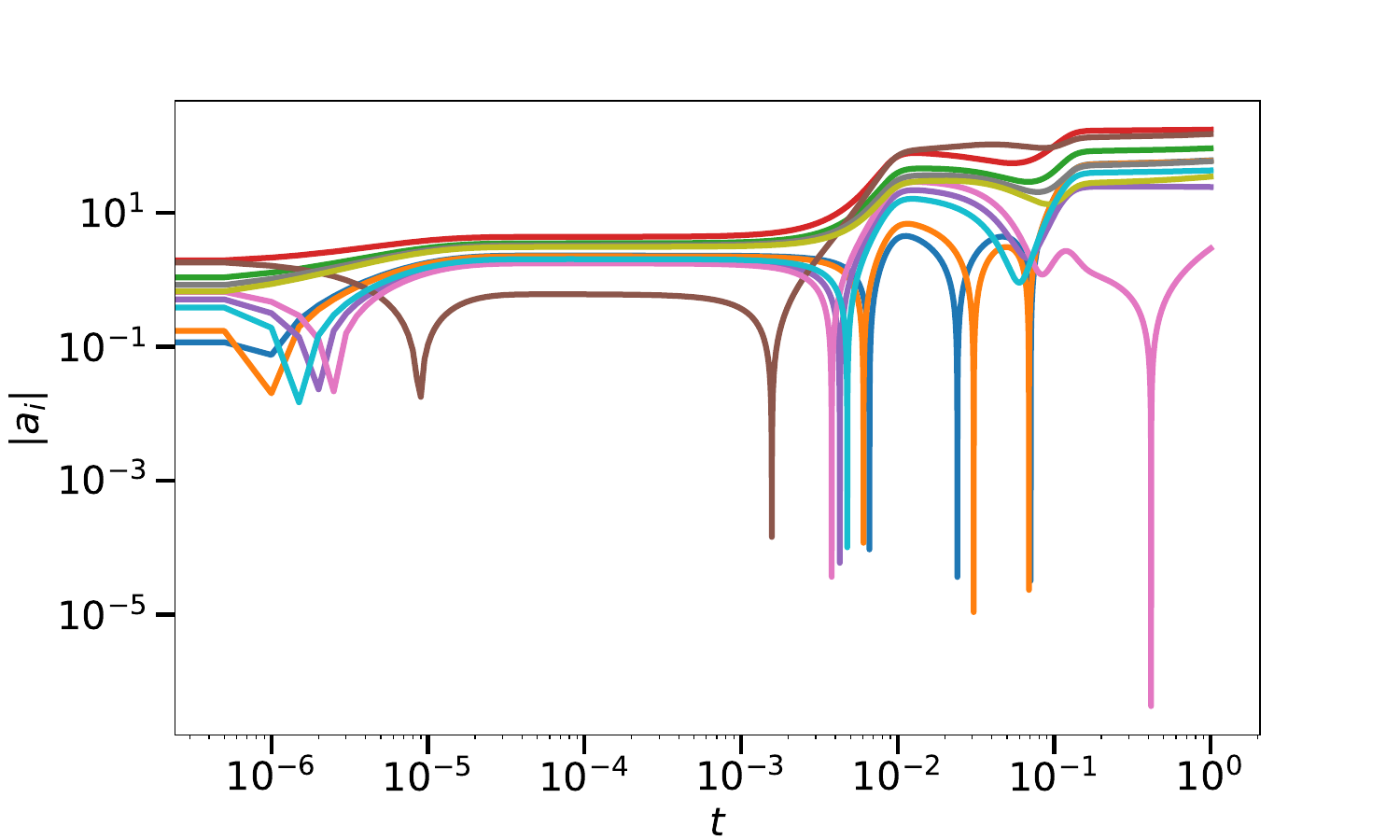}
			\includegraphics[width=0.49\linewidth]{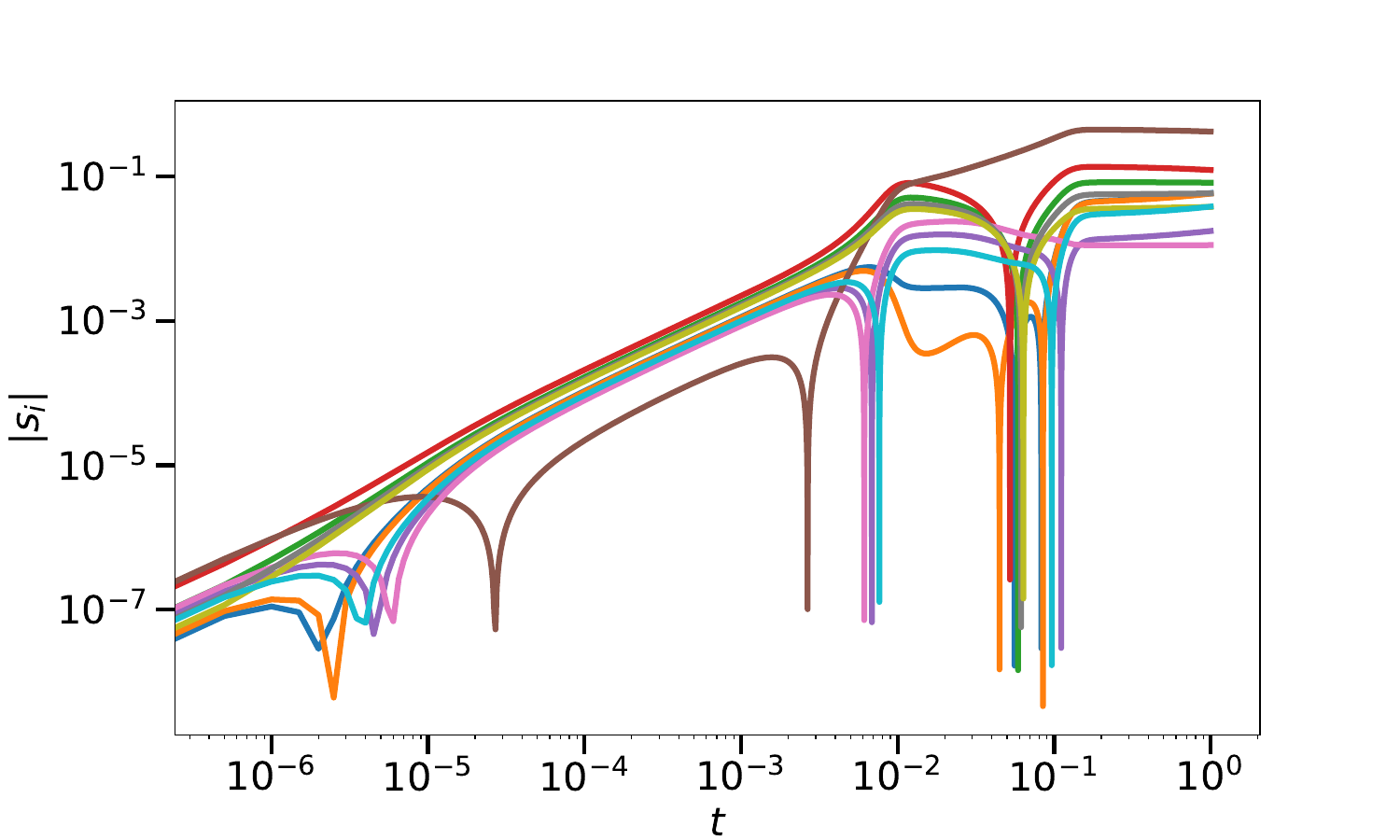}
		\caption{Same simulation as in Figure \ref{fig:simulations} (b). In these plots, we 
		show the evolution of the $a_i$ and the $s_i$ for $i\in\{1, \dots, m\}$ following 
		a discretization of Eqs.~\eqref{eq:coupled_MF}.}
		\label{fig:simulations-values}
	\end{center}
\end{figure}

\begin{figure}
	\begin{center}
		\begin{subfigure}{0.48\linewidth}
			\includegraphics[width=\linewidth]{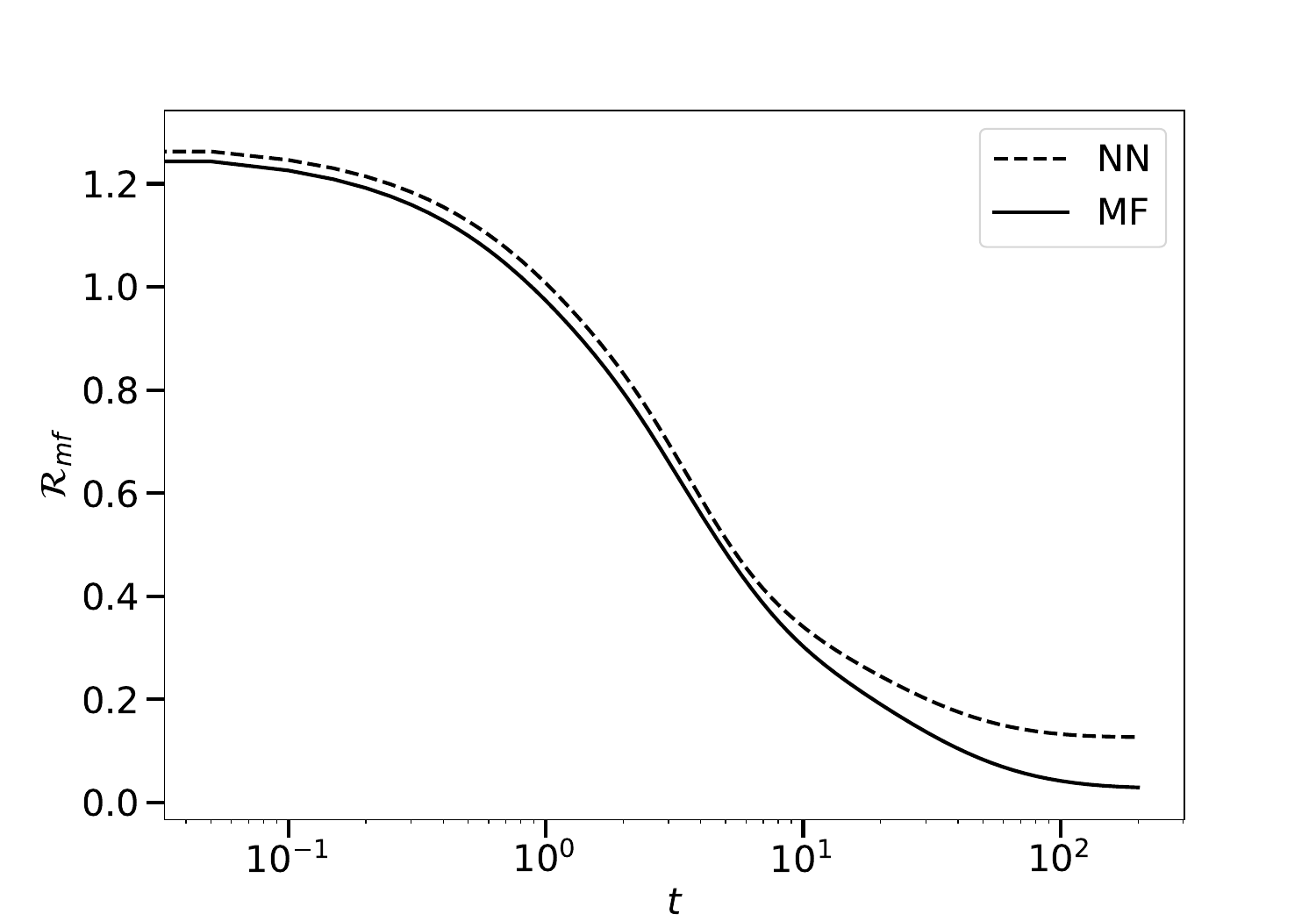}
			\includegraphics[width=\linewidth]{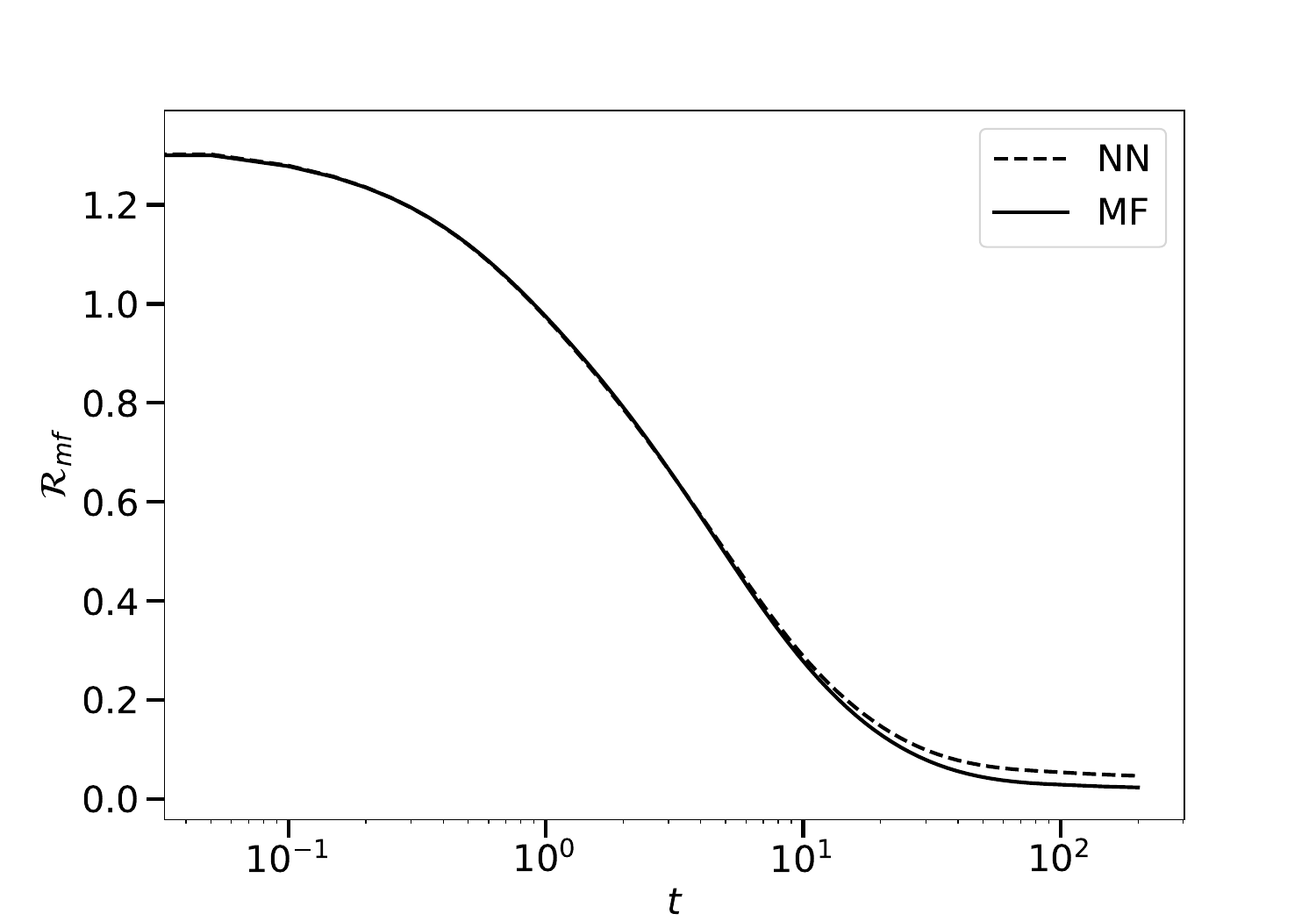}
			\includegraphics[width=\linewidth]{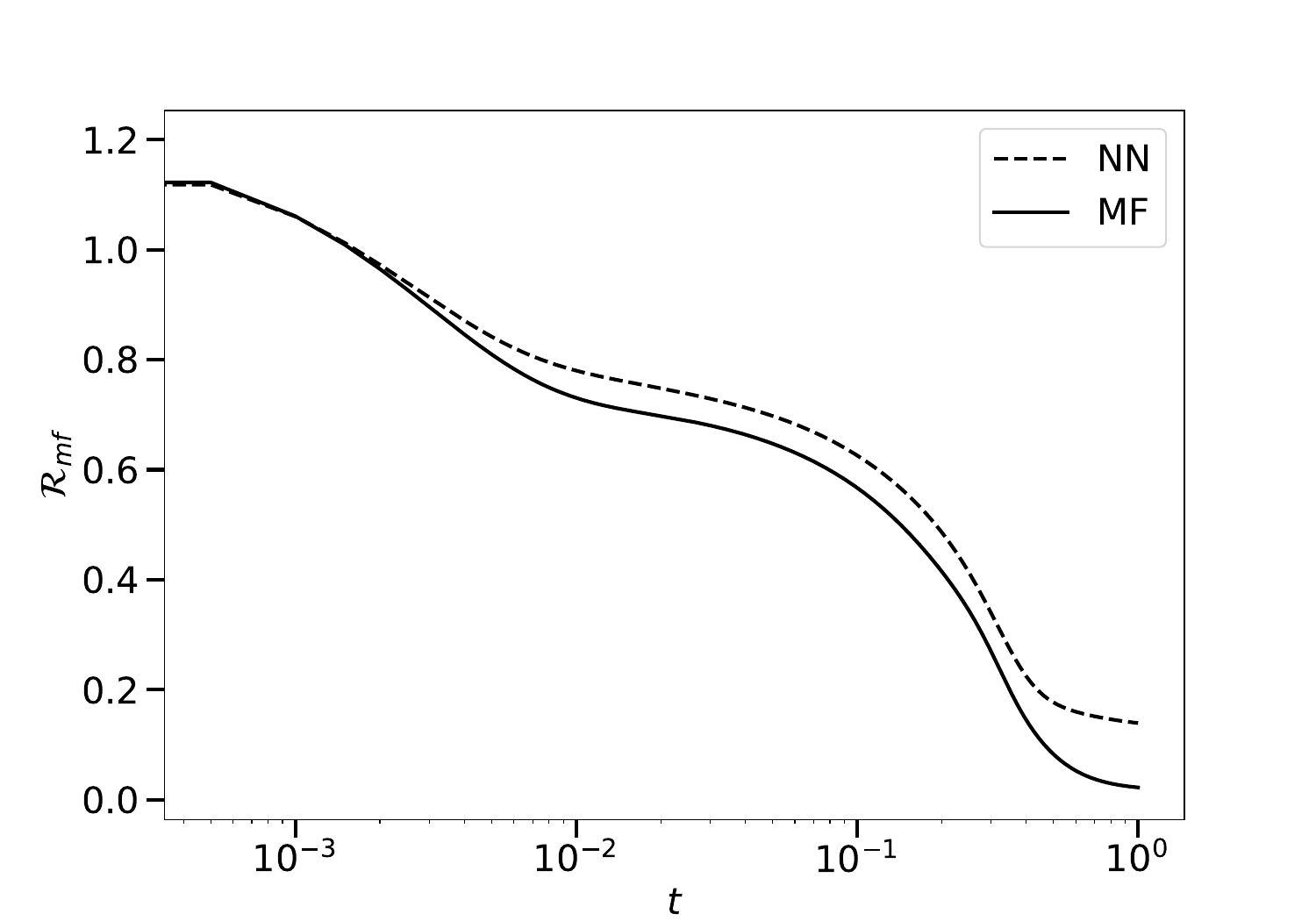}
			\includegraphics[width=\linewidth]{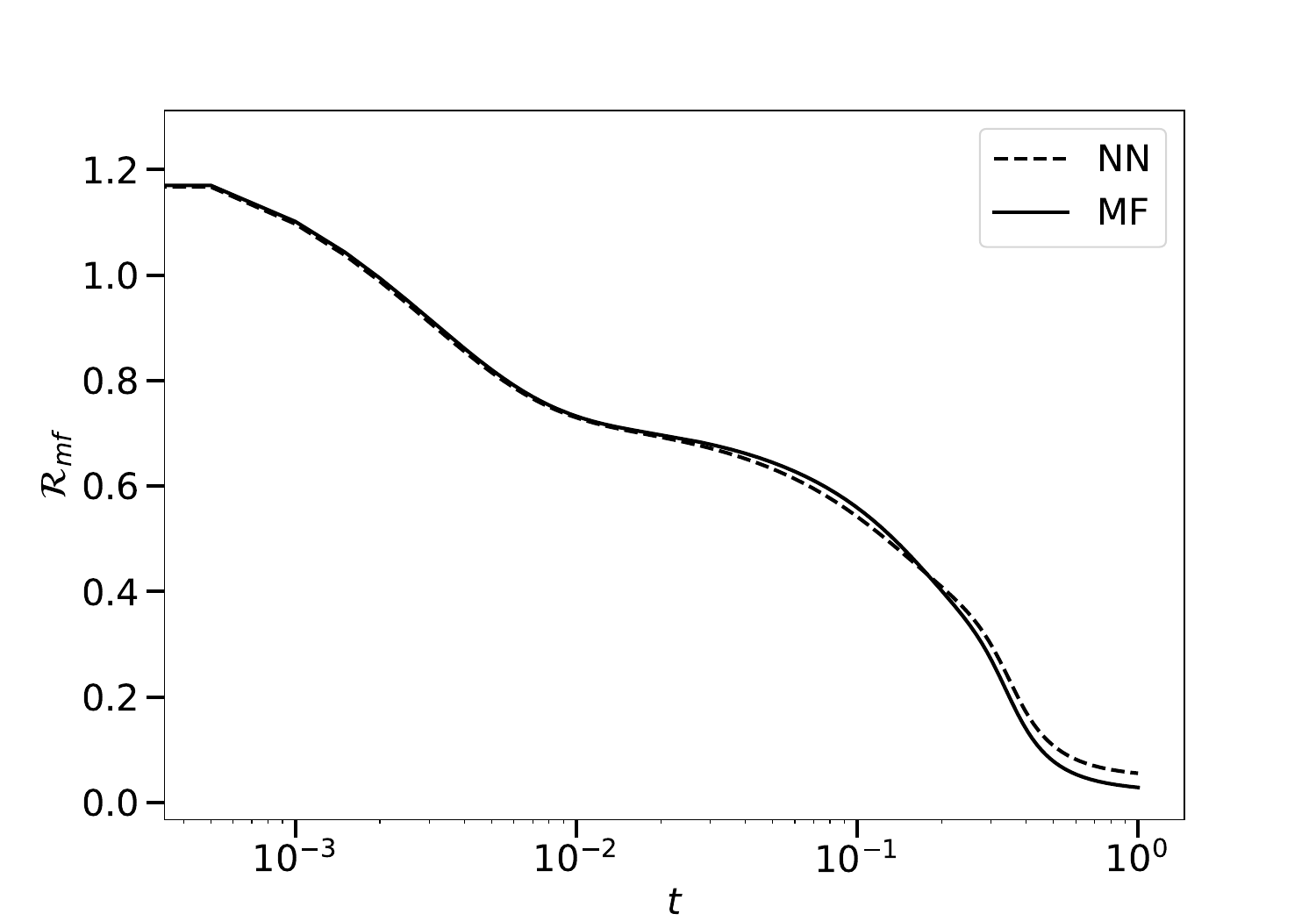}
		\end{subfigure}
		\begin{subfigure}{0.48\linewidth}
			\includegraphics[width=\linewidth]{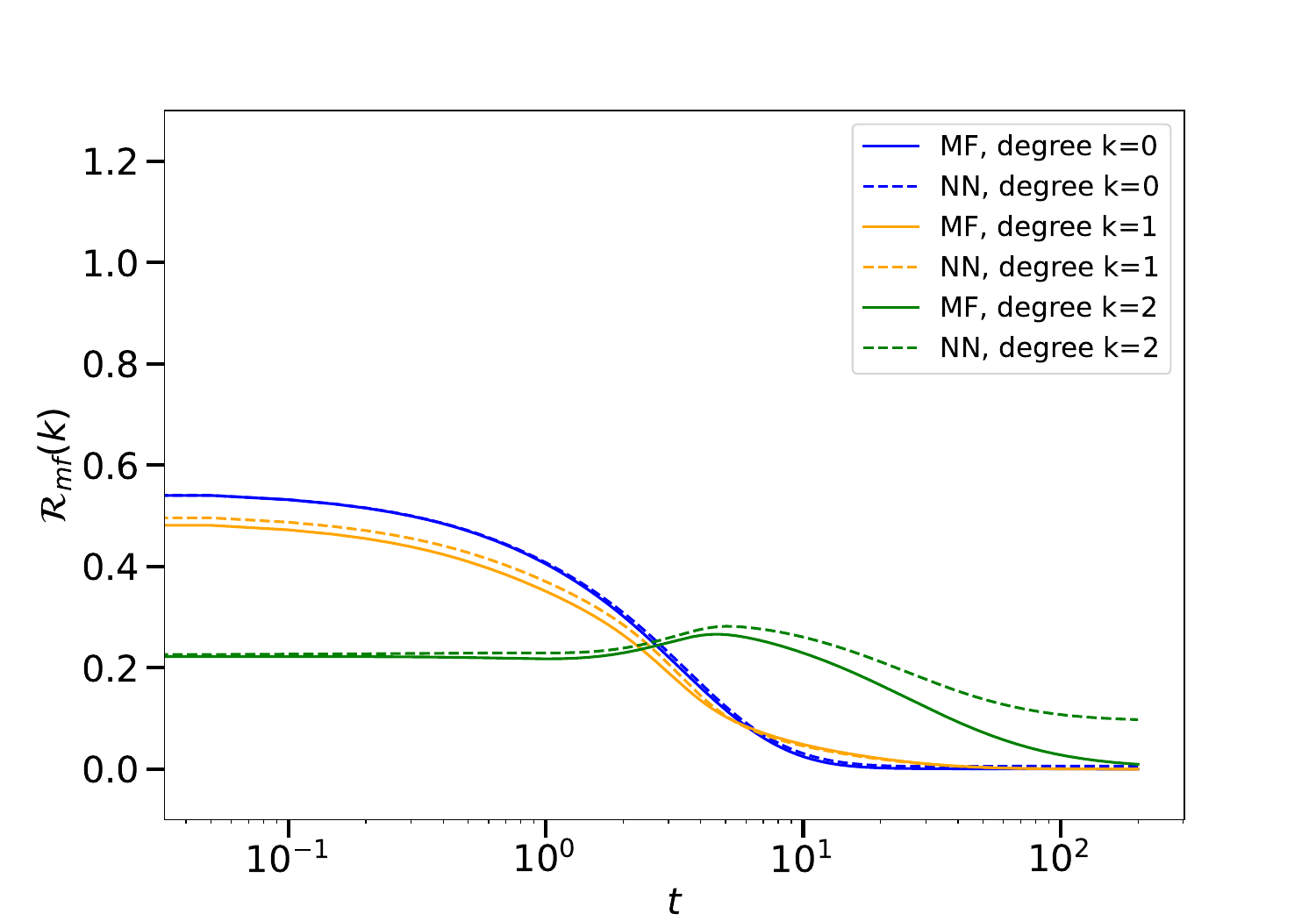}
			\includegraphics[width=\linewidth]{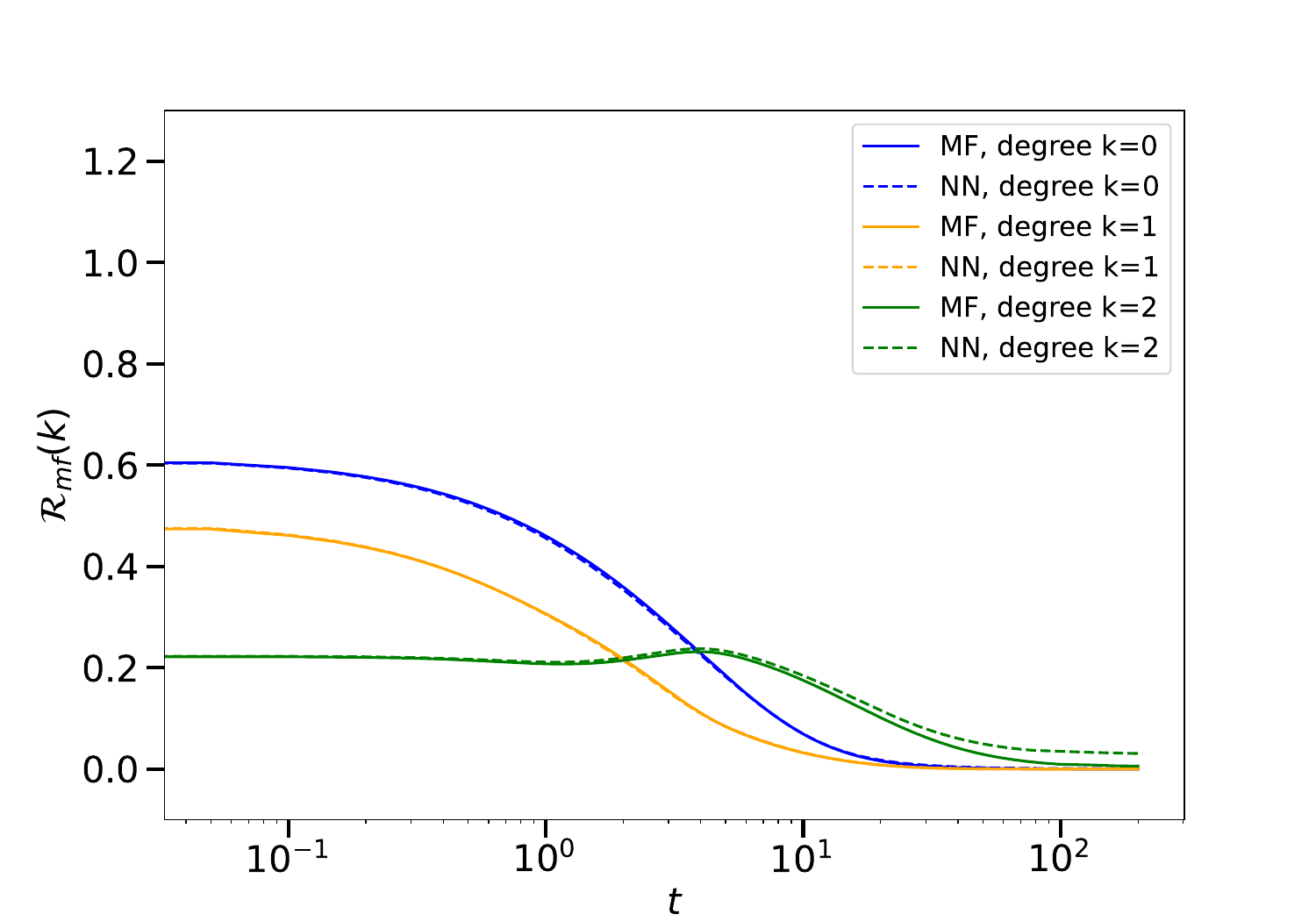}
			\includegraphics[width=\linewidth]{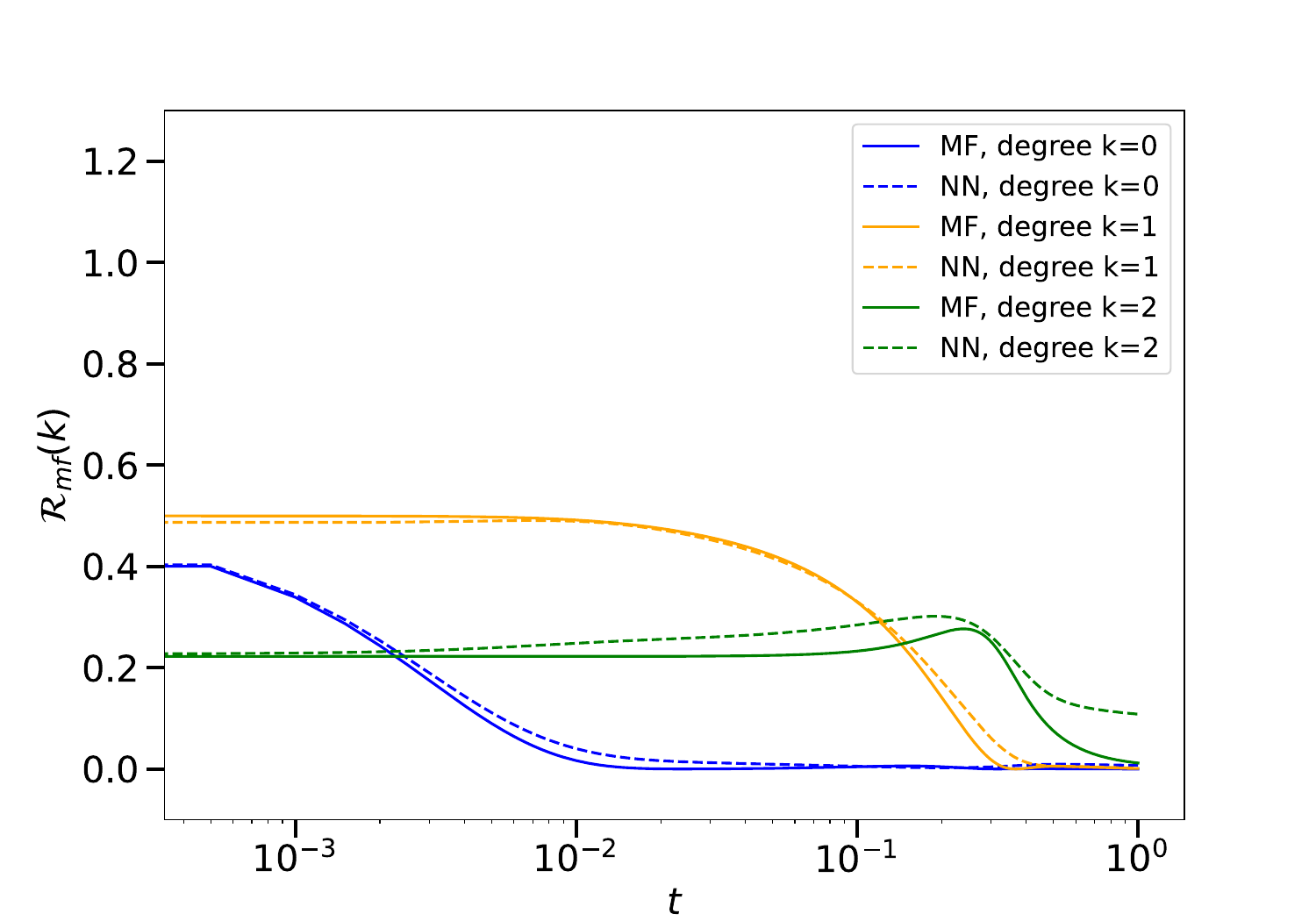}
			\includegraphics[width=\linewidth]{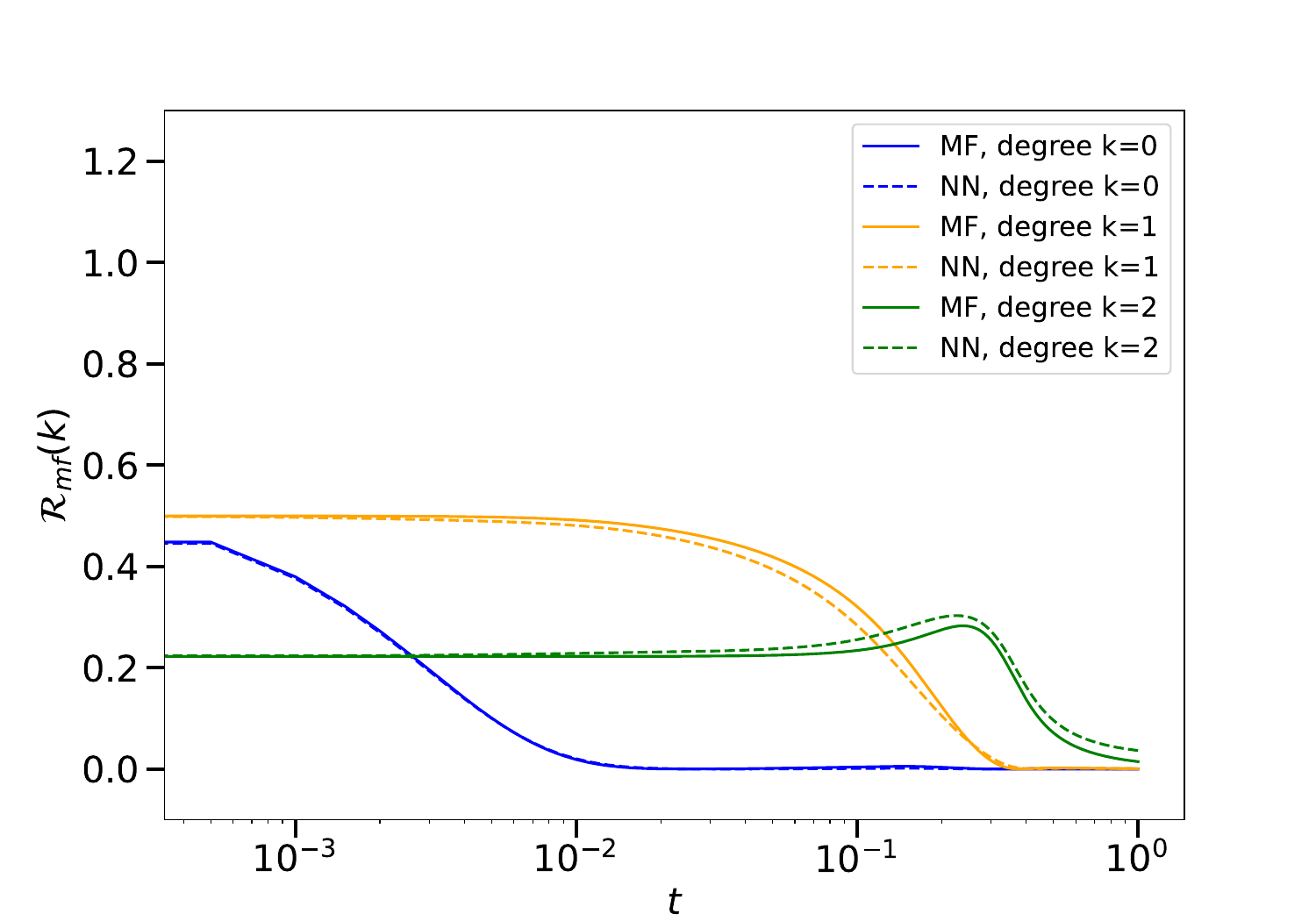}
		\end{subfigure}
		\caption{\modif{Comparison between the simplified  neuron dynamics \eqref{eq:coupled_MF} (MF) and projected gradient descent for the two-layer neural network \eqref{eq:First-NNET} (NN), with the same target function and activation as the simulations in Figure~\ref{fig:simulations}. We use four different combinations of (learning rare ratio, network width):  $(\veps, m) = (1, 10)$ (first row), $(\veps, m) = (1, 50)$ (second row), $(\veps, m) = (10^{-3}, 10)$ (third row), and $(\veps, m) = (10^{-3}, 50)$ (fourth row). Left panel: evolution of the risk $\Risk_{\mf}$ for NN and MF on a logarithmic scale. Right panel: evolution of the first three components of $\Risk_{\mf}$ (constant, linear, and quadratic) for NN and MF.}}
		\label{fig:compare_nn_mf}
	\end{center}
\end{figure}

In Figure \ref{fig:simulations}, we present the result 
of an Euler discretization of Eqs.~\eqref{eq:coupled_MF}
where $\varphi$ is a degree-$2$ polynomial and $\sigma$ is the ReLU activation:
$\sigma(s) = \max(s,0)$,
\begin{align}
	\begin{split}
&\varphi(s) = \mathrm{He}_0(s) -  \mathrm{He}_1(s) - \frac{2}{3}\mathrm{He}_2(s) \, \\
 &\hspace{7mm}= \left(1-\frac{2\sqrt{2}}{6}\right) - s - \frac{2\sqrt{2}}{6}s^2 \, . 
 \label{eq:ExampleSimulations}
 \end{split}
\end{align}
These plots clearly display two of the features emphasized in the introduction:
$(i)$~plateaus separated by periods of  rapid improvement of the risk;
$(ii)$~increasingly long timescales (notice the logarithmic time axis in the second and third row).

In order to examine the incremental learning structure,
we rewrite the risk $\Risk_{\mf}$ of Eq.~\eqref{eq:RmfDef} 
by decomposing $\varphi$ and $\sigma$ in the basis of Hermite polynomials
\begin{align}
\Risk_{\mf}(a,s)
&=\frac{1}{2} \sum_{k \geq 0} \left( \varphi_k - 
 \frac{\sigma_k}{m} \sum_{i = 1}^{m} a_is_i^k \right)^2 \, .\label{eq:RiskHermite}
\end{align}
We observe that, for small  $\veps$, the Hermite coefficients of $\varphi$
 are learned sequentially, in the order of their degree. When $\veps$ is 
 sufficiently small (right plots), this incremental learning happens in well separated phases. 
 The plateaus and waterfalls in the  plots of $\Risk_{\mf}$ 
 correspond to the network learning increasingly higher degree polynomials.
 
In Figure \ref{fig:simulations-values} we plot the evolution of the values of the $a_i$
and $s_i$, for $i\in \{1,\dots,m\}$. 
We observe that the \modif{overall} order 
 of magnitude of the $a_i$'s and the $s_i$'s increases 
 when passing through the different phases of the incremental learning process. \modif{In the mean time, some of the $a_i$'s and $s_i$'s will undergo a sign change during the learning process, which is characterized by a sudden decrease and subsequent rapid increase in its magnitude.}
 
Altogether, the results of Figures \ref{fig:simulations}  and \ref{fig:simulations-values} 
are consistent with the \modif{canonical learning order} up to level $L=2$
as per Definition \ref{def:StandardScenario}.
While we conjecture that incremental learning also occurs for higher-order polynomials, we found this hard to observe in numerical simulations\modif{: we would need to take $\varepsilon$ much smaller than in Figure 2, resulting in prohibitively large simulation costs}.  

First, as predicted in Definition \ref{def:StandardScenario}, the times at which the 
components are learned are closer on a logarithmic scale as the degree increases.
It is therefore increasingly difficult to observe time scales corresponding to 
higher degrees.

Second, we expect there to be a choice of the initialization $(a_{i, \rm{init}}, u_{i, \rm{init}})_{i \in [m]}$,
activation and target function, for which not all the components of $\varphi$ are actually learnt.
We observed empirically that this happens easily for small $m$.

\modif{To conclude this section, in Figure~\ref{fig:compare_nn_mf} we compare the simplified neuron dynamics (MF) of Eq.~\eqref{eq:coupled_MF} and the evolution of projected gradient descent for the original two-layer neural network (NN). From the plots we observe two remarkable phenomena: (1) the evolution of the risk for NN and MF are close to each other during the entire learning process for both large learning rate ratio ($\veps = 1$) and small ($\veps = 10^{-3}$), and their risk curves have the same qualitative behavior even if $m$ is small ($m = 10$); (2) as we increase the value of $m$ from $10$ to $50$, the alignment between the learning curves of NN and MF improves significantly. These observations justify our argument in Section~\ref{sec:reduction-2} that the inter-neuron correlations $r_{ij}$ are well approximated by $s_i s_j$ for wide networks. }

\section{Timescales hierarchy in the gradient flow dynamics}
\label{sec:matched}

We are interested in the behavior of the solution of the ODEs~\eqref{eq:general-dynamics}, 
initialized from $s(\omega,0) = 0$ for all $\omega$ (as per Proposition 
\ref{prop:coupling_W2_bd}). The \modif{canonical learning order} of Definition \ref{def:StandardScenario} 
concerns the behavior of solutions for $\veps\to 0$. This type of questions can be addressed within the theory of
dynamical systems  using 
\emph{singular perturbation theory}  \citep{holmes2019perturbation}. 
Here, `singular' refers to the fact that $\varepsilon$ 
multiplies one of the highest-order derivatives. \modif{In Eq.~\eqref{eq:general-dynamics}, 
$\veps$ multiplies the differential term $\partial_t a(\omega)$, so that the ODE system becomes singular in the limit $\veps \to 0$. In particular, it degenerates to the following system of differential-algebraic equations:
\begin{equation}
	\begin{split}
		V(s(\omega)) = \, & \int a(\nu) U(s(\omega) s(\nu)) \diff \rho(\nu) \, ,\\
		\partial_t s(\omega) = \, &  a(\omega) \left( 1 - s(\omega)^2 \right) \left( V'(s(\omega)) -  \int a(\nu) U'(s(\omega) s(\nu)) s(\nu) \diff \rho(\nu) \right) \, .
	\end{split}
\end{equation}
Due to singularity, the qualitative behavior of the above system is dramatically different 
from that of Eq.~\eqref{eq:general-dynamics} with $\varepsilon$ small but non-zero. 
This is in stark contrast to regular perturbation problems, for which the limiting dynamics 
 will still be a system of differential equations with the same order and 
similar qualitative behavior as the perturbed system.
}

As a side remark, we note that the system \eqref{eq:general-dynamics} can be seen as a 
slow-fast dynamical system, where the $a(\omega)$'s are the fast variables and the $s(\omega)$'s are the
 slow variables \citep{berglund2001perturbation}. Formally, the time derivative of the $a(\omega)$'s
 is multiplied by a factor $(1/\eps)$.
 From a dynamical systems perspective, the present case is made complicated because
 of a bifurcation when the $s(\omega)$'s become non-zero. 

The \modif{canonical learning order} provides a detailed description of this bifurcation.
We will motivate this scenario using a classical, but non-rigorous, 
technique of singular perturbation theory, called the \emph{matched asymptotic expansion}
 \cite[Chapter 2]{holmes2019perturbation}.
This technique decomposes the approximation of the solution in several time 
   scales on which a regular approximation holds. These time scales are traditionally called
    \emph{layers} in the literature; however, we avoid this terminology due to the potential 
    confusion with the layers of the neural network.
    
    We will work mainly using the Hermite representation of the dynamical
     ODEs \eqref{eq:general-dynamics}, which we write down 
     for the reader's convenience:
\begin{equation}
\label{eq:group_sing_hermite}
\begin{split}
\veps \partial_{t} a(\omega) &= \,  \sum_{k=0}^{\infty} \sigma_k s(\omega)^k \left( \varphi_k - \sigma_k \int a(\nu) s(\nu)^k \diff\rho(\nu) \right) \, , \\
\partial_{t} s(\omega) &= \, a(\omega) \left( 1 - s(\omega)^2 \right) \sum_{k=1}^{\infty} k \sigma_k s(\omega)^{k - 1} \left( \varphi_k - \sigma_k \int a(\nu) s(\nu)^k \diff \rho(\nu) \right) \, .
\end{split}
\end{equation}

 \modif{The rest of this section is organized as follows. We first give a brief overview of the method of matched asymptotic expansions and a summary of our main results regarding the learning timescales in Section~\ref{sec:overview_singular_matched}.} Sections \ref{sec:first-layer}-\ref{sec:third-layer} respectively describe the first
 three time scales of the matched asymptotic expansion of \eqref{eq:group_sing_hermite}. This gives, for each time scale, an approximation of the $a(\omega)$, $s(\omega)$. In Appendix~\ref{sec:induced}, we detail how these sections induce an evolution of the risk alternating plateaus and rapid decreases, and support the standing learning scenario of Definition \ref{def:StandardScenario}. Finally, in Section \ref{sec:conjectured}, we 
  conjecture the behavior on longer time scales.

\paragraph{Notations.} We denote $\bfone$ the constant function $\bfone:\omega \in \Omega \mapsto 1 \in \R$. Denote $\langle ., . \rangle_{L^2(\rho)}$ the dot product on $L^2(\rho)$ and $\Vert . \Vert_{L^2(\rho)}$ the associated norm. For $x \in L^2(\rho)$, we denote $x_\perp$ the orthogonal projection of $x$ on the hyperplane $\bfone^\perp$ of $L^2(\rho)$ of functions orthogonal to $\bfone$:
\begin{equation*}
	x_\perp(\omega) = x(\omega) - \int x(\nu) \diff \rho (\nu ) \, .
\end{equation*}
We denote $a_{\text{init}}(\omega) = a(\omega,0)$ and thus $a_{\perp,\text{\init}}$ is the orthogonal projection of $a_{\text{init}}$ on $\bfone^\perp$.



\modif{\subsection{Matched asymptotic expansions}\label{sec:overview_singular_matched}
	
The method of matched asymptotic expansions is a common approach to finding approximate 
solutions of perturbed differential equations. In the present paper, we are mainly interested 
in applying this technique to approximate the solution to the specific singularly perturbed ODE 
system\footnote{Although we keep calling this an ODE system, it is important
to keep in mind that it takes place in an infinite-dimensional space.} 
of Eq.~\eqref{eq:group_sing_hermite}. 
Denoting by $t$ the independent variable and by $\varepsilon$ the perturbation parameter, 
the method of matched asymptotic expansions consists of the following three steps: 
(1)~Divide the domain of $t$ (generally a subinterval of $\R$) to several subdomains, 
which may overlap each other and depend on the perturbation parameter $\veps$; (2)~Within 
each subdomain, find an accurate approximation to the perturbed system. This is usually 
achieved by expanding the perturbed system in powers of $\veps$, and keeping only terms 
that are relevant to the current domain; (3) The approximate solutions obtained in Step 
(2) might not be valid in the overlap of two adjacent subdomains. To resolve this issue, these
 approximate solutions are then combined together through a process called ``matching'' to 
 produce an approximation that is valid on the entire domain.

In our setting, the singularly perturbed system~\eqref{eq:group_sing_hermite} takes the form of
\begin{align*}
	\veps \partial_t a(\omega) = \, & f(a(\omega), s(\omega)), \\
	\partial_t s(\omega) = \, & g(a(\omega), s(\omega)).
\end{align*}
We will carry out explicit calculations for the first three time scales in Sections 
\ref{sec:first-layer}-\ref{sec:third-layer}, respectively. Here is a summary of our main findings:
\begin{itemize}
	\item In Section~\ref{sec:first-layer} we explore the learning of the constant component 
	of the target function, which happens at the timescale $t = \Theta(\veps)$. At the end of this 
	phase, the mean-field risk (see \eqref{eq:risk-general}) evolves to
	\begin{equation}
		\Risk_{\mf,*} = \frac{1}{2 }\sum_{k\geq 1} \varphi_k^2 + O(\varepsilon) \, .
	\end{equation}
	In other words, during this phase, gradient flow learns the constant term in $\varphi$.
	At the end of this time scale we have $a(\omega)= \Theta(1)$ and  $s(\omega)= \Theta(\eps)$.
	\item Then, in Section~\ref{sec:second-layer} we investigate the second time scale 
	$t = t_2\eps^{1/2}$, $t_2 \le c\log(1/\veps)$, during which the $a(\omega)$'s and $s(\omega)$'s increase 
	to a different order in $\eps$. The result of this time scale is mainly technical and needed to understand the transition to the time scale of Section \ref{sec:third-layer}. We also perform the matching procedure to combine the approximate solution within this time scale to the one obtained in Section~\ref{sec:first-layer}.
At the end of this time scale we have $a(\omega)= \Theta(e^{c't_2})$ and  $s(\omega)= \Theta(e^{c't_2}\eps^{1/2})$.
	\item To understand the evolution of the risk relevant to learning the linear component
	of $\varphi$, we introduce a new time scale $t =  \frac{1}{4 |\varphi_1 \sigma_1|} \veps^{1/2} \log \frac{1}{\varepsilon} + \Theta(\veps^{1/2})$ in Section~\ref{sec:third-layer}, and show that the linear component can be learned within this time scale. To be more accurate, at the end of this time scale we have
	\begin{equation}
		\Risk_{\mf,*} = \frac{1}{2 }\sum_{k\geq 2} \varphi_k^2 + O(\varepsilon^{\nicefrac{1}{4}}) \, ,
	\end{equation} 
	and $a(\omega)= \Theta(\eps^{-1/4})$ and  
	$s(\omega)= \Theta(\eps^{1/4})$.
\end{itemize}
Finally, in Section~\ref{sec:conjectured}, we conjecture the behavior of the approximate solutions and induced risks for longer time scales.}
\subsection{First time scale: constant component}
\label{sec:first-layer}

We define a ``fast'' time variable $t_1 = t/\varepsilon$ and replace it in 
Eq.~\eqref{eq:group_sing_hermite}.
We expand the solutions $a(\omega)$ and $s(\omega)$ in powers of $\varepsilon$:
\begin{align}
a(\omega) &= a^{(0)}(\omega) +\varepsilon a^{(1)}(\omega)  + \varepsilon^2 a^{(2)}(\omega)  + \dots \, , \label{eq:expansion-a-layer1}\\
s(\omega) &= s^{(0)}(\omega) + \varepsilon s^{(1)}(\omega)  + \varepsilon^2s^{(2)}(\omega) + \dots \label{eq:expansion-s-layer1}\, , 
\end{align}
where $a^{(0)}(\omega), a^{(1)}(\omega), a^{(2)}(\omega), \dots, s^{(0)}(\omega), s^{(1)}(\omega), s^{(2)}(\omega), \dots$ are implicitly functions
 of $t_1$. They are initialized at 
\begin{align}
&a^{(0)}(\omega,t_1 = 0) = a_{\rm{init}}(\omega) \, , &&a^{(1)}(\omega, t_1 = 0) = 0 \, , &&a^{(2)}(\omega, t_1 = 0) = 0 \, , &&\dots   \label{eq:init-a-layer1}\\
&s^{(0)}(\omega, t_1 = 0) = 0 \, , &&s^{(1)}(\omega, t_1 = 0) = 0 \, , &&s^{(2)}(\omega, t_1 = 0) = 0 \, , &&\dots \label{eq:init-s-layer1}
\end{align}
to be consistent with the initial condition $a(\omega, t_1=0) = a(\omega, t=0) = a_{\rm{init}}(\omega)$ and $s(\omega,t_1=0) = s(\omega,t=0) = 0$.

We substitute the expansion in \eqref{eq:group_sing_hermite}:
\begin{align}
&\partial_{t_1}a^{(0)}(\omega) +\varepsilon \partial_{t_1}a^{(1)}(\omega)  + 
 \dots  \label{eq:substitution-1-layer1}\\
 &\quad=  \sum_{k=0}^{\infty} \sigma_k \left(s^{(0)}(\omega) + \varepsilon s^{(1)}(\omega)  + \dots\right)^k\\
 &\quad\qquad\times\left( \varphi_k - {\sigma_k} \int \left(a^{(0)}(\nu) +\varepsilon a^{(1)}(\nu)  +  \dots\right) \left(s^{(0)}(\nu) + \varepsilon s^{(1)}(\nu)  + \dots\right)^k \diff \rho(\nu) \right) \, ,  \\
&\partial_{t_1}s^{(0)}(\omega) +\varepsilon \partial_{t_1}s^{(1)}(\omega)  + 
\dots  \label{eq:substitution-3-layer1}\\
&\quad=  \varepsilon \left(a^{(0)}(\omega) +\varepsilon a^{(1)}(\omega)  + \dots\right) \left( 1 - \left(s^{(0)}(\omega) + \varepsilon s^{(1)}(\omega)  + \dots\right)^2 \right) \\
& \quad\qquad \times \sum_{k=1}^{\infty} k \sigma_k \left(s^{(0)}(\omega) + \varepsilon s^{(1)}(\omega)  + \dots\right)^{k - 1}  \\
&\quad\qquad\times\left( \varphi_k - {\sigma_k} \int  \left(a^{(0)}(\nu) +\varepsilon a^{(1)}(\nu)  +  \dots\right) \left(s^{(0)}(\nu) + \varepsilon s^{(1)}(\nu)  + \dots\right)^k \diff \rho (\nu) \right) \, . \label{eq:substitution-4-layer1}
\end{align}

The basic assumption of matched asymptotic expansions is that terms of the same order in $\varepsilon$ can be identified (with some limitations that we develop below). For now, let us identify terms of order $1 = \varepsilon^0$:
\begin{align}
\partial_{t_1} a^{(0)}(\omega) &= \sum_{k=0}^{\infty} \sigma_k \left(s^{(0)}(\omega)\right)^k \left(\varphi_k -\sigma_k \int a^{(0)}(\nu) \left(s^{(0)}(\nu)\right)^k \diff\rho(\nu) \right) \, , \label{eq:aux-4}\\
\partial_{t_1} s^{(0)}(\modif{\omega}) &= 0 \label{eq:aux-5} \, .
\end{align}
From \eqref{eq:aux-5} and \eqref{eq:init-s-layer1}, we have $s^{(0)}(\omega) = 0$:
 time $t_1 = O(1) \Leftrightarrow t = O(\varepsilon)$ is too short for the~$s(\omega)$ to be of order $1$. 

Substituting $s^{(0)}(\omega) = 0$ in \eqref{eq:aux-4}, we obtain 
\begin{equation}
\partial_{t_1} a^{(0)}(\omega) = \sigma_0 \left( \varphi_0 - {\sigma_0} \int  a^{(0)}(\nu) \diff\rho(\nu)\right) \, . \label{eq:a-layer1}
\end{equation}
Recall that $\langle . , . \rangle_{L^2(\rho)}$ is the dot product on $L^2(\rho)$, $\bfone$ denotes the constant function $\bfone:\omega \in \Omega \mapsto 1 \in \R$ and $a_\perp$ is the orthogonal projection of $a$ on $\bfone^\perp$. Equation \eqref{eq:a-layer1} can be rewritten as
\begin{align*}
\partial_{t_1} \langle a^{(0)} , \bfone \rangle_{L^2(\rho)} &= \sigma_0 \left(\varphi_0 - {\sigma_0}  \langle a^{(0)} , \bfone \rangle_{L^2(\rho)} \right) \, , \\
\partial_{t_1} a_\perp^{(0)} &= 0 \, ,
\end{align*}
which gives after integration (using \eqref{eq:init-a-layer1}):
\begin{align}
\langle a^{(0)} , \bfone \rangle_{L^2(\rho)} &= e^{-\sigma_0^2 t_1} \langle a_{\rm{init}} , \bfone \rangle_{L^2(\rho)} + \left(1-e^{-\sigma_0^2 t_1}\right) \frac{ \varphi_0}{\sigma_0} \, , \label{eq:aux-26}\\
a_\perp^{(0)} &=  a_{\perp,\rm{init}}\, . \nonumber
\end{align}
At this point, we have determined $a^{(0)}(\omega)$ and $s^{(0)}(\omega)$, and thus $a(\omega) = a^{(0)}(\omega) + O(\varepsilon)$ and $s(\omega) = s^{(0)}(\omega) + O(\varepsilon)$ up to a $O(\varepsilon)$ precision, which is sufficient to obtain a $o(1)$-approximation of the risk $\Risk_{\mf,*}$ (see Section~\ref{sec:induced}). However, note that we could obtain more precise estimates by identifying higher-order terms in \eqref{eq:substitution-1-layer1}-\eqref{eq:substitution-4-layer1}. For instance, identifying the $O(\varepsilon)$ terms in \eqref{eq:substitution-3-layer1}-\eqref{eq:substitution-4-layer1}, we obtain $\partial_{t_1} s^{(1)}(\omega) = a^{(0)}(\omega) \sigma_1 \varphi_1$. This shows that the $s(\omega)$ become non-zero, though only of order $\varepsilon$ on the time scale $t_1 \asymp 1$; the inner-layer weights develop an infinitesimal correlation with the true direction~$u_*$ thanks to the linear component of $\sigma$ and $\varphi$. 

The approximation constructed above should be considered as valid on the time scale $t_1 \asymp 1 \Leftrightarrow t \asymp \varepsilon$. \modif{As $\veps \to 0$, we obtain the following approximation of the risk (see Eq.~\eqref{eq:risk-general} for definition, and Appendix~\ref{sec:induced} for a detailed derivation):
\begin{align*}
	\Risk_{\mf,*} = \frac{1}{2} e^{-2\sigma_0^2t_1}\left(\varphi_0 - {\sigma_0}\left\langle a_{\rm{init}}, \bfone \right\rangle_{L^2(\rho)} \right)^2  + \frac{1}{2 }\sum_{k\geq 1} \varphi_k^2 + O(\varepsilon) \, .
\end{align*}
}
This approximation breaks down when we reach a new time scale, at which the $s(\omega)$ are large enough for the $a(\omega)$ to be affected (at leading order) by the linear part of the functions. We detail the new time scale and its resolution in the next section. 

\subsection{Second time scale: linear component I}
\label{sec:second-layer}

In this section, we seek a second, slower time scale, for which the 
behavior of the asymptotic expansion is different. 

\paragraph{Identification of the scale.} Consider $t_2 = \frac{t}{\varepsilon^\gamma}$,
 where $\gamma < 1$ is to be determined. We rewrite the system \eqref{eq:group_sing_hermite} using $t_2$,
and expand the solutions $a(\omega)$ and $s(\omega)$: 
\begin{align}
a(\omega) &= a^{(0)}(\omega) +\varepsilon^\delta a^{(1)}(\omega)  + \varepsilon^{2\delta} a^{(2)}(\omega)  + \dots \, , \label{eq:expansion-a-layer2}\\
s(\omega) &= \varepsilon^{\delta} s^{(1)}(\omega)  + \varepsilon^{2\delta}s^{(2)}(\omega) + \dots \, , \label{eq:expansion-s-layer2}
\end{align}
\modif{where the exponent $\delta$ is also to be determined.}
(Since within the previous time scale we obtained $s(\omega)=O(\veps)$, it is natural to assume
$s^{(0)}(\omega) = 0$.)
\medskip

Let us pause to comment on our method. 

Similarly to what has been done in the previous time scale, we will substitute the
 expansions~\eqref{eq:expansion-a-layer2}-\eqref{eq:expansion-s-layer2} in the equations 
 \eqref{eq:group_sing_hermite}   in order to compute the different terms in the expansion.
  However, this step also allows us to compute the exponents $\gamma$ and $\delta$, 
  that give respectively the new time scale and the size of the $s(\omega)$'s.

Note that we should have proceeded similarly for the first time scale, by introducing
 a first time variable $t_1 = \frac{t}{\varepsilon^{\gamma'}}$, expanding $a(\omega), s(\omega)$ 
 in powers $1, \varepsilon^{\delta'}, \varepsilon^{2\delta'}, \dots$, and determining
  $\gamma'$ and $\delta'$ a posteriori. This would have led, indeed, to $\gamma' = 1$ and 
  $\delta' = 1$. However, for simplicity, we preferred to fix these values that are natural a priori. 

Finally, note that the expansions \eqref{eq:expansion-a-layer1}-\eqref{eq:expansion-s-layer1} and \eqref{eq:expansion-a-layer2}-\eqref{eq:expansion-s-layer2} are different, because they are valid on different time scales. In fact, the only coherence conditions that we require below is that the expansions match in a joint asymptotic where $t_1 = \frac{t}{\varepsilon} \to \infty$ and $t_2 = \frac{t}{\varepsilon^\gamma}  \to 0$. We thus build different approximations for each one of the time scales, with some matching conditions; this justifies the name of \emph{matched asymptotic expansion}.

\medskip

We now return to our computations and substitute \eqref{eq:expansion-a-layer2}-\eqref{eq:expansion-s-layer2} in \eqref{eq:group_sing_hermite}: 

\begin{align*}
\varepsilon^{1-\gamma} \partial_{t_2} a^{(0)}(\omega) + \dots &= \sum_{k=0}^{\infty} \sigma_k \left(\varepsilon^\delta s^{(1)}(\omega) + \dots \right)^k \\
&\qquad\qquad \times\left(\varphi_k - {\sigma_k} \int \left(a^{(0)}(\nu) + \dots \right) \left(\varepsilon^\delta s^{(1)}(\nu) + \dots \right)^k \diff\rho(\nu)\right) \, , \\
\varepsilon^\delta \partial_{t_2} s^{(1)}(\omega) + \dots &= \varepsilon^\gamma \left(a^{(0)}(\omega) + \dots \right) \left(1-\left(\varepsilon^\delta s^{(1)}(\omega) + \dots\right)^2 \right) \sum_{k=1}^{\infty} k \sigma_k \left(\varepsilon^\delta s^{(1)}(\omega) + \dots \right)^{k-1} \\
&\qquad\qquad \times\left(\varphi_k - {\sigma_k} \int \left(a^{(0)}(\nu) + \dots \right) \left(\varepsilon^\delta s^{(1)}(\nu) + \dots \right)^k \diff \rho(\nu)\right) \, ,
\end{align*}
and thus 
\begin{align}
\varepsilon^{1-\gamma} \partial_{t_2} a^{(0)}(\omega) + O(\varepsilon^{1-\gamma+\delta}) &= \sigma_0 \left(\varphi_0 - \sigma_0 \int  a^{(0)}(\nu) \diff\rho(\nu)\right) \label{eq:aux-6}\\ 
&\qquad-  \varepsilon^\delta {\sigma_0^2}\int  a^{(1)}(\nu) \diff\rho(\nu) + \varepsilon^\delta \sigma_1 \varphi_1 s^{(1)}(\omega) + O(\varepsilon^{2\delta}) \, , \\
\varepsilon^\delta \partial_{t_2} s^{(1)}(\omega) + O(\varepsilon^{2\delta}) &=  \varepsilon^\gamma  \sigma_1 \varphi_1 a^{(0)}(\omega) + O(\varepsilon^{\gamma + \delta}) \, . \label{eq:aux-7}
\end{align}
For the first time scale, we chose $\gamma = \delta = 1$, so that the terms of order $\varepsilon^\delta$ were negligible compared to $\varepsilon^{1-\gamma} \partial_{t_2} a^{(0)}(\omega)$ in~\eqref{eq:aux-6}. This means that the linear components $\sigma_1, \varphi_1$ of the functions had no effect on the $a(\omega)$ at leading order. We are now interested in a new time scale where $\varepsilon^{1-\gamma} \partial_{t_2} a^{(0)}(\omega)$ and $\varepsilon^\delta \sigma_1 \varphi_1 s^{(1)}(\omega)$ are of the same order, i.e., $1-\gamma = \delta$; then the linear components play a role in the dynamics.

Further, for $s^{(1)}(\omega)$ to be non-zero, we need both sides of \eqref{eq:aux-7} to be of the same order, thus $\delta = \gamma$. Putting together, this gives $\gamma = \delta = 1/2$.

\paragraph{Derivation of the ODEs for this time scale.} Let us summarize equations. For $t_2 = \frac{t}{\varepsilon^{\nicefrac{1}{2}}}$ and 
\begin{align*}
a(\omega) &= a^{(0)}(\omega) +\varepsilon^{\nicefrac{1}{2}} a^{(1)}(\omega)  + \dots \, , 
\\
s(\omega) &= \varepsilon^{\nicefrac{1}{2}} s^{(1)}(\omega)  + \dots \, , 
\end{align*}
we have from \eqref{eq:aux-6}-\eqref{eq:aux-7}:
\begin{align}
\varepsilon^{\nicefrac{1}{2}} \partial_{t_2} a^{(0)}(\omega) &= \sigma_0 \left(\varphi_0 - {\sigma_0} \int  a^{(0)}(\nu) \diff\rho(\nu)\right)  \label{eq:aux-8} \\
&\qquad -  \varepsilon^{\nicefrac{1}{2}} {\sigma_0^2} \int a^{(1)}(\nu) \diff\rho(\nu) + \varepsilon^{\nicefrac{1}{2}} \sigma_1 \varphi_1 s^{(1)}(\omega) + O(\varepsilon) \, ,\\
\varepsilon^{\nicefrac{1}{2}} \partial_{t_2} s^{(1)}(\omega) &=  \varepsilon^{\nicefrac{1}{2}}  \sigma_1 \varphi_1 a^{(0)}(\omega) + O(\varepsilon) \, . \label{eq:aux-9}
\end{align}
First, we identify the terms of order $1 = \varepsilon^0$: 
\begin{equation}
\label{eq:aux-12}
0 = \sigma_0 \left(\varphi_0 - {\sigma_0} \int a^{(0)}(\nu) \diff\rho(\nu)\right) \, .
\end{equation}
This means that the trajectory remains in the affine hyperplane defined
by $\varphi_0 = {\sigma_0} \int a^{(0)}(\nu) \diff\rho(\nu)$.
Intuitively, the constant component of $\varphi$ remains fitted by the neural network in this second time scale. 

Second, we identify the terms of order $\varepsilon^{\nicefrac{1}{2}}$ in \eqref{eq:aux-8}-\eqref{eq:aux-9}:
\begin{align}
 \partial_{t_2} a^{(0)}(\omega) &=  -   {\sigma_0^2}\int a^{(1)}(\nu) \diff\rho(\nu) +  \sigma_1 \varphi_1 s^{(1)}(\omega)   \, , \label{eq:aux-10}\\
 \partial_{t_2} s^{(1)}(\omega) &=    \sigma_1 \varphi_1 a^{(0)}(\omega)  \, . \label{eq:aux-11}
\end{align}
\modif{Note that, in Eqs.~\eqref{eq:aux-10}--\eqref{eq:aux-11}, the time derivative of
$a^{(1)}$ does not appear, and therefore the evolution of $a^{(1)}$
is not determined by these equations. In fact, $a^{(1)}$  is best 
interpreted as the Lagrange multiplier associated to the constraint \eqref{eq:aux-12}. 
Namely, this is a free term that can be adjusted so that the solution of
the  system \eqref{eq:aux-10}--\eqref{eq:aux-11} 
satisfies the constraint \eqref{eq:aux-12}. We can check unknown term in \eqref{eq:aux-10}
 leaves the right degree of freedom such that this is the case: we have
 \begin{equation*}
 	0 = \partial_{t_2} \left(\frac{\varphi_0}{\sigma_0}\right) \underset{\eqref{eq:aux-12}}{=} \partial_{t_2} \left( \int  a^{(0)}(\omega) \diff \rho(\omega)\right) \underset{\eqref{eq:aux-10}}{=}  -   {\sigma_0^2}\int a^{(1)}(\nu) \diff\rho(\nu) +  \sigma_1 \varphi_1 \int s^{(1)}(\omega) \diff \rho(\omega) \, .
 \end{equation*}
In this last expression, the first unknown term can always compensate the second term so that the
 constraint is satisfied. The entire evolution of $a^{(1)}$ is determined by higher orders in the 
 expansion.}

 To eliminate this Lagrange multiplier, we use again the compact notations: 
\begin{align}
\partial_{t_2} a^{(0)} &=  -   \sigma_0^2 \langle a^{(1)}, \bfone \rangle_{L^2(\rho)} \bfone +  \sigma_1 \varphi_1 s^{(1)}  \, , \label{eq:aux-13}\\
\partial_{t_2} s^{(1)} &=    \sigma_1 \varphi_1 a^{(0)}  \, , \label{eq:aux-14}
\end{align}
and thus 
\begin{align}
\partial_{t_2} a^{(0)}_\perp &=    \sigma_1 \varphi_1 s^{(1)}_\perp  \, , \label{eq:a_perp_layer2}\\
\partial_{t_2} s^{(1)}_\perp &=    \sigma_1 \varphi_1 a^{(0)}_\perp  \, . \label{eq:s_perp_layer2}
\end{align}

\paragraph{Matching.} The initialization of the ODEs \eqref{eq:aux-13}-\eqref{eq:aux-14} for the second time scale is determined by a classical procedure that matches with the previous time scale. In this paragraph, we denote $\underline{a}, \underline{s}$ the approximation obtained in the first time scale (Section \ref{sec:first-layer}), and $\overline{a}, \overline{s}$ the approximation in the second time scale, described above.

Consider an intermediate time scale $\widetilde{t} = \frac{t}{\varepsilon^\alpha}$, $\nicefrac{1}{2} < \alpha < 1$, and assume $\widetilde{t} \asymp 1$ so that 
\begin{align*}
&t_1 = \frac{t}{\varepsilon} = \frac{\widetilde{t}}{\varepsilon^{1-\alpha}} \to \infty \, , &&t_2 = \frac{t}{\varepsilon^{\nicefrac{1}{2}}} = {\varepsilon^{\alpha-\nicefrac{1}{2}}}{\widetilde{t}} \to 0 \, .
\end{align*}
In this intermediate regime, we want the approximations provided on the first and the second time scales to match: $\underline{a}(\widetilde{t})$ and $\overline{a}(\widetilde{t})$ (resp.~$\underline{s}(\widetilde{t})$ and $\overline{s}(\widetilde{t})$) should match to leading order.

From the first time scale approximation, 
\begin{align}
\underline{a} &= \underline{a}^{(0)} + O(\varepsilon) \label{eq:aux-15} \\
&=  \langle \underline{a}^{(0)}, \bfone \rangle_{L^2(\rho)} \bfone + \underline{a}^{(0)}_\perp + O(\varepsilon) \\
&=  \left[e^{-\sigma_0^2 t_1} \langle a_{\rm{init}}, \bfone \rangle_{L^2(\rho)} + \left(1 - e^{-\sigma_0^2 t_1} \right) \frac{ \varphi_0}{\sigma_0} \right] \bfone + a_{\perp,\rm{init}} + O(\varepsilon)  \\
&=  \left[e^{-\sigma_0^2 \nicefrac{\widetilde{t}}{\varepsilon^{1-\alpha}}} \langle a_{\rm{init}}, \bfone \rangle_{L^2(\rho)} + \left(1 - e^{-\sigma_0^2 \nicefrac{\widetilde{t}}{\varepsilon^{1-\alpha}}} \right) \frac{ \varphi_0}{\sigma_0} \right] \bfone + a_{\perp,\rm{init}} + O(\varepsilon) \\
&= \frac{\varphi_0}{\sigma_0} \bfone +  a_{\perp,\rm{init}} + o(1) \, . \label{eq:aux-16}
\end{align}
From the second time scale approximation, 
\begin{align}
\overline{a} &= \overline{a}^{(0)}(t_2) + O(\varepsilon^{\nicefrac{1}{2}}) = \overline{a}^{(0)}({\varepsilon^{\alpha-\nicefrac{1}{2}}}{\widetilde{t}}) + O(\varepsilon^{\nicefrac{1}{2}}) \label{eq:aux-17}\\
&= \overline{a}^{(0)}(0) + o(1) \, . \label{eq:aux-18}
\end{align}

By matching, Equations \eqref{eq:aux-16} and \eqref{eq:aux-18} should be coherent. Thus the ODE for the
 second time scale should be initialized from $\overline{a}^{(0)}(0) = \frac{\varphi_0}{\sigma_0} \bfone +  a_{\perp,\rm{init}}$.

Similarly, the matching procedure gives that the ODE for the second time scale should be initialized from $\overline{s}^{(1)} = 0$.

\paragraph{Solution.} As we are done with the matching procedure, we now consider the solution in the second time scale only, that we denote again by $a$, $s$ as in \eqref{eq:aux-13}, \eqref{eq:aux-14}. The matching procedure motivates us to consider the solution of
\eqref{eq:a_perp_layer2}-\eqref{eq:s_perp_layer2} initialized at $a_\perp^{(0)}(0) = a_{\perp,\rm{init}}$, $s_\perp^{(1)} =0$. This gives 
\begin{align*}
&a_\perp^{(0)} = \cosh\left(\varphi_1 \sigma_1 t_2\right)  a_{\perp,\rm{init}} \, ,  &&s_\perp^{(1)} = \sinh\left(\varphi_1 \sigma_1 t_2\right)  a_{\perp,\rm{init}} \, .
\end{align*}
To conclude, we note that $\langle a^{(0)}, \bfone \rangle_{L^2(\rho)} = \frac{\varphi_0}{\sigma_0}$ is constrained by \eqref{eq:aux-12}. Further, from \eqref{eq:aux-11},
\begin{equation*}
\partial_{t_2} \langle s^{(1)}, \bfone \rangle_{L^2(\rho)} = \sigma_1 \varphi_1 \langle a^{(0)}, \bfone \rangle_{L^2(\rho)} = \sigma_1 \varphi_1\frac{\varphi_0}{\sigma_0},
\end{equation*}
thus $\langle s^{(1)}, \bfone \rangle_{L^2(\rho)} = \sigma_1 \varphi_1\frac{\varphi_0}{\sigma_0} t_2$. 

Putting together, these equations give:
\begin{align}
\label{eq:layer2-approx}
&a^{(0)} = \frac{\varphi_0}{\sigma_0} \bfone + \cosh\left(\varphi_1 \sigma_1 t_2 \right) a_{\perp,\rm{init}} \, , &&s^{(1)} = \sigma_1 \varphi_1 \frac{\varphi_0}{\sigma_0} t_2 \bfone + \sinh\left(\varphi_1 \sigma_1 t_2 \right) a_{\perp,\rm{init}} \, . 
\end{align}

We observe that $a^{(0)}$ and $s^{(1)}$ diverge as $t_2 \to \infty$. This 
implies that our approximation on the second time scale must break down at a certain point.
 Indeed, we analyzed this time scale under the assumption that both $a^{(0)}$ and $s^{(1)}$ are of order $1$. However,
since $a^{(0)}$ and $s^{(1)}$ diverge exponentially as $t_2\to\infty$, 
as per Eq.~\eqref{eq:layer2-approx}, this assumption breaks down when $t_2\asymp \log(1/\varepsilon)$.

More precisely, in \eqref{eq:aux-8} (resp.~\eqref{eq:aux-9}), the $O(\varepsilon)$ term includes a term of the form
\begin{align*}
	&-\varepsilon s^{(1)}(\omega) \sigma_1^2 \int  a^{(0)}(\nu) s^{(1)}(\nu) \diff\rho(\nu) &&\left(\text{resp.~}-\varepsilon a^{(0)}(\omega) \sigma_1^2 \int  a^{(0)}(\nu) s^{(1)}(\nu) \diff\rho(\nu)\right) \, .
\end{align*}
 When $a^{(0)}$ and $s^{(1)}$ become of order $\varepsilon^{-\nicefrac{1}{4}}$, this term becomes of order $\varepsilon^{\nicefrac{1}{4}}$, which is then of the same order as the term $\varepsilon^{\nicefrac{1}{2}} \sigma_1 \varphi_1 s^{(1)}(\omega)$ in \eqref{eq:aux-8} (resp.~the term $\varepsilon^{\nicefrac{1}{2}}  \sigma_1 \varphi_1 a^{(0)}(\omega)$ in \eqref{eq:aux-9}). At this point, these terms can not be neglected anymore. From~\eqref{eq:layer2-approx}, we have 
\begin{align*}
&a^{(0)} \sim \frac{e^{|\varphi_1 \sigma_1| t_2}}{2} a_{\perp,\rm{init}} \, , &&s^{(1)}  \sim \sign(\varphi_1 \sigma_1) \frac{e^{|\varphi_1 \sigma_1| t_2}}{2} a_{\perp,\rm{init}} \, , &&t_2 \to \infty \, .
\end{align*}
Therefore, $a^{(0)}$ and $s^{(1)}$ become of order $\varepsilon^{-\nicefrac{1}{4}}$ at the time $t_2 \sim \frac{1}{4 |\sigma_1 \varphi_1|} \log \frac{1}{\varepsilon}$, at which the approximation on the second time scale breaks down. We thus introduce a new time scale centered at this critical point. 

\subsection{Third time scale: linear component II}
\label{sec:third-layer}

We now introduce the time $t_3 = t_2 - \frac{1}{4 |\varphi_1 \sigma_1|} \log \frac{1}{\varepsilon}$. As $t_3$ is only a translation from $t_2$, the ODEs in terms of $t_3$ are the same as the ones in term of $t_2$.
However, in this time scale, $a$ and $\varepsilon^{\nicefrac{1}{2}}s$ have diverged. In coherence with the discussion above, we seek expansions of the form 
\begin{align}
a &= \varepsilon^{-\nicefrac{1}{4}} a^{(-1)} + a^{(0)} + \varepsilon^{\nicefrac{1}{4}} a^{(1)} + \dots \, , \label{eq:expansion-a-layer3} \\
s &= \varepsilon^{\nicefrac{1}{4}} s^{(1)} + \varepsilon^{\nicefrac{1}{2}} s^{(2)} + \dots \, . \label{eq:expansion-s-layer3}
\end{align}
Similarly to the second time scale, we substitute \eqref{eq:expansion-a-layer3}-\eqref{eq:expansion-s-layer3} in \eqref{eq:group_sing_hermite} and obtain 
\begin{align*}
\varepsilon^{\nicefrac{1}{4}} \partial_{t_3} a^{(-1)}(\omega) &= - \varepsilon^{-\nicefrac{1}{4}} {\sigma_0^2} \int  a^{(-1)}(\nu) \diff\rho(\nu) + \sigma_0 \left( \varphi_0 - {\sigma_0} \int  a^{(0)}(\nu) \diff\rho(\nu)\right) \\
&\hspace{5mm} - \varepsilon^{\nicefrac{1}{4}} {\sigma_0^2} \int a^{(1)}(\nu)\diff\rho(\nu) + \varepsilon^{\nicefrac{1}{4}} \sigma_1 \left(\varphi_1 - {\sigma_1} \int  a^{(-1)}(\nu) s^{(1)}(\nu) \diff\rho(\nu)\right) s^{(1)}(\omega) + O(\varepsilon^{\nicefrac{1}{2}}) \, , \\
\varepsilon^{\nicefrac{1}{4}} \partial_{t_3} s^{(1)}(\omega) &= \varepsilon^{\nicefrac{1}{4}} \sigma_1 \left(\varphi_1 - {\sigma_1} \int  a^{(-1)}(\nu) s^{(1)}(\nu) \diff \rho(\nu)\right) a^{(-1)}(\omega) + O(\varepsilon^{\nicefrac{1}{2}}) \, . 
\end{align*}
First, we identify the terms of order $\varepsilon^{-\nicefrac{1}{4}}$:
\begin{equation}
0 = - {\sigma_0^2} \int  a^{(-1)}(\nu) \diff\rho(\nu) = - {\sigma_0^2} \left\langle a^{(-1)}, \bfone \right\rangle_{L^2(\rho)} \, . 
\label{eq:aux-19}
\end{equation}
This means that $a$ has no component diverging in $\varepsilon$ in the direction of $\bfone$.

Second, we identify the terms of order $1=\varepsilon^{0}$:
\begin{equation}
\label{eq:aux-20}
0 = \sigma_0 \left(\varphi_0 - {\sigma_0} \int  a^{(0)}(\nu) \diff\rho(\nu)\right) = \sigma_0 \left(\varphi_0 - \sigma_0 \left\langle a^{(0)}, \bfone \right\rangle_{L^2(\rho)}\right) \, .
\end{equation}
Put together with \eqref{eq:aux-19}, this equation ensures that the constant component of $\varphi$ remains learned on this third time scale. 

Third, we identify the terms of order $\varepsilon^{\nicefrac{1}{4}}$:
\begin{align}
\label{eq:aux-21}
\begin{split}
 \partial_{t_3} a^{(-1)}(\omega) &=  -  {\sigma_0^2}\int a^{(1)}(\nu) \diff\rho(\nu) +  \sigma_1 \left(\varphi_1 - {\sigma_1} \int  a^{(-1)}(\nu) s^{(1)}(\nu)\diff \rho(\nu)\right) s^{(1)}(\omega)  \, , \\
\partial_{t_3} s^{(1)}(\omega) &= \sigma_1 \left(\varphi_1 - {\sigma_1} \int  a^{(-1)}(\nu) s^{(1)}(\nu) \diff\rho(\nu)\right) a^{(-1)}(\omega)  \, . 
\end{split}
\end{align}
Again, the term $-  {\sigma_0^2}\int a^{(1)}(\nu) \diff\rho(\nu) $ is best interpreted as the Lagrange multiplier associated to the constraints \eqref{eq:aux-19}, \eqref{eq:aux-20}. Using the compact notations, 
\begin{align*}
\int  a^{(-1)}(\nu) s^{(1)}(\nu)\diff \rho(\nu) &= \left\langle a^{(-1)}, s^{(1)} \right\rangle_{L^2(\rho)} = {\left\langle a^{(-1)}, \bfone \right\rangle_{L^2(\rho)}\left\langle \bfone, s^{(1)} \right\rangle_{L^2(\rho)}} + \left\langle a^{(-1)}_\perp, s^{(1)}_\perp \right\rangle_{L^2(\rho)} \\&= \left\langle a^{(-1)}_\perp, s^{(1)}_\perp \right\rangle_{L^2(\rho)} \, ,
\end{align*}
where in the last equality we use \eqref{eq:aux-19}. Thus we can rewrite \eqref{eq:aux-21} as
\begin{align}
\label{eq:aux-22}
\begin{split}
\partial_{t_3} a^{(-1)} &=  -  {\sigma_0^2} \langle  a^{(1)}, \bfone \rangle_{L^2(\rho)} \bfone +  \sigma_1 \left(\varphi_1 - {\sigma_1} \left\langle a^{(-1)}_\perp, s^{(1)}_\perp \right\rangle_{L^2(\rho)}\right) s^{(1)}  \, , \\
\partial_{t_3} s^{(1)} &= \sigma_1 \left(\varphi_1 - {\sigma_1} \left\langle a^{(-1)}_\perp, s^{(1)}_\perp \right\rangle_{L^2(\rho)}\right) a^{(-1)}  \, ,
\end{split}
\end{align}
and thus 
\begin{align}
	\label{eq:ode-third-time-scale}
\begin{split}
\partial_{t_3} a^{(-1)}_\perp &=    \sigma_1 \left(\varphi_1 - {\sigma_1} \left\langle a^{(-1)}_\perp, s^{(1)}_\perp \right\rangle_{L^2(\rho)}\right) s^{(1)}_\perp  \, , \\
\partial_{t_3} s^{(1)}_\perp &= \sigma_1 \left(\varphi_1 - {\sigma_1} \left\langle a^{(-1)}_\perp, s^{(1)}_\perp \right\rangle_{L^2(\rho)}\right) a^{(-1)}_\perp  \, .
\end{split}
\end{align}
In Appendix \ref{sec:solution-ode-third-time-scale}, we solve this system of ODEs and determine the initial condition by matching with the previous layer. The result is that
\begin{align}
	\label{eq:aux-23-main}
	\begin{split}
		&a^{(-1)} = a^{(-1)}_\perp = \lambda a_{\perp,\rm{init}} \, , \\
		&s^{(1)} = s^{(1)}_\perp = \sign(\sigma_1 \varphi_1) \lambda a_{\perp,\rm{init}} \, ,
	\end{split}
\end{align}
where $\lambda = \lambda(t_3)$ is the function 
\begin{equation}
	\label{eq:aux-27-main}
	\lambda(t_3) = \frac{|\varphi_1|^{\nicefrac{1}{2}}}{\left({|\sigma_1|} \left\Vert a_{\perp,\rm{init}}\right\Vert_{L^2(\rho)}^2 + 4 |\varphi_1| e^{-2\vert \sigma_1 \varphi_1 \vert t_3} \right)^{\nicefrac{1}{2}}} \, .
\end{equation}
This solution finishes to describe how the linear part of the function $\varphi$ is learned. \modif{Plugging it into the equations for $a^{(-1)}$ and $s^{(1)}$, we get
\begin{align*}
	\sigma_1 \int  a^{(-1)}(\nu) s^{(1)}(\nu)\diff \rho(\nu) = \sigma_1 \left\langle a^{(-1)}_\perp, s^{(1)}_\perp \right\rangle_{L^2(\rho)} = \frac{\varphi_1 |\sigma_1| \left\Vert a_{\perp,\rm{init}}\right\Vert_{L^2(\rho)}^2 }{{|\sigma_1|} \left\Vert a_{\perp,\rm{init}}\right\Vert_{L^2(\rho)}^2 + 4 |\varphi_1| e^{-2\vert \sigma_1 \varphi_1 \vert t_3}},
\end{align*}
which converges to $\varphi_1$ as $t_3 \to \infty$. Consequently, we obtain the following approximation for $\Risk_{\mf,*}$ within this time scale (again, see Appendix \ref{sec:induced} for details):
\begin{align*}
	\Risk_{\mf,*} = \frac{1}{2}  \varphi_1^2\left(1 - \frac{1}{1 + \frac{4 |\varphi_1|}{|\sigma_1| \left\Vert a_{\perp,\rm{init}}\right\Vert_{L^2(\rho)}^2} e^{-2\vert \sigma_1 \varphi_1 \vert t_3} }   \right)^2 + \frac{1}{2 }\sum_{k\geq 2} \varphi_k^2 + O(\varepsilon^{\nicefrac{1}{4}}) \, , \quad \veps \to 0.
	\end{align*}
}

\subsection{Conjectured behavior for larger time scales}
\label{sec:conjectured}

The analysis of the previous sections naturally suggests the existence
of a sequence of cutoffs. At each time scale, a new polynomial component of $\varphi$ is learned
within a window that is much shorter than the time elapsed before that phase started. 
Along this sequence, we 
expect $s$ and $a$ to grow to increasingly larger scales in $\varepsilon$ (but $s$ remains 
$o(1)$ while $a$ diverges). 

More precisely, we assume that during the $l$-th phase, the network learns 
the degree-$l$ component 
$\varphi_l$, and various quantities satisfy the following
scaling behavior:
\begin{align}
a = O(\varepsilon^{-\omega_l}), \;\;\;
s = O(\varepsilon^{\beta_l}), \;\;\; t = O(\varepsilon^{\mu_l})\, ,
\end{align}
where $\omega_l > 0$ is an increasing sequence and $\beta_l, \mu_l>0$ are decreasing sequences.
Further, while learning of this component takes place when $t = O(\varepsilon^{\mu_l})$,
the actual evolution of the risk (and of the neural network) take place on much shorter scales,
namely:
\begin{align}
 \Delta t = O(\varepsilon^{\nu_l})\, ,
\end{align}
where $\nu_l$ is also decreasing, with $\nu_l>\mu_l$. 
The goal of this section is to provide heuristic arguments to conjecture the values of 
$\omega_l$, $\beta_l$, $\mu_l$ and $\nu_l$. We will base this conjecture on a rigorous analysis of
a simplified model.

The simplified model is motivated by the expectation (supported by the heuristics and simulations
in the previous sections) 
that learning each component happens independently from the details of the
evolution on previous time scales.
In the simplified model, the activation function
  $\sigma (x)$ is proportional to the $l$-th Hermite polynomial, namely $\sigma (x) = \sigma_l \mathrm{He}_l (x)$. 
  This is the component of $\sigma$ that we expect to be relevant on the $l$-th time scale.
  The gradient flow equations \eqref{eq:group_sing_hermite} then read:
\begin{equation}\label{eq:l-th_single}
	\begin{split}
		\veps \partial_{t} a(\omega) &= \, \sigma_l s(\omega)^l \left( \varphi_l - \sigma_l \int a(\nu) s(\nu)^l \diff\rho(\nu) \right) \, , \\
		\partial_{t} s(\omega) &= \, a(\omega) \left( 1 - s(\omega)^2 \right) l \sigma_l s(\omega)^{l - 1} \left( \varphi_l - \sigma_l \int a(\nu) s(\nu)^l \diff \rho(\nu) \right) \, .
	\end{split}
\end{equation}
with corresponding risk component
\begin{equation*}
	\Risk_l = \frac{1}{2} \left( \varphi_l - \sigma_l \int a(\nu) s(\nu)^l \diff \rho(\nu) \right)^2.
\end{equation*}
We capture the effect of learning dynamics on the previous time scales
by the overall magnitude of the $a (\omega)$'s and $s (\omega)$'s at initialization. 
Namely, we choose the scale of initialization of the simplified model to be given by  
the end of the $(l - 1)$-th time scale,
 i.e., $a (\omega) \asymp \veps^{- \omega_{l - 1}}$ and $s (\omega) \asymp \veps^{\beta_{l - 1}}$. Further,
 in order for the $(l-1)$-th component to be learned, namely
 \begin{equation}
 	\int a(\nu) s(\nu)^{l-1} \diff \rho(\nu) \approx \frac{\varphi_{l - 1}}{\sigma_{l - 1}},
 \end{equation}
 we require $\omega_{l - 1} = (l - 1) \beta_{l - 1}$ so that $\int a(\nu) s(\nu)^{l-1} \diff \rho(\nu) = \Theta (1)$. Analogously, we 
  assume  $\omega_{l} = l \beta_{l}$.

 Based on this consideration, we introduce the rescaled variables
\begin{equation*}
	\ta (\omega) = \veps^{\omega_l} a (\omega), \ \ts (\omega) = \veps^{- \beta_l} s (\omega), \ \text{where} \ 
	\ta (\omega, 0) \asymp \veps^{\omega_l - \omega_{l - 1}}, \ts(\omega, 0) \asymp \veps^{\beta_{l - 1} - \beta_l}.
\end{equation*}
Rewriting Eq.~\eqref{eq:l-th_single} in terms of $\ta(\omega)$'s and $\ts(\omega)$'s, and using $\omega_l = l \beta_l$, we get that
\begin{equation}
	\begin{split}
		\veps^{1 - 2 l \beta_l} \partial_t \ta(\omega) = \, & \sigma_l \ts(\omega)^l \left( \varphi_l - \sigma_l \int \ta(\nu) \ts(\nu)^l \diff \rho(\nu) \right) \\
		\veps^{2 \beta_l} \partial_t \ts(\omega) = \, & l \sigma_l \ta(\omega) \ts(\omega)^{l - 1} \left( 1 - \veps^{2 \beta_l} \ts(\omega)^2 \right) \left( \varphi_l - \sigma_l \int \ta(\nu) \ts(\nu)^l \diff \rho(\nu) \right).
	\end{split}
\end{equation}
In order for the $\ta(\omega)$'s and $\ts(\omega)$'s to be learned simultaneously, we need 
$1 - 2 l \beta_l = 2 \beta_l$, which implies $\beta_l = 1 / 2 (l + 1)$. Making a further change of 
the time variable $t = \veps^{\nu_l} \tau$, where $\nu_l = 2 \beta_l = 1 / (l+1)$, it follows that
\begin{equation}\label{eq:l-th_simp}
	\begin{split}
		\partial_\tau \ta(\omega) = \, & \sigma_l \ts(\omega)^l \left( \varphi_l - \sigma_l \int \ta(\nu) \ts(\nu)^l \diff \rho(\nu) \right) \\
		\partial_\tau \ts(\omega) = \, & l \sigma_l \ta(\omega) \ts(\omega)^{l - 1} \left( 1 - \veps^{2 \beta_l} \ts(\omega)^2 \right) \left( \varphi_l - \sigma_l \int \ta(\nu) \ts(\nu)^l \diff \rho(\nu) \right).
	\end{split}
\end{equation}
Moreover, rewriting the risk in terms of the rescaled variables
$\ta, \ts$, $\Risk_l (\tau) = \Risk_l(\ta(\tau),\ts(\tau))$ satisfies the ODE:
\begin{equation}\label{eq:lth_risk_ODE}
	\partial_\tau \Risk_l = - 2 \sigma_l^2 \Risk_l \cdot \int \ts(\omega)^{2 (l - 1)} \left( l^2 \ta(\omega)^2 \left( 1 - \veps^{2 \beta_l} \ts(\omega)^2 \right) + \ts(\omega)^2 \right) \diff \rho(\omega).
\end{equation}
Note that with our choice of $\beta_l$ and $\omega_l$, we have $\omega_l - \omega_{l - 1} = \beta_{l - 1} - \beta_l = 1 / 2 l (l+1)$. This means that the 
$\ta(\omega)$'s and $\ts(\omega)$'s are initialized at the same scale, namely 
\begin{align}
\ta(\omega, 0), \ts(\omega,0) = \Theta(\veps^{1/2l(l+1)})\, . \label{eq:BarABarS}
\end{align}
The theorem below describes quantitatively the dynamics of the 
simplified model for small $\veps$, and determines the value of $\mu_l$ (recall that $\nu_l = 1 / (l+1)$):
\begin{thm}[Evolution of the simplified gradient flow]\label{thm:evolution_single}
	Assume $l \ge 2$  and let $(\ta(\omega, \tau), \ts(\omega, \tau))_{\tau \ge 0}$ be the unique
	solution  of the ODE system \eqref{eq:l-th_simp}, initialized as 
	per Eq.~\eqref{eq:BarABarS} (note in particular that $\sigma_l \varphi_l \ta(\omega, 0) \ts(\omega,0)^l \asymp \veps^{1/2 l}$). Then the followings hold:
	\begin{itemize}
		\item [$(a)$] Let us denote
		\begin{equation}
			A = \left\{ \omega: \sigma_l \varphi_l \liminf_{\veps \to 0} \veps^{-1 / 2 l} \ta(\omega, 0) \ts(\omega,0)^l > 0 \right\}
		\end{equation}
	    and assume $\rho(A) > 0$. For $\Delta \in (0, \varphi_l^2 / 2)$, define
	    \begin{equation}
	    	\tau (\Delta) = \inf \{ \tau \ge 0: \Risk_l(\ta(\tau),\ts(\tau)) \le \Delta \}.
	    \end{equation}
	    Then, for any fixed $\Delta$ we have $\tau(\Delta) = \Theta (\veps^{- (l - 1) / 2 l (l + 1)})$ as $\veps \to 0$. Further, if $\rho$ is a discrete probability measure, then there exists $\tau_*(\eps) = \Theta(\veps^{- (l - 1) / 2 l (l + 1)})$ and,
		for any $\Delta>0$ a constant $c_*(\Delta)>0$ independent of $\eps$ such that
		\begin{align}
		\tau \le \tau_*(\eps)-c_*(\Delta)   & \Rightarrow \;\;
		\liminf_{\eps\to 0}\Risk_l(\ta(\tau),\ts(\tau)) \ge \frac{1}{2}\varphi_l^2-\Delta\, ,\\
		\tau \ge \tau_*(\eps)+c_*(\Delta)  & \Rightarrow \;\;
		\limsup_{\eps\to 0}\Risk_l(\ta(\tau),\ts(\tau)) \le \Delta\, ,
		\end{align}
	    namely the $l$-th component is learnt in an $O(1)$ time window around $\tau_*(\eps) = \Theta(\veps^{- (l - 1) / 2 l (l + 1)})$.
	    
		\item [$(b)$] Similarly, we denote
		\begin{equation}
			B = \left\{ \omega: \sigma_l \varphi_l \limsup_{\veps \to 0} \veps^{-1 / 2 l} \ta(\omega, 0) \ts(\omega,0)^l < 0, \ \text{and} \ \liminf_{\veps \to 0} (\ts (\omega, 0)^2 / \ta (\omega, 0)^2) > l \right\}.
		\end{equation}
		If $\rho(B) > 0$, then the same 
		claims as in $(a)$ hold.
		\item [$(c)$] If neither of the conditions at points $(a)$, $(b)$ holds, and
		\begin{equation}
			\sigma_l \varphi_l \limsup_{\veps \to 0} \veps^{-1 / 2 l} \ta(\omega, 0) \ts(\omega,0)^l < 0, \quad \limsup_{\veps \to 0} (\ts (\omega, 0)^2 / \ta (\omega, 0)^2) < l
		\end{equation}
		for almost every $\omega \in \Omega$. Then, for such $\omega \in \Omega$ and each $\Delta>0$, there exists a constant $C_*(\omega, \Delta)>0$ such that
		 \begin{align}
		 \tau \ge C_* (\omega, \Delta) \veps^{- (l-1) / 2l (l+1)} \;\;\Rightarrow\;\; 
		 \vert \ts(\omega, \tau) \vert \le \Delta \veps^{1 / 2l (l+1)}\, ,
		 \end{align}
		 meaning that $\ts (\omega, \tau)$ converges to $0$ eventually.
		\end{itemize}
	We further note that 
	$\tau = \Theta(\veps^{- (l - 1) / 2 l (l + 1)}) \Longleftrightarrow t = \Theta (\veps^{\mu_l})$ with $\mu_l = 1/2l$, 
	and  $\tau = O(1) \Longleftrightarrow t = O(\veps^{\nu_l})$ with $\nu_l = 1/(l+1)$.
\end{thm}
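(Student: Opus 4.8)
The plan rests on two elementary structural facts about the system \eqref{eq:l-th_simp}. Write $g(\tau) := \varphi_l - \sigma_l \int \ta(\nu)\ts(\nu)^l \de\rho(\nu)$, so that $\Risk_l(\tau) = \tfrac12 g(\tau)^2$. Direct differentiation gives $\partial_\tau g = -Qg$ with $Q := \sigma_l^2 \int \ts(\nu)^{2l-2}\big(\ts(\nu)^2 + l^2 \ta(\nu)^2(1-\veps^{2\beta_l}\ts(\nu)^2)\big)\de\rho(\nu)$; since the original correlations $s(\omega)$ stay in $[-1,1]$ we have $1-\veps^{2\beta_l}\ts^2 = 1-s^2 \ge 0$, hence $Q \ge 0$, $\Risk_l$ is non-increasing, and $g$ keeps its sign with $|g(\tau)| \le |g(0)| = |\varphi_l| + O(\veps^{1/2l})$. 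The second fact is an approximate conservation law: $\partial_\tau\big(\ts(\omega)^2 - l\,\ta(\omega)^2\big) = -2l\, s(\omega)^2\,\sigma_l\, \ta(\omega)\ts(\omega)^l g$, whose right-hand side is $O(\veps^{2\beta_l})$ as long as $\ta,\ts$ stay bounded; integrated over a window of length $O(\veps^{-(l-1)/2l(l+1)})$ this is a drift of size $O(\veps^{1/2l})$, which for $l\ge 2$ is $o(\veps^{1/l(l+1)}) = o(\ta(\omega,0)^2)$. So along each trajectory $\ts(\omega,\tau)^2 = l\,\ta(\omega,\tau)^2 + \kappa(\omega) + o(\ta(\omega,0)^2)$ with $\kappa(\omega) := \ts(\omega,0)^2 - l\,\ta(\omega,0)^2$; this is where $l\ge 2$ and the $\liminf/\limsup$ of $\ts(\omega,0)^2/\ta(\omega,0)^2$ against $l$ (excluding the borderline $\kappa(\omega) = o(\ta(\omega,0)^2)$) enter. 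I would first establish the a~priori bound $|\ta(\omega,\tau)|,|\ts(\omega,\tau)| = O(1)$, uniformly in $\omega$, on $[0, C\veps^{-(l-1)/2l(l+1)}]$ by a continuation argument — the bound can only fail after $g$ has essentially vanished, which freezes the flow — using in the discrete case $|\ta_i\ts_i^l| \le |\varphi_l|/(|\sigma_l| p_{\min})$ together with the conservation law. By the sign symmetry $\ta\mapsto-\ta$ (and $\ts\mapsto-\ts$ when $l$ odd) one may assume $\sigma_l\varphi_l>0$.

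Granting this, during the long plateau ($g\approx\varphi_l$) the coordinates decouple: substituting $\ts^2 \approx l\,\ta^2 + \kappa(\omega)$ into $\partial_\tau\ta = \sigma_l g\,\ts^l$ turns each one into an essentially autonomous scalar ODE $\partial_\tau\ta \approx \sigma_l\varphi_l\,(l\,\ta^2+\kappa(\omega))^{l/2}$. On $A$ the data is aligned, so $\ta$ keeps its sign and grows like a finite-time blow-up of exponent $l$; since $\ta(0)\asymp\veps^{1/2l(l+1)}$ the saturation time is $\asymp\ta(0)^{1-l}\asymp\veps^{-(l-1)/2l(l+1)}$. On $B$ one has $\kappa(\omega)>0$, so $\ts$ never vanishes while $\partial_\tau\ta$ points toward $0$; thus $\ta$ crosses zero — which, since $|\ta(0)|,|\ts(0)|\asymp\veps^{1/2l(l+1)}$ and $\partial_\tau\ta\asymp\ts(0)^l\asymp\veps^{1/2(l+1)}$, again takes time $\asymp\veps^{-(l-1)/2l(l+1)}$ — and afterwards behaves as in case $A$. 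In case $(c)$, $\kappa(\omega)<0$ pins $\ta(\omega)^2 \ge -\kappa(\omega)/l>0$, so $\ta$ keeps its anti-aligned sign, which forces $\partial_\tau(\ts^2)<0$; writing $w := \ts^2$ one gets $\partial_\tau w \asymp -|\kappa(\omega)|^{1/2} w^{l/2}$, whose solution reaches $w \le \Delta^2\veps^{1/l(l+1)}$ exactly by time $\asymp\veps^{-(l-1)/2l(l+1)}$, which is the stated rate.

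It remains to translate the per-neuron picture into statements about $\Risk_l=\tfrac12 g^2$. For the lower bound on $\tau(\Delta)$: on $[0, c\,\veps^{-(l-1)/2l(l+1)}]$ with $c$ small the previous step keeps every neuron in the pre-saturation regime where $\ts^{2l-2}(\ts^2+l^2\ta^2) = O(\veps^{1/(l+1)})$, hence $Q = O(\veps^{1/(l+1)})$ and $\int_0^{c\tau_*}Q = O(\veps^{1/2l}) = o(1)$; thus $g(\tau)=(1+o(1))g(0)$ and $\Risk_l(\tau)\to\tfrac12\varphi_l^2$ uniformly there, so $\tau(\Delta)\ge c\,\veps^{-(l-1)/2l(l+1)}$ for every fixed $\Delta<\varphi_l^2/2$. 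For the upper bound under $\rho(A)>0$ (resp.\ $\rho(B)>0$): once the aligned (resp.\ flipped) neurons reach $\Theta(1)$ at time $\asymp\tau_*$ one has $Q\gtrsim\rho(A)$, so $g$ then decays exponentially in $\tau$ and $\Risk_l$ falls below any fixed $\Delta$ within an additional $O(1)$ time, during which the neurons move by only $O(1)$; hence $\tau(\Delta)\le C\veps^{-(l-1)/2l(l+1)}$. For the sharp statement when $\rho$ is discrete, set $\tau_*(\veps) := \inf\{\tau : \Risk_l(\tau)\le\varphi_l^2/4\}$; the feedback loop ``$\ta$ grows $\Rightarrow$ $Q$ grows $\Rightarrow$ $g$ drops $\Rightarrow$ $\ta$ stops'' shows $\Risk_l$ passes from $\tfrac12\varphi_l^2-\Delta$ down to $\Delta$ within an $O(1)$-length window, while before that window $Q=o(1)$ so $\Risk_l$ has barely moved — which is the two implications with $c_*(\Delta)=O(1)$. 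Finiteness of the support is used to ensure that only finitely many saturation times $\tau_*^{(i)}=\Theta(\veps^{-(l-1)/2l(l+1)})$ arise and that the first of them freezes all others (via $g\to 0$), so their minimum is still of that order; the arithmetic $t = \veps^{\nu_l}\tau$ then gives $\mu_l = 1/2l$, $\nu_l = 1/(l+1)$.

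The main obstacle, I expect, is the joint control of (i) the a~priori boundedness of $\ta(\omega,\cdot),\ts(\omega,\cdot)$ uniformly in $\omega$ over a time window whose length diverges as $\veps\to 0$, and (ii) the coupling through the single scalar $g(\tau)$: the per-neuron ODEs are only autonomous to leading order, with the remainder $\varphi_l-g$ itself fixed self-consistently by the aggregate over all neurons. Closing the bootstrap — choosing constants so that ``$g\approx\varphi_l$ on $[0,c\tau_*]$'' and ``$\ta,\ts$ bounded'' reinforce one another rather than each presuppose the other — together with the sharp $O(1)$-window analysis at the transition is where the real work lies. The borderline initial data with $\ts(\omega,0)^2/\ta(\omega,0)^2\to l$, excluded from all three cases, is presumably why the hypotheses are stated with $\liminf/\limsup$ rather than plain limits.
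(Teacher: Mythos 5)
Your proposal is correct in outline and follows essentially the same route as the paper's proof: a sign-preserved residual $g$ with $\partial_\tau g=-Qg$, reduction to a scalar power-law ODE whose blow-up from the $\veps^{1/2l(l+1)}$ initial scale fixes the timescale $\veps^{-(l-1)/2l(l+1)}$, the (near-)conservation of $\ts^2-l\,\ta^2$ (the exact logarithmic form of which is Eq.~\eqref{eq:relation_a_and_s}) to handle the sign-flip reduction of case $(b)$ and the pinning of $\ta$ in case $(c)$, and exponential decay of $\Risk_l$ once some neuron reaches order one, with discreteness entering via a pigeonhole plus the algebraic bound of Lemma~\ref{lem:simple_algebra} to get the $O(1)$ window. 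The only stylistic difference is that the paper closes the self-consistency issue you flag by arguing by contradiction from $|g|\ge\sqrt{2\Delta}$ on $[0,\tau(\Delta))$ (so no uniform-in-time $O(1)$ bound on the neurons is ever needed), rather than by a direct bootstrap.
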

The proof of Theorem~\ref{thm:evolution_single} is deferred to Appendix~\ref{sec:proof_evo_sin}. 

\begin{remark}
	Under the conditions of cases $(a)$ and $(b)$, we see that the degree-$l$ component of the target function is learnt within an $O(\veps^{1 / (l+1)})$ time window around $t_* (l, \veps) \asymp \veps^{1/ 2 l}$, which is consistent with the timescales conjectured in Definition~\ref{def:StandardScenario}.
\end{remark}
\begin{remark}
Case $(c)$ corresponds to $s (\omega) / s(\omega, 0)$ becoming 
close to $0$ in time $t = O (\veps^{\mu_l})$, and staying at $0$. In other words, 
 the neurons become orthogonal to the target direction and play no role in 
 learning higher-degree components any longer. 
 
 Informally, case $(c)$ couples the learning of different polynomial components.
 It can happen that
 the learning phase $l-1$ induces 
 an effective initialization $(\ta (\omega, 0), \ \ts (\omega, 0))$ within the domain of
 case $(c)$. 
 
 We expect this not to be the case for suitable choices of initialization (or equivalently $\rP_A$), $\varphi$, and $\sigma$. Establishing this would amount to establishing that
 the \modif{canonical learning order} holds.  
\end{remark}

\section{Stochastic gradient descent and finite sample size}\label{sec:gf_sgd}

So far we focused on analyzing the projected gradient flow (GF) dynamics 
with respect to the population risk, as defined in Eqs.~\eqref{eq:GF-1}-\eqref{eq:GF-2}.
In this section, we extract the implications of our analysis of GF on online projected 
stochastic gradient descent, which is a projected version of the SGD dynamics~\eqref{eq:sgd_MF}. 

For simplicity of notation, we denote by $z = (y, x)\in \reals\times\reals^d$
a datapoint and by $\theta_i = (a_i,u_i)\in\reals\times\S^{d-1}$ the parameters of neuron $i$.
For $z = (y, x)$ and $\rho^{(m)} = (1/m) \sum_{i=1}^{m} \delta_{\theta_i} = (1/m) \sum_{i=1}^{m} 
\delta_{(a_i, u_i)}$, we define
\begin{align*}
	\hat{F}_i (\rho^{(m)}; z) = \, & \left( y - \frac{1}{m} \sum_{j = 1}^{m} a_j \sigma (\langle u_j, x \rangle) \right) \sigma (\langle u_i, x \rangle), \\
	\hat{G}_i (\rho^{(m)}; z) = \, & a_i \left( y - \frac{1}{m} \sum_{j=1}^{m} a_j \sigma (\langle u_j, x \rangle) \right) \sigma' (\langle u_i, x \rangle) x.
\end{align*}

The projected SGD dynamics is specified as follows:
\begin{equation}\label{eq:Psgd_MF}
	\begin{split}
		\overline{a}_i (k + 1) = \, & \overline{a}_i (k) + \eps^{-1} \eta \hat{F}_i (\overline{\rho}^{(m)} (k); z_{k + 1}) \\
		\overline{u}_i (k + 1) = \, & \operatorname{Proj}_{\S^{d - 1}} \left( \overline{u}_i (k) + \eta \hat{G}_i (\overline{\rho}^{(m)} (k); z_{k + 1}) \right),
	\end{split}
\end{equation}
where for $u \in \R^d$ and compact $S \subset \R^d$, $\operatorname{Proj}_{S} (u) :=
 \argmin_{s \in S} \norm{s - u}_2$, and  $\orho^{(m)} := (1/m) \sum_{i=1}^{m} \delta_{\otheta_i}$. Note that the $(\oa_i, \ou_i)$'s here are different from the $(\oa, \os)$'s in Section~\ref{sec:matched}.

We prove that, for small $\eta$, the projected SGD of Eq.~\eqref{eq:Psgd_MF} is close to the gradient flow
of Eqs.~\eqref{eq:GF-1}-\eqref{eq:GF-2}. 
Throughout  this section, we make the following assumptions similar to those assumed in Section~\ref{sec:LargeNet}:
\begin{description}\label{ass:MF_limit_more}
	\item [A1.] $\rho_0$ is supported on $[-M_1, M_1] \times \S^{d - 1}$. Hence, $\vert a_i (0) \vert \le M_1$ for all $i \in [m]$.
	\item [A2.] The activation function is bounded: $\norm{\sigma}_{\infty} \le M_2$. Additionally, define for $u, u' \in \R^d$:
	\begin{align}\label{eq:new_def_V}
		V ( \langle u_*, u \rangle; \norm{u_*}_2, \norm{u}_2 ) = \, & \E \left[ \varphi (\langle u_*, x \rangle) \sigma (\langle u, x \rangle) \right] , \\ 
		\label{eq:new_def_U}
		U (\langle u, u' \rangle; \norm{u}_2, \norm{u'}_2) = \, & \E \left[ \sigma (\langle u, x \rangle) \sigma (\langle u', x \rangle) \right].
	\end{align}
    We then require the functions $V$ and $U$ to be bounded and differentiable, with uniformly bounded and Lipschitz continuous gradients for all $\norm{u}_2, \norm{u'}_2 \le 2$:
    \begin{align}\label{eq:grad_bound_V}
    	& \norm{\nabla_u V}_{2} \le M_2, \ \norm{\nabla_u V - \nabla_{u'} V }_{2} \le M_2 \norm{u - u'}_2, \\
    	\label{eq:grad_bound_U}
    	& \norm{\nabla_{(u, u')} U}_{2} \le M_2, \ \norm{\nabla_{(u, u')} U - \nabla_{(u_1, u_1')} U}_2 \le M_2 \left( \norm{u - u_1}_2 + \norm{u' - u_1'}_2 \right).
    \end{align}
    Similar to Remark~\ref{rem:ass_A2_suff}, we can show that a sufficient condition for Eq.s~\eqref{eq:grad_bound_V} and \eqref{eq:grad_bound_U} is
	\begin{equation*}
		\sup \left\{ \norm{\sigma'}_{L^2}, \, \norm{\sigma''}_{L^2} \right\} \le M_2', \quad \ \sup \left\{ \norm{\varphi}_{L^2}, \, \norm{\varphi'}_{L^2}, \, \norm{\varphi''}_{L^2} \right\} \le M_2',
	\end{equation*}
    where the constant $M_2'$ depends uniquely on $M_2$.
	\item [A3.] Assume $(x, y) \sim \P$, then we require that $y \in [- M_3, M_3]$ almost surely. Moreover, we assume that for all $\norm{u}_2 \le 2$, both $\sigma(\langle u, x \rangle)$ and $\sigma' (\langle u, x \rangle) (x - \langle u, x \rangle u)$ are $M_3$-sub-Gaussian.
\end{description}

The following theorem upper bounds the distance between gradient flow 
and projected stochastic gradient descent dynamics.
\begin{thm}[Difference between GF and Projected SGD]\label{thm:diff_gf_psgd}
Let $\theta_i(t) = (a_i(t), u_i(t))$ be the solution of the GF
ordinary differential equations~\eqref{eq:GF-1}-\eqref{eq:GF-2}.
There exists a constant $M$ that only depends on the $M_i$'s from Assumptions A1-A3, such that for 
any $T, z \ge 0$ and
	\begin{equation*}
		\eta \le \frac{1}{(d + \log m + z^2) M \exp((1 + 1 / \veps) M T (1 + T / \veps)^2)},
	\end{equation*}
	the following holds with probability at least $1 - \exp(- z^2)$:
	\begin{align}\label{eq:a_bd_psgd}
		\sup_{k \in [0, T/\eta] \cap \mathbb{N}} \max_{i \in [m]} \left\vert \overline{a}_i (k) \right\vert \le \, & M(1 + T / \veps), \\
		\label{eq:theta_diff_gf_psgd}
		\sup_{k \in [0, T/\eta] \cap \mathbb{N}} \max_{i \in [m]} \norm{\theta_i (k \eta) - \overline{\theta}_i (k)}_2 \le \, & \left( \sqrt{d + \log m} + z \right) \\
		& \ \ \times M \exp \left( \left( 1 + \frac{1}{\veps} \right) M T \left(1 + \frac{T}{\veps} \right)^2 \right) \sqrt{\eta}, \\
		\label{eq:risk_diff_gf_psgd}
		\sup_{k \in [0, T/\eta] \cap \mathbb{N}} \left\vert \Risk ( \oa (k),\ou (k)  ) - \Risk ( a(k \eta),u(k\eta) ) \right\vert \le \, & \left( \sqrt{d + \log m} + z \right) \\
		& \ \ \times M \exp \left( \left( 1 + \frac{1}{\veps} \right) M T \left(1 + \frac{T}{\veps} \right)^2 \right) \sqrt{\eta}.
	\end{align}
\end{thm}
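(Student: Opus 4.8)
The plan is to compare the discrete projected SGD trajectory $(\oa_i(k),\ou_i(k))$ with the gradient flow $(a_i(t),u_i(t))$ evaluated at the grid points $t=k\eta$, via a one-step error analysis followed by a discrete Grönwall inequality. First I would rewrite the update \eqref{eq:Psgd_MF} as $\otheta_i(k+1)=\otheta_i(k)+\eta\, b_i(\orho^{(m)}(k))+\eta\,\xi_{i,k+1}+\Delta^{\rm proj}_{i,k}$, where $b_i(\rho):=\E_z\big[(\eps^{-1}\hat F_i(\rho;z),\,\hat G_i(\rho;z))\big]$ is the \emph{expected} increment, $\xi_{i,k+1}:=\big(\eps^{-1}(\hat F_i-\E\hat F_i),\,\hat G_i-\E\hat G_i\big)(\orho^{(m)}(k);z_{k+1})$ is a martingale difference with respect to $\mathcal{F}_k:=\sigma(z_1,\dots,z_k)$, and $\Delta^{\rm proj}_{i,k}$ is the correction induced by $\operatorname{Proj}_{\S^{d-1}}$. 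By the definitions of $V,U$ in \eqref{eq:new_def_V}--\eqref{eq:new_def_U} and of $\Risk$, the drift $b_i$ is precisely the (rescaled) right-hand side of the gradient-flow ODEs \eqref{eq:GF-1}--\eqref{eq:GF-2}, with the factor $\eps^{-1}$ on the $a$-component inherited from \eqref{eq:GF-1}. Thus SGD is a stochastic Euler discretization of GF with step $\eta$, and the whole argument reduces to controlling (i) the accumulated noise, (ii) the projection error, and (iii) the propagation of the one-step error through the (highly $\eps$-sensitive) Lipschitz constant of $b_i$.

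Second, I would establish the a priori bound \eqref{eq:a_bd_psgd} and set up a stopping-time (discrete continuous-induction) device. Let $\tau$ be the first $k$ at which either $\max_i|\oa_i(k)|>2M(1+T/\veps)$ or $\max_i\|\otheta_i(k)-\theta_i(k\eta)\|_2$ exceeds the target bound of \eqref{eq:theta_diff_gf_psgd}. On the event $\{k\le\tau\}$, all arguments entering $\hat F_i,\hat G_i$ are controlled, so by A2--A3 (boundedness of $\sigma$ and $y$, and sub-Gaussianity of $\sigma(\langle u,x\rangle)$ and $\sigma'(\langle u,x\rangle)(x-\langle u,x\rangle u)$ for $\|u\|_2\le 2$) the increments $\xi_{i,k+1}$ are sub-Gaussian with variance proxy $O\big((1+T/\veps)^2\big)$; the $\R^d$-valued $\hat G_i$-part has norm concentrating at scale $\sqrt d$ times the per-coordinate scale. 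A maximal Azuma--Hoeffding (or Freedman) bound, union-bounded over $i\in[m]$ and $k\le T/\eta$, then gives, with probability at least $1-\exp(-z^2)$,
\begin{equation*}
\sup_{k\le T/\eta}\max_{i}\Big\| \sum_{j\le k}\eta\,\xi_{i,j}\Big\|_2 \;\le\; \big(\sqrt{d+\log m}+z\big)\,M\sqrt{\eta T}\,(1+T/\veps),
\end{equation*}
the $\sqrt{d}$ and $\log m$ coming respectively from the ambient dimension of the $u$-noise and the union over neurons.

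Third, on this good event I would run the deterministic comparison. Writing $e_i(k)=\otheta_i(k)-\theta_i(k\eta)$, Taylor-expanding the GF solution over $[k\eta,(k+1)\eta]$, bounding $\|\Delta^{\rm proj}_{i,k}\|_2=O(\eta^2\|\hat G_i\|_2^2)$ since $\ou_i(k)\in\S^{d-1}$ and $\operatorname{Proj}_{\S^{d-1}}$ is $1$-Lipschitz near the sphere, and using that $\rho\mapsto b_i(\rho)$ is Lipschitz with constant $L:=M(1+1/\veps)(1+T/\veps)^2$ on the region explored (the $1/\veps$ from the $a$-equation, the extra powers of $1+T/\veps$ from the magnitudes of the $\oa_j$'s), one obtains the recursion
\begin{equation*}
\max_i\|e_i(k+1)\|_2 \;\le\; (1+\eta L)\max_i\|e_i(k)\|_2 + \eta\,\Xi_{k+1} + C\eta^2(1+T/\veps)^2, \qquad \Xi_{k+1}:=\max_i\|\xi_{i,k+1}\|_2 .
\end{equation*}
Discrete Grönwall over $k\le T/\eta$ turns $(1+\eta L)^{T/\eta}$ into $\exp(LT)=\exp\big((1+1/\veps)MT(1+T/\veps)^2\big)$, and combining with the noise bound above yields \eqref{eq:theta_diff_gf_psgd}. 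Then \eqref{eq:risk_diff_gf_psgd} follows since $\Risk$ is Lipschitz in $(a,u)$ on the relevant bounded set (A2--A3), its Lipschitz constant being absorbed into $M$. Finally one closes the stopping-time argument: for $\eta$ below the stated threshold the bound just derived is strictly below the cutoffs defining $\tau$, hence $\tau>T/\eta$ with the claimed probability, which simultaneously proves \eqref{eq:a_bd_psgd}.

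The step I expect to be the main obstacle is this bootstrap coupling: the Lipschitz constant of the drift depends on $\sup_k\max_i|\oa_i(k)|$, so one cannot bound the trajectory error without first bounding $\|\oa\|$, and vice versa. The stopping-time device resolves the circularity, but it requires choosing the thresholds so that the estimates are self-improving uniformly for all $\eta$ below the stated bound, and tracking the $\veps$-dependence consistently — every factor of $T$ effectively enters as $T/\veps$ through the fast $a$-dynamics, which is exactly what produces the $\exp\big((1+1/\veps)MT(1+T/\veps)^2\big)$ prefactor. A secondary technical point is that the martingale concentration must be a maximal inequality (uniform over $k\le T/\eta$) for an $\R^d$-valued sum, together with the union bound over the $m$ neurons that contributes the $\log m$ term.
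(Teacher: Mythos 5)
Your proposal is correct in substance and uses the same core ingredients as the paper, but it is organized differently. The paper proves Theorem~\ref{thm:diff_gf_psgd} by chaining three couplings through intermediate processes: GF versus (population) gradient descent (Theorem~\ref{thm:diff_gf_gd}, a deterministic Euler-discretization argument with the Lipschitz drift bound of Lemma~\ref{lem:grad_diff_bd}, the a priori estimate of Lemma~\ref{lem:a_estimate}, a stopping time and Gr\"onwall), GD versus unprojected one-pass SGD (Theorem~\ref{thm:diff_gd_sgd}, imported essentially verbatim from \citep{mei2019mean}), and SGD versus projected SGD (Theorem~\ref{thm:diff_sgd_psgd}, via the martingale decomposition, a maximal inequality with the union bound over neurons and steps, and the sphere-projection expansion of Lemma~\ref{lem:aux_diff_bd}); the final bound is the triangle inequality over the three. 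You instead couple projected SGD directly to GF at the grid points in one pass, folding the discretization error, the martingale noise, the projection correction, and the bootstrap/stopping-time device into a single recursion closed by discrete Gr\"onwall. Both routes rest on identical estimates (Lipschitz drift with the $\veps^{-1}$-sensitive constant, sub-Gaussian concentration giving the $\sqrt{d+\log m}+z$ factor, $O(\eta^2\|g\|_2^2)$ projection error, self-improving a priori bound on the $\oa_i$'s); your version is more compact, while the paper's modular version isolates the three error sources and lets it reuse the GD--SGD coupling of \citep{mei2019mean} without reproving it.

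One bookkeeping point in your decomposition needs fixing, though it is not fatal. You define the drift as $b_i(\rho)=\E_z[(\eps^{-1}\hat F_i,\hat G_i)]$, claim it equals the right-hand side of \eqref{eq:GF-1}--\eqref{eq:GF-2}, and simultaneously claim $\|\Delta^{\rm proj}_{i,k}\|_2=O(\eta^2\|\hat G_i\|_2^2)$. These two claims are inconsistent: the GF drift for $u_i$ carries the tangential projector $(I_d-u_iu_i^\top)$, while $\E[\hat G_i]$ does not, and the correction $\operatorname{Proj}_{\S^{d-1}}(u+\eta g)-(u+\eta g)=-\eta\langle u,g\rangle u+O(\eta^2\|g\|_2^2)$ is first order in $\eta$, not second. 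The correct accounting is to put $(I_d-\ou_i\ou_i^\top)\hat G_i$ into the drift-plus-noise term (so the conditional expectation matches \eqref{eq:GF-2}) and define $\Delta^{\rm proj}_{i,k}$ relative to this tangential update, in which case it is indeed $O(\eta^2\|\hat G_i\|_2^2)$; this is exactly the content of the paper's Lemma~\ref{lem:aux_diff_bd}, and with that adjustment your recursion and the rest of the argument go through as described.
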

The proof is presented in Appendix~\ref{sec:coupling_bd} and follows the same scheme
 as in that of Theorem 1 part (B) in \citep{mei2019mean}. The main difference with respect to
 that theorem is here we are interested in projected SGD (and GF) instead of plain SGD (and GF), hence an additional step of approximation is required, and the $a_i$'s and $u_i$'s need to be treated separately.
We next draw implications of the last result on learning by
online SGD within the \modif{canonical learning order}.
%
\begin{thm}\label{thm:MainLearning}
Fix any $\delta > 0$. Assume $\varphi,\sigma$ and the initialization $\rP_A$ be such that the \modif{canonical learning order}
of Definition \ref{def:StandardScenario} holds up to level $L$ for some $L \ge 2$, and that
\begin{equation}
	\sum_{k \ge L+1} \varphi_k^2 \le \frac{\delta}{2}.
\end{equation}
Then, there exist constants $\veps_*=\veps_*(\delta)$, $T_0 = T_0 (\delta)$, $T = T(\eps, \delta) = T_0 (\delta) \veps^{1/(2L)}$ and $M = M(\eps, \delta)$ that  depend on $\eps, \delta$
 (together with $\varphi,\sigma$ and $\rP_A$)
 such that the following happens. Assume $\eps\le \eps_*(\delta)$ and
 $m, d, z$ are such that $d \ge M$, $m \ge \max(M, z)$, and the step size 
 $\eta$ and number of samples (equivalently, number of steps) $n$
satisfy
\begin{align} 
\eta & = \frac{1}{M (d+\log m+ z)}\, ,\\
n & = MT (d+\log m+ z)\, .
\end{align}
Then, with probability at least $1-e^{-z}$,
 the projected gradient descent algorithm of Eq.~\eqref{eq:Psgd_MF}
achieves population risk smaller than $\delta$: 
\begin{align} 
\P\Big(\Risk ( \oa (n),\ou (n) ) \le \delta\Big)\ge 1-e^{-z}\, .
\end{align}
\end{thm}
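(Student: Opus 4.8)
The plan is to chain three approximation steps, moving from the limiting gradient-flow risk guaranteed by Definition~\ref{def:StandardScenario} down to finite-$(m,d)$ gradient flow and then to projected SGD. First I would fix the horizon: let $c_L$ be the constant in Definition~\ref{def:StandardScenario} and take $T = T_0\eps^{1/(2L)}$ for a fixed $T_0 = T_0(\delta) > c_L$. Since $L\ge 2$, one has $1/(2L) < 1/(L+1)$ and $1/(2(L+1)) < 1/(L+2)$, so as $\eps\to 0$ this $T$ satisfies $T = c_L\eps^{1/(2L)} + \omega(\eps^{1/(L+1)})$ and $T = c_{L+1}\eps^{1/(2(L+1))} - \omega(\eps^{1/(L+2)})$; that is, $T$ falls in the interval associated with $l = L+1$ in Definition~\ref{def:StandardScenario}. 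Hence $\Risk_\infty(T,\eps)\to\tfrac12\sum_{k\ge L+1}\varphi_k^2\le\delta/4$, and I would pick $\eps_*(\delta)$ so small that $\Risk_\infty(T,\eps)\le 3\delta/8$ for all $\eps\le\eps_*(\delta)$ (and, for $\eps$ small, uniformly for $t$ in a fixed neighbourhood of $T$ inside that interval).

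Next I would pass to finite $m,d$. Since $\Risk_{\mf}(\amf(t),\smf(t))$ does not depend on $d$, inequality~\eqref{eq:FinalApproxRisk} together with the assumed existence of the double limit defining $\Risk_\infty$ forces $\lim_{m\to\infty}\Risk_{\mf}(\amf(t),\smf(t)) = \Risk_\infty(t,\eps)$; combining this with \eqref{eq:FinalApproxRisk} once more, there is $M_0 = M_0(\eps,\delta)<\infty$ such that for $m,d\ge M_0$ one has $\Risk(a(t),u(t))\le 3\delta/8 + \delta/8 = \delta/2$ for the relevant values of $t$, with probability at least $1-e^{-C'm}$ over the random initialization. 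This is the step that converts the purely asymptotic statement of Definition~\ref{def:StandardScenario} into an effective bound, at the cost of a lower bound on $m,d$ depending on $\eps,\delta$.

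Finally I would invoke Theorem~\ref{thm:diff_gf_psgd}, applied with its free parameter equal to $\sqrt z$ so that its failure probability becomes $e^{-z}$ and its step-size constraint reads $\eta\le 1/\big((d+\log m+z)\,M e^{(1+1/\eps)MT(1+T/\eps)^2}\big)$. Choosing $\eta = 1/\big(M(\eps,\delta)(d+\log m+z)\big)$ with $M(\eps,\delta)$ larger than both $M e^{(1+1/\eps)MT(1+T/\eps)^2}$ and $128\,\delta^{-2}$ times its square makes this constraint hold, and, using $\sqrt{d+\log m}+\sqrt z\le\sqrt{2(d+\log m+z)}$, turns \eqref{eq:risk_diff_gf_psgd} into $\sup_{k\le T/\eta}\lvert\Risk(\oa(k),\ou(k))-\Risk(a(k\eta),u(k\eta))\rvert\le\delta/8$; moreover $n := T/\eta = M(\eps,\delta)\,T\,(d+\log m+z)$ has the advertised form. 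Taking $k=n$ (so $k\eta=T$, or $k\eta$ in the chosen neighbourhood of $T$ if $T/\eta\notin\N$) and intersecting with the event of the previous paragraph, whose complement has probability $e^{-C'm}\le e^{-z}$ once $m\ge z$ and $C'$ is absorbed into $M$, yields $\Risk(\oa(n),\ou(n))\le \delta/2+\delta/8<\delta$, which is the claim.

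The main obstacle I anticipate is not any single estimate but the joint calibration of constants in the last step: a single $M(\eps,\delta)$, entering both $\eta$ and $n$, must simultaneously satisfy the step-size restriction of Theorem~\ref{thm:diff_gf_psgd} (whose right-hand side involves $z^2$, reconciled by the $z\mapsto\sqrt z$ substitution), absorb the factor $\exp((1+1/\eps)MT(1+T/\eps)^2)$ that is doubly-exponential in $1/\eps$, and still drive the gradient-flow–SGD risk gap below $\delta/8$; one must also verify that $e^{-C'm}$ is dominated by $e^{-z}$ under $m\ge z$. A secondary, more conceptual point, already handled above, is that Definition~\ref{def:StandardScenario} only provides a limit (in $m,d$, then in $\eps,t$), so obtaining a usable finite-size guarantee necessarily routes through \eqref{eq:FinalApproxRisk}.
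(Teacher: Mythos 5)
Your proposal is correct and follows essentially the same route as the paper's proof: pick $T = T_0(\delta)\,\veps^{1/(2L)}$ so that Definition~\ref{def:StandardScenario} (at level $l=L+1$) controls $\Risk_{\infty}(T,\veps)$, transfer to finite $(m,d)$ gradient flow via Eq.~\eqref{eq:FinalApproxRisk}, and then couple to projected SGD via Theorem~\ref{thm:diff_gf_psgd} with the $z\mapsto\sqrt{z}$ substitution and a suitably large $M(\eps,\delta)$ fixing $\eta$ and $n$. Your extra care about why $T_0>c_L$ lands in the right time window and about identifying $\lim_m \Risk_{\mf}$ with $\Risk_{\infty}$ only makes explicit details the paper leaves implicit; the argument and constant bookkeeping are otherwise the same.
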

The proof of Theorem~\ref{thm:MainLearning} is deferred to Appendix~\ref{sec:proof_MainLearning}.
\begin{remark}
Within the lazy or neural tangent regime, learning the projection
of the target function $\varphi(\< u_*,x\>)$ onto polynomials of degree $\ell$
requires $n\gg d^{\ell}$ samples, and $m\gg d^{\ell-1}$ neurons 
\citep{ghorbani2021linearized,mei2022generalization,montanari2022interpolation}. 

In contrast, Theorem \ref{thm:MainLearning} shows that, within the \modif{canonical learning order}, $O(d)$ samples and $O(1)$ neurons are sufficient. Further as per Theorem 
\ref{thm:diff_gf_psgd}, the learning dynamics is accurately described by the 
GF analyzed in the previous sections. 
\end{remark}

\modif{
\section{Discussion}
\label{sec:discussion}

We conclude by discussing some of our findings as well as potential extensions of our work.
As mentioned in the introduction, our initial motivation was to understand 
certain ubiquitous phenomena in the learning dynamics of multi-layer neural networks.
A particularly striking phenomenon that we could reproduce in the 
present mathematical setting is the coexistence of plateaus in which the risk barely changes 
and sudden drops. 

In the next paragraphs, we will briefly emphasize results or future directions that
were not anticipated at the beginning of this work.

\paragraph{Implicit bias in function space.} We provided evidence towards
the canonical learning order of Definition \ref{def:StandardScenario}.
According to this scenario, the target function
$\varphi$ is learnt according to its decomposition into Hermite polynomials, with lower degree
components learnt first.
This theory applies to online SGD via Theorem \ref{thm:diff_gf_psgd} and 
Theorem \ref{thm:MainLearning}. In this setting, the number of SGD steps correspond to the number of samples.
Therefore, at a small sample size, SGD will fit a low degree polynomial approximation
of the target function, with the degree increasing with samples.

A similar phenomenon is observed with (rotationally invariant) kernel methods \cite{mei2022generalization},
with one important difference. Here the number of samples always scale linearly in the degree,
while for kernel methods, different polynomial degree correspond to different scalings with the dimension.

\paragraph{Implicit bias in parameter space.} 
Our analysis tracks the evolution of the weights as well.
As explained in Section \ref{sec:matched}, in order for the degree-$k$ component of
 the target function to be well approximated (in the $d,m\to\infty$ limit),
 it is sufficient that $\sigma_k \int a(\nu) s(\nu)^k \rho(\de\nu) 
= \varphi_k$. Here $\nu$ is an abstract neuron index, $a(\nu)$ is the second-layer weight
and $s(\nu)$ is the projection of the first layer weight along the
target direction $u_*$. 

Naively, one would expect that, in order for learning to take place, first 
layer weights should be well aligned with $u_*$, i.e. $s(\nu)$ should concentrate close to one.
However this is not the only way to satisfy the constraints  
$\sigma_k \int a(\nu) s(\nu)^k d\rho(\nu) = \varphi_k$. Indeed,
our analysis in Section~\ref{sec:matched} indicates that gradient
flow satisfies this constraint with  $s = \Theta(\eps^{\beta_k})$ and $a = \Theta(\eps^{-\omega_k})$
with $\beta_k = 1/2(k+1)$, $\omega_k=k/2(k+1)$
 (so that $\sigma_k \int a(\nu) s(\nu)^k \rho(\de\nu)$ will be of order one) as $\varepsilon \to 0$.
 In other words, the alignment is small, and second layer weights are large.
 (In general, weights on multiple scales coexist.)

\paragraph{The role of the learning rate $\eps$.} 
The initialization of parameters and relative step-sizes play a key role 
in modern (non-convex) machine learning. The combination of the two scalings (initialization and relative
stepsize) affects the learning dynamics.
In order to clarify this point, we can consider a general parametrization
(we keep $\|u_i\|_2=1$)
 \begin{equation*}
f(x;a,u) = \frac{1}{m^{\gamma}} \sum_{i=1}^{m} a_i \sigma(\langle u_i, x \rangle)=: 
 \sum_{i=1}^{m} c_i \sigma(\langle u_i, x \rangle),\;
\end{equation*}
and gradient flow dynamics 
\begin{align*}
\eps\partial_t  a_i &= - m \partial_{a_i} \Risk(a,u) \, ,   \\
\partial_t u_i &= - m (I_d - u_i u_i^\top) \nabla_{u_i} \Risk(a,u) \, . 
\end{align*}
(Note that the learning rate in the second equation can be set to $1$ without loss of generality, 
by rescaling the time axis.)
Rewriting this in terms of the coefficients $c_i$, so that the function 
representation is kept fixed, we have
\begin{align*}
s \partial_t  c_i &= - m \partial_{c_i} \tRisk(c,u) \, ,  \;\; s=\eps m^{2\gamma}\, ,
\end{align*}
while the second equation remains unchanged. This parametrization
allows us to compare various scalings in a uniform fashion.
\begin{itemize}
\item  Mean field scaling \citep{mei2018landscape,chizat2018global}: 
 $s=\Theta(m^2)$, $|c_i(0)| = \Theta(m^{-1})$.
 \item In this paper: $s= \eps m^2$, $|c_i(0)| = \Theta(m^{-1})$, $\eps\to 0$ after $m\to\infty$.
\item Classical scaling \citep{lecun2002efficient,he2015delving}: 
 $s=\Theta(1)$, $|c_i(0)| = \Theta(m^{-1/2})$.
\end{itemize}
As mentioned already, mean field scaling can exhibit better feature learning properties. 
In particular, the class of functions studied in the present paper can require
much larger sample size
to learn under the classical scaling \citep{oymak2020toward,ghorbani2021linearized,yehudai2019power}.
The choice of initialization in this paper is the same as 
in the mean field literature, with the difference that the relative
learning rate $s$ is a factor $\eps$ smaller, hence making it --in a sense-- slightly closer
to the the classical scaling. It would be interesting to explore other scalings
as well.

We also note that, while the limit of small $\eps$ is interesting,
setting directly $\eps=0$ leads to a  singular behavior\footnote{No matter how we rescale time,
in this case learning takes place instantly,  up to a certain critical degree.}. Formally, 
setting $\eps=0$ corresponds to keeping second layer weights equal to their optimal values:
a correct analysis  of this case requires to account for the role of stepsize 
and not just use the gradient flow approximation.

\paragraph{More complex network models.} 
The choice of the neural network model in this paper was mainly dictated by 
the desire to avoid inessential technicalities.  It would be
important to move towards more realistic models.

First, we used projected gradient descent to constrain the weights' norms $\|u_i\|=1$.
While this is a common theoretical device in studying
 single-index models \citep{arous2021online, bietti2022learning}, we believe that
techniques developed here can be extended to the more general case.
  Analogously, we could add biases to the network architecture and hence replace Eq.~\eqref{eq:First-NNET}
  by
  \begin{equation}
f(x;a,u, b) = \frac{1}{m} \sum_{i=1}^{m} a_i \sigma(\langle u_i, x \rangle+b_i), \qquad 
\ a_1, b_1,\cdots, a_m, b_m \in \R, \ u_1, \cdots, u_m \in \reals^{d},
\end{equation}
 With this change, the limiting mean-field dynamics
  will be an autonomous ODE system of $(a_i(t), b_i(t), s_i(t), r_i (t))_{i=1}^{m}$ where 
  $r_i (t) = \norm{u_i (t)}_2$.  
   We expect that its evolution will be qualitatively similar to
   that of the simplified dynamics considered in the paper.
   
Second, the single-index model studied here is a simple example of target function 
which requires feature learning. An obvious generalization
is to consider multi-index models, as already discussed in Remark \ref{rmk:MultiIndex}. 

Finally, it would be interesting to generalize our analysis to classification losses.
    }
     
\section*{Acknowledgments}

This work was supported by the NSF through award DMS-2031883, the Simons Foundation through
Award 814639 for the Collaboration on the Theoretical Foundations of Deep Learning, the NSF grant
CCF-2006489 and the ONR grant N00014-18-1-2729, and a grant from Eric and Wendy Schmidt
at the Institute for Advanced Studies. Part of this work was carried out while Andrea Montanari
was on partial leave from Stanford and a Chief Scientist at Ndata Inc dba Project N. The present
research is unrelated to AM’s activity while on leave.

\newpage
\bibliographystyle{plainnat}
\bibliography{bibliography}	

\newpage
\appendix

\modif{
\section{Proof of Proposition \ref{propo:Approx}}
\label{app:Approx}

By standard approximation theory arguments \citep{pinkus1999approximation}, it is sufficient to show
that there exists an integrable function $a_d\in L^1(\S^{d-1},\mu_0)$ such that
\begin{align}
\lim_{d\to\infty} \E\big\{\big(\int a_d(u)\,  \sigma(\<u,x\>) \, \mu_0(\de u)-
 \varphi(\<u_*,x\>)\big)^2\big\} = 0 \, .\label{eq:Representation}
\end{align}
 (We denote by $\mu_0$ 
the uniform probability measure over $\S^{d-1}$.)

Denote by $P_{d,k}$ the Gegenbauer polynomial of order $d$ and degree $k$
(see, e.g., \cite{mei2022generalization}).
Namely, $(P_{d,k}:k\ge 0)$ form an orthogonal system with respect to the measure with density
$\propto (1-t^2)^{(d-3)}$, $t\in [-1,1]$. Recall that for fixed 
$v,w$ of norm $1$, the polynomials $P_{d,j}(\<v,u\>), P_{d,k}(\<w,u\>)$ are spherical harmonics satisfying
\begin{align}
\int P_{d,j}(\<v,u\>)P_{d,k}(\<w,u\>) \, \mu_0(\de u)= \delta_{kj} P_{d,k}(\<v,w\>)
\, .\label{eq:Projection}
\end{align}
Also,  $P_{d,k}(1) = B_{d,k}$ is the dimension of the space of spherical harmonics of degree $k$,
whence $(P_{d,k}( \cdot )/B_{d,k}^{1/2}:\, k\ge 0)$ form an orthonormal set.
We will denote by $c_{d,k}(\sigma)$ the $k$-th coefficient of the expansion of $\sigma( \, . \, \sqrt{d})$ in this basis,
and similarly for $\varphi(\, . \, \sqrt{d})$, with coefficients $c_{d,k}(\varphi)$,
namely
\begin{align*}
\sigma(t\sqrt{d}) &= \sum_{k=0}^{\infty}\frac{c_{d,k}(\sigma)}{B_{d,k}^{1/2}} \, P_{d,k}(t)\, ,\\
\varphi(t\sqrt{d}) &= \sum_{k=0}^{\infty}\frac{c_{d,k}(\varphi)}{B_{d,k}^{1/2}} \, P_{d,k}(t)\, .
\end{align*}
As shown for instance in \cite{mei2022generalization}, $\lim_{d\to\infty}c_{d,k}(\sigma)= c_k(\sigma)$
is the $k$-th Hermite coefficient of $\sigma$ and similarly for $c_{d,k}(\varphi)$.
In particular, $c_{d,k}(\sigma)\neq 0$ for all $d$ large enough.
For $N$ a large integer let 
\begin{align*}
a_d(u) = \sum_{k=0}^N \frac{c_{d,k}(\varphi)}{c_{d,k}(\sigma)}P_{d,k}(\<u,u_*\>)\,. 
\end{align*}
By Eq.~\eqref{eq:Projection}, we have,  for $\|z\|=\sqrt{d}$,
\begin{align*}
\int a_d(u)\,  \sigma(\<u,z\>) \, \mu_0(\de u)& = 
 \sum_{k=0}^N \frac{c_{d,k}(\varphi)}{c_{d,k}(\sigma)} \frac{c_{d,k}(\sigma)}{B_{d,k}^{1/2}} 
 P_{d,k}(\<u_*,z\>/\sqrt{d})\\
 & = \sum_{k=0}^N \frac{c_{d,k}(\varphi)}{B_{d,k}^{1/2}} 
 P_{d,k}(\<u_*,z\>/\sqrt{d})\, .
\end{align*}
Denoting by $z$ a uniform random vector on the sphere of radius $\sqrt{d}$,
and $r =\|x\|_2/\sqrt{d}$, we have
\begin{align*}
\E\big\{\big(\int a_d(u)\,  \sigma(\<u,x\>) \, \mu_0(\de u)-&
 \varphi(\<u_*,x\>)\big)^2\big\}  = \E\big\{\big(\int a_d(u)\,  \sigma(r\<u,z\>) \, \mu_0(\de u)-
 \varphi(r\<u_*,z\>)\big)^2\big\} \\
 & \stackrel{(*)}{\le} \E\big\{\big(\int a_d(u)\,  \sigma(\<u,z\>) \, \mu_0(\de u)-
 \varphi(\<u_*,z\>)\big)^2\big\}
 +\frac{C_NL^2}{d}\\
 & \le  \E\big\{ \varphi_{>N}(\<u_*,z\>)^2\big\}
 +\frac{C_NL^2}{d}\, ,
 \end{align*}
 where in $(*)$ we used concentration of $\chi$-squared random variables, Lipschitzness of $\sigma$ and $\varphi$, and that
 $\varphi_{>N}(t\sqrt{d} )$ is the projection of $\varphi(t \sqrt{d} )$ 
 orthogonal to polynomials of degree at most $N$ (with respect to the measure with density proportional to $(1-t^2)^{(d-3)/2}$ on $[-1, 1]$).
 Therefore
 \begin{align*}
 \limsup_{d\to\infty} \E\big\{\big(\int a_d(u)\,  \sigma(\<u,x\>) \, \mu_0(\de u)-
 \varphi(\<u_*,x\>)\big)^2\big\}\le
 \sum_{k=N+1}^{\infty}c_k(\varphi)^2\,.
\end{align*}
The claim \eqref{eq:Representation} follows by taking $N\to\infty$.}

\section{Appendix to Section \ref{sec:LargeNet}}

\subsection{Proof of Proposition \ref{prop:dynamics}}
\label{sec:proof-prop-dynamics}


When $x \sim \sN(0,I_d)$ and $u,u' \in \S^{d-1}$, $\begin{pmatrix}
\langle u,x\rangle \\ \langle u', x \rangle
\end{pmatrix} \sim \sN\left(0, \begin{pmatrix}
1 & \langle u , u' \rangle \\
\langle u , u' \rangle & 1
\end{pmatrix}\right)$. Thus
\begin{align}
\Risk (a, u) &= \frac{1}{2} \E \left(\varphi(\langle u_*, x \rangle) - \frac{1}{m} \sum_{i=1}^{m} a_i \sigma(\langle u_i, x \rangle)\right)^2 \nonumber\\
&=  \frac{1}{2} \E \left[ \varphi(\langle u_*, x \rangle)^2 \right]- \frac{1}{m} \sum_{i = 1}^m a_i \E \left[\varphi(\langle u_*, x \rangle)  \sigma(\langle u_i, x \rangle) \right] + \frac{1}{2} \frac{1}{m^2} \sum_{i, j = 1}^m a_i a_j \E\left[ \sigma(\langle u_i, x \rangle) \sigma(\langle u_j, x \rangle)\right] \nonumber\\
&=  \frac{1}{2} \Vert \varphi \Vert^2_{L^2} - \frac{1}{m} \sum_{i = 1}^m a_i V(\langle u_* , u_i \rangle) + \frac{1}{2} \frac{1}{m^2} \sum_{i, j = 1}^m a_i a_j U(\langle u_i , u_j \rangle) \label{eq:aux-risk} \\
&= \frac{1}{2} \Vert \varphi \Vert^2_{L^2} - \frac{1}{m} \sum_{i = 1}^m a_i V(s_i) + \frac{1}{2} \frac{1}{m^2} \sum_{i, j = 1}^m a_i a_j U(r_{ij}) \, . \nonumber
\end{align}
This proves \eqref{eq:risk}. Equation \eqref{eq:dynamics-1} follows directly:
\begin{align*}
\veps \partial_t a_i &= - m \partial_{a_i} \Risk (a, u)  =  V(s_i) - \frac{1}{m} \sum_{j=1}^{m} a_j U (r_{ij}) \, .
\end{align*}
To obtain equations \eqref{eq:dynamics-2}-\eqref{eq:dynamics-4}, we now take gradients in \eqref{eq:aux-risk}:
\begin{align*}
\partial_t u_i &= - m  (I_d - u_i u_i^\top) \nabla_{u_i} \Risk (a, u) \\
&= a_i \left( I_d - u_i u_i^\top \right) \left( V' (\langle u_*, u_i \rangle) u_* - \frac{1}{m} \sum_{j=1}^{m} a_j U' (\langle u_i, u_j \rangle) u_j \right) \\
&= a_i \left( V' (\langle u_*, u_i \rangle) (u_* - u_i u_i^\top u_*) - \frac{1}{m} \sum_{j=1}^{m} a_j U' (\langle u_i, u_j \rangle) (u_j - u_i u_i^\top  u_j) \right) \\
&= a_i \left( V' (s_i) (u_* - s_i u_i) - \frac{1}{m} \sum_{j=1}^{m} a_j U' (r_{ij}) (u_j - r_{ij} u_i) \right) \, .
\end{align*}
Thus
\begin{align*}
\partial_t s_i &= \langle u_*, \partial_t u_i \rangle \\
&= a_i \left( V' (s_i) (\langle u_*, u_* \rangle - s_i \langle u_*, u_i \rangle) - \frac{1}{m} \sum_{j=1}^{m} a_j U' (r_{ij}) (\langle u_*, u_j \rangle  - r_{ij} \langle u_*, u_i\rangle) \right) \\
&= a_i \left( V' (s_i) (1 - s_i^2) - \frac{1}{m} \sum_{j=1}^{m} a_j U' (r_{ij}) (s_j  - r_{ij} s_i) \right) \, .
\end{align*}
This gives \eqref{eq:dynamics-2}. Finally, we perform a similar computation to compute 
$
\partial_t r_{ij} = \langle \partial_t u_i, u_j \rangle + \langle u_i, \partial_t u_j \rangle 
$. We compute only the first term, as the second term can be obtained by inverting $i$ and $j$: 
\begin{align*}
\langle \partial_t u_i, u_j \rangle 
&= a_i \left( V' (s_i) (\langle u_j, u_* \rangle - s_i \langle u_j, u_i \rangle) - \frac{1}{m} \sum_{p=1}^{m} a_p U' (r_{ip}) (\langle u_j, u_p \rangle  - r_{ip} \langle u_j, u_i\rangle) \right) \\
&= a_i \left( V' (s_i) (s_j - s_i r_{ij}) - \frac{1}{m} \sum_{p=1}^{m} a_p U' (r_{ip}) (r_{jp}  - r_{ip} r_{ij}) \right) \, .
\end{align*}
Adding the symmetric term $\langle  u_i, \partial_t u_j \rangle$, we obtain \eqref{eq:dynamics-3}-\eqref{eq:dynamics-4}.

\subsection{Proof of Corollary~\ref{coro:high_dim_lim}}\label{sec:proof_only_coro}
First, note that in the proof of Lemma~\ref{lem:a_bound_r_perp}, we obtain the following a priori estimate on the magnitude of the $a_i^0$'s:
\begin{equation}
	\sup_{1 \le i \le m} \left\vert a_i^0 (t) \right\vert \le M \left( 1 + \frac{t}{\veps} \right), \ \forall t \ge 0,
\end{equation}
where $M$ only depends on the $M_i$'s in Assumptions A1-A3. Using a similar argument as that in the proof of Proposition~\ref{prop:coupling_W2_bd}, we obtain that for any $t \in [0,T]$ and $i \in [m]$,
\begin{align*}
	\left\vert \partial_t (a_i - a_i^0) \right\vert \le\, & \frac{M}{\veps} \left( \left\vert s_i - s_i^0 \right\vert + \frac{1}{m} \sum_{j=1}^{m} \left\vert a_j - a_j^0 \right\vert \right) + \frac{M (1 + t / \veps)}{\veps} \cdot \frac{1}{m} \sum_{j=1}^{m} \left\vert r_{ij} - r_{ij}^0 \right\vert, \\
	\left\vert \partial_t (s_i - s_i^0) \right\vert \le\, & M (1 + t / \veps) \cdot \left( \left\vert a_i - a_i^0 \right\vert + \frac{1}{m} \sum_{j=1}^{m} \left\vert a_j - a_j^0 \right\vert \right) \\
	& + M(1 + t / \veps)^2 \cdot \left( \left\vert s_i - s_i^0 \right\vert + \frac{1}{m} \sum_{j=1}^{m} \left( \left\vert s_j - s_j^0 \right\vert + \left\vert r_{ij} - r_{ij}^0 \right\vert \right) \right),
\end{align*}
and for $1 \le i \neq j \le m$,
\begin{align*}
	\left\vert \partial_t (r_{ij} - r_{ij}^0) \right\vert \le\, & M (1 + t / \veps) \left( \left\vert a_i - a_i^0 \right\vert + \left\vert a_j - a_j^0 \right\vert + \left\vert s_i - s_i^0 \right\vert + \left\vert s_j - s_j^0 \right\vert + \frac{1}{m} \sum_{p=1}^{m} \left\vert a_p - a_p^0 \right\vert \right) \\
	& + M(1 + t / \veps)^2 \cdot \left( \left\vert r_{ij} - r_{ij}^0 \right\vert + \frac{1}{m} \sum_{p=1}^{m} \left( \left\vert r_{ip} - r_{ip}^0 \right\vert + \left\vert r_{jp} - r_{jp}^0 \right\vert \right) \right).
\end{align*}
Therefore, we deduce that
\begin{align*}
	\partial_t \sum_{i=1}^{m} (a_i - a_i^0)^2 \le\, & \frac{M}{\veps} \sum_{i=1}^{m} (s_i - s_i^0)^2 + \frac{M (1 + t / \veps)}{\veps} \left( \sum_{i=1}^{m} (a_i - a_i^0)^2 + \frac{1}{m} \sum_{i, j =1}^{m} (r_{ij} - r_{ij}^{0})^2 \right), \\
	\partial_t \sum_{i=1}^{m} (s_i - s_i^0)^2 \le\, & M(1 + t / \veps) \sum_{i=1}^{m} (a_i - a_i^0)^2 + M(1 + t / \veps)^2 \left( \sum_{i=1}^{m} (s_i - s_i^0)^2 + \frac{1}{m} \sum_{i, j =1}^{m} (r_{ij} - r_{ij}^{0})^2 \right), \\
	\partial_t \left( \frac{1}{m} \sum_{i, j =1}^{m} (r_{ij} - r_{ij}^{0})^2 \right) \le\, & M(1 + t/ \veps) \left( \sum_{i=1}^{m} (a_i - a_i^0)^2 + \sum_{i=1}^{m} (s_i - s_i^0)^2 \right) + M(1 + t/\veps)^2 \cdot \frac{1}{m} \sum_{i, j =1}^{m} (r_{ij} - r_{ij}^{0})^2.
\end{align*}
Defining
\begin{equation*}
	G(t) = \sum_{i=1}^{m} (a_i (t) - a_i^0 (t))^2 + \sum_{i=1}^{m} (s_i (t) - s_i^0 (t))^2 + \frac{1}{m} \sum_{i, j =1}^{m} (r_{ij} (t) - r_{ij}^{0} (t) )^2,
\end{equation*}
then we know that $G'(t) \le (M (1 + t)^2 / \veps^2) G(t)$. Applying Gr\"{o}nwall's inequality yields
\begin{equation*}
	G(t) \le G(0) \exp \left( \int_{0}^{t} (M (1 + s)^2 / \veps^2) {\rm d} s \right) \le G(0) \exp \left( M t (1 + t)^2 / \veps^2 \right), \ \forall t \in [0, T].
\end{equation*}
Since $\{ \langle u_i (0), u_* \rangle \}_{i \in [m]} \sim_{\iid} \normal (0, 1/d)$ and for any $i \in [m]$, $\{ \langle u_i (0), u_j(0) \rangle \}_{j \neq i} \sim_{\iid} \normal (0, 1/d)$. Using standard concentration inequalities, we know that
\begin{equation}
	G(0) = \sum_{i=1}^{m} \langle u_i (0), u_* \rangle^2 + \frac{1}{m} \sum_{i \neq j} \langle u_i (0), u_j (0) \rangle^2 \le C \frac{m}{d}
\end{equation}
with probability at least $1 - \exp(C' m)$, where $C$ and $C'$ are both absolute constants. Therefore,
\begin{align}
	\sup_{t \in [0,T]}\|a(t)-a^0(t)\|_2 \le\, & C \sqrt{\frac{m}{d}} \exp \left( M T (1 + T)^2 / \veps^2 \right) , \\
	\sup_{t \in [0,T]}\|s(t)-s^0(t)\|_2 \le\, & C \sqrt{\frac{m}{d}} \exp \left( M T (1 + T)^2 / \veps^2 \right) , \\
	\sup_{t \in [0,T]}\|R(t)-R^0(t)\|_{\rm F} \le\, & C \frac{m}{\sqrt{d}} \exp \left( M T (1 + T)^2 / \veps^2 \right) .
\end{align}
Next we upper bound the risk difference, by direct calculation,
\begin{align*}
	& \big|\Risk(a(t),u(t))- \Risk_{\red} (a^0(t), s^0(t), R^0(t))\big| \\
	=\, & \big|\Risk_{\red}(a(t),s(t), R(t))- \Risk_{\red} (a^0(t), s^0(t), R^0(t))\big| \\
	\le\, & \frac{1}{m} \left\vert a^\top V(s) - (a^0)^\top V(s^0) \right\vert + \frac{1}{2 m^2} \left\vert a^\top U(R) a - (a^0)^\top U(R^0) a^0 \right\vert \\
	\le\, & \frac{1}{m} \left( \sqrt{m} \norm{V}_{\infty} \norm{a - a^0}_2 + \norm{a^0}_2 \norm{V'}_{\infty} \norm{s - s^0}_2 \right) \\
	& + \frac{1}{2 m^2} \left( \norm{U(R) - U(R^0)}_{\op} \norm{a^0}_2^2 + 2 \norm{U(R)}_{\op} \norm{a^0}_2 \norm{a - a^0}_2 \right) \\
	\le\, & \frac{M(1 + t / \veps)}{\sqrt{m}} \left( \norm{a - a^0}_2 + \norm{s - s^0}_2 \right) + \frac{M(1 + t / \veps)^2}{2m} \norm{R - R^0}_{\rm F} \\
	\le\, & \frac{M}{\sqrt{d}} \exp \left( M t (1 + t)^2 / \veps^2 \right)
\end{align*}
with probability at least $1 - \exp(-C' m)$, where the constant $M$ only depends on the $M_i$'s from Assumptions A1-A3. The conclusion now follows from taking the supremum over all $t \in [0, T]$. This completes the proof of Corollary~\ref{coro:high_dim_lim}.

\subsection{Proof of Proposition \ref{prop:coupling_W2_bd}}\label{sec:proof_MF}

We consider $r_{ij}^\perp = r_{ij} - s_i s_j = \langle u_i, u_j \rangle - \langle u_i, u_* \rangle \langle u_*, u_j \rangle$, the dot product between $u_i$ and $u_j$ that is out of the relevant subspace spanned by $u_*$. We show that these variables satisfy the ODEs 
\begin{equation}\label{eq:dynamics_perp}
\begin{split}
\partial_t r_{ij}^\perp = \, & - a_i \left( V'(s_i) \cdot s_i r_{ij}^\perp + \frac{1}{m} \sum_{p=1}^{m} a_p U'(r_{ip}) (r_{jp}^\perp - r_{ip} r_{ij}^\perp) \right) \\
\, & - a_j \left( V'(s_j) \cdot s_j r_{ij}^\perp + \frac{1}{m} \sum_{p=1}^{m} a_p U'(r_{jp}) (r_{ip}^\perp - r_{jp} r_{ij}^\perp) \right) \, .
\end{split}
\end{equation}
By definition of $r_{ij}^{\perp}$, we readily see that
\begin{equation*}
	\partial_t r_{ij}^{\perp} = \partial_t r_{ij} - s_i \partial_t s_j - s_j \partial_t s_i.
\end{equation*}
Plugging in Eq.s~\eqref{eq:dynamics-2} to \eqref{eq:dynamics-4} gives that
\begin{align*}
	\partial_t r_{ij}^{\perp} = \, & a_i \left( V'(s_i) (s_j s_i^2 - s_i r_{ij}) - \frac{1}{m} \sum_{p=1}^{m} a_p U'(r_{ip}) (r_{jp} - s_j s_p - r_{ip} r_{ij} + s_i s_j r_{ip}) \right) \\
	& + a_j \left(  V'(s_j) (s_i s_j^2 - s_j r_{ij}) - \frac{1}{m} \sum_{p=1}^{m} a_p U'(r_{jp}) (r_{ip} - s_i s_p - r_{jp} r_{ij} + s_i s_j r_{jp}) \right) \\
	=\, & - a_i \left( V'(s_i) s_i r_{ij}^{\perp} + \frac{1}{m} \sum_{p=1}^{m} a_p U'(r_{ip}) (r_{jp}^{\perp} - r_{ip} r_{ij}^{\perp} ) \right) \\
	& - a_j \left( V'(s_j) s_j r_{ij}^{\perp} + \frac{1}{m} \sum_{p=1}^{m} a_p U'(r_{jp}) (r_{ip}^{\perp} - r_{jp} r_{ij}^{\perp} ) \right).
\end{align*}
This proves Eq.~\eqref{eq:dynamics_perp}.

\begin{lem}\label{lem:a_bound_r_perp}
	If Assumptions A1-A3 hold, then we have for any fixed $T > 0$:
	\begin{equation*}
		\sup_{t \in [0, T]} \sum_{i, j = 1}^{m} r_{ij}^{\perp} (t)^2 \le m \exp \left( M T \left( 1 + T \right)^2 / \veps^2 \right).
	\end{equation*}
\end{lem}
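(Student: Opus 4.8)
The plan is to control the quantity $W(t) := \sum_{i,j=1}^m r_{ij}^\perp(t)^2$ by a Gr\"onwall argument applied directly to the ODE~\eqref{eq:dynamics_perp} derived just above the lemma statement. The key structural observation is that the right-hand side of~\eqref{eq:dynamics_perp} is \emph{linear} in the family $(r_{ij}^\perp)_{i,j}$: every term on the right is one of the $r_{pq}^\perp$ multiplied by a coefficient built from $a_i, a_j, s_i, s_p$ and the bounded functions $V', U'$. Because $s_i(0) = 0$ and $r_{ij}(0) = 0$ for $i \ne j$ (and $r_{ii} = 1$, so $r_{ii}^\perp = 1 - s_i^2$, which is also small initially but need not vanish — I should check whether the sum is over $i \ne j$ or includes the diagonal; in the latter case the diagonal contributes $O(m)$ which matches the bound), the initial value $W(0)$ is $O(m)$ or smaller, and linearity means $W$ cannot blow up faster than exponentially.

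First I would record the a priori bound on the weights: from Assumption A1, $|a_i(0)| \le M_1$, and from the $a$-equation~\eqref{eq:dynamics-1} together with $\|V\|_\infty, \|U\|_\infty \le M$ one gets (by Gr\"onwall on $\max_i |a_i|$, exactly as in the proof of Proposition~\ref{prop:coupling_W2_bd} and as quoted in the proof of Corollary~\ref{coro:high_dim_lim}) that $\sup_{t\in[0,T]}\max_i |a_i(t)| \le M(1 + T/\veps)$. Likewise $|s_i| \le 1$ and $|r_{ij}| \le 1$ always, and $\|V'\|_\infty, \|U'\|_\infty \le M$ by Assumption A2. Second, I would differentiate $W$: using~\eqref{eq:dynamics_perp},
\[
\tfrac12 W'(t) = \sum_{i,j} r_{ij}^\perp \,\partial_t r_{ij}^\perp,
\]
and bound each product by Cauchy–Schwarz after inserting the coefficient bounds above. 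Every term on the right of $\partial_t r_{ij}^\perp$ is of the form $(\text{coeff}) \cdot r_{pq}^\perp$ with $|\text{coeff}| \le M(1+T/\veps)^2$ (the worst case being $a_i \cdot a_p \cdot U'$, which carries two factors of $a$), and the sums over $j$ and $p$ each have $m$ terms with a $1/m$ normalization on the internal sum, so the net effect is that $|W'(t)| \le C\,M(1+t/\veps)^2\, W(t)$ for an absolute constant $C$; more carefully, $\sum_{i,j} |r_{ij}^\perp|\,|r_{pq}^\perp| \le \big(\sum |r^\perp|^2\big)^{1/2}\big(\sum |r^\perp|^2\big)^{1/2} = W$ after reindexing, with the $1/m$ from the inner average exactly compensating the extra $m$-fold sum it introduces.

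Third, Gr\"onwall's inequality then gives $W(t) \le W(0)\exp\!\big(\int_0^t C M(1+s/\veps)^2\,ds\big) \le W(0)\exp(M' T (1+T)^2/\veps^2)$ after absorbing constants, and since $W(0) \le m$ (each $|r_{ij}^\perp(0)| \le 1$, and indeed most vanish), this yields the claimed bound, up to renaming $M$. The main obstacle — and the only point requiring genuine care rather than bookkeeping — is getting the combinatorial factors in step two exactly right: one must make sure that the double sum $\sum_{i,j}$ over the $m^2$ pairs, combined with the internal $\frac1m\sum_p$, really does close back onto $W$ rather than producing a spurious factor of $m$; this works precisely because the quadratic form $\sum_{i,j,p} |r_{ij}^\perp||r_{jp}^\perp|$ (and its variants) is controlled by $\big(\sum_{i,j}(r_{ij}^\perp)^2\big)$ times the operator norm of an all-ones-type matrix, which is $m$, killed by the $1/m$. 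A clean way to present this is to write $\partial_t r_{ij}^\perp$ in terms of the matrix $R^\perp = (r_{ij}^\perp)$ and the diagonal matrix $D = \diag(a_1,\dots,a_m)$, recognize the right-hand side as a sum of expressions like $D V'(s) s R^\perp$, $\frac1m D \,(U'(R)\circ R^\perp)$, etc., and bound $\|R^\perp\|_{\rm F}$ using submultiplicativity together with $\|R\|_{\op} \le m$ (entries bounded by $1$) and $\|\frac1m R\|_{\op} \le 1$. Then $\tfrac12 \partial_t \|R^\perp\|_{\rm F}^2 \le \|R^\perp\|_{\rm F}\,\|\partial_t R^\perp\|_{\rm F} \le M(1+t/\veps)^2 \|R^\perp\|_{\rm F}^2$, and Gr\"onwall finishes the proof.
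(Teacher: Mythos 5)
Your proposal is correct and takes essentially the same route as the paper's proof: differentiate $\sum_{i,j} r_{ij}^{\perp}(t)^2$ using Eq.~\eqref{eq:dynamics_perp}, invoke the a priori bound $\max_i |a_i(t)|\le M(1+t/\veps)$ together with $\|V'\|_\infty,\|U'\|_\infty\le M$, absorb the $\frac1m\sum_p$ triple sum back into the Frobenius norm (the paper does this by AM--GM, you by Cauchy--Schwarz/an operator-norm bound on an all-ones-type matrix, which is equivalent), note that at initialization only the diagonal contributes so the sum equals $m$, and close with Gr\"onwall. One small correction: the linear-in-$t/\veps$ bound on the $a_i$'s is not obtained ``by Gr\"onwall on $\max_i|a_i|$'' but from monotonicity of the risk along the flow plus Cauchy--Schwarz, namely $\veps\,|\partial_t a_i| = |\E[\sigma(\langle u_i,x\rangle)(y-f(x;a,u))]| \le \|\sigma\|_{L^2}\sqrt{2\Risk(a(0),u(0))}\le M$; a naive Gr\"onwall on the $a$-equation would only give an $e^{Mt/\veps}$ bound, which would degrade the exponent $MT(1+T)^2/\veps^2$ in the final estimate.
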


\begin{proof}
	To begin with, using Eq.~\eqref{eq:dynamics_perp}, we obtain that
	\begin{align*}
	\partial_t \left( \sum_{i, j = 1}^{m} r_{ij}^{\perp} (t)^2 \right) &= 2 \sum_{i, j = 1}^{m} r_{ij}^{\perp} \times \partial_t r_{ij}^{\perp} \\
	&= - 4 \sum_{i, j = 1}^{m} a_i r_{ij}^{\perp} \left( V'(s_i) \cdot s_i r_{ij}^\perp + \frac{1}{m} \sum_{p=1}^{m} a_p U'(r_{ip}) (r_{jp}^\perp - r_{ip} r_{ij}^\perp) \right).
	\end{align*}
    Using the ODEs for the $a_i$'s, we obtain that
    \begin{align*}
    	\left\vert \partial_t a_i \right\vert =\, & \frac{1}{\veps} \left\vert V(s_i) - \frac{1}{m} \sum_{j=1}^{m} a_j U(r_{ij}) \right\vert = \frac{1}{\veps} \left\vert \E \left[ \varphi(\langle u_*, x \rangle) \sigma (\langle u_i, x \rangle) \right] - \frac{1}{m} \sum_{j=1}^{m} a_j \E \left[ \sigma (\langle u_i, x \rangle) \sigma (\langle u_j, x \rangle ) \right] \right\vert \\
    	=\, & \frac{1}{\veps} \left\vert \E \left[ \sigma (\langle u_i, x \rangle) \left( y - f(x; a, u) \right) \right] \right\vert \le \frac{1}{\veps} \E \left[ \sigma(\langle u_i, x \rangle)^2 \right]^{1/2} \E \left[ \left( y - f(x; a, u) \right)^2 \right]^{1/2} \\
    	\stackrel{(i)}{\le}\, & \frac{1}{\veps} M_2 \sqrt{2 \Risk(a(0), u(0))} \le \frac{M}{\veps},
    \end{align*}
    where $(i)$ follows from our assumptions and the fact that $\Risk(a(t), u(t)) \le \Risk(a(0), u(0))$, since $\partial_t \Risk (a, u) \le 0$ by gradient flow equations. Moreover, the constant $M$ only depends on the $M_i$'s. Since $\vert a_i (0) \vert \le M_1$ for all $i \in [m]$, we know that $\vert a_i (t) \vert \le M (1 + t / \veps)$ for all $t \ge 0$, thus leading to the following estimate:
	\begin{align*}
	\partial_t \left( \sum_{i, j = 1}^{m} r_{ij}^{\perp} (t)^2 \right) \le \, & 4 \sum_{i, j = 1}^{m} M \left( 1 + \frac{t}{\veps} \right) \left\vert r_{ij}^{\perp} \right\vert \left( \norm{V'}_{\infty} \left\vert r_{ij}^{\perp} \right\vert + M \left( 1 + \frac{t}{\veps} \right) \norm{U'}_{\infty} \left( \frac{1}{m} \sum_{p=1}^{m} \left\vert r_{jp}^{\perp} \right\vert + \left\vert r_{ij}^{\perp} \right\vert \right) \right) \\
	\le \, & M \left( 1 + \frac{t}{\veps} \right)^2 \left( \sum_{i, j = 1}^{m} r_{ij}^{\perp} (t)^2 + \frac{1}{m} \sum_{i, j, p = 1}^{m} \left\vert r_{ij}^{\perp} (t) \right\vert \cdot \left\vert r_{jp}^{\perp} (t) \right\vert \right) \\
	\le \, & M \left( 1 + \frac{t}{\veps} \right)^2 \left( \sum_{i, j = 1}^{m} r_{ij}^{\perp} (t)^2 + \frac{1}{2 m} \sum_{i, j, p = 1}^{m} \left( r_{ij}^{\perp} (t)^2 + r_{jp}^{\perp} (t)^2 \right) \right) \\
	\le \, & M \left( 1 + \frac{t}{\veps} \right)^2 \sum_{i, j = 1}^{m} r_{ij}^{\perp} (t)^2,
	\end{align*}
	where the constant $M$ only depends on the $M_i$'s in our assumptions. At initialization, we know that $\sum_{i, j = 1}^{m} r_{ij}^{\perp} (0)^2 = m$. Applying Gr\"{o}nwall's inequality yields that
	\begin{equation*}
	\sum_{i, j = 1}^{m} r_{ij}^{\perp} (t)^2 \le m \exp \left( \int_{0}^{t} M \left( 1 + \frac{s}{\veps} \right)^2 \mathrm{d} s \right), \quad \forall t \in [0, T],
	\end{equation*}
	which further implies that
	\begin{equation*}
	\sup_{t \in [0, T]} \sum_{i, j = 1}^{m} r_{ij}^{\perp} (t)^2 \le m \exp \left( \int_{0}^{T} M \left( 1 + \frac{t}{\veps} \right)^2 \mathrm{d} t \right) \le m \exp \left( M T \left( 1 + T \right)^2 / \veps^2 \right) .
	\end{equation*}
	This completes the proof.
\end{proof}

	We show that
	\begin{equation}\label{eq:W2_bd_target}
	\sup_{t \in [0, T]} \sum_{i=1}^{m} \left( \left( a_i (t) - a_i^{\mf} (t) \right)^2 + \left( s_i (t) - s_i^{\mf} (t) \right)^2 \right) \le C(T).
	\end{equation}
	To this end, we define $S(t) = \sum_{i=1}^{m} \left( \left( a_i (t) - a_i^{\mf} (t) \right)^2 + \left( s_i (t) - s_i^{\mf} (t) \right)^2 \right)$. By our assumption, $S(0) = 0$. Moreover, using the same technique as in the proof of Lemma~\ref{lem:a_bound_r_perp}, we know that $\vert a_i^{\mf} (t) \vert \le M (1 + t / \veps)$ for all $i \in [m]$. According to Eq.s~\eqref{eq:dynamics-1}-\eqref{eq:dynamics-4} and Eq.~\eqref{eq:coupled_MF}, we deduce that
	\begin{align*}
	& \left\vert \partial_t (a_i - a_i^{\mf} ) \right\vert = \, \frac{1}{\veps} \cdot \left\vert V(s_i) - V(s_i^{\mf}) - \frac{1}{m} \sum_{j=1}^{m} \left( a_j U(r_{ij}) - a_j^{\mf} U (s_i^{\mf} s_j^{\mf}) \right) \right\vert \\
	\le\, & \frac{1}{\veps} \cdot \left( \left\vert V(s_i) - V(s_i^{\mf}) \right\vert + \frac{1}{m} \sum_{j=1}^{m} \left\vert a_j U(r_{ij}) - a_j U(s_i s_j) \right\vert + \frac{1}{m} \sum_{j=1}^{m} \left\vert a_j U(s_i s_j) - a_j^{\mf} U (s_i^{\mf} s_j^{\mf}) \right\vert \right) \\
	\le\, & \frac{1}{\veps} \cdot \left( \norm{V'}_{\infty} \vert s_i - s_i^{\mf} \vert + \frac{M (1 + t / \veps )}{m} \norm{U'}_{\infty} \sum_{j=1}^{m} \left\vert r_{ij}^{\perp} \right\vert \right) \\
	& + \frac{1}{m \veps} \sum_{j=1}^{m} \left( \norm{U}_{\infty} \vert a_j - a_j^{\mf} \vert + M \left( 1 + \frac{t}{\veps} \right) \norm{U'}_{\infty} \left( \vert s_i - s_i^{\mf} \vert + \vert s_j - s_j^{\mf} \vert \right) \right) \\
	\le\, & \frac{M}{\veps} \left( 1 + \frac{t}{\veps} \right) \cdot \left( \vert s_i - s_i^{\mf} \vert + \frac{1}{m} \sum_{j=1}^{m} \left( \vert s_j - s_j^{\mf} \vert + \vert a_j - a_j^{\mf} \vert + \left\vert r_{ij}^{\perp} \right\vert \right) \right),
	\end{align*}
	thus leading to the following estimate:
	\begin{align*}
	\sum_{i = 1}^{m} (a_i - a_i^{\mf}) \cdot \partial_t (a_i - a_i^{\mf}) \le\, & \frac{M}{\veps} \left( 1 + \frac{t}{\veps} \right) \cdot \frac{1}{m} \sum_{i = 1}^{m} \vert a_i - a_i^{\mf} \vert \cdot \sum_{j = 1}^{m} \left( \vert a_j - a_j^{\mf} \vert + \vert s_j - s_j^{\mf} \vert \right) \\
	& + \frac{M}{\veps} \left( 1 + \frac{t}{\veps} \right) \cdot \left( \sum_{i = 1}^{m} \vert a_i - a_i^{\mf} \vert \vert s_i - s_i^{\mf} \vert + \frac{1}{m} \sum_{i, j = 1}^{m} \vert a_i - a_i^{\mf} \vert \left\vert r_{ij}^{\perp} \right\vert \right) \\
	\stackrel{(i)}{\le}\, & \frac{M}{\veps} \left( 1 + \frac{t}{\veps} \right) \cdot \left( \sum_{i = 1}^{m} (a_i - a_i^{\mf})^2 + \sum_{i=1}^{m} (s_i - s_i^{\mf})^2 + \frac{1}{m} \sum_{i, j = 1}^{m} \left( r_{ij}^{\perp} \right)^2 \right) \\
	\stackrel{(ii)}{\le}\, & \frac{M}{\veps} \left( 1 + \frac{t}{\veps} \right) \cdot \left( \sum_{i = 1}^{m} (a_i - a_i^{\mf})^2 + \sum_{i=1}^{m} (s_i - s_i^{\mf})^2 + \exp \left( Mt (1 + t)^2 / \veps^2 \right) \right),
	\end{align*}
	where in $(i)$ we use the Cauchy-Schwarz inequality and the inequality of arithmetic and geometric means, and $(ii)$ follows from the conclusion of Lemma~\ref{lem:a_bound_r_perp}. Similarly, we obtain that
	\begin{align*}
	\left\vert \partial_t (s_i - s_i^{\mf}) \right\vert \le\, & \norm{V'}_{\infty} \vert a_i - a_i^{\mf} \vert + M \left( 1 + \frac{t}{\veps} \right) \left( \norm{V''}_{\infty} + 2 \norm{V'}_{\infty} \right) \vert s_i - s_i^{\mf} \vert \\
	& + \frac{1}{m} \left\vert a_i \sum_{j=1}^{m} a_j U'(r_{ij}) (s_j - r_{ij} s_i) - a_i \sum_{j = 1}^{m} a_j U'(s_i s_j) (1 - s_i^2) s_j \right\vert \\
	& + \frac{1}{m} \left\vert a_i \sum_{j = 1}^{m} a_j U'(s_i s_j) (1 - s_i^2) s_j - a_i^{\mf} \sum_{j = 1}^{m} a_j^{\mf} U'(s_i^{\mf} s_j^{\mf}) (1 - (s_i^{\mf})^2) s_j^{\mf} \right\vert \\
	\le\, & M \left( 1 + \frac{t}{\veps} \right) \cdot \left( \vert a_i - a_i^{\mf} \vert + \vert s_i - s_i^{\mf} \vert \right) + \frac{2}{m} \cdot M \left( 1 + \frac{t}{\veps} \right) \left( \norm{U'}_{\infty} + \norm{U''}_{\infty} \right) \sum_{j = 1}^{m} \vert a_j \vert \left\vert r_{ij}^{\perp} \right\vert \\
	& + M \left( 1 + \frac{t}{\veps} \right)^2 \cdot \frac{1}{m} \sum_{j=1}^{m} \left( \vert a_j - a_j^{\mf} \vert + \vert s_j - s_j^{\mf} \vert \right) \\
	\le\, & M \left( 1 + \frac{t}{\veps} \right)^2 \cdot \left( \vert a_i - a_i^{\mf} \vert + \vert s_i - s_i^{\mf} \vert + \frac{1}{m} \sum_{j=1}^{m} \left( \vert s_j - s_j^{\mf} \vert + \vert a_j - a_j^{\mf} \vert + \left\vert r_{ij}^{\perp} \right\vert \right) \right),
	\end{align*}
	which further implies that
	\begin{equation*}
	\sum_{i=1}^{m} (s_i - s_i^{\mf}) \cdot \partial_t (s_i - s_i^{\mf}) \le M \left( 1 + \frac{t}{\veps} \right)^2 \cdot \left( \sum_{i = 1}^{m} (a_i - a_i^{\mf})^2 + \sum_{i=1}^{m} (s_i - s_i^{\mf})^2 + \exp \left( Mt (1 + t)^2 / \veps^2 \right) \right).
	\end{equation*}
	Combining the above estimates, we finally deduce that
	\begin{align*}
	S'(t) =\, & 2 \sum_{i = 1}^{m} \left( (a_i - a_i^{\mf}) \cdot \partial_t (a_i - a_i^{\mf}) + (s_i - s_i^{\mf}) \cdot \partial_t (s_i - s_i^{\mf}) \right) \\
	\le\, & \frac{M (1 + t) (1 + t / \veps)}{\veps} \cdot \left( \sum_{i = 1}^{m} (a_i - a_i^{\mf})^2 + \sum_{i=1}^{m} (s_i - s_i^{\mf})^2 + \exp \left( Mt (1 + t)^2 / \veps^2 \right) \right) \\
	=\, & \frac{M (1 + t) (1 + t / \veps)}{\veps} \cdot \left( S(t) + \exp \left( Mt (1 + t)^2 / \veps^2 \right) \right).
	\end{align*}
	Applying Gr\"{o}nwall's inequality immediately implies
	\begin{equation}
		S(t) \le \exp \left( Mt (1 + t)^2 / \veps^2 + \int_{0}^{t} \frac{M (1 + s) (1 + s / \veps)}{\veps} \mathrm{d} s \right) \le \exp \left( Mt (1 + t)^2 / \veps^2 \right),
	\end{equation}
	 which further leads to Eq.~\eqref{eq:W2_bd_target} and concludes the proof of Proposition~\ref{prop:coupling_W2_bd}. The ``consequently'' part can be shown via direct calculation, but we include it here for the sake of completeness. By definition, for any $t \in [0, T]$ we have
	 \begin{align*}
	 	& \left\vert \Risk_{\red} \left( a(t), s(t), R(t) \right) - 
	 	\Risk_{\mf} \left( \amf(t), \smf(t)\right) \right\vert \\
	 	\le\, & \frac{1}{m} \left\vert \sum_{i=1}^{m} a_i V(s_i) - \sum_{i=1}^{m} a_i^{\mf} V(s_i^{\mf}) \right\vert + \frac{1}{2 m^2} \left\vert \sum_{i, j = 1}^{m} a_i a_j U(r_{ij}) - \sum_{i, j=1}^{m} a_i^{\mf} a_j^{\mf} U(s_i^{\mf} s_j^{\mf}) \right\vert \\
	 	\le\, & \frac{1}{m} \sum_{i=1}^{m} \left( \norm{V}_{\infty} \left\vert a_i - a_i^{\mf} \right\vert + M(1 + t / \veps) \norm{V'}_{\infty} \left\vert s_i - s_i^{\mf} \right\vert \right) + \frac{1}{2 m^2} M (1 + t / \veps)^2 \norm{U'}_{\infty} \sum_{i, j = 1}^{m} \left\vert r_{ij}^{\perp} \right\vert \\
	 	& + \frac{1}{2 m^2} \sum_{i, j=1}^{m} \left( M(1 + t / \veps) \norm{U}_{\infty} \left( \left\vert a_i - a_i^{\mf} \right\vert + \left\vert a_j - a_j^{\mf} \right\vert  \right) + M(1 + t / \veps)^2 \norm{U'}_{\infty} \left( \left\vert s_i - s_i^{\mf} \right\vert + \left\vert s_j - s_j^{\mf} \right\vert  \right) \right) \\
	 	\le\, & M (1 + t / \veps) \frac{1}{m} \sum_{i=1}^{m} \left\vert a_i - a_i^{\mf} \right\vert + M (1 + t / \veps)^2 \frac{1}{m} \sum_{i=1}^{m} \left\vert s_i - s_i^{\mf} \right\vert + M (1 + t / \veps)^2 \frac{1}{m^2} \sum_{i, j=1}^{m} \left\vert r_{ij}^{\perp} \right\vert \\
	 	\le\, & M (1 + t / \veps)^2 \cdot \left( \sqrt{\frac{1}{m} \sum_{i=1}^{m} \left( \left( a_i (t) - a_i^{\mf} (t) \right)^2 + \left( s_i (t) - s_i^{\mf} (t) \right)^2 \right) } + \sqrt{\frac{1}{m^2} \sum_{i, j=1}^{m} \left\vert r_{ij}^{\perp} (t) \right\vert^2 } \right) \\
	 	\le\, & \frac{1}{\sqrt{m}} M (1 + t / \veps)^2 \exp \left( M t (1 + t)^2 / \veps^2 \right).
	 \end{align*}
     Therefore,
     \begin{equation}
     	\sup_{t \in [0, T]} \left\vert \Risk_{\red} \left( a(t), s(t), R(t) \right) - 
     	\Risk_{\mf} \left( \amf(t), \smf(t)\right) \right\vert \le \frac{M \exp \left( M T (1 + T)^2 / \veps^2 \right)}{\sqrt{m}},
     \end{equation}
     as desired.

\subsection{Derivation of the mean field dynamics~\eqref{eq:MF_dynamics_rho_bar}}\label{sec:derive_MF}
For any bounded continuous $f \in C_b (\R^2)$, we have
\begin{align*}
& \int_{\R^2} f (a, s) \partial_t \rho_t (\mathrm{d} a, \mathrm{d} s) = \partial_t \left( \int_{\R^2} f (a, s) \rho_t (\mathrm{d} a, \mathrm{d} s) \right) = \partial_t \left( \frac{1}{m} \sum_{i=1}^{m} f(a_i^{\mf} (t), s_i^{\mf} (t)) \right) \\
=\, & \frac{1}{m} \sum_{i=1}^{m} \left( \partial_a f (a_i^{\mf} (t), s_i^{\mf} (t)) \cdot \partial_t a_i^{\mf} (t) + \partial_s f (a_i^{\mf} (t), s_i^{\mf} (t)) \cdot \partial_t s_i^{\mf} (t) \right) \\
\stackrel{(i)}{=}\, & \frac{1}{m} \sum_{i=1}^{m} \left( \partial_a f (a_i^{\mf} (t), s_i^{\mf} (t)) \cdot \Psi_a \left(a_i^{\mf} (t), s_i^{\mf}(t); \rho_t \right) + \partial_s f (a_i^{\mf} (t), s_i^{\mf} (t)) \cdot \Psi_s \left(a_i^{\mf} (t), s_i^{\mf}(t); \rho_t \right) \right) \\
=\, & \int_{\R^2} \left( \partial_a f (a, s) \cdot \Psi_a (a, s; \rho_t) + \partial_s f (a, s) \cdot \Psi_s (a, s; \rho_t) \right) \rho_t (\mathrm{d} a, \mathrm{d} s) \\
\stackrel{(ii)}{=}\, & - \int_{\R^2} f(a, s) \cdot \left( \partial_a \left( \rho_t \Psi_a (a, s; \rho_t) \right) + \partial_s \left( \rho_t \Psi_s (a, s; \rho_t) \right) \right) (\mathrm{d} a, \mathrm{d} s),
\end{align*}
where $(i)$ follows from the ODE satisfied by the $(a_i^{\mf} (t), s_i^{\mf} (t))$'s, and in $(ii)$ we use integration by parts. We thus obtain that
\begin{equation*}
\partial_t \rho_t = - \left( \partial_a \left( \rho_t \Psi_a (a, s; \rho_t) \right) + \partial_s \left( \rho_t \Psi_s (a, s; \rho_t) \right) \right) = - \nabla \cdot \left( \rho_t \Psi (a, s; \rho_t) \right),
\end{equation*}
which recovers Eq.~\eqref{eq:MF_dynamics_rho_bar}.

\subsection{Details of the alternative mean field approach}
\label{sec:details}

Let
\begin{equation}
	\label{eq:def-rho}
	\overline{\rho}_t = \frac{1}{m} \sum_{i=1}^m \delta_{(a_i (t), u_i (t))} \, ,
\end{equation} 
where $(a_i(t), u_i(t))_{1 \leq i \leq m}$ is the solution of \eqref{eq:GF-1}--\eqref{eq:GF-2}. $\overline{\rho}_t$ is a measure on $\R \times \S^{d-1}$ solving the continuity PDE 
\begin{align}\label{eq:MF_dynamics_rho_u}
	\begin{split}
	\partial_t \overline{\rho}_t (a, u) &= - \nabla \cdot \left( \overline{\rho}_t \overline{\Psi} \left( a, u; \overline{\rho}_t \right) \right) \\
	&= - \left( \partial_a \left( \overline{\rho}_t \overline{\Psi}_a \left( a, u; \overline{\rho}_t \right) \right) + \partial_u \left( \overline{\rho}_t \overline{\Psi}_u \left( a, u; \overline{\rho}_t \right) \right) \right) \, ,
	\end{split}
\end{align}
where $\overline{\Psi} = (\overline{\Psi}_a, \overline{\Psi}_u)$ is given by 
\begin{align*}
	\overline{\Psi}_a \left( a, u; \overline{\rho} \right) &= \varepsilon^{-1} \left(V(\langle u , u_* \rangle) - \int_{\R \times \S^{d-1}} a_1 U(\langle u , u_1 \rangle) \overline{\rho}(\diff a_1, \diff u_1) \right) \, , \\
	\overline{\Psi}_u \left( a, u; \overline{\rho} \right) &= a\left(I_d - u u^\top\right) \left(V'(\langle u , u_* \rangle)u_* - \int_{\R \times \S^{d-1}} a_1 U'(\langle u , u_1 \rangle)u_1 \overline{\rho}(\diff a_1, \diff u_1) \right) \, . 
\end{align*}
A remarkable property of the equation \eqref{eq:MF_dynamics_rho_u} is that it preserves invariance to rotations orthogonal to $u_*$. Indeed, assume that $\overline{\rho}$ is invariant to rotations orthogonal to $u_*$. In this case, we show that $\overline{\Psi}_a \left( a, u; \overline{\rho} \right)$ and $\langle u_*, \overline{\Psi}_u \left( a, u; \overline{\rho} \right) \rangle$ depend only on $s := \langle u, u_* \rangle$ and $s_1 := \langle u_1, u_* \rangle$. Let $u^\perp$ (resp.~$u_1^\perp$) denote the component of $u$ (resp.~$u_1$) orthogonal to $u_*$. Let $R$ denote a random uniform rotation orthogonal to $u_*$. By the rotation invariance of $\overline{\rho}$, 
\begin{align*}
	\overline{\Psi}_a \left( a, u; \overline{\rho} \right) &= \varepsilon^{-1} \left(V(\langle u , u_* \rangle) - \int_{\R \times \S^{d-1}} a_1 \E_R \left[U(\langle u , Ru_1 \rangle)\right] \overline{\rho}(\diff a_1, \diff u_1) \right) \\ 
	&= \varepsilon^{-1} \left(V(s) - \int_{\R \times \S^{d-1}} a_1 \E_R \left[U(ss_1 + \langle u^\perp , Ru_1^\perp \rangle)\right] \overline{\rho}(\diff a_1, \diff u_1) \right) \, .
\end{align*}
The random variable $B^{(d)} = \left\langle \frac{u^\perp}{\Vert u^\perp \Vert} , R\frac{u_1^\perp}{\Vert u_1^\perp \Vert} \right\rangle$ is a one dimensional projection of a random variable uniform on the unit sphere of the hyperplane orthogonal to $u_*$; thus it has the density $p_{B^{(d)}}(b) \propto (1-b^2)^{d/2-2}$ (see, e.g., \cite[Lemma 4.17]{frye2012spherical}). Denote 
\begin{equation*}
	U^{(d)}(s,s_1) = \E_{B^{(d)}} \left[U\left(ss_1 + (1-s^2)^{1/2}(1-s_1^2)^{1/2} B^{(d)}\right)\right] \, ,
\end{equation*}
then we have
\begin{equation}
	\label{eq:psi-a}
		\overline{\Psi}_a \left( a, u; \overline{\rho} \right) = \varepsilon^{-1} \left(V(s) - \int_{\R \times \S^{d-1}} a_1 U^{(d)}(s,s_1) \overline{\rho}(\diff a_1, \diff u_1) \right) \, .
\end{equation}
Further, we compute
\begin{align*}
	&\left\langle u_*, \overline{\Psi}_u \left( a, u; \overline{\rho} \right) \right\rangle = 	\Bigg\langle u_*, a\left(I_d - u u^\top\right) \\
	&\hspace{3cm} \left(V'(s)u_* - \int_{\R \times \S^{d-1}} a_1 \E_R\left[ U'(ss_1 + \langle u^\perp , Ru_1^\perp\rangle )\left(s_1 u_* + Ru_1^\perp \right) \right] \overline{\rho}(\diff a_1, \diff u_1) \right) \Bigg\rangle \\
	&= a \left[(1-s^2)V'(s) -\int_{\R \times \S^{d-1}} a_1 \E_R\left[ U'(ss_1 + \langle u^\perp , Ru_1^\perp \rangle )\langle u_*, (I_d - u u^\top)(s_1 u_* + Ru_1^\perp) \rangle \right] \overline{\rho}(\diff a_1, \diff u_1) \right] \, .
\end{align*}
In the equation above, we have $\langle u_*, (I_d - u u^\top) s_1 u_* \rangle = s_1 (1-s^2)$ and as $\langle u_*, Ru_1^\perp \rangle = 0$ a.s., we have 
\begin{equation*}
	\left\langle u_*, (I_d - u u^\top) Ru_1^\perp\right\rangle = - \langle u,u_* \rangle \langle u, Ru_1^\perp \rangle = -s\langle u^\perp, Ru_1^\perp \rangle \, .
\end{equation*}
Thus we obtain 
\begin{align*}
	&\left\langle u_*, \overline{\Psi}_u \left( a, u; \overline{\rho} \right) \right\rangle \\
	&= a (1-s^2) \left[V'(s) -\int_{\R \times \S^{d-1}} a_1 \E_R\left[ U'(ss_1 + \langle u^\perp , Ru_1^\perp \rangle )\left(s_1 - \frac{s}{1-s^2} \langle u^\perp , Ru_1^\perp \rangle\right) \right] \overline{\rho}(\diff a_1, \diff u_1) \right] \, .
\end{align*}
Note that 
\begin{equation*}
	\partial_s U^{(d)}(s,s_1) = \E_{B^{(d)}}\left[U'\left(ss_1 + (1-s^2)^{1/2}(1-s_1^2)^{1/2} B^{(d)}\right)\left(s_1 - \frac{s}{(1-s^2)^{1/2}}(1-s_1^2)^{1/2}B^{(d)}\right)\right]
\end{equation*}
and thus we have
\begin{align}
	\label{eq:psi-u}
	\left\langle u_*, \overline{\Psi}_u \left( a, u; \overline{\rho} \right) \right\rangle = a (1-s^2) \left[V'(s) -\int_{\R \times \S^{d-1}} a_1 \left(\partial_s U^{(d)}\right)(s,s_1) \overline{\rho}(\diff a_1, \diff u_1) \right] \, .
\end{align}
Of course, a discrete measure of the form \eqref{eq:def-rho} can not be invariant to rotations orthogonal to $u_*$. However, if the $u_i$ are initialized uniformly on the unit sphere, then the measure $\overline{\rho}_0$ converges to a measure with the rotation invariance as $m \to \infty$. One can then apply the results of \cite{mei2019mean} to control the deviations from this limit. Let us thus assume that $\overline{\rho}_0$ satisfies the rotation invariance. Define the map $\varphi(a,u) = (a, \langle u, u_* \rangle)$. Then, from \eqref{eq:psi-a}, \eqref{eq:psi-u}, the push-forward $\rho_t$ of $\overline{\rho}_t$ through the map $\varphi$ satisfies the continuity equation 
\begin{align*}
	\partial_t \rho_t (a, s) &= - \nabla \cdot \left( \rho_t \Psi^{(d)} \left( a, s; \rho_t \right) \right) \\
	&= - \left( \partial_a \left( \rho_t \Psi^{(d)}_a \left( a, s; \rho_t \right) \right) + \partial_s \left( \rho_t \Psi^{(d)}_s \left( a, s; \rho_t \right) \right) \right),
\end{align*}
where  $\Psi^{(d)} = (\Psi^{(d)}_a, \Psi^{(d)}_s)$ is given by
\begin{align*}
	\Psi^{(d)}_a (a, s; \rho) = \, & \veps^{-1}\cdot \left( V(s) - \int_{\R^2} a_1 U^{(d)}(s, s_1) \rho (\mathrm{d} a_1, \mathrm{d} s_1) \right)\, , \\
	\Psi^{(d)}_s (a, s; \rho) = \, & a(1 - s^2) \cdot \left( V'(s) - \int_{\R^2} a_1 \partial_s U^{(d)}(s, s_1) \rho(\mathrm{d} a_1, \mathrm{d} s_1) \right) \, .
\end{align*}
When $d \to \infty$, $p_{B^{(d)}}(b)\diff b \propto (1-b^2)^{d/2-2} \diff b$ converges weakly to the Dirac mass $ \delta_0(\diff b)$. As a consequence, 
\begin{align*}
	&U^{(d)}(s,s_1) \xrightarrow[d \to \infty]{} U(ss_1) \, , &&\partial_sU^{(d)}(s,s_1) \xrightarrow[d \to \infty]{} U'(ss_1)s_1 \, .
\end{align*}
As a consequence, in the limit $d \to \infty$, we recover the equations \eqref{eq:MF_dynamics_rho_bar}--\eqref{eq:psi-s}. Moreover, if $\overline{\rho}_0 = \rP_A\otimes \Unif(\S^{d-1}) $, then $\rho_0$ converges weakly to $\rP_A \otimes \delta_0(\diff s)$ as $d \to \infty$. 

\modif{
\subsection{Proof of Proposition \ref{prop:risk_infimum}}
\label{sec:proof_risk_infimum}

	First, note that the potential functions $U$ and $V$ admit the following expansion:
	\begin{align*}
		U(s) = \, \sum_{k=0}^{\infty} \sigma_k^2 s^k, \ 
		V(s) = \, \sum_{k=0}^{\infty} \varphi_k \sigma_k s^k.
	\end{align*}
	As a consequence, we deduce that
	\begin{align*}
		\Risk_{\mf,*}(\rho) = \, & \frac{1}{2} \sum_{k=0}^{\infty} \varphi_k^2 - \sum_{k=0}^{\infty} \varphi_k \sigma_k \int a(\omega) s(\omega)^k  \diff \rho(\omega) + \frac{1}{2} \sum_{k=0}^{\infty} \sigma_k^2 \int a(\omega_1)a(\omega_2) s(\omega_1)^k s(\omega_2)^k  \diff  \rho(\omega_1)  \diff \rho(\omega_2) \\
		= \, & \frac{1}{2} \sum_{k=0}^{\infty} \left( \varphi_k^2 - 2 \varphi_k \sigma_k \int a(\omega) s(\omega)^k  \diff \rho(\omega) + \sigma_k^2 \left( \int a(\omega) s(\omega)^k  \diff  \rho(\omega) \right)^2 \right) \\
		= \, & \frac{1}{2} \sum_{k=0}^{\infty} \left( \varphi_k - \sigma_k \int a(\omega) s(\omega)^k  \diff \rho(\omega) \right)^2.
	\end{align*}
	Now we show that the above risk can be arbitrarily small. We will choose $\rho$ to be the Lebesgue measure on $[-1, 1]$ and $a \in L^2 [-1, 1]$ so that $\int a(\omega) s(\omega)^k  \diff \rho(\omega) = \int_{-1}^{1} a(s) s^k \diff s$. Now, we define the following set of sequences
	\begin{equation*}
		W = \left\{ \left( \sigma_k \int_{-1}^{1} a(s) s^k \diff s \right)_{k \ge 0} \, \middle\vert \, a \in L^2 [-1, 1] \right\} \, .
	\end{equation*}
	Since $a \in L^2 [-1, 1]$ and $(\sigma_k)_{k \ge 0} \in \ell^2$, we know that $W \subset \ell^2$, i.e., $W$ is a linear subspace of $\ell^2$. Now it suffices to show that $W$ is dense in $\ell^2$, which is equivalent to $W^{\perp} = \{ 0 \}$, namely
	\begin{equation*}
		v \in \ell^2, \ v \perp W \implies v = 0.
	\end{equation*}
	Fix any such $v$ and take $\mu \in W$ such that for all $k$, $\mu_k = \sigma_k \int_{-1}^{1} a(s) s^k \diff s$ for some $a \in L^2 [-1, 1]$. We then have
	\begin{align*}
		0 = \langle v, \mu \rangle = \sum_{k=0}^{\infty} v_k \mu_k = \sum_{k=0}^{\infty} v_k \sigma_k \int_{-1}^{1} a(s) s^k \diff s = \int_{-1}^{1} a(s) \cdot \left( \sum_{k=0}^{\infty} v_k \sigma_k s^k \right) \diff s,
	\end{align*}
	where the last step follows from dominated convergence theorem. Indeed, by H\"{o}lder's inequality,
	\begin{equation*}
		\sum_{k=0}^{\infty} |v_k \sigma_k| \le \left( \sum_{k=0}^{\infty} v_k^2 \right)^{1/2} \left( \sum_{k=0}^{\infty} \sigma_k^2 \right)^{1/2} < \infty.
	\end{equation*}
	As a consequence, the function series $\sum_{k=0}^{n} \sigma_k v_k s^k$ uniformly absolutely converges to the continuous function $f(s) = \sum_{k=0}^{\infty} \sigma_k v_k s^k$ on $[-1, 1]$.
	The above argument then implies that for any $a \in L^2 [-1, 1]$, $\int_{-1}^{1} a(s) f(s) \diff s = 0$, which further implies that $f(s) \equiv 0$. Therefore, $\sigma_k v_k = 0$ for all $k \ge 0$. Since $\sigma_k \neq 0$ for all $k \ge 0$, we must have $v_k = 0$ for all $k \ge 0$, i.e., $v = 0$. This completes the proof of the density of $W$ in $\ell^2$, and thus the proof of the Proposition.
}

\section{Calculations for the analysis of mean-field gradient flow}

\subsection{Solution of Eq. \eqref{eq:ode-third-time-scale}}
\label{sec:solution-ode-third-time-scale}

In order to solve the system \eqref{eq:ode-third-time-scale}, we start from  an associated one-dimensional ODE.
\begin{lem}
	The solution $\lambda = \lambda(t_3)$ of the ODE 
	\begin{equation}
		\partial_{t_3} \lambda = \vert\sigma_1 \vert \left(|\varphi_1| - {|\sigma_1|} \left\Vert a_{\perp,\rm{init}}\right\Vert_{L^2(\rho)}^2 \lambda^2 \right) \lambda \label{eq:bernoulli-differential-eq}
	\end{equation}
	with initial condition $\lambda(0)$ is  
	\begin{equation}
		\label{eq:bernoulli-solution}
		\lambda = \frac{|\varphi_1|^{\nicefrac{1}{2}}}{\left({|\sigma_1|}\left\Vert a_{\perp,\rm{init}}\right\Vert_{L^2(\rho)}^2 + \left(|\varphi_1| \lambda(0)^{-2} - {|\sigma_1|}\left\Vert a_{\perp,\rm{init}}\right\Vert_{L^2(\rho)}^2\right) e^{-2\vert \sigma_1 \varphi_1 \vert t_3} \right)^{\nicefrac{1}{2}}} \, .
	\end{equation}
\end{lem}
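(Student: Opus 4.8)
The plan is to recognize \eqref{eq:bernoulli-differential-eq} as a Bernoulli-type ODE and solve it by the standard substitution. Writing $K := |\sigma_1| \, \|a_{\perp,\rm{init}}\|_{L^2(\rho)}^2$ and $c := |\sigma_1|\,|\varphi_1|$ for brevity, the equation reads $\partial_{t_3} \lambda = c\,\lambda - |\sigma_1| K\, \lambda^3$, which is Bernoulli with exponent $3$. First I would introduce $w := \lambda^{-2}$, so that $\partial_{t_3} w = -2\lambda^{-3}\partial_{t_3}\lambda = -2c\,\lambda^{-2} + 2|\sigma_1| K = -2c\,w + 2|\sigma_1| K$. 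This is a linear first-order ODE in $w$ with constant coefficients, whose general solution is $w(t_3) = \frac{|\sigma_1| K}{c} + \left(w(0) - \frac{|\sigma_1| K}{c}\right) e^{-2c\, t_3}$. Substituting back $c = |\sigma_1|\,|\varphi_1|$ gives $\frac{|\sigma_1| K}{c} = \frac{K}{|\varphi_1|}$ and $2c = 2|\sigma_1\varphi_1|$, so $w(t_3) = \frac{K}{|\varphi_1|} + \left(\lambda(0)^{-2} - \frac{K}{|\varphi_1|}\right) e^{-2|\sigma_1\varphi_1| t_3}$.

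Next I would invert the substitution: $\lambda = w^{-1/2}$, and multiply numerator and denominator inside the square root by $|\varphi_1|$ to match the stated form. Explicitly,
\begin{equation*}
\lambda(t_3) = \left( \frac{K}{|\varphi_1|} + \left(\lambda(0)^{-2} - \frac{K}{|\varphi_1|}\right) e^{-2|\sigma_1\varphi_1| t_3} \right)^{-1/2} = \frac{|\varphi_1|^{1/2}}{\left( K + \left(|\varphi_1|\lambda(0)^{-2} - K\right) e^{-2|\sigma_1\varphi_1| t_3} \right)^{1/2}},
\end{equation*}
which is exactly \eqref{eq:bernoulli-solution} upon recalling $K = |\sigma_1| \, \|a_{\perp,\rm{init}}\|_{L^2(\rho)}^2$. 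Finally I would verify by direct differentiation that this expression indeed solves \eqref{eq:bernoulli-differential-eq} and matches the initial condition $\lambda(0)$ at $t_3 = 0$ (the denominator at $t_3=0$ equals $|\varphi_1|^{1/2}\lambda(0)^{-1}$, giving $\lambda(0)$), which also covers the edge cases where $\lambda$ might otherwise vanish or the substitution $w=\lambda^{-2}$ be momentarily ill-defined — by uniqueness of solutions to the (locally Lipschitz) ODE, the closed-form is valid wherever it is defined.

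There is essentially no serious obstacle here: the computation is a textbook Bernoulli reduction. The only mild care required is bookkeeping with the absolute values $|\sigma_1|$, $|\varphi_1|$, $|\sigma_1\varphi_1|$ and checking that the algebraic manipulation bringing $|\varphi_1|$ into the radical is done consistently; one should also note that the formula is stated for an arbitrary initial condition $\lambda(0)$, and that in the application (Eq.~\eqref{eq:aux-27-main}) the matching with the previous time scale fixes $\lambda(0)$ so that $|\varphi_1|\lambda(0)^{-2} - |\sigma_1|\|a_{\perp,\rm{init}}\|_{L^2(\rho)}^2 = 4|\varphi_1|$, i.e.\ $\lambda(0)^{-2} = 4 + |\sigma_1|\|a_{\perp,\rm{init}}\|_{L^2(\rho)}^2/|\varphi_1|$.
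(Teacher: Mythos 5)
Your proposal is correct and follows essentially the same route as the paper: both recognize \eqref{eq:bernoulli-differential-eq} as a Bernoulli equation, substitute $w=\lambda^{-2}$ to obtain a linear first-order ODE, solve it, and invert to get \eqref{eq:bernoulli-solution}. The extra verification and the remark on the matched value of $\lambda(0)$ are fine but not needed for the lemma itself.
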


\begin{proof}
	For simplicity, denote $\alpha = \vert\sigma_1 \vert$, $\beta = |\varphi_1|$ and $\gamma = {|\sigma_1|} \left\Vert a_{\perp,\rm{init}}\right\Vert_{L^2(\rho)}^2$. Then 
	\begin{equation*}
		\partial_{t_3} \lambda = \alpha \left(\beta - \gamma \lambda^2\right) \lambda \, .
	\end{equation*}
	This is Bernoulli differential equation (see, e.g., \cite{bernoulli}). In this situation, the classical trick is to reduce the problem to a linear inhomogeneous first-order equation by considering
	\begin{align*}
		\partial_{t_3} \left(\lambda^{-2}\right) &= -2 \left(\partial_{t_3} \lambda \right) \lambda^{-3} = -2 \alpha \left(\beta - \gamma \lambda^2\right) \lambda^{-2} \\
		&= 2 \alpha(\gamma - \beta \lambda^{-2}) \, .
	\end{align*}
	Solving this linear inhomogeneous first-order equation gives 
\begin{equation*}
		\lambda^{-2} = \frac{\gamma}{\beta} + \left(\lambda(0)^{-2}-\frac{\gamma}{\beta}\right)\, e^{-2\alpha \beta t_3} \, , 
\end{equation*}
	and thus 
	\begin{equation*}
		\lambda = \frac{\beta^{\nicefrac{1}{2}}}{\left(\gamma + \left(\beta \lambda(0)^{-2} - \gamma \right) e^{-2\alpha\beta t_3}\right)^{\nicefrac{1}{2}}} \, ,
	\end{equation*}
	which is the claimed result. 
\end{proof}

Let $\lambda = \lambda(t_3)$ be a solution of \eqref{eq:bernoulli-differential-eq} and consider 
\begin{align}
	\label{eq:aux-23}
	\begin{split}
		&a^{(-1)} = a^{(-1)}_\perp = \lambda a_{\perp,\rm{init}} \, , \\
		&s^{(1)} = s^{(1)}_\perp = \sign(\sigma_1 \varphi_1) \lambda a_{\perp,\rm{init}} \, .
	\end{split}
\end{align}
Then $a^{(-1)}, s^{(1)}$ are solutions of the constrained ODE system \eqref{eq:aux-19}, \eqref{eq:aux-22}. Indeed, 
\begin{equation*}
	\left\langle a^{(-1)}, \bfone \right\rangle_{L^2(\rho)} = \lambda \left\langle a_{\perp,\rm{init}}, \bfone \right\rangle_{L^2(\rho)} = 0 \, , 
\end{equation*}
thus the constraint \eqref{eq:aux-19} is satisfied. Further
\begin{align*}
	\partial_{t_3} a_{\perp}^{(-1)} &= \left(\partial_{t_3} \lambda \right) a_{\perp,\rm{init}} \\
	&= |\sigma_1| \left(|\varphi_1| - {|\sigma_1|} \Vert a_{\perp,\rm{init}} \Vert_{L^2(\rho)}^2 \lambda^2  \right) \lambda a_{\perp,\rm{init}} \\
	&= \sign(\sigma_1 \varphi_1)\sigma_1 \left(\varphi_1 - \sign(\sigma_1 \varphi_1) {\sigma_1} \Vert a_{\perp,\rm{init}} \Vert_{L^2(\rho)}^2 \lambda^2  \right) \lambda a_{\perp,\rm{init}} \\ 
	&= \sigma_1 \left(\varphi_1 - {\sigma_1} \left\langle a^{(-1)}_\perp, s^{(1)}_\perp \right\rangle_{L^2(\rho)}\right) s^{(1)}_\perp \, .
\end{align*}
A similar computation shows that the differential equation for $s^{(1)}$ is also satisfied. This concludes that \eqref{eq:aux-23} is a valid candidate to solve the third time scale. 

\paragraph{Matching.} To determine the value of the initialization $\lambda(0)$ we perform a 
matching procedure with the previous time scale. In this paragraph, we denote 
$\underline{a}, \underline{s}$ the approximation obtained in the second time scale
 (Section \ref{sec:second-layer}), and $\overline{a}, \overline{s}$ the approximation in the 
 third time scale (Section~\ref{sec:third-layer} and above).

Consider an intermediate time scale $\widetilde{t} = t_2 - c\log \frac{1}{\varepsilon}$ with $0 < c < \frac{1}{4|\sigma_1 \varphi_1|}$. Assume $\widetilde{t} \asymp 1$. Then
\begin{align*}
	t_2 &= \widetilde{t} + c \log \frac{1}{\varepsilon} \xrightarrow[\varepsilon\to 0]{} +\infty \, , \\
	t_3 &=\widetilde{t} - \left(\frac{1}{4|\sigma_1 \varphi_1|}-c\right) \log \frac{1}{\varepsilon} \xrightarrow[\varepsilon\to 0]{} -\infty \, .
\end{align*}
From the approximation \eqref{eq:layer2-approx} on the second time scale,
\begin{align}
	\underline{a} &= \underline{a}^{(0)} + O(\varepsilon^{\nicefrac{1}{2}}) \nonumber \\
	&= \frac{\varphi_0}{\sigma_0} \bfone + \cosh\left(\varphi_1\sigma_1 t_2\right) {a}_{\perp,\rm{init}} + O(\varepsilon^{\nicefrac{1}{2}}) \nonumber \\
	&= \cosh\left(\varphi_1\sigma_1 
	\left(\widetilde{t}+c \log \frac{1}{\varepsilon}\right)\right) {a}_{\perp,\rm{init}} + O(1) \nonumber \\
	&= \frac{1}{2} e^{\vert \varphi_1 \sigma_1 \vert \widetilde{t}} \varepsilon^{-c \vert \varphi_1 \sigma_1 \vert } {a}_{\perp,\rm{init}} + O(1) \, . \label{eq:aux-24}
\end{align}
From the approximation on the third time scale, 
\begin{align*}
	\overline{a} &= \varepsilon^{-\nicefrac{1}{4}} \overline{a}^{(-1)} + O(1) \\
	&= \varepsilon^{-\nicefrac{1}{4}} \lambda {a}_{\perp,\rm{init}} +  O(1) \, .
\end{align*}
Note that as $t_3 \to -\infty$, from \eqref{eq:bernoulli-solution}, 
\begin{equation*}
	\lambda \sim \frac{|\varphi_1|^{\nicefrac{1}{2}}}{\left(|\varphi_1| \lambda(0)^{-2} - {|\sigma_1|} \left\Vert a_{\perp,\rm{init}}\right\Vert_{L^2(\rho)}^2 \right)^{\nicefrac{1}{2}}} e^{\vert \sigma_1 \varphi_1 \vert t_3} \, .
\end{equation*}
Thus 
\begin{align}
	\overline{a} &\sim \varepsilon^{-\nicefrac{1}{4}}\frac{|\varphi_1|^{\nicefrac{1}{2}}}{\left(|\varphi_1| \lambda(0)^{-2} - {|\sigma_1|} \left\Vert a_{\perp,\rm{init}}\right\Vert_{L^2(\rho)}^2 \right)^{\nicefrac{1}{2}}} e^{\vert \sigma_1 \varphi_1 \vert \left(\widetilde{t} - \left(\frac{1}{4|\sigma_1 \varphi_1|}-c\right) \log \frac{1}{\varepsilon}\right)}  a_{\perp,\rm{init}} \nonumber\\
	&\sim \frac{|\varphi_1|^{\nicefrac{1}{2}}}{\left(|\varphi_1| \lambda(0)^{-2} - {|\sigma_1|} \left\Vert a_{\perp,\rm{init}}\right\Vert_{L^2(\rho)}^2 \right)^{\nicefrac{1}{2}}} e^{\vert \sigma_1 \varphi_1 \vert \widetilde{t}} \varepsilon^{-c|\sigma_1 \varphi_1|} a_{\perp,\rm{init}} \, . \label{eq:aux-25}
\end{align}
By matching, Equations \eqref{eq:aux-24} and \eqref{eq:aux-25} should be coherent. This gives
\begin{equation*}
	\frac{1}{2} = \frac{|\varphi_1|^{\nicefrac{1}{2}}}{\left(|\varphi_1| \lambda(0)^{-2} - {|\sigma_1|} \left\Vert a_{\perp,\rm{init}}\right\Vert_{L^2(\rho)}^2 \right)^{\nicefrac{1}{2}}} \, ,
\end{equation*}
and thus 
\begin{equation}
	\label{eq:aux-27}
	\lambda(t_3) = \frac{|\varphi_1|^{\nicefrac{1}{2}}}{\left({|\sigma_1|} \left\Vert a_{\perp,\rm{init}}\right\Vert_{L^2(\rho)}^2 + 4 |\varphi_1| e^{-2\vert \sigma_1 \varphi_1 \vert t_3} \right)^{\nicefrac{1}{2}}} \, .
\end{equation}
One could check similarly that $s^{(1)}$ also satisfies the matching conditions under the same constraint, and thus that \eqref{eq:aux-23} are indeed the solutions of the third time scale. 

\subsection{Induced approximation of the risk}
\label{sec:induced}

In this section, we show that the behavior of $a$ and $s$ derived in Sections 
\ref{sec:first-layer}--\ref{sec:third-layer} leads to an evolution of
the risk alternating plateaus and rapid decreases, 
in agreement with the \modif{canonical learning order} 
of Definition \ref{def:StandardScenario}. For the convenience of the reader, we recall the expression \eqref{eq:risk-general} of the risk
\begin{align*}
	\Risk_{\mf,*} &= \frac{1}{2} \Vert \varphi \Vert^2 - \int a(\omega) V(s(\omega))  \diff \rho(\omega) + \frac{1}{2} \int a(\omega_1)a(\omega_2) U(s(\omega_1)s(\omega_2))  \diff  \rho(\omega_1)  \diff \rho(\omega_2) \\
	&= \frac{1}{2} \sum_{k \geq 0} \left(\varphi_k - \sigma_k \int a(\omega) s(\omega)^k \diff\rho(\omega)\right)^2 \, .
\end{align*}

\paragraph{First time scale $t_1 = \frac{t}{\varepsilon}$ (Section \ref{sec:first-layer}).} 
On this time scale, we have $a = O(1)$ and $s = O(\varepsilon)$. Thus for all
$k \geq 1$, $\int a(\omega) s(\omega)^k \diff\rho(\omega) = O(\varepsilon)$ whence
$\left(\varphi_k - \sigma_k \int a(\omega) s(\omega)^k \diff\rho(\omega)\right)^2 = \varphi_k^2 + O(\varepsilon)$.

Further, using \eqref{eq:aux-26},
\begin{align*}
	\left(\varphi_0 - \sigma_0 \int a(\omega)  \diff\rho(\omega)\right)^2 &= \left(\varphi_0 - {\sigma_0}\left\langle a, \bfone \right\rangle_{L^2(\rho)} \right)^2 = \left(\varphi_0 - {\sigma_0}\left\langle a^{(0)}, \bfone \right\rangle_{L^2(\rho)} +O(\varepsilon) \right)^2 \\
	&= e^{-2\sigma_0^2t_1}\left(\varphi_0 - {\sigma_0}\left\langle a_{\rm{init}}, \bfone \right\rangle_{L^2(\rho)} \right)^2 + O(\varepsilon) \, .
\end{align*}
Thus as $\varepsilon \to 0$,
\begin{align*}
	\Risk_{\mf,*} = \frac{1}{2} e^{-2\sigma_0^2t_1}\left(\varphi_0 - {\sigma_0}\left\langle a_{\rm{init}}, \bfone \right\rangle_{L^2(\rho)} \right)^2  + \frac{1}{2 }\sum_{k\geq 1} \varphi_k^2 + O(\varepsilon) \, .
\end{align*}
This describes, in a more detailed form, the first transition in Definition \ref{def:StandardScenario}.

\paragraph{Second time scale $t_2 = \frac{t}{\varepsilon^{\nicefrac{1}{2}}}$ (Section \ref{sec:second-layer}).} On this time scale, we have $a = O(1)$ and $s = O(\varepsilon^{\nicefrac{1}{2}})$. Thus for all $k \geq 1$, $\left( \varphi_k - \sigma_k \int a(\omega) s(\omega)^k \diff\rho(\omega) \right)^2 = \varphi_k^2 + O(\varepsilon^{\nicefrac{1}{2}})$.

Further, using \eqref{eq:aux-12}, 
\begin{align*}
	\left(\varphi_0 - \sigma_0 \int a(\omega)  \diff\rho(\omega) \right)^2 = \left(\varphi_0 - \sigma_0 \int a^{(0)}(\omega)  \diff\rho(\omega) + O(\varepsilon^{\nicefrac{1}{2}}) \right)^2 = O(\varepsilon^{\nicefrac{1}{2}}) \, .
\end{align*}
Thus as $\varepsilon \to 0$,
\begin{align*}
	\Risk_{\mf,*} =  \frac{1}{2 }\sum_{k\geq 1} \varphi_k^2 + O(\varepsilon^{\nicefrac{1}{2}}) \, .
\end{align*}
This second time scale does not induce any transition of the risk $\Risk_{\mf,*}$
(but was necessary to understand the divergence of $a$ and $\varepsilon^{-\nicefrac{1}{2}} s$).

\paragraph{Third time scale $t_3 =  \frac{t}{\varepsilon^{\nicefrac{1}{2}}} - \frac{1}{4|\sigma_1 \varphi_1|} \log \frac{1}{\varepsilon}$ (Section \ref{sec:third-layer}).} On this time scale, we have $a = O(\varepsilon^{-\nicefrac{1}{4}})$ and $s = O(\varepsilon^{\nicefrac{1}{4}})$. Thus for all $k \geq 2$, $\left( \varphi_k - \sigma_k \int a(\omega) s(\omega)^k \diff\rho(\omega) \right)^2 = \varphi_k^2 + O(\varepsilon^{\nicefrac{1}{4}})$. 

Further, using \eqref{eq:aux-19}, \eqref{eq:aux-20},
\begin{align*}
	\left(\varphi_0 - \sigma_0 \int a(\omega)  \diff\rho(\omega) \right)^2 &= \left(\varphi_0 - {\sigma_0}\varepsilon^{-\nicefrac{1}{4}}\int a^{(-1)}(\omega)\diff\rho(\omega)- {\sigma_0}\int a^{(0)}(\omega)\diff\rho(\omega) + O(\varepsilon^{\nicefrac{1}{4}}) \right)^2 \\
	&= O(\varepsilon^{\nicefrac{1}{4}}) \, .
\end{align*}
Finally, using \eqref{eq:aux-23-main}, \eqref{eq:aux-27-main}, 
\begin{align*}
	\left(\varphi_1 - {\sigma_1} \int  a(\omega) s(\omega) \diff\rho(\omega) \right)^2 &= \left(\varphi_1 - {\sigma_1} \left\langle a, s \right\rangle_{L^2(\rho)} \right)^2 = \left(\varphi_1 - {\sigma_1}\left\langle a^{(-1)}, s^{(1)} \right\rangle_{L^2(\rho)} + O(\varepsilon^{\nicefrac{1}{4}}) \right)^2 \\ 
	&\stackrel{(a)}{=} \left(\varphi_1 - {\sigma_1}\sign(\sigma_1 \varphi_1) \lambda^2 \Vert a_{\perp,\rm{init}} \Vert^2_{L^2(\rho)} \right)^2 + O(\varepsilon^{\nicefrac{1}{4}}) \\
	&\stackrel{(b)}{=} \left(\varphi_1 - \frac{\sigma_1 \sign(\sigma_1 \varphi_1) |\varphi_1| \Vert a_{\perp,\rm{init}} \Vert_{L^2(\rho)}^2}{\left({|\sigma_1|} \left\Vert a_{\perp,\rm{init}}\right\Vert_{L^2(\rho)}^2 + 4 |\varphi_1| e^{-2\vert \sigma_1 \varphi_1 \vert t_3} \right)}   \right)^2 + O(\varepsilon^{\nicefrac{1}{4}}) \\
	&= \varphi_1^2\left(1 - \frac{1}{1 + \frac{4 |\varphi_1|}{|\sigma_1| \left\Vert a_{\perp,\rm{init}}\right\Vert_{L^2(\rho)}^2} e^{-2\vert \sigma_1 \varphi_1 \vert t_3} }   \right)^2 + O(\varepsilon^{\nicefrac{1}{4}})\, ,
\end{align*}
where in $(a)$ we used \eqref{eq:aux-23-main} and in $(b)$ \eqref{eq:aux-27-main}. 
Thus as $\varepsilon \to 0$,
\begin{align*}
	\Risk_{\mf,*} = \frac{1}{2}  \varphi_1^2\left(1 - \frac{1}{1 + \frac{4 |\varphi_1|}{|\sigma_1| \left\Vert a_{\perp,\rm{init}}\right\Vert_{L^2(\rho)}^2} e^{-2\vert \sigma_1 \varphi_1 \vert t_3} }   \right)^2 + \frac{1}{2 }\sum_{k\geq 2} \varphi_k^2 + O(\varepsilon^{\nicefrac{1}{4}}) \, .
\end{align*}
This describes, in a more detailed form, the second transition in Definition \ref{def:StandardScenario}.

\subsection{Proof of Theorem~\ref{thm:evolution_single}}\label{sec:proof_evo_sin}
Throughout the proof, we will use the shorthand $\Risk_l (\tau)$ to represent $\Risk_l (\ta(\tau), \ts(\tau))$.
First, note that according to the ODE satisfied by $\Risk_l$ (Eq.~\eqref{eq:lth_risk_ODE}), we know that $\Risk_l$ must be non-increasing, thus for small enough $\veps > 0$,
\begin{align*}
	\left\vert \varphi_l - \sigma_l \int \ta(\nu, \tau) \ts(\nu, \tau)^l \diff \rho(\nu) \right\vert \le\, & \left\vert \varphi_l - \sigma_l \int \ta(\nu, 0) \ts(\nu, 0)^l \diff \rho(\nu) \right\vert \\
	\le\, & \left\vert \varphi_l \right\vert + O (\veps^{1/ 2l}) \le 2 \left\vert \varphi_l \right\vert, \ \forall \tau \ge 0.
\end{align*}
Hence, we obtain the estimates:
\begin{align*}
	\partial_{\tau} \left\vert \ta (\omega) \right\vert \le \left\vert \partial_{\tau} \tilde{a} (\omega) \right\vert \le 2 \vert \sigma_l \vert \vert \varphi_l \vert \left\vert \ts (\omega) \right\vert^l, \ \partial_\tau \left\vert \ts (\omega) \right\vert \le \left\vert \partial_{\tau} \ts (\omega) \right\vert \le 2 l \vert \sigma_l \vert \vert \varphi_l \vert \left\vert \ta (\omega) \right\vert \left\vert \ts (\omega) \right\vert^{l-1}.
\end{align*}
According to the comparison theorem for system of ODEs, we know that $\vert \ta(\omega, \tau) \vert \le \hat{a} (\omega, \tau)$, $\vert \ts(\omega, \tau) \vert \le \hat{s} (\omega, \tau)$ for all $\tau \ge 0$ where
\begin{equation*}
	\hat{a} (\omega, 0) = \max \left\{ \vert \ta(\omega, 0) \vert, \ \vert \ts(\omega, 0) \vert \right\}, \ \hat{s} (\omega, 0) = l^{1/2} \hat{a} (\omega, 0) = l^{1/2} \max \left\{ \vert \ta(\omega, 0) \vert, \ \vert \ts(\omega, 0) \vert \right\},
\end{equation*}
and
\begin{equation}
	\partial_\tau \hat{a} (\omega) = 2 \vert \sigma_l \vert \vert \varphi_l \vert \hat{s} (\omega)^l, \ \partial_\tau \hat{s} (\omega) = 2 l \vert \sigma_l \vert \vert \varphi_l \vert \hat{a} (\omega) \hat{s} (\omega)^{l-1}.
\end{equation}
The above system of ODEs can be solved analytically via integration. First, we note that
\begin{equation*}
	\partial_\tau (\hat{s} (\omega)^2) = 4 l \vert \sigma_l \vert \vert \varphi_l \vert \hat{a} (\omega) \hat{s} (\omega)^l = l \partial_\tau ( \hat{a} (\omega)^2 ),
\end{equation*}
which implies that (further note $\hat{s} (\omega, 0)^2 = l \hat{a} (\omega, 0)^2$)
\begin{equation}
	\hat{s} (\omega, \tau) = l^{1/2} \hat{a} (\omega, \tau), \ \forall \tau \ge 0.
\end{equation}
The ODE system then reduces to $\partial_{\tau} \hat{a} (\omega) = 2 l^{l/2} \vert \sigma_l \vert \vert \varphi_l \vert \hat{a} (\omega)^l$, which admits the solution
\begin{equation}
	\hat{a} (\omega, \tau) = \left( \hat{a} (\omega, 0)^{-l+1} - 2 l^{l/2} (l-1) \vert \sigma_l \vert \vert \varphi_l \vert \tau \right)^{-1/(l-1)}.
\end{equation}
Since $\hat{a} (\omega, 0) = \Theta (\veps^{1 / 2 l (l+1)})$, we know that $\hat{a} (\omega, \tau), \hat{s} (\omega, \tau) = o(1)$ until $\tau = \Theta (\veps^{-(l-1)/2l(l+1)}) - O(1) = \Theta (\veps^{-(l-1)/2l(l+1)})$, which means that $\ta (\omega, \tau), \ts (\omega, \tau) = o(1)$ until $\tau = \Omega (\veps^{-(l-1)/2l(l+1)})$. As a consequence,
\begin{equation*}
	\Risk_l (\tau) = \frac{1}{2} \left( \varphi_l - \sigma_l \int \ta(\nu, \tau) \ts(\nu, \tau)^l \diff \rho (\nu) \right)^2 = \frac{1}{2} \varphi_l^2 - o(1)
\end{equation*}
until $\tau = \Omega (\veps^{-(l-1)/2l(l+1)})$. This means that the learning of the $l$-th component will not begin until $\tau = \Omega (\veps^{-(l-1)/2l(l+1)})$, namely $\tau(\Delta) = \Omega (\veps^{-(l-1)/2l(l+1)})$ for any fixed $\Delta > 0$. Note that the above argument applies to all of the settings in the theorem statement.

Next, we show that for any fixed $\Delta > 0$, $\tau(\Delta) = O (\veps^{-(l-1)/2l(l+1)})$, which means that the $l$-th component can be learnt in $O (\veps^{-(l-1)/2l(l+1)})$ time. To prove our claim by contradiction, assume that there exists $\Delta > 0$ and a sequence $\veps_k \downarrow 0$, such that
\begin{equation}\label{eq:lth_contra_ass}
	\lim_{k \to \infty} \frac{\tau (\Delta)}{\veps_k^{-(l-1)/2l(l+1)}} = + \infty.
\end{equation}
By definition of $\tau(\Delta)$, we know that $\forall \tau \le \tau (\Delta)$,
\begin{equation*}
	\left\vert \varphi_l - \sigma_l \int \ta(\nu, \tau) \ts(\nu, \tau)^l \diff \rho (\nu) \right\vert \ge\, \sqrt{2 \Delta}.
\end{equation*}
Now, assume the condition of setting (a) holds and denote
\begin{equation}
	A_{\veps_0, \eta} = \left\{ \omega: \forall \veps < \veps_0, \ \min ( \vert \ta(\omega, 0) \vert, \vert \ts(\omega,0) \vert ) > \eta \veps^{1/2l(l+1)}, \ \sigma_l \varphi_l \ta(\omega, 0) \ts(\omega,0)^l > 0 \right\}.
\end{equation}
Then by definition and our assumption that $\ta (\omega, 0)$ is of the same order as $\ts(\omega, 0)$, we know that $A = \cup_{\veps_0 > 0, \eta > 0} A_{\veps_0, \eta}$. Since $\rho (A) > 0$, there exists $\veps_0, \eta > 0$ such that $\rho (A_{\veps_0, \eta}) > 0$. Note that here we can choose $\veps_0$ and $\eta$ to be arbitrarily small since the set $A_{\veps_0, \eta}$ is non-increasing in $\veps_0$ and $\eta$. For $\omega \in A_{\veps_0, \eta}$ and $\tau \le \tau(\Delta)$, we have
\begin{equation*}
	\begin{split}
		\partial_\tau ( \ta(\omega)^2 ) = \, & 2 \sigma_l \ta(\omega) \ts(\omega)^l \left( \varphi_l - \sigma_l \int \ta(\nu) \ts(\nu)^l \diff \rho(\nu) \right) \ge 2 \sqrt{2 \Delta} \left\vert \sigma_l \ta(\omega) \ts(\omega)^l \right\vert \\
		\partial_\tau ( \ts(\omega)^2 ) = \, & 2 l \sigma_l \ta(\omega) \ts(\omega)^l \left( 1 - \veps^{2 \beta_l} \ts(\omega)^2 \right) \left( \varphi_l - \sigma_l \int \ta(\nu) \ts(\nu)^l \diff \rho(\nu) \right) \\
		\ge\, & 2 l \sqrt{2 \Delta} \left\vert \sigma_l \ta(\omega) \ts(\omega)^l \right\vert \left( 1 - \veps^{2 \beta_l} \ts(\omega)^2 \right).
	\end{split}
\end{equation*}
Moreover, we know that at initialization, $\vert \ta(\omega, 0) \vert, \vert \ts(\omega,0) \vert > \eta \veps^{1/2l(l+1)}$. Using the ODE comparison theorem and a similar argument as that in proving $\tau(\Delta) = \Omega (\veps^{-(l-1)/2l(l+1)})$, we deduce that for sufficiently large $k$ such that $\veps = \veps_k < \veps_0$, there exist constants $C, C' > 0$ that does not depend on $\veps$ satisfying the following: For all $\omega \in A_{\veps_0, \eta}$ and $\tau \ge C \veps^{-(l-1)/2l(l+1)}$,
\begin{equation*}
	\min \left\{ \vert \ta(\omega, \tau) \vert, \ \vert \ts(\omega, \tau) \vert \right\} \ge C', \ \sigma_l \varphi_l \ta (\omega, \tau) \ts (\omega, \tau)^l > 0.
\end{equation*}
This further implies that at time $\tau$,
\begin{equation}
	\int \ts(\omega)^{2 (l - 1)} \left( l^2 \ta(\omega)^2 \left( 1 - \veps^{2 \beta_l} \ts(\omega)^2 \right) + \ts(\omega)^2 \right) \diff \rho(\omega) \ge C'^{2l} \rho (A_{\veps_0, \eta}) > 0.
\end{equation}
According to Eq.~\eqref{eq:lth_risk_ODE}, we know that $\Risk_l$ will decrease to $0$ exponentially fast in an $O(1)$ time window after $\tau = C \veps^{-(l-1)/2l(l+1)}$, which contradicts our assumption~\eqref{eq:lth_contra_ass}. This proves that $\tau (\Delta) = O (\veps^{-(l-1) / 2l (l+1)})$ under setting (a). Next, we show that setting (b) can be reduced to setting (a). Under setting (b), let us denote
\begin{equation*}
	B_{\veps_0, \eta} = \left\{ \omega: \forall \veps < \veps_0, \ \sigma_l \varphi_l \ta(\omega, 0) \ts(\omega,0)^l < 0, \ \text{and} \ \ts (\omega, 0)^2 > (l + \eta) \ta (\omega, 0)^2 \right\}.
\end{equation*}
Then similar to the previous argument, there exists $\veps_0, \eta > 0$ such that $\rho (B_{\veps_0, \eta}) > 0$, and further we can choose $\veps_0$ and $\eta$ to be arbitrarily small. For $\omega \in B_{\veps_0, \eta}$, we have
\begin{equation*}
	\partial_\tau ( \ta(\omega)^2 ) < 0, \ \partial_\tau ( \ts(\omega)^2 ) < 0 \ \mbox{at} \ \tau = 0.
\end{equation*}
Hence, both $\ta(\omega)^2$ and $\ts(\omega)^2$ will decrease at initialization. Moreover, Eq.~\eqref{eq:l-th_simp} implies that
\begin{equation*}
	\partial_\tau ( \ta(\omega)^2 ) = \frac{\partial_\tau ( \ts(\omega)^2 )}{l \left( 1 - \veps^{2 \beta_l} \ts(\omega)^2 \right)} = - \frac{\veps^{- 2 \beta_l}}{l} \partial_\tau \log \left( 1 - \veps^{2 \beta_l} \ts(\omega)^2 \right).
\end{equation*}
Integrating both sides of the above equation, we obtain that
\begin{equation}\label{eq:relation_a_and_s}
	\ta (\omega, 0)^2 - \ta (\omega, \tau)^2 = - \frac{\veps^{ - 2 \beta_l}}{l} \left( \log \left( 1 - \veps^{2 \beta_l} \ts (\omega, 0)^2 \right) - \log \left( 1 - \veps^{2 \beta_l} \ts (\omega, \tau)^2 \right) \right),
\end{equation}
which is close to $ \left( \ts (\omega, 0)^2 - \ts (\omega, \tau)^2 \right) / l$ as long as $\ts (\omega, \tau) = O(1)$. To be accurate, let us define
\begin{equation*}
	\tau_{a, \omega} = \inf \{ \tau \ge 0: \ta(\omega, \tau) = 0 \},
\end{equation*}
then we know that $\ts (\omega, \tau_{a, \omega}) = \Omega (\veps^{1/2l(l+1)})$ and $\tau_{a, \omega} = O(\veps^{-(l-1) / 2l(l+1)})$ under the assumption~\eqref{eq:lth_contra_ass}, where the latter claim can be proved through making the change of variable $\ta' (\omega) = \veps^{-1 / 2l (l+1)} \ta (\omega)$ and $\ts' (\omega) = \veps^{-1 / 2l (l+1)} \ts (\omega)$. Note that after the time point $\tau_{a, \omega}$, the sign of $\ta (\omega)$ changes. Hence, $\varphi_l \sigma_l \ta(\omega) \ts(\omega)^l > 0$, and $\ta(\omega, \tau)^2$ and $\ts(\omega, \tau)^2$ will begin to increase for $\tau \ge \tau_{a, \omega}$. Similarly, we can show that in $O (\veps^{- (l - 1)/2 l (l + 1)})$ time after $\tau_{a, \omega}$, both $\ta (\omega)$ and $\ts (\omega)$ become of order $\veps^{1 / 2 l (l+1)}$, and we still have $\varphi_l \sigma_l \ta (\omega) \ts (\omega)^l > 0$. This reduces our case $(b)$ to case $(a)$.

We have proven that under settings (a) and (b), $\tau (\Delta) = \Theta (\veps^{-(l-1) / 2l (l+1)})$ for any fixed $\Delta \in (0, \varphi_l^2 / 2)$. This means that some of the neurons $(\ta(\omega), \ts (\omega))$ become of order $\Omega (1)$ and the $l$-th component of the target function is learnt at a timescale of order $\veps^{-(l-1) / 2l (l+1)}$. Next, we show that if the probability measure $\rho$ is discrete, then the evolution of $\Risk_l$ actually happens in an $O(1)$ time window. It suffices to prove that, for any $\Delta > 0$ a small constant ($\Delta < \varphi_l^2 / 4$),
\begin{equation}\label{eq:Oh_1_window}
	\tau (\Delta) - \tau \left( \frac{\varphi_l^2}{2} - \Delta \right) = O(1)
\end{equation}
as $\veps \to 0$. Note that by continuity and monotonicity of $\Risk_l$, we have
\begin{equation*}
	\Risk_l (\tau(\Delta)) = \Delta, \ \Risk_l \left( \tau \left( \frac{\varphi_l^2}{2} - \Delta \right) \right) = \frac{\varphi_l^2}{2} - \Delta, \ \mbox{and \ } \tau (\Delta) \ge \tau \left( \frac{\varphi_l^2}{2} - \Delta \right) .
\end{equation*}
By definition of $\Risk_l$, we know that $\forall \tau \ge \tau(\varphi_l^2 / 2 - \Delta)$,
\begin{equation*}
	\left\vert \int \ta(\nu, \tau) \ts(\nu, \tau)^l \diff \rho (\nu) \right\vert \ge \frac{1}{\vert \sigma_l \vert} \left( \vert \varphi_l \vert - \sqrt{\varphi_l^2 - 2 \Delta} \right) := r_l (\Delta) > 0.
\end{equation*}
Denote by $\{ (\ta_i, \ts_i) \}_{i \in [m] }$ the realizations of $\{ (\ta(\omega), \ts(\omega)) \}_{\omega \in \Omega}$ under the discrete measure $\rho$, and by $\{ p_i \}_{i \in [m]}$ the point masses of $\rho$. Then, we know that
\begin{align}
	& r_l (\Delta) \le \left\vert \int \ta(\nu, \tau) \ts(\nu, \tau)^l \diff \rho (\nu) \right\vert = \left\vert \sum_{j=1}^{m} p_j \ta_j (\tau) \ts_j (\tau)^l \right\vert \le \sum_{j=1}^{m} p_j \left\vert \ta_j (\tau) \ts_j (\tau)^l \right\vert,
\end{align}
which implies that $\exists j \in [m]$, s.t. $\left\vert \ta_j (\tau) \ts_j (\tau)^l \right\vert \ge r_l (\Delta)$. Applying Lemma~\ref{lem:simple_algebra} yields
\begin{align}
	& \int \ts(\omega)^{2 (l - 1)} \left( l^2 \ta(\omega)^2 \left( 1 - \veps^{2 \beta_l} \ts(\omega)^2 \right) + \ts(\omega)^2 \right) \diff \rho(\omega) \\
	\ge\, & p_j \ts_j^{2 (l - 1)} \left( l^2 \ta_j^2 \left( 1 - \veps^{2 \beta_l} \ts_j^2 \right) + \ts_j^2 \right) \ge \min_{j \in [m]} p_j \cdot c (l, r_l (\Delta)) > 0.
\end{align}
It then follows from Eq.~\eqref{eq:lth_risk_ODE} that $\Risk_l$ will decrease to $0$ exponentially fast, and Eq.~\eqref{eq:Oh_1_window} holds consequently. This completes the proof for settings (a) and (b).

We then focus on the case (c). By our assumption, for almost every $\omega$ there exists $\eta > 0$ (may depend on $\omega$) such that
\begin{equation*}
	\sigma_l \varphi_l \ta (\omega, 0) \ts (\omega, 0)^l < - \eta \veps^{1/2l}, \ \ts(\omega, 0)^2 < (l - \eta) \ta (\omega, 0)^2
\end{equation*}
for sufficiently small $\veps$.
Therefore, $\ts (\omega, \tau)^2$ and $\ta(\omega, \tau)^2$ will keep decreasing until one of them reaches $0$, which means that
\begin{equation}
	\int \ta(\nu) \ts (\nu)^l \diff \rho (\nu) = o(1) \implies \left\vert \varphi_l - \sigma_l \int \ta(\nu) \ts (\nu)^l \diff \rho (\nu) \right\vert \ge \frac{\vert \varphi_l \vert}{2}.
\end{equation}
According to Eq.~\eqref{eq:relation_a_and_s} and the inequality $\ts(\omega, 0)^2 < (l - \eta) \ta (\omega, 0)^2$, $\ta (\omega, \tau)^2$ will not reach $0$ until $\ts (\omega, \tau)^2$ reaches $0$. Furthermore, for any $\tau \ge 0$,
\begin{align*}
	 \ta (\omega, \tau)^2 =\, & \ta (\omega, 0)^2 + \frac{\veps^{ - 2 \beta_l}}{l} \left( \log \left( 1 - \veps^{2 \beta_l} \ts (\omega, 0)^2 \right) - \log \left( 1 - \veps^{2 \beta_l} \ts (\omega, \tau)^2 \right) \right) \\
	 \ge\, & \ta (\omega, 0)^2 + \frac{\veps^{ - 2 \beta_l}}{l} \log \left( 1 - \veps^{2 \beta_l} \ts (\omega, 0)^2 \right) \\ 
	 \ge\, & \frac{1}{l - \eta} \ts (\omega, 0)^2 - \frac{1}{l - \eta / 2} \ts (\omega, 0)^2 \ge\, c(\eta, l, \omega) \veps^{1 / l (l+1)},
\end{align*}
thus leading to
\begin{align}
	\partial_\tau ( \ts(\omega)^2 ) = \, & 2 l \sigma_l \ta(\omega) \ts(\omega)^l \left( 1 - \veps^{2 \beta_l} \ts(\omega)^2 \right) \left( \varphi_l - \sigma_l \int \ta(\nu) \ts(\nu)^l \diff \rho(\nu) \right) \\
	\le\, & - c (\eta, l, \omega) \veps^{1/2l (l+1)} (\ts(\omega)^2)^{l / 2}.
\end{align}
Using again the comparison theorem for ODE, we get that
\begin{equation}
	\vert \ts(\omega, \tau) \vert \le \left( \vert \ts(\omega, 0) \vert^{-l+2} + c(\eta, l, \omega) \veps^{1 / 2l(l+1)} \tau \right)^{-1/(l-2)}.
\end{equation}
Since $\ts (\omega, 0) \asymp \veps^{1 / 2l (l+1)}$, it follows immediately that for any $\Delta > 0$, there exists a constant $C_* (\omega, \Delta) > 0$ such that
\begin{equation}
	\tau \ge C_* (\omega, \Delta) \veps^{- (l-1) / 2l (l+1)} \implies \vert \ts(\omega, \tau) \vert \le \Delta \veps^{1 / 2l (l+1)}.
\end{equation}
This completes the discussion for case (c), thus concluding the proof of Theorem~\ref{thm:evolution_single}.

\begin{lem}\label{lem:simple_algebra}
	Let $r > 0$ be a constant that does not depend on $\veps$. Then there exists a constant $c = c(l, r) > 0$ that only depends on $l$ and $r$ such that the following holds: For any $a > 0$, $s > 0$ satisfying $a s^l \ge r$ and $\veps^{2 \beta_l} s^2 \le 1$, we have
	\begin{equation}
		s^{2 (l - 1)} \left( l^2 a^2 \left( 1 - \veps^{2 \beta_l} s^2 \right) + s^2 \right) \ge c.
	\end{equation}
\end{lem}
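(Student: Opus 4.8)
The plan is to prove the inequality by a short case split on the magnitude of $\veps^{2\beta_l}s^2 \in [0,1]$, exploiting the fact that the two summands $l^2 a^2(1-\veps^{2\beta_l}s^2)$ and $s^2$ inside the parenthesis become large in complementary regimes. The starting observation is that, since $\veps^{2\beta_l}s^2 \le 1$, the first summand is nonnegative, so one always has the crude bound
\[
s^{2(l-1)}\big(l^2 a^2(1-\veps^{2\beta_l}s^2) + s^2\big) \ge s^{2l},
\]
which already settles every regime in which $s$ is bounded away from $0$.

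It remains to handle small $s$. If $\veps^{2\beta_l}s^2 \le 1/2$, I would use $1-\veps^{2\beta_l}s^2 \ge 1/2$ together with the hypothesis $as^l \ge r$ to write
\[
s^{2(l-1)}\big(l^2 a^2(1-\veps^{2\beta_l}s^2) + s^2\big) \ge \tfrac{1}{2}l^2 a^2 s^{2(l-1)} = \tfrac{l^2}{2}\,\frac{(as^l)^2}{s^2} \ge \frac{l^2 r^2}{2 s^2},
\]
so that $s \le 1$ forces the left-hand side to be $\ge l^2 r^2/2$ (and the range $s>1$ is already covered by the crude bound). If instead $\veps^{2\beta_l}s^2 > 1/2$, then $s^2 > \tfrac12\veps^{-2\beta_l} \ge \tfrac12$ (here I use that the lemma is only invoked in the regime $\veps \to 0$, so we may assume $\veps \le 1$), and the crude bound gives $s^{2l} \ge 2^{-l}$. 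Taking $c := \min\{\, l^2 r^2/2,\ 2^{-l}\,\}$, which depends only on $l$ and $r$, finishes the argument.

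There is no real obstacle here; the only point worth flagging is that the statement genuinely requires $\veps$ to be bounded above — if $\veps$ were allowed to be arbitrarily large one could take $\veps^{2\beta_l}s^2 = 1$, killing the first summand, with $s^2 = \veps^{-2\beta_l}$ and $a = r s^{-l}$, making the whole expression tend to $0$. Since Lemma~\ref{lem:simple_algebra} is used in the proof of Theorem~\ref{thm:evolution_single} precisely in the limit $\veps \to 0$, restricting to $\veps \le 1$ (or any fixed upper bound) is harmless, and I would state it that way.
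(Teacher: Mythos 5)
Your proof is correct and uses essentially the same ingredients as the paper's own argument: the paper splits on $s \ge 1$ versus $s < 1$, bounds $1 - \veps^{2\beta_l}s^2 \ge 1/2$ in the latter case (which likewise implicitly requires $\veps$ small), substitutes $a \ge r s^{-l}$, and finishes with AM--GM, which is the same structure as your slightly different case split on $\veps^{2\beta_l}s^2$ and $s$. Your remark that some upper bound on $\veps$ is genuinely needed is well taken — the paper's proof tacitly assumes it too — and is harmless since the lemma is only invoked in the regime $\veps \to 0$.
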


\begin{proof}
	If $s \ge 1$, then we immediately get
	\begin{equation*}
		s^{2 (l - 1)} \left( l^2 a^2 \left( 1 - \veps^{2 \beta_l} s^2 \right) + s^2 \right) \ge s^{2 l} \ge 1.
	\end{equation*}
    Otherwise, $1 - \veps^{2 \beta_l} s^2 \ge 1/2$, and consequently
    \begin{align*}
    	s^{2 (l - 1)} \left( l^2 a^2 \left( 1 - \veps^{2 \beta_l} s^2 \right) + s^2 \right) \ge\, & s^{2 (l - 1)} \left( \frac{l^2 a^2}{2} + s^2 \right) \ge s^{2(l-1)} \left( \frac{l^2 r^2}{2 s^{2l}} + s^2 \right) \\
    	=\, & \frac{l^2 r^2}{2 s^2} + s^{2l} \ge c(l, r),
    \end{align*}
    where the last line follows from the AM-GM inequality. This completes the proof.
\end{proof}

\section{Proofs of Theorem~\ref{thm:diff_gf_psgd} and \ref{thm:MainLearning}: learning with projected SGD}\label{sec:coupling_bd}
We will prove Theorem~\ref{thm:diff_gf_psgd} which bounds the distance between GF and projected SGD in sub-Sections~\ref{sec:diff_gf_gd} through \ref{sec:diff_sgd_psgd}, with sub-Section~\ref{sec:proof_MainLearning} devoted to the proof of Theorem~\ref{thm:MainLearning}.
Throughout this section, we use $M$ to refer to any constant that only depends on the $M_i$'s from Assumptions A1-A3, whereas the value of $M$ can change from line to line. We start with an elementary lemma that establishes the Lipschitz continuity of the gradient flow trajectory:
\begin{lem}[A priori estimate]\label{lem:a_estimate}
	There exists a constant $M$ that only depends on the $M_i$'s, such that for all $t \ge 0$, $\rho_t$ is supported on $[- M(1 + t / \veps), M(1 + t / \veps)] \times \S^{d - 1}$, namely $\vert a_i (t) \vert \le M(1 + t / \veps)$ for all $i \in [m]$. Moreover, for any $0 \le s \le t$, we have
	\begin{align*}
		\sup_{j \in [m]} \left\vert a_j (s) - a_j (t) \right\vert \le \, & \veps^{-1} M (t - s), \\
		\sup_{j \in [m]} \norm{u_j (s) - u_j (t)}_2 \le \, & M (1 + t / \veps)^2 (t - s).
	\end{align*}
\end{lem}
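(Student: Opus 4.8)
The plan is to bound the three quantities in turn using the gradient flow ODEs \eqref{eq:GF-1}--\eqref{eq:GF-2} together with the monotonicity of the risk $\Risk(a(t),u(t))$ along the flow. First I would establish the pointwise bound on $|a_i(t)|$. Differentiating and using \eqref{eq:GF-1}, one has $\veps\,\partial_t a_i = V(s_i) - \tfrac{1}{m}\sum_j a_j U(r_{ij}) = \E[\sigma(\langle u_i,x\rangle)(y - f(x;a,u))]$; by Cauchy--Schwarz this is at most $\|\sigma\|_\infty\sqrt{2\Risk(a,u)}$ in absolute value. Since $\partial_t\Risk(a(t),u(t))\le 0$ (gradient flow), we get $\Risk(a(t),u(t))\le\Risk(a(0),u(0))$, which is bounded by a constant $M$ depending only on $M_1,M_2,M_3$ (using $\|a(0)\|_\infty\le M_1$, $\|\sigma\|_\infty\le M_2$, $\|\varphi\|_\infty\le M_3$). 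Hence $|\partial_t a_i|\le M/\veps$, and integrating from $0$ with $|a_i(0)|\le M_1$ gives $|a_i(t)|\le M(1+t/\veps)$; this simultaneously proves the support statement for $\rho_t$ and the Lipschitz bound $|a_j(s)-a_j(t)|\le \veps^{-1}M(t-s)$ by integrating $|\partial_t a_i|$ over $[s,t]$.

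Next I would bound $\|u_j(s)-u_j(t)\|_2$. From \eqref{eq:GF-2}, $\partial_t u_i = -m(I_d - u_iu_i^\top)\nabla_{u_i}\Risk(a,u)$, and the explicit form computed in the proof of Proposition~\ref{prop:dynamics} gives
\[
\partial_t u_i = a_i\Big(V'(s_i)(u_* - s_i u_i) - \tfrac{1}{m}\sum_{j=1}^m a_j U'(r_{ij})(u_j - r_{ij}u_i)\Big).
\]
Using $\|V'\|_\infty, \|U'\|_\infty\le M$ (Assumption A2), $\|u_*\|_2=\|u_i\|_2=1$, $|s_i|,|r_{ij}|\le 1$, and the just-proved bound $|a_j(\tau)|\le M(1+\tau/\veps)$ for all $\tau\in[s,t]$, each factor $\|u_* - s_iu_i\|_2$ and $\|u_j - r_{ij}u_i\|_2$ is $O(1)$, so $\|\partial_t u_i\|_2 \le M(1+t/\veps)\cdot M(1+t/\veps) = M(1+t/\veps)^2$. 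Integrating over $[s,t]$ yields $\|u_j(s)-u_j(t)\|_2\le M(1+t/\veps)^2(t-s)$, as claimed. (One must be slightly careful that the bound $|a_j(\tau)|\le M(1+\tau/\veps)$ holds uniformly for $\tau\le t$, which it does by the first part.)

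The only mild subtlety — and the closest thing to an obstacle — is keeping the constants genuinely dependent on the $M_i$'s only, and in particular checking that $\Risk(a(0),u(0))\le M$: this follows from \eqref{eq:risk}, since $\Risk_{\red}(a,s,R) = \tfrac12\|\varphi\|_{L^2}^2 - \tfrac1m\sum_i a_i V(s_i) + \tfrac{1}{2m^2}\sum_{i,j}a_ia_j U(r_{ij})$, and $\|\varphi\|_{L^2}$, $\|V\|_\infty$, $\|U\|_\infty$, $M_1$ are all controlled. Everything else is routine Gr\"onwall-free integration since the right-hand sides are bounded (not merely Lipschitz) along the trajectory. I expect no real difficulty; the lemma is essentially a bookkeeping exercise used to feed the coupling argument for projected SGD in Theorem~\ref{thm:diff_gf_psgd}.
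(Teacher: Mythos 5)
Your proposal is correct and follows essentially the same route as the paper's proof: bound $|\partial_t a_i|$ by $\veps^{-1}M$ via Cauchy--Schwarz together with monotonicity of the risk along the flow, integrate to get the bound and Lipschitz estimate for the $a_i$'s, then bound $\|\partial_t u_i\|_2\le M(1+t/\veps)^2$ from the explicit projected-gradient form and integrate. The only cosmetic difference is that you invoke $\|\sigma\|_\infty$ where the paper uses $\|\sigma\|_{L^2}$ in the Cauchy--Schwarz step, which is immaterial under Assumption A2.
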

\begin{proof}
	First, notice that along the trajectory of gradient flow, the risk must be non-increasing. In fact, we have
	\begin{equation*}
		\partial_t \Risk = - m \sum_{i = 1}^{m} \left( \veps^{-2} (\partial_{a_i} \Risk)^2 +  (\nabla_{u_i} \Risk)^\top (I_d - u_i u_i^\top) (\nabla_{u_i} \Risk) \right) \le 0.
	\end{equation*}
	Therefore, we obtain that
	\begin{align*}
		\left\vert \partial_t a_i \right\vert = \, & \veps^{-1} \left\vert \E [y \sigma(\langle u_i, x \rangle)] - \frac{1}{m} \sum_{j = 1}^{m} a_j \E [\sigma(\langle u_i, x \rangle) \sigma(\langle u_j, x \rangle)] \right\vert = \veps^{-1} \left\vert \E \left[ (y - \hat{y}) \sigma(\langle u_i, x \rangle) \right] \right\vert \\
		\le \, & \veps^{-1} \E \left[ (y - \hat{y})^2 \right]^{1/2} \E \left[ \sigma(\langle u_i, x \rangle)^2 \right]^{1/2} \le \veps^{-1} \sqrt{2 \Risk (0)} \norm{\sigma}_{L^2} \le \veps^{-1} M,
	\end{align*}
	where the last line follows from our assumption. Since $\vert a_i (0) \vert \le M$, we know that $\vert a_i (t) \vert \le M (1 + t / \veps)$, and $\vert a_i (t) - a_i (s) \vert \le \veps^{-1} M (t - s)$. Moreover, according to Eq.~\eqref{eq:GF-2}, we have
	\begin{equation*}
		\norm{\partial_t u_i}_2 \le \vert a_i \vert \left( \norm{V'}_{\infty} + \norm{U'}_{\infty} \sup_{j \in [m]} \vert a_j \vert \right) \le M (1 + t / \veps)^2,
	\end{equation*}
	thus leading to
	\begin{equation*}
		\norm{u_i (s) - u_i (t)}_2 \le M (1 + t / \veps)^2 (t - s), \ \forall i \in [m].
	\end{equation*}
	This completes the proof.
\end{proof}

In what follows we define two discretized versions of Eq.s~\eqref{eq:GF-1} and \eqref{eq:GF-2}, namely the gradient descent (GD) and stochastic gradient descent (SGD) dynamics. They will serve as important intermediate objects for our proof.
\begin{itemize}
	\item Gradient descent: Let $\eta > 0$ be the step size, and let the initialization be the same as gradient flow: $(\tilde{a}_i (0), \tilde{u}_i (0)) = (a_i (0), u_i (0))$ for all $i \in [m]$. We have for $k \in \mathbb{N}$,
	\begin{equation}\label{eq:gd_MF}
		\begin{split}
			\tilde{a}_i (k + 1) - \tilde{a}_i (k) = \, & - m \veps^{-1} \eta \partial_{\tilde{a}_i (k)} \Risk \\
			&= \veps^{-1} \eta \Bigg( V ( \langle u_*, \tilde{u}_i (k) \rangle; \norm{u_*}_2, \norm{\tilde{u}_i (k)}_2 ) \\
			&\qquad\qquad- \frac{1}{m} \sum_{j=1}^{m} \tilde{a}_j (k) U (\langle \tilde{u}_i (k), \tilde{u}_j (k) \rangle; \norm{\tilde{u}_i (k)}_2, \norm{\tilde{u}_j (k)}_2 ) \Bigg) \\
			\tilde{u}_i (k + 1) - \tilde{u}_i (k) = \, & - m \eta \left( I_d - \tilde{u}_i (k) \tilde{u}_i (k)^\top \right) \nabla_{\tilde{u}_i (k)} \Risk \\
			= \, & \eta \tilde{a}_i (k) \left( I_d - \tilde{u}_i (k) \tilde{u}_i (k)^\top \right) \Bigg( \nabla_{\tilde{u}_i (k)} V ( \langle u_*, \tilde{u}_i (k) \rangle; \norm{u_*}_2, \norm{\tilde{u}_i (k)}_2 ) \\
			&\qquad - \frac{1}{m} \sum_{j=1}^{m} \tilde{a}_j (k)  \nabla_{\tilde{u}_i (k)} U (\langle \tilde{u}_i (k), \tilde{u}_j (k) \rangle; \norm{\tilde{u}_i (k)}_2, \norm{\tilde{u}_j (k)}_2 ) \Bigg),
		\end{split}
	\end{equation}
	where we recall from Eq.s~\eqref{eq:new_def_V} and \eqref{eq:new_def_U}:
	\begin{align*}
		V ( \langle u_*, \tilde{u}_i (k) \rangle; \norm{u_*}_2, \norm{\tilde{u}_i (k)}_2 ) = \, & \E \left[ \varphi (\langle u_*, x \rangle) \sigma (\langle \tilde{u}_i (k), x \rangle) \right] = \E \left[ y \sigma (\langle \tilde{u}_i (k), x \rangle) \right], \\
		\nabla_{\tilde{u}_i (k)} V ( \langle u_*, \tilde{u}_i (k) \rangle; \norm{u_*}_2, \norm{\tilde{u}_i (k)}_2 ) = \, & \E \left[ \varphi (\langle u_*, x \rangle) \sigma' (\langle \tilde{u}_i (k), x \rangle) x \right] = \E \left[ y \sigma' (\langle \tilde{u}_i (k), x \rangle) x \right], \\
		U (\langle \tilde{u}_i (k), \tilde{u}_j (k) \rangle; \norm{\tilde{u}_i (k)}_2, \norm{\tilde{u}_j (k)}_2) = \, & \E \left[ \sigma (\langle \tilde{u}_i (k), x \rangle) \sigma (\langle \tilde{u}_j (k), x \rangle) \right], \\
		\nabla_{\tilde{u}_i (k)} U (\langle \tilde{u}_i (k), \tilde{u}_j (k) \rangle; \norm{\tilde{u}_i (k)}_2, \norm{\tilde{u}_j (k)}_2) = \, & \E \left[ x \sigma' (\langle \tilde{u}_i (k), x \rangle) \sigma (\langle \tilde{u}_j (k), x \rangle) \right].
	\end{align*}
	By convention, we have $V(s) = V(s; 1, 1)$ and $U(s) = U(s; 1, 1)$ for $s \in [-1, 1]$.
	
	\item One-pass stochastic gradient descent: Under the same choice of the step size and initialization, and let $\{ (x_k, y_k) \}_{k \in \mathbb{N}^*}$ be i.i.d. samples from $\mathrm{P} \in \mathscr{P} (\R^d \times \R)$, where
	\begin{equation*}
		\mathrm{P} = \operatorname{Law} (x, y), \quad x \sim \sN (0, I_d), \ y = \varphi (\langle u_*, x \rangle).
	\end{equation*}
	The iteration equations for one-pass SGD read:
	\begin{equation}\label{eq:sgd_MF}
		\begin{split}
			\underline{a}_i (k + 1) - \underline{a}_i (k) = \, & \veps^{-1} \eta \left( y_{k+1} - \frac{1}{m} \sum_{j = 1}^{m} \underline{a}_j (k) \sigma (\langle \underline{u}_j (k), x_{k + 1} \rangle) \right) \sigma (\langle \underline{u}_i (k), x_{k+1} \rangle) \\
			\underline{u}_i (k + 1) - \underline{u}_i (k) = \, & \eta \underline{a}_i (k) \left( I_d - \underline{u}_i (k) \underline{u}_i (k)^\top \right) \left( y_{k+1} - \frac{1}{m} \sum_{j=1}^{m} \underline{a}_j (k) \sigma (\langle \underline{u}_j (k), x_{k+1} \rangle) \right) \\
			& \times \sigma' (\langle \underline{u}_i (k), x_{k+1} \rangle) x_{k+1}.
		\end{split}
	\end{equation}
	Note that Eq.~\eqref{eq:sgd_MF} can also be written as:
	\begin{align*}
		\underline{a}_i (k + 1) = \, & \underline{a}_i (k) + \veps^{-1} \eta \hat{F}_i (\underline{\rho}^{(m)} (k); z_{k + 1}) \\
		\underline{u}_i (k + 1) = \, & \underline{u}_i (k) + \eta \left( I_d - \underline{u}_i (k) \underline{u}_i (k)^\top \right) \hat{G}_i (\underline{\rho}^{(m)} (k); z_{k + 1}).
	\end{align*}
\end{itemize}

\subsection{Difference between GF and GD}\label{sec:diff_gf_gd}
For notational simplicity, we denote $\theta_i (t) = (a_i (t), u_i (t))$ for $i \in [m]$ and $t \ge 0$, and
\begin{equation*}
	\rho^{(m)} (t) = \frac{1}{m} \sum_{i = 1}^{m} \delta_{\theta_i (t)} = \frac{1}{m} \sum_{i=1}^{m} \delta_{(a_i (t), u_i (t))}.
\end{equation*}
Similarly, $\tilde{\theta}_i (k) = (\tilde{a}_i (k), \tilde{u}_i (k))$, and
\begin{equation*}
	\tilde{\rho}^{(m)} (k) = \frac{1}{m} \sum_{i=1}^{m} \delta_{\tilde{\theta}_i (k)} = \frac{1}{m} \sum_{i=1}^{m} \delta_{(\tilde{a}_i (k), \tilde{u}_i (k))}.
\end{equation*}
Moreover, for $\theta = (a, u)$ and $\rho \in \mathscr{P} (\R \times \R^d)$, we define the following two functionals:
\begin{align*}
	F (\theta, \rho) = \, & V (\langle u_*, u \rangle; \norm{u_*}_2, \norm{u}_2) - \int_{\R \times \R^d} a' U (\langle u, u' \rangle; \norm{u}_2, \norm{u'}_2) \rho (\mathrm{d} a', \mathrm{d} u'), \\
	G (\theta, \rho) = \, & a \left( I_d - u u^\top \right) \left( \nabla_u V (\langle u_*, u \rangle; \norm{u_*}_2, \norm{u}_2) - \int_{\R \times \R^d} a' \nabla_u U (\langle u, u' \rangle; \norm{u}_2, \norm{u'}_2) \rho (\mathrm{d} a', \mathrm{d} u') \right),
\end{align*}
and $H_{\veps} (\theta, \rho) = (\veps^{-1} F(\theta, \rho), G(\theta, \rho))$. Then, Eq.s~\eqref{eq:GF-1} and \eqref{eq:GF-2} and Eq.~\eqref{eq:gd_MF} can be rewritten as
\begin{equation*}
	\frac{\mathrm{d}}{\mathrm{d} t} \theta_i (t) = H_{\veps} (\theta_i (t), \rho^{(m)} (t)), \quad \tilde{\theta}_i (k + 1) - \tilde{\theta}_i (k) = \eta H_{\veps} (\tilde{\theta}_i (k), \tilde{\rho}^{(m)} (k)),
\end{equation*}
respectively. The lemma below will be used several times in the proof.
\begin{lem}\label{lem:grad_diff_bd}
	Denoting $\rho^{(m)} = (1 / m) \sum_{i=1}^{m} \delta_{\theta_i}$ and $\rho'^{(m)} = (1 / m) \sum_{i=1}^{m} \delta_{\theta'_i}$. If $\norm{u_i}_2 \le C$ and $\norm{u'_i}_2 \le C$ for all $i \in [m]$ ($C$ is any fixed absolute constant, for example, here we can take $C = 2$), then we have
	\begin{align}
		\left\vert F(\theta_i, \rho^{(m)}) - F(\theta'_i, \rho'^{(m)}) \right\vert \le \, & M \left( \left( 1 + \sup_{j \in [m]} \vert a_j \vert \right) \cdot \sup_{j \in [m]} \norm{u_j - u'_j}_2 + \sup_{j \in [m]} \left\vert a_j - a'_j \right\vert \right), \\
		\norm{ G(\theta_i, \rho^{(m)}) - G(\theta'_i, \rho'^{(m)}) }_2 \le \, & M \cdot \left( 1 + \sup_{j \in [m]} \left\vert a_j \right\vert \right)^2 \cdot \sup_{j \in [m]} \norm{u_j - u'_j}_2 \\
		& + M \cdot \sup_{j \in [m]} \left\vert a_j - a'_j \right\vert \cdot \left( 1 + \sup_{j \in [m]} \left\vert a_j \right\vert + \sup_{j \in [m]} \left\vert a_j - a'_j \right\vert \right),
	\end{align}
	where the constant $M$ only depends on the $M_i$'s. As a consequence, we obtain that
	\begin{align*}
		& \norm{ H_{\veps} (\theta_i, \rho^{(m)}) - H_{\veps} (\theta'_i, \rho'^{(m)}) }_2 \le \veps^{-1} \left\vert F(\theta_i, \rho^{(m)}) - F(\theta'_i, \rho'^{(m)}) \right\vert + \norm{ G(\theta_i, \rho^{(m)}) - G(\theta'_i, \rho'^{(m)}) }_2 \\
		\le \, & (\veps^{-1} + 1) M \cdot \left( \left( 1 + \sup_{j \in [m]} \left\vert a_j \right\vert \right)^2 \cdot \sup_{j \in [m]} \norm{u_j - u'_j}_2 + \sup_{j \in [m]} \left\vert a_j - a'_j \right\vert \cdot \left( 1 + \sup_{j \in [m]} \left\vert a_j \right\vert + \sup_{j \in [m]} \left\vert a_j - a'_j \right\vert \right) \right) \\
		\le \, & (\veps^{-1} + 1) M \cdot \left( \left( 1 + \sup_{j \in [m]} \left\vert a_j \right\vert \right)^2 + \sup_{j \in [m]} \norm{ \theta_j - \theta'_j}_2 \right) \cdot \sup_{j \in [m]} \norm{ \theta_j - \theta'_j}_2.
	\end{align*}
\end{lem}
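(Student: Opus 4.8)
The plan is to prove Lemma~\ref{lem:grad_diff_bd} by direct triangle-inequality estimates, exploiting three elementary facts. First, under Assumption A2 the functions $V$ and $U$ are bounded and have bounded, Lipschitz gradients on the ball $\{\|u\|_2\le 2\}$ (Eq.s~\eqref{eq:grad_bound_V}--\eqref{eq:grad_bound_U}), so in particular $V(\cdot\,;u)$ and $U(\cdot\,;u,u')$ are themselves Lipschitz in their arguments. Second, since $\|u_i\|_2,\|u'_i\|_2\le 2$ we have $\|u_i-u'_i\|_2\le 4$, an absolute constant, so any term of the form (bounded)$\cdot\|u_i-u'_i\|_2$ may be absorbed into the constant or traded against $\sup_j\|u_j-u'_j\|_2$ at will. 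Third, $|a'_j|\le\sup_k|a_k|+\sup_k|a_k-a'_k|$, which is the mechanism by which the first-layer magnitude $\sup_j|a_j|$ couples to the perturbation of the second layer. Throughout, $M$ denotes a constant depending only on the $M_i$'s, changing from line to line.

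For the bound on $F$, I would write $F(\theta_i,\rho^{(m)})-F(\theta'_i,\rho'^{(m)})$ as the difference of the $V$-terms plus the difference of the averages $\tfrac1m\sum_j\bigl[a_jU(\langle u_i,u_j\rangle;\cdot)-a'_jU(\langle u'_i,u'_j\rangle;\cdot)\bigr]$. The $V$-difference is bounded by $M\|u_i-u'_i\|_2$ since $\nabla_uV$ is bounded by $M_2$. For each summand, split $a_jU-a'_jU'=(a_j-a'_j)U+a'_j(U-U')$; the first piece is bounded by $\|U\|_\infty|a_j-a'_j|$, and the second by $|a'_j|\,M_2(\|u_i-u'_i\|_2+\|u_j-u'_j\|_2)$ using the gradient bound for $U$. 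Inserting $|a'_j|\le\sup_k|a_k|+\sup_k|a_k-a'_k|$ and using $\|u_i-u'_i\|_2\le 4$ to absorb the cross term $\sup_k|a_k-a'_k|\cdot\sup_k\|u_k-u'_k\|_2$ into the $\sup_j|a_j-a'_j|$ contribution yields exactly the claimed bound $M\bigl((1+\sup_j|a_j|)\sup_j\|u_j-u'_j\|_2+\sup_j|a_j-a'_j|\bigr)$.

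For the bound on $G$, I would write $G(\theta,\rho)=a\,P(u)\,W(u,\rho)$ with $P(u)=I_d-uu^\top$ and $W(u,\rho)=\nabla_uV(\cdot\,;u)-\int a'\nabla_uU(\cdot\,;u,u')\,\rho(\de a',\de u')$, then telescope $a_iP_iW_i-a'_iP'_iW'_i=(a_i-a'_i)P_iW_i+a'_i(P_i-P'_i)W_i+a'_iP'_i(W_i-W'_i)$ and bound the factors: $\|P(u)\|_{\op}\le 1+\|u\|_2^2\le 5$; $\|W(u,\rho)\|_2\le M(1+\sup_j|a_j|)$ from the gradient bounds; $\|P_i-P'_i\|_{\op}\le(\|u_i\|_2+\|u'_i\|_2)\|u_i-u'_i\|_2\le M\|u_i-u'_i\|_2$; and $\|W_i-W'_i\|_2\le M\bigl((1+\sup_j|a_j|)\sup_j\|u_j-u'_j\|_2+\sup_j|a_j-a'_j|\bigr)$, the last proved by the same $a_jU-a'_jU'$ splitting as for $F$ but now using the \emph{Lipschitzness} of $\nabla_uV$ and $\nabla_uU$. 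Collecting the three terms, substituting $|a'_i|\le 1+\sup_j|a_j|+\sup_j|a_j-a'_j|$, and again using $\sup_j\|u_j-u'_j\|_2\le 4$ to sweep the remaining mixed terms into the stated $\sup_j|a_j-a'_j|\bigl(1+\sup_j|a_j|+\sup_j|a_j-a'_j|\bigr)$ piece, gives the bound on $G$. The $H_\veps$ estimate is then immediate from $H_\veps=(\veps^{-1}F,G)$, and its coarser form follows by replacing $\sup_j\|u_j-u'_j\|_2$ and $\sup_j|a_j-a'_j|$ by $\sup_j\|\theta_j-\theta'_j\|_2$ and using $1+\sup_j|a_j|\le(1+\sup_j|a_j|)^2$. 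I expect no genuine mathematical obstacle here; the only delicate part is clerical — keeping the bookkeeping disciplined so that every cross term $|a_j-a'_j|\cdot\|u_i-u'_i\|_2$ is absorbed consistently via the radius-$2$ constraint, and so that the final expressions match the \emph{precise} form stated in the lemma (not merely an equivalent one), since that exact shape is what the Grönwall comparison arguments later in this section consume.
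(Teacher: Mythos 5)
Your proposal is correct and follows essentially the same route as the paper's proof: triangle-inequality estimates using the boundedness and Lipschitz continuity of $V$, $U$ and their gradients from Assumption A2, together with the rank-one bound $\norm{u_iu_i^\top-u'_i(u'_i)^\top}_{\op}\le 2C\norm{u_i-u'_i}_2$. The only (harmless) difference is bookkeeping: you split with $a'_j$ and absorb the resulting cross terms $\sup_j|a_j-a'_j|\cdot\norm{u_i-u'_i}_2$ via the radius-$C$ constraint, whereas the paper keeps $|a_j|$ in those splittings and only invokes $|a'_i|\le|a_i|+|a_i-a'_i|$ in the final term, reaching the same stated bounds.
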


\begin{proof}
	First, by triangle inequality, we have
	\begin{align*}
		& \left\vert F(\theta_i, \rho^{(m)}) - F(\theta'_i, \rho'^{(m)}) \right\vert \le \left\vert V(\langle u_*, u_i \rangle; \norm{u_*}_2, \norm{u_i}_2) - V(\langle u_*, u'_i \rangle; \norm{u_*}_2, \norm{u'_i}_2) \right\vert \\
		& + \frac{1}{m} \sum_{j=1}^{m} \left\vert a_j U (\langle u_i, u_j \rangle; \norm{u_i}_2, \norm{u_j}_2) - a'_j U (\langle u'_i, u'_j \rangle; \norm{u'_i}_2, \norm{u'_j}_2) \right\vert \\
		\le \, & \norm{\nabla V}_{\infty} \norm{u_i - u'_i}_2 + \frac{\norm{U}_{\infty}}{m} \sum_{j=1}^{m} \left\vert a_j - a'_j \right\vert + \frac{\norm{\nabla U}_{\infty}}{m} \sum_{j=1}^{m} \vert a_j \vert \cdot \left( \norm{u_i - u'_i}_2 + \norm{u_j - u'_j}_2 \right) \\
		\le \, & M \left( \left( 1 + \sup_{j \in [m]} \vert a_j \vert \right) \norm{u_i - u'_i}_2 + \sup_{j \in [m]} \left\vert a_j - a'_j \right\vert + \sup_{j \in [m]} \vert a_j \vert \cdot \sup_{j \in [m]} \norm{u_j - u'_j}_2 \right) \\
		\le \, & M \left( \left( 1 + \sup_{j \in [m]} \vert a_j \vert \right) \cdot \sup_{j \in [m]} \norm{u_j - u'_j}_2 + \sup_{j \in [m]} \left\vert a_j - a'_j \right\vert \right).
	\end{align*}
	Second, using again triangle inequality, we deduce that
	\begin{align*}
		& \norm{ G(\theta_i, \rho^{(m)}) - G(\theta'_i, \rho'^{(m)}) }_2 \\
		\stackrel{(i)}{\le} \, & 2 C \left\vert a_i \right\vert \norm{u_i - u'_i}_{2} \left( \norm{\nabla V}_{\infty} + \frac{\norm{\nabla U}_{\infty}}{m} \sum_{j=1}^{m} \left\vert a_j \right\vert \right) + C \left\vert a_i \right\vert \cdot \frac{\norm{\nabla U}_{\infty}}{m} \sum_{j=1}^{m} \left\vert a_j \right\vert \cdot \norm{u_j - u'_j}_2 \\
		& + C \left\vert a_i \right\vert \cdot \left( \norm{\nabla^2 V}_{\infty} \norm{u_i - u'_i}_2 + \frac{\norm{\nabla^2 U}_{\infty}}{m} \sum_{j=1}^{m} \left\vert a_j \right\vert \left( \norm{u_i - u'_i}_2 + \norm{u_j - u'_j}_2 \right) \right) \\
		& + C \left\vert a_i - a'_i \right\vert \left( \norm{\nabla V}_\infty + \frac{\norm{\nabla U}_{\infty}}{m} \sum_{j=1}^{m} \left\vert a_j \right\vert \right) + C \left( \left\vert a_i \right\vert + \left\vert a'_i - a_i \right\vert \right) \cdot \frac{\norm{\nabla U}_{\infty}}{m} \sum_{j=1}^{m} \left\vert a_j - a'_j \right\vert \\
		\le \, & 5 M \cdot \left( 1 + \sup_{j \in [m]} \left\vert a_j \right\vert \right)^2 \cdot \sup_{j \in [m]} \norm{u_j - u'_j}_2 + M \cdot \sup_{j \in [m]} \left\vert a_j - a'_j \right\vert \cdot \left( 1 + 2 \sup_{j \in [m]} \left\vert a_j \right\vert + \sup_{j \in [m]} \left\vert a_j - a'_j \right\vert \right) \\
		\le \, & M \cdot \left( 1 + \sup_{j \in [m]} \left\vert a_j \right\vert \right)^2 \cdot \sup_{j \in [m]} \norm{u_j - u'_j}_2 + M \cdot \sup_{j \in [m]} \left\vert a_j - a'_j \right\vert \cdot \left( 1 + \sup_{j \in [m]} \left\vert a_j \right\vert + \sup_{j \in [m]} \left\vert a_j - a'_j \right\vert \right),
	\end{align*}
	where $(i)$ follows from the inequality $\norm{u_i u_i^\top - (u'_i) (u'_i)^\top}_{\op} \le 2 C \norm{u_i - u'_i}_2$, which is a result of the following direct calculation:
	\begin{align*}
		\norm{u_i u_i^\top - (u'_i) (u'_i)^\top}_{\op} = \sup_{\norm{x}_2 = 1} \left\vert \langle x, u_i \rangle^2 - \langle x, u'_i \rangle^2 \right\vert \le 2 C \sup_{\norm{x}_2 = 1} \left\vert \langle x, u_i \rangle - \langle x, u'_i \rangle \right\vert = 2 C \norm{u_i - u'_i}_2.
	\end{align*}
	This completes the proof of Lemma~\ref{lem:grad_diff_bd}, since the ``as a consequence'' part follows naturally from the upper bounds obtained earlier.
\end{proof}

\begin{lem}\label{lem:risk_diff_bd}
	Following the notation and assumption of Lemma~\ref{lem:grad_diff_bd}, we have
	\begin{equation*}
		\left\vert \Risk (\rho^{(m)}) - \Risk (\rho'^{(m)}) \right\vert \le M \cdot \left( \left( 1 + \sup_{j \in [m]} \left\vert a_j \right\vert \right)^2 + \sup_{j \in [m]} \norm{ \theta_j - \theta'_j}_2 \right) \cdot \sup_{j \in [m]} \norm{ \theta_j - \theta'_j}_2.
	\end{equation*}
\end{lem}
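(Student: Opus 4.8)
The plan is to expand the risk functional explicitly and estimate the difference term by term, in complete parallel with the proof of Lemma~\ref{lem:grad_diff_bd}; no new idea is needed. Using the redefined kernels of Eqs.~\eqref{eq:new_def_V}--\eqref{eq:new_def_U}, which already absorb the factors $\|u_i\|_2$, one has
\[
\Risk(\rho^{(m)}) = \frac12\|\varphi\|_{L^2}^2 - \frac1m\sum_{i=1}^m a_i\, V\bigl(\langle u_*,u_i\rangle;\|u_*\|_2,\|u_i\|_2\bigr) + \frac1{2m^2}\sum_{i,j=1}^m a_i a_j\, U\bigl(\langle u_i,u_j\rangle;\|u_i\|_2,\|u_j\|_2\bigr),
\]
and likewise for $\rho'^{(m)}$. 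The constant term $\tfrac12\|\varphi\|_{L^2}^2$ cancels, so it suffices to bound separately the contribution of the linear-in-$a$ term and that of the quadratic-in-$a$ term.

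For the linear term I would telescope $a_i V(\theta_i) - a'_i V(\theta'_i) = (a_i-a'_i)V(\theta_i) + a'_i\bigl(V(\theta_i)-V(\theta'_i)\bigr)$. Since $\|\sigma\|_\infty \le M_2$, the map $V$ is uniformly bounded by $M$; since $\|u_i\|_2,\|u'_i\|_2\le 2$, the segment joining $u_i$ and $u'_i$ stays inside the ball of radius $2$, so the gradient bound $\|\nabla_u V\|_2\le M_2$ from \eqref{eq:grad_bound_V} together with the fundamental theorem of calculus gives $|V(\theta_i)-V(\theta'_i)| \le M\|u_i-u'_i\|_2$. Averaging over $i$ and using $\sup_j|a'_j| \le \sup_j|a_j| + \sup_j|a_j-a'_j|$ yields a bound of the form $M\bigl((1+\sup_j|a_j|)\sup_j\|u_j-u'_j\|_2 + \sup_j|a_j-a'_j|\bigr)$.

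For the quadratic term I would telescope $a_i a_j U(\theta_i,\theta_j) - a'_i a'_j U(\theta'_i,\theta'_j) = (a_i a_j - a'_i a'_j)U(\theta_i,\theta_j) + a'_i a'_j\bigl(U(\theta_i,\theta_j) - U(\theta'_i,\theta'_j)\bigr)$, then use $|a_i a_j - a'_i a'_j| \le |a_i|\,|a_j-a'_j| + |a'_j|\,|a_i-a'_i|$, the boundedness of $U$, and the Lipschitz estimate $|U(\theta_i,\theta_j) - U(\theta'_i,\theta'_j)| \le M\bigl(\|u_i-u'_i\|_2 + \|u_j-u'_j\|_2\bigr)$ coming from \eqref{eq:grad_bound_U} (again along segments inside the radius-$2$ ball). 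This produces a contribution of the form $M(1+\sup_j|a_j|)^2\sup_j\|u_j-u'_j\|_2 + M\sup_j|a_j-a'_j|\bigl(1+\sup_j|a_j|+\sup_j|a_j-a'_j|\bigr)$.

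Finally I would add the two pieces, use $\sup_j\|\theta_j-\theta'_j\|_2 \ge \max\bigl(\sup_j|a_j-a'_j|,\ \sup_j\|u_j-u'_j\|_2\bigr)$, and bound $\bigl(1+\sup_j|a_j|+\sup_j\|\theta_j-\theta'_j\|_2\bigr)^2 \le 2(1+\sup_j|a_j|)^2 + 2\sup_j\|\theta_j-\theta'_j\|_2^2$ to absorb the cross terms, which gives the stated inequality. There is no genuine obstacle: the only care needed is the bookkeeping of which boundedness/Lipschitz statement of Assumption A2 applies to each factor, and the verification that all the vectors involved remain in the region $\{\|u\|_2\le 2\}$ where those estimates hold — both of which are already dealt with in the proof of Lemma~\ref{lem:grad_diff_bd}. (Alternatively, one could deduce the bound by integrating the gradient estimate of Lemma~\ref{lem:grad_diff_bd} along the straight-line path from $(\theta_i)_{i\in[m]}$ to $(\theta'_i)_{i\in[m]}$, but the direct expansion above is more economical.)
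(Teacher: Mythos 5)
Your proposal is correct and follows essentially the same route as the paper's proof: expand the risk through the kernels $V,U$ of Eqs.~\eqref{eq:new_def_V}--\eqref{eq:new_def_U}, telescope each term, and combine the uniform boundedness and Lipschitz estimates of Assumption A2 with sup-norm bounds over $j\in[m]$. The only (immaterial) difference is the direction of your telescoping, which attaches the primed weights $a'_ia'_j$ to the $U$-increment and hence creates extra cross terms in $\sup_j|a_j-a'_j|$; these are absorbed exactly as you indicate, using $\|u_j\|_2,\|u'_j\|_2\le 2$ and the $\sup_j\|\theta_j-\theta'_j\|_2^2$ part of the stated bound.
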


\begin{proof}
	By definition of the risk function and triangle inequality, we deduce that
	\begin{align*}
		\left\vert \Risk (\rho^{(m)}) - \Risk (\rho'^{(m)}) \right\vert \le \, & \frac{1}{m} \sum_{i = 1}^{m} \left\vert a_i V(\langle u_*, u_i \rangle; \norm{u_*}_2, \norm{u_i}_2) - a'_i V(\langle u_*, u'_i \rangle; \norm{u_*}_2, \norm{u'_i}_2) \right\vert \\
		& + \frac{1}{m^2} \sum_{i, j = 1}^{m} \left\vert a_i a_j U(\langle u_i, u_j \rangle; \norm{u_i}_2, \norm{u_j}_2) - a'_i a'_j U(\langle u'_i, u'_j \rangle; \norm{u'_i}_2, \norm{u'_j}_2) \right\vert \\
		\le \, & \frac{\norm{\nabla V}_{\infty}}{m} \sum_{i=1}^{m} \left\vert a_i \right\vert \norm{u_i - u'_i}_2 + \frac{\norm{V}_{\infty}}{m} \sum_{i=1}^{m} \left\vert a_i - a'_i \right\vert \\
		& + \frac{\norm{U}_\infty}{m^2} \sum_{i, j = 1}^{m} \left( \left\vert a_i - a'_i \right\vert \left\vert a'_j \right\vert + \left\vert a_i \right\vert \left\vert a_j - a'_j \right\vert \right) \\
		& + \frac{\norm{\nabla U}_{\infty}}{m^2} \sum_{i, j=1}^{m} \left\vert a_i \right\vert \left\vert a_j \right\vert \cdot \left( \norm{u_i - u'_i}_2 + \norm{u_j - u'_j}_2 \right) \\
		\le \, & M \cdot \left( 1 + \sup_{j \in [m]} \left\vert a_j \right\vert \right)^2 \cdot \sup_{j \in [m]} \norm{u_j - u'_j}_2 \\
		& + M \cdot \sup_{j \in [m]} \left\vert a_j - a'_j \right\vert \cdot \left( 1 + \sup_{j \in [m]} \left\vert a_j \right\vert + \sup_{j \in [m]} \left\vert a_j - a'_j \right\vert \right) \\
		\le \, & M \cdot \left( \left( 1 + \sup_{j \in [m]} \left\vert a_j \right\vert \right)^2 + \sup_{j \in [m]} \norm{ \theta_j - \theta'_j}_2 \right) \cdot \sup_{j \in [m]} \norm{ \theta_j - \theta'_j}_2.
	\end{align*}
	This concludes the proof.
\end{proof}
First, let us define the error function
\begin{equation*}
	\Delta (t) = \sup_{k \in [0, t/\eta] \cap \mathbb{N}} \max_{i \in [m]} \norm{\tilde{\theta}_i (k) - \theta_i (k \eta)}_2,
\end{equation*}
and the stopping time $T_{\Delta} = \inf \{ t \ge 0: \Delta(t) \ge 1 \}$. For $k \in \mathbb{N}$ and $t = k \eta \le T_{\Delta}$, we have the following estimate:
\begin{align*}
	\norm{\theta_i (t) - \tilde{\theta}_i (k)}_2 \le \, & \int_{0}^{t} \norm{ H_{\veps} \left( \theta_i (s), \rho^{(m)} (s) \right) - H_{\veps} \left( \tilde{\theta}_i (\floor{s / \eta}), \tilde{\rho}^{(m)} (\floor{s / \eta}) \right)}_2 \mathrm{d} s \\
	\le \, & \int_{0}^{t} \norm{H_{\veps} \left( \theta_i (s), \rho^{(m)} (s) \right) - H_{\veps} \left( \theta_i (\eta \floor{s / \eta}), \rho^{(m)} (\eta \floor{s / \eta}) \right)}_2 \mathrm{d} s \\
	& + \int_{0}^{t} \norm{H_{\veps} \left( \theta_i (\eta \floor{s / \eta}), \rho^{(m)} (\eta \floor{s / \eta}) \right) - H_{\veps} \left( \tilde{\theta}_i (\floor{s / \eta}), \tilde{\rho}^{(m)} (\floor{s / \eta}) \right)}_2 \mathrm{d} s.
\end{align*}
For any $s \in [0, t]$, by Lemma~\ref{lem:a_estimate} and~\ref{lem:grad_diff_bd} we have (denote $[s] = \eta \floor{s / \eta}$, and notice that we can take $C=2$ since $t \le T_{\Delta}$)
\begin{align*}
	& \norm{ H_{\veps} \left( \theta_i (s), \rho^{(m)} (s) \right) - H_{\veps} \left( \theta_i ([s]), \rho^{(m)} ([s]) \right) }_2 \\
	\le \, & (\veps^{-1} + 1) M (1 + t / \veps)^4 (s - [s]) \le (\veps^{-1} + 1) M (1 + t / \veps)^4 \eta.
\end{align*}
Using again Lemma~\ref{lem:a_estimate} and~\ref{lem:grad_diff_bd}, we obtain that
\begin{align*}
	& \norm{H_{\veps} \left( \theta_i (\eta \floor{s / \eta}), \rho^{(m)} (\eta \floor{s / \eta}) \right) - H_{\veps} \left( \tilde{\theta}_i (\floor{s / \eta}), \tilde{\rho}^{(m)} (\floor{s / \eta}) \right)}_2 \\
	\le \, & (\veps^{-1} + 1) M (1 + \veps^{-1} s)^2 \Delta (s) + (\veps^{-1} + 1) M \Delta(s)^2,
\end{align*}
thus leading to
\begin{align*}
	\Delta (t) \le \, & (\veps^{-1} + 1) M t (1 + t / \veps)^4 \eta + (\veps^{-1} + 1) \int_{0}^{t} \left( M (1 + \veps^{-1} s)^2 \Delta (s) + M \Delta(s)^2 \right) \mathrm{d} s \\
	\le \, & (\veps^{-1} + 1) M t (1 + t / \veps)^4 \eta + (\veps^{-1} + 1) \int_{0}^{t} M (1 + \veps^{-1} s)^2 \cdot \max \left( \Delta (s), \Delta(s)^2 \right) \mathrm{d} s.
\end{align*}
For $s \le t \le T_{\Delta}$, we have $\Delta(s)^2 \le \Delta(s)$. Hence,
\begin{equation*}
	\Delta(t) \le (\veps^{-1} + 1) M t (1 + t / \veps)^4 \eta + (\veps^{-1} + 1) \int_{0}^{t} M (1 + \veps^{-1} s)^2 \Delta (s) \mathrm{d} s.
\end{equation*}
Applying Gr\"{o}nwall's inequality yields
\begin{equation*}
	\Delta (t) \le (\veps^{-1} + 1) M t (1 + t / \veps)^4 \eta \cdot \exp \left( (\veps^{-1} + 1 ) M t (1 + t / \veps)^2 \right) \le M \exp((\veps^{-1} + 1) M t (1 + t / \veps)^2) \eta.
\end{equation*}
Therefore, for all $T \ge 0$ and $\eta \le 1 / (M \exp((\veps^{-1} + 1) M T (1 + \veps^{-1} T)^2))$, we have
\begin{equation*}
	\sup_{k \in [0, t/\eta] \cap \mathbb{N}} \max_{i \in [m]} \norm{\tilde{\theta}_i (k) - \theta_i (k \eta)}_2 \le M \exp((\veps^{-1} + 1) M t (1 + t / \veps)^2) \eta \le 1, \ \forall t \le \min(T, T_{\Delta}).
\end{equation*}
This proves $T \le T_{\Delta}$, and consequently
\begin{equation*}
	\sup_{k \in [0, t/\eta] \cap \mathbb{N}} \max_{i \in [m]} \norm{\tilde{\theta}_i (k) - \theta_i (k \eta)}_2 \le M \exp((\veps^{-1} + 1) M t (1 + t / \veps)^2) \eta, \ \forall t \in [0, T],
\end{equation*}
which immediately implies that
\begin{equation*}
	\sup_{k \in [0, t/\eta] \cap \mathbb{N}} \max_{i \in [m]} \left\vert \tilde{a}_i (k) \right\vert \le M(1 + t / \veps) + 1 \le M (1 + t / \veps).
\end{equation*}
Finally, with the aid of Lemma~\ref{lem:risk_diff_bd}, we get the following upper bound on the difference between the risk of gradient flow and gradient descent:
\begin{align*}
	\sup_{k \in [0, t/\eta] \cap \mathbb{N}} \left\vert \Risk ( \rho^{(m)} (k \eta) ) - \Risk ( \tilde{\rho}^{(m)} (k) ) \right\vert \le \, & M (M^2 (1 + t / \veps)^2 + 1) M \exp((\veps^{-1} + 1) M t (1 + t / \veps)^2) \eta \\
	\le \, & M \exp((\veps^{-1} + 1) M t (1 + t / \veps)^2) \eta.
\end{align*}
To summarize, we have the following:
\begin{thm}[Difference between GF and GD]\label{thm:diff_gf_gd}
	There exists a constant $M$ that only depends on the $M_i$'s, such that for any $T \ge 0$ and
	\begin{equation*}
		\eta \le \frac{1}{M \exp((\veps^{-1} + 1) M T (1 + \veps^{-1} T)^2)},
	\end{equation*}
	the following holds for all $t \in [0, T]$:
	\begin{align}
		\sup_{k \in [0, t/\eta] \cap \mathbb{N}} \max_{i \in [m]} \left\vert \tilde{a}_i (k) \right\vert \le \, & M(1 + t / \veps), \\
		\sup_{k \in [0, t/\eta] \cap \mathbb{N}} \max_{i \in [m]} \norm{\tilde{\theta}_i (k) - \theta_i (k \eta)}_2 \le \, & M \exp((\veps^{-1} + 1) M t (1 + \veps^{-1} T)^2) \eta, \\
		\sup_{k \in [0, t/\eta] \cap \mathbb{N}} \left\vert \Risk ( \rho^{(m)} (k \eta) ) - \Risk ( \tilde{\rho}^{(m)} (k) ) \right\vert \le \, & M \exp((\veps^{-1} + 1) M t (1 + \veps^{-1} T)^2) \eta.
	\end{align}
\end{thm}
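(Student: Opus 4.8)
The plan is to write both the gradient flow \eqref{eq:GF-1}--\eqref{eq:GF-2} and the gradient descent recursion \eqref{eq:gd_MF} in the common form $\frac{\mathrm{d}}{\mathrm{d}t}\theta_i(t) = H_\veps(\theta_i(t),\rho^{(m)}(t))$ and $\tilde\theta_i(k+1)-\tilde\theta_i(k) = \eta H_\veps(\tilde\theta_i(k),\tilde\rho^{(m)}(k))$, where $H_\veps = (\veps^{-1}F,G)$ as defined above, and then run a standard Euler-discretization comparison. The two complications, which dictate the shape of the final bound, are that the $\veps^{-1}$ in front of $F$ amplifies all constants and that the Lipschitz constant of $H_\veps$ depends on $\sup_i|a_i|$, which grows linearly in $t/\veps$ along the flow.

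First I would record the a priori estimates. Along the flow, Lemma~\ref{lem:a_estimate} gives $|a_i(t)|\le M(1+t/\veps)$ together with the time-Lipschitz bounds $|a_j(s)-a_j(t)|\le \veps^{-1}M(t-s)$ and $\norm{u_j(s)-u_j(t)}_2 \le M(1+t/\veps)^2(t-s)$; these follow from monotonicity of $\Risk$ along gradient flow (which keeps $\norm{y-f}_{L^2}$ controlled) and Assumptions A1--A3. For the discrete iterates I would \emph{not} assume an a priori bound but extract one from the coupling argument by a bootstrap.

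The core step is to control $\Delta(t) := \sup_{k\in[0,t/\eta]\cap\mathbb{N}}\max_{i\in[m]}\norm{\tilde\theta_i(k)-\theta_i(k\eta)}_2$. Introduce the stopping time $T_\Delta = \inf\{t\ge 0:\Delta(t)\ge 1\}$; on $[0,T_\Delta]$ all the relevant $u$'s have norm at most $2$, so Lemma~\ref{lem:grad_diff_bd} (Lipschitz continuity of $H_\veps$, with constant controlled by $(1+\sup_i|a_i|)^2$) applies. For $t=k\eta\le T_\Delta$, write $\theta_i(t)-\tilde\theta_i(k)$ as an integral of the difference of velocity fields and split it into (i) a \emph{discretization} term comparing $H_\veps$ along the flow at time $s$ with its value at $[s]=\eta\lfloor s/\eta\rfloor$, bounded by $(\veps^{-1}+1)M(1+t/\veps)^4\eta$ via the time-Lipschitz estimates; and (ii) a \emph{propagation} term comparing $H_\veps(\theta_i([s]),\rho^{(m)}([s]))$ with $H_\veps(\tilde\theta_i(\lfloor s/\eta\rfloor),\tilde\rho^{(m)}(\lfloor s/\eta\rfloor))$, bounded by $(\veps^{-1}+1)M(1+\veps^{-1}s)^2\Delta(s) + (\veps^{-1}+1)M\Delta(s)^2$ by Lemma~\ref{lem:grad_diff_bd}. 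The quadratic $\Delta(s)^2$ is the one delicate point, and it is exactly why the stopping time is introduced: for $s\le T_\Delta$ one has $\Delta(s)^2\le\Delta(s)$, so the recursion collapses to $\Delta(t)\le (\veps^{-1}+1)Mt(1+t/\veps)^4\eta + (\veps^{-1}+1)\int_0^t M(1+\veps^{-1}s)^2\Delta(s)\,\mathrm{d}s$.

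Grönwall's inequality then yields $\Delta(t)\le M\exp((\veps^{-1}+1)Mt(1+t/\veps)^2)\eta$ on $[0,\min(T,T_\Delta)]$, so the hypothesis $\eta\le 1/(M\exp((\veps^{-1}+1)MT(1+\veps^{-1}T)^2))$ forces the right-hand side to be $\le 1$, hence $T\le T_\Delta$ and the bound is valid on all of $[0,T]$; this is the second assertion. The first assertion follows immediately since $|\tilde a_i(k)|\le|a_i(k\eta)|+\Delta(k\eta)\le M(1+t/\veps)+1$. For the third, I would feed the coupling bound into Lemma~\ref{lem:risk_diff_bd} (Lipschitz continuity of $\Risk$ with constant $\sim(1+\sup_i|a_i|)^2$), which produces the same exponential-in-$1/\veps^2$ estimate for $|\Risk(\rho^{(m)}(k\eta))-\Risk(\tilde\rho^{(m)}(k))|$. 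The main obstacle is purely bookkeeping: keeping the $\veps^{-1}$ factors and the powers of $(1+t/\veps)$ straight through the three nested estimates and checking that the stopping-time bootstrap closes cleanly — the rest is routine ODE comparison.
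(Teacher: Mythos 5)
Your proposal is correct and follows essentially the same route as the paper's proof: the same splitting into a discretization term and a propagation term controlled by Lemma~\ref{lem:grad_diff_bd}, the same stopping-time bootstrap with $T_\Delta$ to handle the quadratic $\Delta(s)^2$ term, Gr\"onwall's inequality, and then Lemma~\ref{lem:risk_diff_bd} for the risk bound. No gaps to report.
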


\subsection{Difference between GD and SGD}\label{sec:diff_gd_sgd}
The proof for this section is almost identical to Appendix C.5 in \citep{mei2019mean}. The only difference is that, here we need to verify that $(I_d - u u^\top) \sigma' (\langle u, x \rangle) x$ is an $M_3$-sub-Gaussian random vector. This follows from the identity $(I_d - u u^\top) x = x - \langle u, x \rangle u$ and Assumption A3. We thus obtain the following interpolation bound between GD and SGD:
\begin{thm}[Difference between GD and SGD]\label{thm:diff_gd_sgd}
	There exists a constant $M$ that only depends on the $M_i$'s, such that for any $T, z \ge 0$ and
	\begin{equation*}
		\eta \le \frac{1}{(d + \log m + z^2) M \exp((\veps^{-1} + 1) M T (1 + \veps^{-1} T)^2)},
	\end{equation*}
	the following happens with probability at least $1 - \exp(- z^2)$: For all $t \in [0, T]$, we have
	\begin{align}
		\sup_{k \in [0, t/\eta] \cap \mathbb{N}} \max_{i \in [m]} \left\vert \underline{a}_i (k) \right\vert \le \, & M(1 + t / \veps), \\
		\sup_{k \in [0, t/\eta] \cap \mathbb{N}} \max_{i \in [m]} \norm{\tilde{\theta}_i (k) - \underline{\theta}_i (k)}_2 \le \, & M \exp((\veps^{-1} + 1) M t (1 + \veps^{-1} T)^2) \sqrt{\eta} \left( \sqrt{d + \log m} + z \right), \\
		\sup_{k \in [0, t/\eta] \cap \mathbb{N}} \left\vert \Risk ( \underline{\rho}^{(m)} (k) ) - \Risk ( \tilde{\rho}^{(m)} (k) ) \right\vert \le \, & M \exp((\veps^{-1} + 1) M t (1 + \veps^{-1} T)^2) \sqrt{\eta} \left( \sqrt{d + \log m} + z \right).
	\end{align}
\end{thm}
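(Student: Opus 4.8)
The plan is to run the standard interpolation argument between the deterministic discretization \eqref{eq:gd_MF} and the one-pass stochastic recursion \eqref{eq:sgd_MF}, exactly as in Appendix C.5 of \citep{mei2019mean}, carrying along the projected (tangent-space) structure of the updates. First I would write the SGD step as a noisy gradient-descent step,
\begin{equation*}
	\underline{\theta}_i (k + 1) - \underline{\theta}_i (k) = \eta \, H_{\veps} \big( \underline{\theta}_i (k), \underline{\rho}^{(m)} (k) \big) + \eta \, \xi_i (k) \, ,
\end{equation*}
where, setting $\cF_k = \sigma(z_1, \dots, z_k)$, the facts that $\E_x[ \hat{F}_i(\rho; z) ] = F(\theta, \rho)$ and $\E_x[ ( I_d - u u^\top ) \hat{G}_i(\rho; z) ] = G(\theta, \rho)$ give $\E[ \xi_i(k) \mid \cF_k ] = 0$, so for each $i$ the process $k \mapsto \sum_{j < k} \xi_i(j)$ is a martingale with respect to $(\cF_k)$.

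Next I would control the tails of $\xi_i(k)$ using Assumptions A1--A3, on an event (to be justified by the stopping-time bootstrap below) where $\max_j | \underline{a}_j(k) | \le M(1 + t / \veps)$ and the $\underline{u}_j(k)$ stay in a ball of fixed radius. The $a$-component of $\xi_i(k)$ is $( y - \hat y ) \sigma( \langle \underline{u}_i, x \rangle )$ minus its conditional mean, a product of a bounded residual with an $M_3$-sub-Gaussian factor, hence sub-exponential with parameter polynomial in $(1 + t / \veps)$; the $u$-component is $\underline{a}_i ( y - \hat y ) ( I_d - \underline{u}_i \underline{u}_i^\top ) \sigma'( \langle \underline{u}_i, x \rangle ) x$ minus its conditional mean, and here is the only point where a genuinely new verification is needed relative to \citep{mei2019mean}: using $( I_d - u u^\top ) x = x - \langle u, x \rangle u$, Assumption A3 says precisely that $\sigma'( \langle \underline{u}_i, x \rangle ) ( x - \langle \underline{u}_i, x \rangle \underline{u}_i )$ is an $M_3$-sub-Gaussian vector in $\R^d$, so $\xi_i(k)$ is an $\R^{d+1}$-valued martingale increment whose norm has sub-exponential tails with parameters polynomial in $(1 + t / \veps)$.

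Finally I would set $D(t) = \sup_{k \in [0, t/\eta] \cap \N} \max_{i \in [m]} \norm{ \underline{\theta}_i (k) - \tilde{\theta}_i (k) }_2$ and the stopping time $T_D = \inf\{ t \ge 0 : D(t) \ge 1 \}$, telescope the difference on $[0, \min(T, T_D)]$, bound the drift contribution $\eta \sum_{j < k} \big( H_{\veps}( \underline{\theta}_i, \underline{\rho}^{(m)} ) - H_{\veps}( \tilde{\theta}_i, \tilde{\rho}^{(m)} ) \big)$ via the Lipschitz estimate of Lemma~\ref{lem:grad_diff_bd}, and bound the noise contribution by a maximal inequality for the $\R^{d+1}$-valued martingale $\eta \sum_{j < k} \xi_i(j)$ (Azuma--Hoeffding coordinatewise, a covering net of $\S^{d-1}$ to pass to the Euclidean norm, and a union bound over $i \in [m]$ and the $T / \eta$ steps), yielding with probability at least $1 - e^{-z^2}$
\begin{equation*}
	\sup_{k \in [0, T/\eta] \cap \N} \max_{i \in [m]} \Big\| \eta \sum_{j < k} \xi_i(j) \Big\|_2 \le M \exp\big( (\veps^{-1} + 1) M T (1 + \veps^{-1} T)^2 \big) \sqrt{\eta} \, \big( \sqrt{d + \log m} + z \big) \, ,
\end{equation*}
the $\sqrt{\eta}$ because the conditional variances accumulate over $T / \eta$ steps each of size $O(\eta^2)$, the $\sqrt{d}$ from the ambient dimension of the $u$-noise, and the $\log m$ from the union over neurons. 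A discrete Grönwall inequality applied to the resulting recursion for $D$ then closes the estimate; the smallness hypothesis on $\eta$ forces $D(t) < 1$ throughout $[0, T]$, hence $T_D \ge T$ and the a priori bounds used above are legitimate (a posteriori this also gives $\max_i | \underline{a}_i(k) | \le M(1 + t / \veps)$, combining the GD bound of Theorem~\ref{thm:diff_gf_gd} with $D(t) \le 1$), while the risk difference follows from Lemma~\ref{lem:risk_diff_bd}. The main obstacle is the vector-valued martingale maximal inequality, uniform over neurons and over the $T / \eta$ steps, producing exactly the $\sqrt{d + \log m} + z$ dependence and the $\sqrt{\eta}$ scaling, intertwined with the bootstrap made necessary by the fact that both the noise scale and the Lipschitz constants grow with $\max_j | \underline{a}_j |$.
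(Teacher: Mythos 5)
Your proposal is correct and follows essentially the same route as the paper, which simply invokes the interpolation argument of Appendix C.5 of \citep{mei2019mean} and notes that the only new ingredient is verifying that $(I_d - u u^\top)\sigma'(\langle u, x\rangle)x = \sigma'(\langle u, x\rangle)(x - \langle u, x\rangle u)$ is $M_3$-sub-Gaussian via Assumption A3 --- exactly the point you single out. Your martingale decomposition, maximal inequality with union bound over neurons and steps, Lipschitz drift control via Lemma~\ref{lem:grad_diff_bd}, stopping-time bootstrap with discrete Gr\"onwall, and the risk bound from Lemma~\ref{lem:risk_diff_bd} are precisely the steps that citation entails.
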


\subsection{Difference between SGD and projected SGD}\label{sec:diff_sgd_psgd}
The aim of this section is to prove a coupling bound between the trajectory of SGD and that of projected SGD, thus finally leading to an upper bound on the difference between the risk of projected gradient flow and projected SGD. To begin with, let us fix $T, z \ge 0$ and choose
\begin{equation*}
	\eta \le \frac{1}{(d + \log m + z^2) M \exp((\veps^{-1} + 1) M T (1 + \veps^{-1} T)^2)}
\end{equation*}
as in Theorem~\ref{thm:diff_gf_psgd}, where $M$ is a large enough constant (to be determined later). Define
\begin{equation*}
	T_{\theta} = \inf \left\{ t \ge 0: \max_{k \in [0, t / \eta] \cap \mathbb{N}} \max_{i \in [m]}  \left\vert \overline{a}_i (k) \right\vert \ge 2 M (1 + t / \veps), \ \text{or} \ \max_{k \in [0, t / \eta] \cap \mathbb{N}} \max_{i \in [m]}  \norm{ \overline{u}_i (k)}_2 \ge 2 \right\},
\end{equation*}
then for $k \le \min(T, T_{\theta}) / \eta$ and $i \in [m]$, we have (note that here $t = k \eta$)
\begin{align*}
	\norm{\hat{G}_i (\overline{\rho}^{(m)} (k); z_{k + 1})}_2 \le \, & M \left\vert \bar{a}_i (k) \right\vert \left( 1 + \max_{i \in [m]}  \left\vert \overline{a}_i (k) \right\vert \right) \norm{\sigma'(\langle \bar{u}_i (k), x_{k + 1} \rangle) x_{k+1}}_2 \\
	\le \, & M (1 + t / \veps)^2 \norm{\sigma'(\langle \bar{u}_i (k), x_{k + 1} \rangle) x_{k+1}}_2.
\end{align*}
Denoting $\cF_k = \sigma (\bar{\theta} (0), z_1, \cdots, z_k)$, we know from Assumption A3 that, conditioning on $\cF_k$, $\sigma'(\langle \bar{u}_i (k), x_{k + 1} \rangle) x_{k+1}$ is an $M_3$-sub-Gaussian random vector. By well-known results on Euclidean norm of sub-Gaussian random vectors (see, e.g., \cite{jin2019short}), we know that there exists a constant $M$ satisfying
\begin{equation*}
	\P \left( \norm{\sigma'(\langle \bar{u}_i (k), x_{k + 1} \rangle) x_{k+1}}_2 \ge M \left( \sqrt{d} + \sqrt{\log (1 / \delta)} \right) \right) \le \delta.
\end{equation*}
Choosing $\delta = \eta \exp(-z^2)/(mT)$ and applying a union bound gives
\begin{equation*}
	\P \left( \max_{k \in [0, \min(T, T_{\theta}) / \eta] \cap \mathbb{N}} \max_{i \in [m]} \norm{\sigma'(\langle \bar{u}_i (k), x_{k + 1} \rangle) x_{k+1}}_2 \le M \left( \sqrt{d + \log m} + z + T^2 \right) \right) \ge 1 - \exp(- z^2).
\end{equation*}
Therefore, with probability at least $1 - \exp(- z^2)$, for all $k \le \min(T, T_{\theta}) / \eta$ and $i \in [m]$, we have
\begin{equation*}
	\norm{\hat{G}_i (\overline{\rho}^{(m)} (k); z_{k + 1})}_2 \le M (1 + t / \veps)^2 \left( \sqrt{d + \log m} + z + T^2 \right).
\end{equation*}
The above bound also holds for the trajectory of SGD, namely after replacing $\overline{\rho}^{(m)} (k)$ with $\underline{\rho}^{(m)} (k)$. Now, let us define the approximation error $\Delta_i (k) = \underline{u}_i (k) - \overline{u}_i (k)$ for $i \in [m]$ and $k \in \mathbb{N}$, then we get the following decomposition:
\begin{equation*}
	\Delta_i (l) = \sum_{k=0}^{l - 1} \left( \Delta_i (k + 1) - \Delta_i (k) \right) = \sum_{k = 0}^{l - 1} \left( \E \left[ \Delta_i (k + 1) - \Delta_i (k) \vert \cF_k \right] + Z_i (k+1) \right),
\end{equation*} 
where $Z_i (k+1) = \Delta_i (k + 1) - \Delta_i (k) - \E \left[ \Delta_i (k + 1) - \Delta_i (k) \vert \cF_k \right]$ has zero mean. With our choice of $\eta$, one can verify that as long as $\max (d, m, z) \to \infty$, Lemma~\ref{lem:aux_diff_bd} is applicable to
\begin{equation*}
	u_1 = \underline{u}_i (k), \ g_1 = \hat{G}_i (\underline{\rho}^{(m)} (k); z_{k + 1}), \ u_2 = \overline{u}_i (k), \ g_2 = \hat{G}_i (\overline{\rho}^{(m)} (k); z_{k + 1}).
\end{equation*} 
Hence, we deduce from the definition of $\Delta_i (k)$ that
\begin{align*}
	\Delta_i (k + 1) - \Delta_i (k) =\, & \left( \underline{u}_i (k + 1) - \underline{u}_i (k) \right) - \left( \overline{u}_i (k + 1) - \overline{u}_i (k) \right) = (v_1 - u_1) - (v_2 - u_2) \\
	=\, & \eta \left( \left( I_d - u_1 u_1^\top \right) g_1 - \left( I_d - u_2 u_2^\top \right) g_2 \right) + O \left( \eta^2 \norm{g_2}_2^2 \right),
\end{align*}
thus leading to the following estimate:
\begin{align*}
	\norm{\E \left[ \Delta_i (k + 1) - \Delta_i (k) \vert \cF_k \right]}_2 \le\, & \eta \norm{\E \left[ \left( I_d - u_2 u_2^\top \right) (g_1 - g_2) \Big\vert \cF_k \right]}_2 + \eta \norm{\E \left[ \left( u_2 u_2^\top - u_1 u_1^\top \right) g_1 \Big\vert \cF_k \right]}_2 \\
	& + C \eta^2 \E \left[ \norm{g_2}_2^2 \Big\vert \cF_k \right] \\
	\stackrel{(i)}{\le}\, & \eta \norm{\E \left[ (g_1 - g_2) \vert \cF_k \right]}_2 + C \eta \norm{u_1 - u_2}_2 \norm{\E \left[ g_1 \vert \cF_k \right]}_2 + C \eta^2 \E \left[ \norm{g_2}_2^2 \Big\vert \cF_k \right] \\
	=\, & \eta \norm{\E \left[ \left( \hat{G}_i (\underline{\rho}^{(m)} (k); z_{k + 1}) - \hat{G}_i (\overline{\rho}^{(m)} (k); z_{k + 1}) \right) \Big\vert \cF_k \right]}_2 \\
	& + C \eta \norm{\overline{u}_i (k) - \underline{u}_i (k)}_2 \cdot \norm{\E \left[ \hat{G}_i (\underline{\rho}^{(m)} (k); z_{k + 1}) \Big\vert \cF_k \right]}_2 \\
	& + C \eta^2 \E \left[ \norm{\hat{G}_i (\overline{\rho}^{(m)} (k); z_{k + 1})}_2^2 \Big\vert \cF_k \right],
\end{align*}
where $(i)$ is due to the fact that $u_1, u_2 \in \sigma(\cF_k)$, and $\norm{u_1 u_1^\top - u_2 u_2^\top}_{\op} \le C \norm{u_1 - u_2}_2$. According to the definition of $\hat{G}_i$, we obtain that
\begin{align*}
	& \E \left[ \left( \hat{G}_i (\underline{\rho}^{(m)} (k); z_{k + 1}) - \hat{G}_i (\overline{\rho}^{(m)} (k); z_{k + 1}) \right) \Big\vert \cF_k \right] \\
	=\, & \underline{a}_i (k) \Big( \nabla_{\underline{u}_i (k)} V \left( \langle u_*, \underline{u}_i (k) \rangle; \norm{u_*}_2, \norm{\underline{u}_i (k)}_2 \right) \\
	& - \frac{1}{m} \sum_{j=1}^{m} \underline{a}_j (k) \nabla_{\underline{u}_i (k)} U \left( \langle \underline{u}_i (k), \underline{u}_j (k) \rangle; \norm{\underline{u}_i (k)}_2, \norm{\underline{u}_j (k)}_2 \right) \Big) \\
	& - \overline{a}_i (k) \Big( \nabla_{\overline{u}_i (k)} V \left( \langle u_*, \overline{u}_i (k) \rangle; \norm{u_*}_2, \norm{\overline{u}_i (k)}_2 \right) \\
	& - \frac{1}{m} \sum_{j=1}^{m} \overline{a}_j (k) \nabla_{\overline{u}_i (k)} U \left( \langle \overline{u}_i (k), \overline{u}_j (k) \rangle; \norm{\overline{u}_i (k)}_2, \norm{\overline{u}_j (k)}_2 \right) \Big),
\end{align*}
thus leading to (using the same argument as in the proof of Lemma~\ref{lem:grad_diff_bd})
\begin{align*}
	& \norm{\E \left[ \left( \hat{G}_i (\underline{\rho}^{(m)} (k); z_{k + 1}) - \hat{G}_i (\overline{\rho}^{(m)} (k); z_{k + 1}) \right) \Big\vert \cF_k \right]}_2 \le M \left( 1 + \veps^{-1} T \right)^2 \cdot \sup_{j \in [m]} \norm{\overline{u}_j (k) - \underline{u}_j (k)}_2 \\
	& + M \left( 1 + \veps^{-1} T + \sup_{j \in [m]} \left\vert \overline{a}_j (k) - \underline{a}_j (k) \right\vert \right) \cdot \sup_{j \in [m]} \left\vert \overline{a}_j (k) - \underline{a}_j (k) \right\vert,
\end{align*}
and
\begin{align*}
	\norm{\E \left[ \hat{G}_i (\underline{\rho}^{(m)} (k); z_{k + 1}) \Big\vert \cF_k \right]}_2 \le M (1 + \veps^{-1} T)^2.
\end{align*}
Moreover, by (conditional) sub-Gaussianity of the $\hat{G}_i$'s, we know that
\begin{equation*}
	\E \left[ \norm{\hat{G}_i (\overline{\rho}^{(m)} (k); z_{k + 1})}_2^2 \Big\vert \cF_k \right] \le M^2 (1 + \veps^{-1} T)^4 \E \left[ \norm{\sigma'(\langle \bar{u}_i (k), x_{k + 1} \rangle) x_{k+1}}_2^2 \vert \cF_k \right] \le M (1 + \veps^{-1} T)^4 d.
\end{equation*}
Combining the above estimates, it then follows that
\begin{align*}
	\norm{\E \left[ \Delta_i (k + 1) - \Delta_i (k) \vert \cF_k \right]}_2 \le\, & \eta M \left( 1 + \veps^{-1} T \right)^2 \cdot \sup_{j \in [m]} \norm{\overline{u}_j (k) - \underline{u}_j (k)}_2 + \eta^2 M (1 + \veps^{-1} T)^4 d \\
	& + \eta M \left( 1 + \veps^{-1} T + \sup_{j \in [m]} \left\vert \overline{a}_j (k) - \underline{a}_j (k) \right\vert \right) \cdot \sup_{j \in [m]} \left\vert \overline{a}_j (k) - \underline{a}_j (k) \right\vert.
\end{align*}

Using the same proof technique as in Appendix C.5 of \cite{mei2019mean}, we conclude that
\begin{equation*}
	\P \left( \max_{i \in [m]} \max_{l \in [0, \min(T, T_{\theta}) / \eta] \cap \mathbb{N}} \norm{\sum_{k = 0}^{l - 1} Z_i (k + 1)}_2 \ge M (1 + \veps^{-1} T)^2 \left( \sqrt{d + \log m} + z + T^2 \right) \sqrt{T \eta} \right) \le \exp (- z^2).
\end{equation*}
Similarly as in the proof of Theorem~\ref{thm:diff_gf_gd}, we define
\begin{equation*}
	\Delta (t) = \max_{l \in [0, t / \eta] \cap \mathbb{N}} \max_{i \in [m]} \norm{\overline{\theta}_i (l) - \underline{\theta}_i (l)}_2, \quad T_{\Delta} = \inf \{ t \ge 0: \Delta (t) \ge 1 \}.
\end{equation*}
Then, for $l \le \min (T, T_{\theta}, T_{\Delta}) / \eta$, we have
\begin{align*}
	\sup_{i \in [m]} \norm{\overline{u}_i (l) - \underline{u}_i (l)}_2 =\, & \sup_{i \in [m]} \norm{\Delta_i (l)}_2 \le \sup_{i \in [m]} \left\{ \sum_{k = 0}^{l - 1} \norm{\E \left[ \Delta_i (k + 1) - \Delta_i (k) \vert \cF_k \right]}_2 + \norm{\sum_{k = 0}^{l - 1} Z_i (k + 1)}_2 \right\} \\
	\le\, & \eta M (1 + \veps^{-1} T)^2 \sum_{k=0}^{l - 1} \Delta (k \eta) + l \eta^2 M (1 + \veps^{-1} T)^4 d \\
	& + M (1 + \veps^{-1} T)^2 \left( \sqrt{d + \log m} + z + T^2 \right) \sqrt{T \eta}.
\end{align*}
Proceeding with the same argument, it follows that
\begin{align*}
	\sup_{i \in [m]} \left\vert \overline{a}_i (l) - \underline{a}_i (l) \right\vert \le \veps^{-1} \eta M (1 + \veps^{-1} T)^2 \sum_{k=0}^{l - 1} \Delta (k \eta) + \veps^{-1} M (1 + \veps^{-1} T)^2 \left( \sqrt{d + \log m} + z + T^2 \right) \sqrt{T \eta}.
\end{align*}
Therefore, we finally conclude that
\begin{align*}
	\Delta (l \eta) \le\, & (\veps^{-1} + 1) \eta M (1 + \veps^{-1} T)^2 \sum_{k=0}^{l - 1} \Delta (k \eta) + l \eta^2 M (1 + \veps^{-1} T)^4 d \\
	& + (\veps^{-1} + 1) M (1 + \veps^{-1} T)^2 \left( \sqrt{d + \log m} + z + T^2 \right) \sqrt{T \eta} \\
	\le\, & (\veps^{-1} + 1) \eta M (1 + \veps^{-1} T)^2 \sum_{k=0}^{l - 1} \Delta (k \eta) + (\veps^{-1} + 1 ) M (1 + \veps^{-1} T)^4 \left( \sqrt{d + \log m} + z + T^2 \right) \sqrt{T \eta}.
\end{align*}
Applying Gr\"{o}nwall's inequality (discrete version) yields that
\begin{align*}
	\Delta (l \eta) \le\, & (\veps^{-1} + 1 ) M (1 + \veps^{-1} T)^4 \left( \sqrt{d + \log m} + z + T^2 \right) \sqrt{T \eta} \\
	& \times \left( 1 + (\veps^{-1} + 1 ) l \eta M (1 + \veps^{-1} T)^2 \exp \left( (\veps^{-1} + 1 ) l \eta M (1 + \veps^{-1} T)^2 \right) \right) \\
	\le\, & M \exp \left( (\veps^{-1} + 1 ) M T (1 + \veps^{-1} T)^2 \right) \left( \sqrt{d + \log m} + z + T^2 \right) \sqrt{\eta} \\
	\le\, & M \exp \left( (\veps^{-1} + 1 ) M T (1 + \veps^{-1} T)^2 \right) \left( \sqrt{d + \log m} + z \right) \sqrt{\eta},
\end{align*}
as long as $\max(d, m, z) \to \infty$ with $T = O(1)$. Note that the above inequality holds for all $l \in [0, \min(T, T_{\theta}, T_{\Delta}) / \eta] \cap \mathbb{N}$ with probability at least $1 - \exp(- z^2)$, which further implies that $T_{\theta}, T_{\Delta} \ge T$, and consequently
\begin{equation*}
	\sup_{k \in [0, T/\eta] \cap \mathbb{N}} \max_{i \in [m]} \left\vert \overline{a}_i (k) \right\vert \le 2M(1 + \veps^{-1} T).
\end{equation*}
Applying again Lemma~\ref{lem:risk_diff_bd}, we deduce that
\begin{equation*}
	\sup_{k \in [0, T/\eta] \cap \mathbb{N}} \left\vert \Risk ( \underline{\rho}^{(m)} (k) ) - \Risk ( \overline{\rho}^{(m)} (k) ) \right\vert \le \left( \sqrt{d + \log m} + z \right) M \exp((\veps^{-1} + 1) M T (1 + \veps^{-1} T)^2) \sqrt{\eta}.
\end{equation*}
Combining the above estimates gives the following:
\begin{thm}[Difference between SGD and projected SGD]\label{thm:diff_sgd_psgd}
	There exists a constant $M$ that only depends on the $M_i$'s, such that for any $T, z \ge 0$ and
	\begin{equation*}
		\eta \le \frac{1}{(d + \log m + z^2) M \exp((\veps^{-1} + 1 ) M T (1 + \veps^{-1} T)^2)},
	\end{equation*}
	the following happens with probability at least $1 - \exp(- z^2)$: For all $t \in [0, T]$, we have
	\begin{align}
		\sup_{k \in [0, t/\eta] \cap \mathbb{N}} \max_{i \in [m]} \left\vert \overline{a}_i (k) \right\vert \le \, & M(1 + t / \veps), \\
		\sup_{k \in [0, t/\eta] \cap \mathbb{N}} \max_{i \in [m]} \norm{\overline{\theta}_i (k) - \underline{\theta}_i (k)}_2 \le \, & M \exp((\veps^{-1} + 1) M t (1 + \veps^{-1} T)^2) \sqrt{\eta} \left( \sqrt{d + \log m} + z \right), \\
		\sup_{k \in [0, t/\eta] \cap \mathbb{N}} \left\vert \Risk ( \underline{\rho}^{(m)} (k) ) - \Risk ( \overline{\rho}^{(m)} (k) ) \right\vert \le \, & M \exp((\veps^{-1} + 1 ) M t (1 + \veps^{-1} T)^2) \sqrt{\eta} \left( \sqrt{d + \log m} + z \right).
	\end{align}
\end{thm}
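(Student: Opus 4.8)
The plan is to couple one-pass SGD \eqref{eq:sgd_MF} with projected SGD \eqref{eq:Psgd_MF} iterate-by-iterate, driving both by the same data stream $\{z_k\}$ and the same initialization, and to run the comparison only up to a stopping time at which the a priori bounds could fail, removing that stopping time at the end by a continuation (bootstrap) argument. Concretely, I would introduce $T_\theta$, the first time the projected trajectory leaves $\{\max_i|\overline a_i(k)|\le 2M(1+k\eta/\veps)\}\cap\{\max_i\norm{\overline u_i(k)}_2\le 2\}$, and $T_\Delta=\inf\{t:\Delta(t)\ge 1\}$ where $\Delta(t)=\max_{k\le t/\eta}\max_i\norm{\overline\theta_i(k)-\underline\theta_i(k)}_2$. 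On $[0,T\wedge T_\theta\wedge T_\Delta]$ all weights are bounded by $M(1+\veps^{-1}T)$, so $F,G,\hat F_i,\hat G_i$ are Lipschitz in the neuron configuration with constants governed by Lemma~\ref{lem:grad_diff_bd}. The target is $\Delta(t)\le M\exp((\veps^{-1}+1)Mt(1+\veps^{-1}T)^2)(\sqrt{d+\log m}+z)\sqrt\eta$ on this interval; for $\eta$ as small as in the statement this forces $\Delta(T)$ and $\max_i|\overline a_i(\cdot)|$ to stay strictly inside the defining regions, hence $T_\theta,T_\Delta\ge T$, and the bounds then hold unconditionally. The risk bound will then follow from Lemma~\ref{lem:risk_diff_bd} once the trajectory and weight bounds are in place.

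\textbf{The two estimates.} First, a high-probability tail bound on the stochastic gradients: by Assumption A3 the vector $\sigma'(\langle u,x\rangle)(x-\langle u,x\rangle u)$ is $M_3$-sub-Gaussian for $\norm u_2\le 2$, hence so is $\sigma'(\langle u,x\rangle)x$ after adding $\langle u,x\rangle\sigma'(\langle u,x\rangle)u$; standard Euclidean-norm concentration for sub-Gaussian vectors plus a union bound over $i\in[m]$ and $k\le T/\eta$ (taking per-event failure probability $\delta=\eta e^{-z^2}/(mT)$) gives $\max_{k,i}\norm{\sigma'(\langle\overline u_i(k),x_{k+1}\rangle)x_{k+1}}_2\le M(\sqrt{d+\log m}+z+T^2)$ with probability $\ge 1-e^{-z^2}$, and therefore $\norm{\hat G_i(\overline\rho^{(m)}(k);z_{k+1})}_2\le M(1+\veps^{-1}T)^2(\sqrt{d+\log m}+z+T^2)$ (and likewise along the SGD trajectory). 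Second, I would decompose the coupling error. Writing $\Delta_i(l)=\underline u_i(l)-\overline u_i(l)=\sum_{k<l}\big(\E[\Delta_i(k+1)-\Delta_i(k)\mid\cF_k]+Z_i(k+1)\big)$ with $\cF_k=\sigma(\overline\theta(0),z_1,\dots,z_k)$, one uses the geometric expansion $\operatorname{Proj}_{\S^{d-1}}(u+\eta g)=u+\eta(I_d-uu^\top)g+O(\eta^2\norm g_2^2)$ (Lemma~\ref{lem:aux_diff_bd}) to rewrite both $u$-updates as tangent-space steps plus an $O(\eta^2\norm{\hat G_i}_2^2)$ remainder. The conditional means of the tangent steps are exactly the GD drifts $G(\underline\theta_i(k),\underline\rho^{(m)}(k))$ and $G(\overline\theta_i(k),\overline\rho^{(m)}(k))$, whose difference is bounded by Lemma~\ref{lem:grad_diff_bd}, giving $\norm{\E[\Delta_i(k+1)-\Delta_i(k)\mid\cF_k]}_2\le\eta M(1+\veps^{-1}T)^2\Delta(k\eta)+\eta^2M(1+\veps^{-1}T)^4d$; the same argument applied to the $\overline a_i-\underline a_i$ coordinates (whose updates already coincide in form) gives the same bound with an extra factor $\veps^{-1}$. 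For the noise, $Z_i(k+1)$ is a martingale-difference sequence with conditional second moment controlled by the gradient tail bound above, and the vector Azuma/Freedman argument of \citep[Appendix C.5]{mei2019mean} yields $\max_i\max_{l\le(T\wedge T_\theta)/\eta}\norm{\sum_{k<l}Z_i(k+1)}_2\le M(1+\veps^{-1}T)^2(\sqrt{d+\log m}+z+T^2)\sqrt{T\eta}$ with probability $\ge 1-e^{-z^2}$.

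\textbf{Assembly and main obstacle.} Combining the drift and noise bounds gives the recursion $\Delta(l\eta)\le(\veps^{-1}+1)\eta M(1+\veps^{-1}T)^2\sum_{k<l}\Delta(k\eta)+(\veps^{-1}+1)M(1+\veps^{-1}T)^4(\sqrt{d+\log m}+z+T^2)\sqrt{T\eta}$, where the accumulated projection remainder $l\eta^2M(1+\veps^{-1}T)^4d$ is absorbed into the last term because $l\eta\le T$ and $\eta d$ is dominated by the inverse of the exponential through the choice of $\eta$; the discrete Grönwall inequality then yields $\Delta(l\eta)\le M\exp((\veps^{-1}+1)MT(1+\veps^{-1}T)^2)(\sqrt{d+\log m}+z)\sqrt\eta$ after absorbing the harmless $T^2$ and $\sqrt T$ factors into $M$ and the exponent. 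This gives the $\overline\theta_i-\underline\theta_i$ bound; the $|\overline a_i(k)|$ bound follows from the a priori bound on $\underline a_i$ in Theorem~\ref{thm:diff_gd_sgd} plus this $O(\sqrt\eta)$ coupling; and feeding both into Lemma~\ref{lem:risk_diff_bd} gives the risk bound. I expect the martingale step to be the crux: one must produce a vector-valued concentration inequality for the accumulated noise with the correct joint dependence on $d$, $m$, $z$ and $\eta$ (with the $\log m$ and $\log(1/\eta)$ from the union bounds absorbed into $\sqrt{d+\log m}$), and simultaneously keep the $O(\eta^2\norm{\hat G_i}_2^2)$ projection remainder — which over $T/\eta$ steps contributes $O(\eta Td)$ — below the target accuracy; the accompanying bootstrap certifying $T_\theta,T_\Delta\ge T$ is routine but must be threaded through consistently so that the a priori constant $2M$ used on the stopped interval is self-improving.
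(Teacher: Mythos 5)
Your proposal is correct and follows essentially the same route as the paper's proof: the same stopping times $T_\theta$, $T_\Delta$, the same sub-Gaussian tail bound on $\hat G_i$ with the union bound at level $\eta e^{-z^2}/(mT)$, the same drift-plus-martingale decomposition of $\underline u_i - \overline u_i$ using the second-order expansion of $\operatorname{Proj}_{\S^{d-1}}$, the martingale concentration imported from \citep[Appendix C.5]{mei2019mean}, discrete Gr\"{o}nwall, and the final bootstrap and application of Lemma~\ref{lem:risk_diff_bd}. No substantive differences from the paper's argument.
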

Theorem~\ref{thm:diff_gf_psgd} then follows as a result of combining Theorem~\ref{thm:diff_gf_gd}, Theorem~\ref{thm:diff_gd_sgd}, and Theorem~\ref{thm:diff_sgd_psgd}.
\begin{lem}\label{lem:aux_diff_bd}
	Let $v_1 = u_1 + \eta (I_d - u_1 u_1^\top) g_1$, $v_2 = \operatorname{Proj}_{\S^{d - 1}} (u_2 + \eta g_2)$, where $\norm{u_2}_2 = 1$ and $\eta \norm{g_2}_2 \le 1/2$. Then we have
	\begin{equation*}
		(v_1 - u_1) - (v_2 - u_2) =\, \eta \left( \left( I_d - u_1 u_1^\top \right) g_1 - \left( I_d - u_2 u_2^\top \right) g_2 \right) + O \left( \eta^2 \norm{g_2}_2^2 \right).
	\end{equation*}
\end{lem}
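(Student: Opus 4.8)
The plan is to exploit that, under the hypothesis $\eta\norm{g_2}_2 \le 1/2$, the projection onto the sphere is just normalization, and then to Taylor-expand the normalization map to first order, making sure every remainder is genuinely of size $O(\eta^2\norm{g_2}_2^2)$ with an absolute constant. If $g_2 = 0$ the claim is immediate (both sides reduce to $v_1 - u_1 = \eta(I_d - u_1 u_1^\top)g_1$), so assume $g_2 \neq 0$ and set $w := u_2 + \eta g_2$. Since $\norm{w - u_2}_2 = \eta\norm{g_2}_2 \le 1/2$ while $\norm{u_2}_2 = 1$, the reverse triangle inequality gives $w \neq 0$ and $\norm{w}_2 \in [1/2, 3/2]$; because $\operatorname{Proj}_{\S^{d-1}}(w) = \argmin_{\norm{s}_2 = 1}\norm{s - w}_2 = w/\norm{w}_2$, we get $v_2 = w/\norm{w}_2$.

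First I would estimate $\delta := \norm{w}_2 - 1$. The identity $\norm{w}_2^2 = 1 + 2\eta\langle u_2, g_2\rangle + \eta^2\norm{g_2}_2^2$ rewrites as $2\delta + \delta^2 = 2\eta\langle u_2, g_2\rangle + \eta^2\norm{g_2}_2^2$, and since $|\delta| = \big|\norm{w}_2 - \norm{u_2}_2\big| \le \eta\norm{g_2}_2$ we get $\delta^2 \le \eta^2\norm{g_2}_2^2$, hence $\delta = \eta\langle u_2, g_2\rangle + O(\eta^2\norm{g_2}_2^2)$. Then I would expand $1/\norm{w}_2 = 1/(1+\delta) = 1 - \delta + \delta^2/(1+\delta)$; since $1 + \delta = \norm{w}_2 \ge 1/2$ the last term is at most $2\delta^2 \le 2\eta^2\norm{g_2}_2^2$, so $1/\norm{w}_2 = 1 - \eta\langle u_2, g_2\rangle + O(\eta^2\norm{g_2}_2^2)$. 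Multiplying by $w = u_2 + \eta g_2$ and collecting terms, with each absorbed error being of the form $u_2 \cdot O(\eta^2\norm{g_2}_2^2)$, $(\eta g_2)(-\eta\langle u_2, g_2\rangle)$, or $(\eta g_2)\cdot O(\eta^2\norm{g_2}_2^2)$, all of size $O(\eta^2\norm{g_2}_2^2)$ using $\norm{u_2}_2 = 1$ and $\eta\norm{g_2}_2 \le 1/2$, yields $v_2 = u_2 + \eta g_2 - \eta\langle u_2, g_2\rangle u_2 + O(\eta^2\norm{g_2}_2^2)$, that is, $v_2 - u_2 = \eta(I_d - u_2 u_2^\top)g_2 + O(\eta^2\norm{g_2}_2^2)$.

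Finally, since $v_1 - u_1 = \eta(I_d - u_1 u_1^\top)g_1$ holds exactly by definition, subtracting the two expansions gives the asserted identity. There is no genuine obstacle: the only point to watch is that every Taylor remainder is bounded by an absolute constant, which is exactly what the hypothesis $\eta\norm{g_2}_2 \le 1/2$ secures, by keeping $\norm{w}_2$ in the interval $[1/2, 3/2]$.
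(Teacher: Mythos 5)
Your proof is correct and follows essentially the same route as the paper: identify the projection with normalization of $u_2+\eta g_2$, expand the inverse norm to first order, and collect terms, with $v_1-u_1=\eta(I_d-u_1u_1^\top)g_1$ holding exactly. The only difference is cosmetic — the paper expands $(1+2\eta\langle u_2,g_2\rangle+\eta^2\norm{g_2}_2^2)^{-1/2}$ directly while you track $\delta=\norm{w}_2-1$ and bound each remainder explicitly via $\eta\norm{g_2}_2\le 1/2$, which makes the implicit constants slightly more transparent but is the same argument.
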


\begin{proof}
	Using Taylor expansion, we know that
	\begin{align*}
		v_2 = \operatorname{Proj}_{\S^{d - 1}} (u_2 + \eta g_2) = \, & (u_2 + \eta g_2) \left( 1 + 2 \eta \langle u_2, g_2 \rangle + \eta^2 \norm{g_2}_2^2 \right)^{-1/2} \\
		= \, & (u_2 + \eta g_2) \left( 1 - \eta \langle u_2, g_2 \rangle + O(\eta^2 \norm{g_2}_2^2) \right) \\ 
		= \, & \left( 1 - \eta \langle u_2, g_2 \rangle \right) u_2 + \eta g_2 + O(\eta^2 \norm{g_2}_2^2) \\
		= \, & u_2 + \eta (I_d - u_2 u_2^\top) g_2 + O(\eta^2 \norm{g_2}_2^2),
	\end{align*}
    which implies
    \begin{equation*}
    	v_2- u_2 = \eta (I_d - u_2 u_2^\top) g_2 + O(\eta^2 \norm{g_2}_2^2).
    \end{equation*}
    The proof is completed by noting that
    \begin{equation*}
    	v_1 - u_1 = \eta (I_d - u_1 u_1^\top) g_1.
    \end{equation*}
\end{proof}

\subsection{Proof of Theorem~\ref{thm:MainLearning}}\label{sec:proof_MainLearning}
By our assumption, we know that the \modif{canonical learning order} holds up to level $L$, and that
\begin{equation*}
	\frac{1}{2} \sum_{k \ge L+1} \varphi_k^2 \le \frac{\delta}{4}.
\end{equation*}
Then, according to Definition~\ref{def:StandardScenario}, there exists $\veps_* = \veps_* (\delta)$, $T_0 = T_0 (\delta)$ such that for all $\veps \le \veps_*$ and $T = T(\veps, \delta) = T_0 (\delta) \veps^{1/2L}$, one has
\begin{equation*}
	\Risk_{\infty} (T, \veps) = \lim_{m \to \infty} \lim_{d \to \infty} \Risk (a(T), u(T)) \le \frac{\delta}{3}.
\end{equation*}
Moreover, from Section~\ref{sec:LargeNet} we know that with probability at least $1 - e^{-C' m}$ over the i.i.d. initialization,
\begin{equation}
	\sup_{t \in [0, T]} \left\vert \Risk (a(t), u(t)) - \Risk_{\infty} (t, \veps) \right\vert \le \left( \frac{1}{\sqrt{d}} + \frac{1}{\sqrt{m}} \right) C M' \exp(M' T (1+T)^2 / \veps^2),
\end{equation}
where $M'$ only depends on $(\sigma, \varphi, \rP_A)$.
Now we choose $\veps \le \veps_*$ and $T = T(\veps, \delta) = T_0 (\delta) \veps^{1 / 2 L}$. It then follows that
\begin{equation}
	\Risk (a(T), u(T)) \le \frac{\delta}{3} + \left( \frac{1}{\sqrt{d}} + \frac{1}{\sqrt{m}} \right) C M' \exp(M' T^3 / \veps^2).
\end{equation}
According to Theorem~\ref{thm:diff_gf_psgd}, we know that with probability at least $1 - \exp(-z)$,
\begin{equation}
	\left\vert \Risk (\oa (n), \ou (n)) - \Risk (a(T), u(T)) \right\vert \le \sqrt{\eta (d + \log m + z)} M' \exp \left( M' T^3 / \veps^3 \right)
\end{equation}
with $n = T / \eta = T(\veps, \delta) / \eta$. We now take
\begin{equation*}
	M = M(\veps, \delta) = \max \left\{ \frac{9M'^2 \exp(2M' T(\veps, \delta)^3 / \veps^3)}{\delta^2}, \ \frac{36 C^2 M'^2 \exp(2M' T(\veps, \delta)^3 / \veps^2)}{\delta^2} \right\}.
\end{equation*}
Then, by our choice of $m$ and $d$, we know that $\Risk (a(T), u(T)) \le 2 \delta / 3$. Further, taking
\begin{equation}
	\eta = \frac{1}{M(d + \log m + z)}, \ n = MT (d + \log m + z),
\end{equation}
we obtain that
\begin{equation}
	\Risk(\oa (n), \ou (n)) \le \Risk(a(T), u(T)) + \frac{\delta}{3} \le \delta.
\end{equation}
The above happens with probability $1 - \exp(-C'm) - \exp(-z)$. Hence, our conclusion follows naturally from the assumption $m \ge z$.

\section{Counterexamples to the \modif{canonical learning order}}\label{sec:counter_ex}
\subsection{Case 1: $\sigma_k = 0$ for some $k \in \mathbb{N}$}\label{sec:counter_ex1}
For any fixed $(a, u) = (a_i, u_i)_{1 \le i \le m}$, we have
\begin{align*}
	\E \left[ f(x; a, u) \He_k (\langle u_*, x \rangle) \right] =\, & \frac{1}{m} \sum_{i=1}^{m} a_i \E \left[ \sigma \left( \langle u_i, x \rangle \right) \He_k (\langle u_*, x \rangle) \right] \\
	=\, & \frac{1}{m} \sum_{i=1}^{m} a_i \sigma_k \langle u_i, u_* \rangle^k = 0.
\end{align*}
Moreover, the risk is always lower bounded by
\begin{align*}
	& \Risk (a, u) = \frac{1}{2} \E \left[ \left( \varphi (\langle u_*, x \rangle) - f(x; a, u) \right)^2 \right] \\
	=\, & \frac{1}{2} \E \left[ \left( \varphi_k \He_k (\langle u_*, x \rangle) + \left( \varphi (\langle u_*, x \rangle) - \varphi_k \He_k (\langle u_*, x \rangle) - f(x; a, u) \right) \right)^2 \right] \\
	\stackrel{(i)}{=}\, & \frac{1}{2} \varphi_k^2 + \frac{1}{2} \E \left[ \left( \varphi (\langle u_*, x \rangle) - \varphi_k \He_k (\langle u_*, x \rangle) - f(x; a, u) \right)^2 \right] \ge \frac{1}{2} \varphi_k^2,
\end{align*}
where $(i)$ follows from orthogonality between $\He_k (\langle u_*, x \rangle)$ and $f(x; a, u)$.

\subsection{Case 2: $\varphi_0 = \cdots = \varphi_k = 0$ for some $k \ge 1$}\label{sec:counter_ex2}
We consider the reduced mean-field equations~\eqref{eq:coupled_MF}:
\begin{equation*}
	\begin{split}
		\veps\partial_t a_i = \, & V(s_i) - \frac{1}{m} \sum_{j=1}^{m} a_j U(s_i s_j) \, ,\\
		\partial_t s_i = \, &  a_i \left( 1 - s_i^2 \right) \left( V'(s_i) - \frac{1}{m} \sum_{j = 1}^{m} a_j U'(s_i s_j) s_j \right) \, .
	\end{split}
\end{equation*}
Note that if $\varphi_0 = \varphi_1 = 0$, then $V'(s) = s \cdot v(s)$ for some continuous function $v$. Denoting $a = (a_1, \cdots, a_m)$ and $s = (s_1, \cdots, s_m)^\top$, the above equation regarding the evolution of the $s_i$'s can be written as
\begin{equation*}
	s'(t) = A(a(t), s(t)) s(t),
\end{equation*}
where $A(a, s)$ is a matrix-valued function satisfying
\begin{align*}
	A_{ij} (a, s) =\, & a_i (1 - s_i^2) \left( v(s_i) \mathbf{1}_{i = j} - \frac{a_j}{m} U'(s_i s_j) \right), \ \forall i, j \in [m].
\end{align*}
Using the similar a priori estimate as in the proof of Lemma~\ref{lem:a_bound_r_perp}, we can show that
\begin{equation*}
	\sup_{t \in [0, T]} \norm{A(a(t), s(t))}_{\op} \le C(T) < \infty
\end{equation*}
for any finite time $T$, which immediately implies that $s(t) \equiv 0$ for $t \in [0, T]$. Therefore, we won't be able to learn any component of $\varphi$ with degree $\ge 1$. 

\subsection{Case 3: $\varphi_k = 0$ for some $k \ge 1$}\label{sec:counter_ex3}
We may assume $\sigma_k \neq 0$, and analyze the simplified ODE system \eqref{eq:l-th_simp}, which reduces to
\begin{equation}
	\begin{split}
		\partial_\tau \ta(\omega) = \, & - \sigma_k^2 \ts(\omega)^k \int \ta(\nu) \ts(\nu)^k \diff \rho(\nu) \\
		\partial_\tau \ts(\omega) = \, & - k \sigma_k^2 \ta(\omega) \ts(\omega)^{k - 1} \left( 1 - \veps^{2 \beta_k} \ts(\omega)^2 \right) \int \ta(\nu) \ts(\nu)^k \diff \rho(\nu).
	\end{split}
\end{equation}
We thus obtain the following equations:
\begin{equation}
\begin{split}
	\partial_{\tau} \int \ta(\omega)^2 \diff \rho (\omega) =\, & - 2 \sigma_k^2 \left( \int \ta(\omega) \ts(\omega)^k \diff \rho(\omega) \right)^2 \le 0, \\
	\partial_{\tau} \int \ts(\omega)^2 \diff \rho (\omega) =\, & - 2 k \sigma_k^2 \int \ta(\omega) \ts(\omega)^k \left( 1 - \veps^{2 \beta_k} \ts(\omega)^2 \right) \diff \rho(\omega) \cdot \int \ta(\omega) \ts(\omega)^k \diff \rho(\omega) \\
	\le\, & 2 k \sigma_k^2 \veps^{2 \beta_k} \int \ta(\omega) \ts(\omega)^{k+2} \diff \rho(\omega) \cdot \int \ta(\omega) \ts(\omega)^k \diff \rho(\omega),
\end{split}
\end{equation}
which means that for any $\tau \ge 0$,
\begin{equation}
	\int \ta(\omega, \tau)^2 \diff \rho (\omega) \le \int \ta(\omega, 0)^2 \diff \rho (\omega) = O(\veps^{1/k(k+1)}) = o(1).
\end{equation}
Therefore, most of the neurons cannot evolve to the magnitude of $\Omega (1)$ in the process of learning the $k$-th component, and therefore fails to provide an effective initialization for learning the next component $\varphi_{k+1}$.

\end{document}